\definecolor{darkblue}{rgb}{0,0,0.7}
\newcommand{\ushort}[1]{{\underline{#1}}} 
\newcommand{\stepname}[2]{\stackrel{\text{(#1)}}{#2}}
\newcommand{\mmu}{{\mu}}
\newcommand{\X}{\mathbb{X}}
\newcommand{\E}{\mathbb{E}}
\newcommand{\states}{\mathcal{S}}
\newcommand{\histories}{\mathcal{H}}
\newcommand{\actions}{\mathcal{A}}
\newcommand{\opt}{^\star}
\newcommand{\tr}{^{\top}}
\newcommand{\Real}{\mathbb{R}}
\newcommand{\Natural}{\mathbb{N}}
\newcommand{\inits}{s_0}
\newcommand{\ualpha}{\ushort{\alpha}}
\newcommand{\upi}{{\ushort{\pi}}}
\DeclareMathOperator*{\argmin}{argmin}
\DeclareMathOperator*{\argmax}{argmax}
\newcommand{\quant}{{\mathfrak{q}}}
\DeclareMathOperator{\varo}{VaR}
\newcommand{\PiHR}{\tilde{\Pi}_{\mathrm{HR}}}
\newcommand{\PiHD}{\Pi_{\mathrm{HD}}}
\renewcommand{\ss}{\mid}    
\newcommand{\probs}[1]{\Delta_{#1}}  
\newcommand{\Int}{\mathcal{I}}
\newcommand{\discretized}{\mathrm{d}}
\newcommand{\initalpha}{\alpha_0}
\newcommand{\theirs}[1]{\ddot{#1}}
\renewcommand{\P}[1]{\mathbb{P}\left[ #1 \right]}
\renewcommand{\Pr}[1]{\mathbb{P}\left[ #1 \right]}
\newcommand{\var}[2]{\varo_{#1} \left[#2\right]}
\newcommand{\qud}{\ushort{q}^{\discretized}}
\newcommand{\bud}{\ushort{\mathcal{B}}^{\discretized}_{\mathrm{u}}}
\newcommand{\tqud}{\tilde{\ushort{q}}^{\discretized}}
\newcommand{\budk}{\ushort{B}^{\discretized}_{\kappa}}
\theoremstyle{plain}
\newtheorem{theorem}{Theorem}[section]
\newtheorem*{theorem*}{Theorem}
\newtheorem{proposition}[theorem]{Proposition}
\newtheorem{lemma}[theorem]{Lemma}
\newtheorem{corollary}[theorem]{Corollary}
\theoremstyle{definition}
\newtheorem{definition}[theorem]{Definition}
\newtheorem{assumption}[theorem]{Assumption}
\theoremstyle{remark}
\newtheorem{example}[theorem]{Example}
\newcommand{\esther}[1]{{\color{magenta} Esther: #1}}
\newcommand{\mm}[1]{\textcolor{red}{[#1]}}
\newcommand{\monkie}[1]{\textcolor{cyan}{#1}}
\newcommand{\mgh}[1]{\textcolor{orange}{[#1]}}
\Crefname{section}{Section}{Sections}
\Crefname{appendix}{Appendix}{Appendices}
\Crefname{algocf}{Algorithm}{Algorithms}
\Crefname{algorithm}{Algorithm}{Algorithms}
\Crefname{theorem}{Theorem}{Theorems}
\Crefname{definition}{Definition}{Definitions}
\Crefname{example}{Example}{Examples}
\Crefname{lemma}{Lemma}{Lemmas}
\Crefname{assumption}{Assumption}{Assumptions}
\Crefname{proposition}{Proposition}{Propositions}
\newif\ifFinalVersion
\renewcommand{\esther}[1]{{}}
\renewcommand{\mm}[1]{}
\renewcommand{\monkie}[1]{}
\renewcommand{\mgh}[1]{}
\newif\ifCRVersion
\newcommand{\zeroOneIntervalOptions}[2]{#2}
\author[1]{Jia Lin Hau}
\author[2,3,4]{Erick Delage }
\author[4,5]{Esther Derman}
\author[6]{\\Mohammad Ghavamzadeh}
\author[1]{Marek Petrik}
\affil[1]{University of New Hampshire, Durham, NH}
\affil[2]{HEC Montréal, Canada}
\affil[3]{GERAD, Canada}
\affil[4]{MILA - Quebec AI Institute, Canada}
\affil[5]{Université de Montréal, Canada}
\affil[6]{Amazon AGI, Sunnyvale, CA}
\title{Q-learning for Quantile MDPs:\\A Decomposition, Performance, and Convergence Analysis}
\date{}
\begin{document}

\maketitle

\begin{abstract}
In Markov decision processes (MDPs), quantile risk measures such as Value-at-Risk are a standard metric for modeling RL agents' preferences for certain outcomes. This paper proposes a new Q-learning algorithm for quantile optimization in MDPs with strong convergence and performance guarantees. The algorithm leverages a new, simple dynamic program (DP) decomposition for quantile MDPs. Compared with prior work, our DP decomposition requires neither known transition probabilities nor solving complex saddle point equations and serves as a suitable foundation for other model-free RL algorithms. Our numerical results in tabular domains show that our Q-learning algorithm converges to its DP variant and outperforms earlier algorithms. 
\end{abstract}

\section{Introduction}
\label{sec: intro}
The practicality of reinforcement learning (RL) models has led to their widespread integration in autonomous decision-making \citep{kiumarsi2017optimal, kiran2021deep}. Traditional metrics focusing on expected value fail to address the uncertainties of random returns, thus acknowledging that a one-size-fits-all model is insufficient \citep{howard1972risk, tamar2012policy}. This observation is blatant in various applications such as motion control \citep{braun2011risk,hakobyan2021wasserstein,ahmadi2021risk}, autonomous systems \citep{wang2022risk,jin2019risk}, healthcare \citep{kose2016optimal,singh2020building}, or capital investment \citep{min2022risk}, where each necessitates a specific risk-sensitive objective. Risk-averse RL (RARL) endeavors to align with the decision-maker's preferences by tailoring the objective function according to their risk interests \citep{yoo2024risk}. 
Common measures in RARL include exponential utility \citep{borkar2002q,hau2023entropic}, target value \citep{wu1999minimizing,lin2003optimal}, value-at-risk (VaR) \citep{hau2023dynamic,li2022quantile,chow2018percentilerisk}, conditional VaR (CVaR) \citep{chow2014cvar,lim2022distributional,bauerle2011markov}, or mean-variance objectives \citep{tamar2012policy,luo2024alternative}, among others. 

Quantile measures such as VaR are used in data centers to ensure the reliability of cloud computing services \citep{decandia2007dynamo}, in epidemiology to understand how exposure to disease differs across continuous health outcomes distributions, e.g., BMI \citep{wei2019applications}, or more standardly in financial markets to assess counterparty risk \citep{emmer2015best, alexander2012quantile}. Although VaR does not enjoy the mathematical properties of CVaR and thus is not a coherent risk, its numerical advantages for backtesting have led the Basel Committee to retain it for credit value adjustment \citep{bis, embrechts2018quantile}. Unlike expected value, quantile risk measures account for the return distribution. For example, the expectation may favor low probability outcomes yielding a high return, whereas the $\alpha$-level-VaR guarantees the gain is greater or equal to that value with high probability $1-\alpha$. In RARL, recent years have witnessed an increasing interest in quantile measures, especially since the emergence of distributional RL \citep{bellemare2017distributional, dabney2018distributional} and its successful application in robotics \citep{majumdar2017should}. 

A major challenge in deriving practical RL solutions is that a full knowledge of the environment is often inaccessible. This can be even more problematic in RARL where risk-averse strategies must involve the full return distribution rather than just its expectation. Previous works have presented model-free methods to find optimal CVaR policies \citep{dabney2018implicit,keramati2020being,lim2022distributional}, but all assumed the existence of an optimal Markov policy. Such assumption is valid for mean and entropic risk objectives but generally not for quantile-based objectives such as CVaR or VaR \citep{hau2023dynamic,ben2007old}. Indeed, optimal-VaR policies can potentially all be history-dependent, so restricting the search to Markov policies can produce suboptimal return \citep{hau2023dynamic,li2022quantile}.

In this study, we propose a new dynamic programming (DP) formulation for quantile MDPs (see \citealt{li2022quantile}) that we refer to as VaR-MDP since it seeks a policy that is optimal with respect to the VaR of the total discounted reward. Compared with prior work, our DP decomposition requires neither known transition probabilities nor solving complex saddle point equations.
As a second contribution, we introduce VaR-Q-learning, the first model-free method that provably optimizes the return distribution's VaR. We develop a rigorous proof of convergence to an optimal policy, thus ensuring the validity of our approach. Although our VaR-Q-learning can be seen as a simple variant of the risk-sensitive Q-learning methods developed in distributional RL (DiRL) \citep{bellemare2017distributional,dabney2018distributional,dabney2018implicit}, we show that our slight modification makes a meaningful difference in the quality of the computed policy.

We first delineate our research setting and preliminary concepts in \cref{sec:sett-prel}. Then, we introduce new DP equations in \cref{sec:var-dp-equations} to solve VaR-MDPs while being simpler than existing methods. This enables us to propose a novel quantile Q-learning algorithm in \cref{sec:var-q-learning}, which optimizes the VaR-MDP from sampled trajectories, and establish its convergence property. Finally, numerical experiments presented in \cref{sec:numerical-experiment} illustrate the effectiveness of our algorithm. 

\section{Preliminaries and Formal Model}
\label{sec:sett-prel}

We first introduce our notations and overview relevant properties for quantile and VaR risk measures. We then formalize the MDP framework with VaR objective, which we name VaR-MDP in short. We prove the claims of this section in \cref{sec:preliminaries-proofs}.

\textbf{Notation.}
The augmented reals are $\bar{\Real} := \Real \cup \left\{ -\infty, \infty \right\}$ and we denote by $\Int$ the class of closed intervals in $\Real$. Given a measurable space $\mathcal{E}$, we abuse notation and denote by $\mathbb{R}^{\mathcal{E}}$ the set of all measurable functions from $\mathcal{E}$ to $\mathbb{R}$, with $\bar{\mathbb{R}}^{\mathcal{E}}$ the measurable functions to $\bar{\mathbb{R}}$, and finally with $\mathcal{I}^{\mathcal{E}}$ the functions to $\mathcal{I}$.
Given a finite set $\mathcal{Z} = \left\{ 1, \dots , Z \right\}$, the probability simplex is $\probs{\mathcal{Z}} := \{ y \in \Real_+^Z \mid  \mathbf{1}\tr  y = 1 \}$. 
For conciseness, we denote by $[n]$ the sequence of integers from $0$ to $n$. 
The set of discrete real-valued random variables with finite support is denoted by $\mathbb{X}$. Random variables are marked with a tilde, e.g., $\tilde{x} \in \mathbb{X}$.

\paragraph{Quantiles and Value-at-Risk}

The quantile of a random variable $\tilde{x} \in \mathbb{X}$ at level $\alpha \in [0,1]$ is any $\tau\in \Real$ such that $\Pr{\tilde{x} \le \tau} \ge \alpha$ and $\P{\tilde{x} \ge \tau} \ge 1-\alpha$. It may not be unique but lies in the interval $[\quant^-_{\alpha}(\tilde{x}), \quant^+_{\alpha}(\tilde{x})]$, where
\begin{align} \label{eq:quantile-definition}
\quant_{\alpha}^-(\tilde{x})
  &:=
    \min \left\{\tau \in \bar{\Real} \mid \Pr{ \tilde{x}  \le \tau } \ge \alpha\right\},\nonumber\\
  \quant_{\alpha}^+(\tilde{x})
  &:=\max \left\{\tau \in \bar{\Real} \mid \Pr{ \tilde{x} < \tau } \le \alpha\right\}.
\end{align}
The maximum in \cref{eq:quantile-definition} exists because the mapping $\tau \mapsto \Pr{\tilde{x} < \tau}$ is lower semi-continuous. So does the minimum, since $\tau \mapsto \Pr{\tilde{x} \le \tau}$ is upper semi-continuous. Also, $\quant^-_0(\tilde{x}) = -\infty$ and $\quant^+_1(\tilde{x}) = \infty$, while 
$\quant^-_\alpha(\tilde{x}) \in \Real$ for all $\alpha \in (0,1]$ and $\quant^+_\alpha(\tilde{x}) \in \Real$ for all $\alpha \in [0,1)$.

Monetary risk measures generalize the average criterion to account for uncertain outcomes. Among them, quantile-based measures like VaR are the most common \citep{Follmer2016, Shapiro2014}. Given a risk level $\alpha \in [0,1]$, the VaR function $\varo_{\alpha} \colon \mathbb{X} \to \bar{\Real}$ is defined as the largest $1-\alpha$ confidence lower bound on the value of $\tilde{x}$, i.e.,
\begin{equation}
  \label{eq:var-definition}
\var{\alpha}{\tilde{x}} \;:=\;  \quant^+_{ \alpha }(\tilde{x}).  
\end{equation}
By convention, $\var{\alpha}{0} = 0$ if $\alpha\in[0,1)$ and $\infty$ otherwise.  

\textbf{Elicitability. }
Based on the works of \cite{Gneiting2011} and \cite{Bellini2015}, a risk measure is \textit{elicitable} if it is the solution of an empirical risk minimization problem. 
In particular, a quantile can be estimated via quantile regression with the loss~\citep{Koenker1978}:
\begin{equation} \label{eq:quantile-loss}
  \ell_{\alpha}(\delta)
  := \max (\alpha \delta , -(1-\alpha) \delta). 
\end{equation}
Thus, quantile measures are elicitable as stated below.

\begin{lemma} \label{lem:var-elicitable}
  For any $\tilde{x}\in\mathbb{X}$
   and $\alpha\in [0,1]$, it holds that
  $\argmin_{y \in \Real} \E[\ell_\alpha(\tilde{x}-y)] = [\quant^-_{\alpha}(\tilde{x}), \quant^+_{\alpha}(\tilde{x})] \cap \Real.$
\end{lemma}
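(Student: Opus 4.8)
The plan is to exploit convexity of $y \mapsto \E[\ell_\alpha(\tilde{x}-y)]$, characterize its set of minimizers through one-sided derivatives, and then recognize the resulting optimality conditions as exactly the inequalities that define an $\alpha$-quantile of $\tilde{x}$.

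First I would set $L(y) := \E[\ell_\alpha(\tilde{x}-y)]$. Since $\delta \mapsto \ell_\alpha(\delta)$ is convex and piecewise linear and $\tilde{x}$ has finite support, $L$ is a finite, convex, piecewise-linear function on $\Real$; for $\alpha \in (0,1)$ it is coercive (because $\ell_\alpha(\delta) \to \infty$ as $\lvert \delta \rvert \to \infty$), so a minimizer exists, while for $\alpha \in \{0,1\}$ one checks directly that $L \ge 0$ with the value $0$ attained on an unbounded ray. By convexity, $y$ minimizes $L$ over $\Real$ if and only if $0 \in \partial L(y)$, i.e.\ $L'_-(y) \le 0 \le L'_+(y)$, where $L'_\pm$ denote the one-sided derivatives (both of which exist everywhere for a finite convex function on $\Real$).

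Next I would compute $L'_\pm$ by differentiating the finite sum $\sum_{x}\Pr{\tilde{x}=x}\,\ell_\alpha(x-y)$ termwise: each summand $\ell_\alpha(x-\cdot)$ has slope $-\alpha$ strictly left of $x$ and slope $1-\alpha$ strictly right of $x$, so aggregating atoms gives
\begin{align*}
  L'_+(y) &= (1-\alpha)\,\Pr{\tilde{x}\le y} - \alpha\,\Pr{\tilde{x}> y},\\
  L'_-(y) &= (1-\alpha)\,\Pr{\tilde{x}< y} - \alpha\,\Pr{\tilde{x}\ge y}.
\end{align*}
Substituting $\Pr{\tilde{x}>y} = 1-\Pr{\tilde{x}\le y}$ and $\Pr{\tilde{x}\ge y} = 1-\Pr{\tilde{x}< y}$ and simplifying yields $L'_+(y) \ge 0 \iff \Pr{\tilde{x}\le y} \ge \alpha$ and $L'_-(y) \le 0 \iff \Pr{\tilde{x}< y} \le \alpha$. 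Hence $y$ minimizes $L$ precisely when $\Pr{\tilde{x}\le y}\ge\alpha$ and $\Pr{\tilde{x}< y}\le\alpha$, i.e.\ when $y$ is an $\alpha$-quantile of $\tilde{x}$. Using monotonicity of $\tau \mapsto \Pr{\tilde{x}\le\tau}$ and $\tau \mapsto \Pr{\tilde{x}<\tau}$ together with \cref{eq:quantile-definition}, the first condition reads $y \ge \quant^-_\alpha(\tilde{x})$ and the second $y \le \quant^+_\alpha(\tilde{x})$; intersecting with $\Real$ (which discards the $\pm\infty$ endpoints that occur when $\alpha\in\{0,1\}$) gives the claimed identity.

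The step I expect to be the main obstacle is purely a bookkeeping one: getting the strict-versus-nonstrict inequalities at the atoms of $\tilde{x}$ exactly right in $L'_+$ and $L'_-$, so that the optimality characterization reproduces \emph{both} endpoints $\quant^-_\alpha(\tilde{x})$ and $\quant^+_\alpha(\tilde{x})$ of the quantile interval rather than a half-open variant. The degenerate cases $\alpha=0$ and $\alpha=1$ also require a short separate sanity check, but they are immediate once $L$ is written out explicitly (for instance $L(y)=\E[(y-\tilde{x})_+]$ when $\alpha=0$, whose minimizers are exactly $(-\infty,\quant^+_0(\tilde{x})]$).
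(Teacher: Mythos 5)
Your proof is correct, but it follows a different route from the paper's. The paper handles the main case $\alpha\in(0,1)$ by citation to Gneiting's Theorem 9 and only writes out the boundary cases $\alpha\in\{0,1\}$, which it does by a direct positivity argument: it shows $\E[\ell_0(\tilde{x}-\bar{y})]=0$ for every $\bar{y}\le\quant^+_0(\tilde{x})$ and strictly positive for $\bar{y}>\quant^+_0(\tilde{x})$ (the latter via left-continuity of $z\mapsto\Pr{\tilde{x}<\bar{y}+z}$ and a Markov-inequality step). You instead give a single self-contained subdifferential computation valid for all $\alpha\in[0,1]$: your one-sided derivatives $L'_+(y)=(1-\alpha)\Pr{\tilde{x}\le y}-\alpha\Pr{\tilde{x}>y}$ and $L'_-(y)=(1-\alpha)\Pr{\tilde{x}<y}-\alpha\Pr{\tilde{x}\ge y}$ are right, the equivalences $L'_+(y)\ge 0\iff\Pr{\tilde{x}\le y}\ge\alpha$ and $L'_-(y)\le 0\iff\Pr{\tilde{x}<y}\le\alpha$ are exactly the defining inequalities of an $\alpha$-quantile, and matching these to the endpoints $\quant^-_\alpha(\tilde{x})$ and $\quant^+_\alpha(\tilde{x})$ uses only monotonicity plus the fact (noted in the paper after \cref{eq:quantile-definition}) that the min and max there are attained, so the two level sets are closed half-lines. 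What your approach buys is uniformity and transparency: the degenerate cases need no separate mechanism, only the observation that the subgradient condition characterizes global minimizers of a convex function without any coercivity assumption (the minimizer set being an unbounded ray when $\alpha\in\{0,1\}$ is then automatic). What the paper's version buys is brevity for the interior case and an argument at the boundary that does not require computing derivatives at atoms. Your anticipated obstacle, the strict-versus-nonstrict bookkeeping at atoms, is indeed the only delicate point, and your stated formulas resolve it correctly.
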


As explained in~\citet[Ex.~3.8]{Bellini2015}, VaR is not elicitable unless $\tilde{x}$ is continuous, i.e. $\quant^-_{\alpha}(\tilde{x})=\quant^+_{\alpha}(\tilde{x})$. Hence, methods that rely on statistical estimation of conditional VaR using \cref{lem:var-elicitable} must contend with potential underestimation.

\paragraph{MDPs with Value-at-Risk  Objective}
We formulate the decision process as an MDP  $(\mathcal{S},\mathcal{A},r,p, \gamma, T)$ that comprises a set of states $\mathcal{S} = \left\{ 1, \dots, S \right\}$, a set of actions $\mathcal{A} = \left\{ 1, \dots , A \right\}$, a reward function $r \colon \mathcal{S} \times \mathcal{A} \to [\ushort{R}, \bar{R}]$, 
and 
a transition probability $p \colon \mathcal{S} \times \mathcal{A} \to \probs{\states}$, where $p(s,a,s')$ denotes the  probability to transit from $s \in \states$ to $s'\in \states$ after taking action $a \in \actions$. 
The coefficient $\gamma \in [0,1]$ is a discount factor and $T\in\mathbb{N}$ is the decision horizon. 
The agent aims to find a policy $\pi$ that optimizes the \emph{static} VaR of the discounted sum of returns, 
\begin{equation} \label{eq:main-var-objective}
\rho(\pi):=\varo_{\initalpha}^{\pi,s_0}\left[\sum_{k=0}^{T-1} \gamma^k  r(\tilde{s}_{k},\tilde{a}_{k})\right],
\end{equation}
for some initial state $s_0 \in \states$ and reference risk level $\initalpha\in(0,1)$. In~\cref{eq:main-var-objective}, the distribution of $\tilde{s}_k$ for $k \geq 1$ is implicitly governed by the transition model $p$ while the superscript $s_0$ fixes the initial state. The policy 
$\pi$ governs the realization of actions $\tilde{a}_{k}$ at all steps $k$, which we formalize next. 

Defining a history at time $k \in [T-1]$ as $h_k := (s_0, a_0, s_1, a_1, \dots , s_{k}) \in  \histories_k := (\states  \times \actions)^k \times  \states$, its appending to $a\in \actions $ and $s'\in \states $ is denoted by $\langle h_k, a, s' \rangle \in \mathcal{H}_{k+1}$.
Given a time horizon $t\in 1{:}T$, a \emph{history-dependent policy} $\pi := (\pi_k)_{k=0}^{t-1}$ is a sequence of decision rules $\pi_k\colon \histories_k \to \actions$ from histories to actions. Focusing on the class of Markov or even stationary policies is standard for risk-neutral objectives because they are optimal \citep{puterman2014markov}. For risk-averse objectives, they are generally not, so we must optimize over history-dependent policies. We note that \cite{hau2023dynamic} established the existence of optimal deterministic policies in \cref{eq:main-var-objective}, so we can ignore stochastic policies without impairing optimality. 
Let $\PiHD^t$ be the set of all history-dependent deterministic policies over horizon $t$. 
All in all, given a quantile level $\initalpha$ and an initial state $s_0$, we aim to find 
$\max_{\pi\in\PiHD^T} \rho(\pi)$. 

Although the optimal policy for~\cref{eq:main-var-objective} is history-dependent, it can still be computed using DP and value iteration~\citep{hau2023dynamic, li2022quantile}. As we will see, the difference with standard DP lies in the fact that the optimal state-action value function must also adapt the quantile level $\alpha \in [0,1]$ at each state $s\in \states$ and $t \in [T]$. 
Let thus $q_t\opt \colon \mathcal{S} \times [0,1] \times  \actions \to \bar{\Real}$ be the optimal state-action value function for $t\in [T]$:
\begin{align}
\label{eq:quantile_q_defn}
  q_t\opt(s,\alpha,a) &:= \max_{\substack{\pi\in \PiHD^t:\\\pi_0(s)=a}} \varo^{\pi,s}_{\alpha}\left[ \sum_{k=0}^{t-1} \gamma^k  r(\tilde{s}_k, \tilde{a}_k)\right].
\end{align}
It is also convenient to define the optimal state value function $v_t\opt \colon \mathcal{S} \times [0,1] \to \bar{\Real}$ for horizon $t\in [T]$ as
\begin{equation*}
    v_t\opt(s,\alpha) := \max_{\pi\in \PiHD^t}\varo^{\pi,s}_{\alpha}\left[\sum_{k=0}^{t-1} \gamma^k \cdot r(\tilde{s}_{k},\tilde{a}_{k})\right].
\end{equation*}
Similar to risk-neutral MDPs, the state value function is related to the state-action value function through $ v_t\opt(s,\alpha) \; =\;  \max_{a\in \mathcal{A}} \, q_t\opt(s,\alpha,a), \forall t\in [T]$ 
(see \cref{apx: proof value-qvalue}).

\section{VaR Dynamic Programming} 
\label{sec:var-dp-equations}

In this section, we devise a DP method to compute an optimal policy for the static VaR objective in \eqref{eq:main-var-objective}. This section assumes that the model is known and builds on the analysis in \cite{hau2023dynamic, li2022quantile}. \cref{sec:var-q-learning} extends the approach to the model-free setting. Regardless of the methodology, the key idea of VaR-DP is to augment the state space with a risk-level input and to perform Bellman recursions on the augmented state-action value function. This augmentation should not be arbitrary, as the `risk-level state' must evolve in a specific way to yield an optimal policy. 
We report the proofs of this section in \cref{sec:proofs-crefs-dp}.

\subsection{Model-based VaR-DP}
We present the VaR-DP introduced by \cite{li2022quantile} and revised in \cite{hau2023dynamic}. 
Let $B_{\max}\colon \bar{\Real}^{\states\times [0,1]\times \actions} \to \bar{\Real}^{\states\times [0,1]\times \actions}$ be the following Bellman operator. For all $s\in \states$, $\alpha \in [0,1]$, and $a\in\actions$: 
\begin{align}
(B_{\max} q)(s,\alpha, a) &:= r(s,a) + \gamma \; \cdot \max_{o \in \mathcal{O}_{sa}(\alpha)} \min_{s' \in\states}  \max_{a'\in \actions} q( s', o_{s'},a'),
  \label{eq:q-optimal-dp} \\
\mathcal{O}_{sa}(\alpha) &:=  \left\{ o\in [0,1]^{S} \mid \sum_{s'\in \states}  o_{s'} \cdot p(s,a,s')  \le \alpha
  \right\} \nonumber.
\end{align}
Further consider the sequence $q_0(s,\alpha,a):= \var{\alpha}{0}$, and $q_{t+1}:= B_{\max} q_t$ for all $t\in[T-1]$. The constraint set above depends on the transition model $p$ so Bellman recursions are model-based. Correspondingly, a proper recursion on risk levels leads to an optimal policy in terms of VaR return. Namely, for $k\in[T-1]$, let $\alpha_k:\histories_k\to [0,1]$ such that $\alpha_0(s) = \initalpha, \quad \forall s\in\states$, while 
$\alpha_{k+1}(h_{k+1})$ satisfies both $\alpha_{k+1}(\langle h_k, a, \cdot \rangle) \in \mathcal{O}_{sa}(\alpha_k(h_k))$ and 
\begin{align*}
    q_{T-k}(s,\alpha_k(h_k),a) 
    = r(s,a) 
     + \gamma \min_{s' \in\states}  \max_{a'\in \actions} \; q_{T-k-1}(s', \alpha_{k+1}(\langle h_k, a, s' \rangle), a').
\end{align*}
Then, one can derive a greedy policy  $\pi := (\pi_k)_{k=0}^{T-1}$ from the constructed risk level mappings $(\alpha_k)_{k=0}^{T-1}$, i.e., at each step $k \in [T-1]$ as
\begin{align}
\label{eq:VaRpol} 
    \pi_k (h_k) \in \argmax_{a\in \actions} q_{T-k}(s,\alpha_k(h_k),a),  \quad \forall h_k\in \mathcal{H}_k.
\end{align}
The following theorem shows that the above Bellman operator is optimal, in the sense that the sequence $q:= (q_t)_{t=0}^{T-1}$ resulting from these recursions yields \cref{eq:quantile_q_defn}. Accordingly, a  policy $\pi$ that is greedy for that sequence $q$ of state-action value functions at the constructed risk-level sequence is optimal for the VaR objective \eqref{eq:main-var-objective}.

\begin{theorem} 
\label{thm:q-optimal:old}
Let a sequence $q=(q_t)_{t=0}^{T}$ be such that $q_0(s,\alpha,a)= \var{\alpha}{0}$ and $q_{t+1}:= B_{\max} q_t$ for $t\in[T-1]$. Then, $q_t=q\opt_t$ for all $t\in[T]$, where $q\opt_t$ is defined in~\eqref{eq:quantile_q_defn}. Moreover, if a policy $\pi = (\pi_k)_{k=0}^{T-1}$ is greedy w.r.t.~$q$ as in~\eqref{eq:VaRpol}, then it maximizes the VaR objective~\eqref{eq:main-var-objective}.
\end{theorem}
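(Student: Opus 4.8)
The plan is to prove both claims by induction on the (remaining) horizon, with a single–step decomposition of $\varo=\quant^+$ as the engine. The key lemma, which follows directly from~\eqref{eq:quantile-definition} and underlies the analyses of \citet{hau2023dynamic} and \citet{li2022quantile}, is a \emph{mixture identity}: if a random variable is distributed as the mixture that equals $\tilde W_{s'}$ with probability $p(s,a,s')$, then its upper quantile at level $\alpha$ equals $\max_{o\in\mathcal{O}_{sa}(\alpha)}\min_{s'\in\states}\quant^+_{o_{s'}}(\tilde W_{s'})$, where the convention $\quant^+_1\equiv\infty$ makes states with $p(s,a,s')=0$ harmless (take $o_{s'}=1$ there). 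Indeed both expressions equal $\max\{\tau\mid\sum_{s'}p(s,a,s')\,\Pr{\tilde W_{s'}<\tau}\le\alpha\}$: monotonicity and left-continuity of $\tau\mapsto\Pr{\tilde W_{s'}<\tau}$ give $\tau\le\quant^+_{o_{s'}}(\tilde W_{s'})\iff\Pr{\tilde W_{s'}<\tau}\le o_{s'}$, whence the ``$\ge$'' direction follows by taking $o_{s'}=\Pr{\tilde W_{s'}<\tau}$ and the ``$\le$'' direction by taking $\tau=\min_{s'}\quant^+_{o_{s'}}(\tilde W_{s'})$. Combined with translation–invariance and positive homogeneity of $\quant^+$, and with the observation that conditioned on $\tilde s_1=s'$ the discounted tail of a horizon-$(t+1)$ policy $\pi$ with $\pi_0(s)=a$ is distributed as the horizon-$t$ return, started at $s'$, of the shifted policy $\pi^{(s,a,s')}$ (which at a length-$j$ history $h$ plays whatever $\pi$ plays at the length-$(j+1)$ history obtained by prepending $(s,a)$ to $h$), this yields the one-step identity
\[
\varo^{\pi,s}_\alpha\left[\sum_{k=0}^{t}\gamma^k r(\tilde s_k,\tilde a_k)\right] \;=\; r(s,a)+\gamma\max_{o\in\mathcal{O}_{sa}(\alpha)}\min_{s'\in\states}\varo^{\pi^{(s,a,s')},s'}_{o_{s'}}\left[\sum_{k=0}^{t-1}\gamma^k r(\tilde s_k,\tilde a_k)\right].
\]

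\emph{First claim.} I would induct on $t$. The base case $t=0$ holds because the empty sum equals $0$, so $q^\star_0(s,\alpha,a)=\var{\alpha}{0}=q_0(s,\alpha,a)$. Assuming $q_t=q^\star_t$, plugging this into the one-step identity, bounding $\varo^{\pi^{(s,a,s')},s'}_{o_{s'}}[\cdot]\le v^\star_t(s',o_{s'})=\max_{a'}q^\star_t(s',o_{s'},a')$ (the relation $v^\star_t(s,\alpha)=\max_{a}q^\star_t(s,\alpha,a)$ from \cref{sec:sett-prel}), and maximizing over $\pi$ with $\pi_0(s)=a$ gives $q^\star_{t+1}(s,\alpha,a)\le(B_{\max}q^\star_t)(s,\alpha,a)$. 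For the reverse inequality I would fix $o^\star$ attaining the outer maximum in $(B_{\max}q^\star_t)(s,\alpha,a)$ and, for each $s'$, a deterministic policy $\pi^{s'}\in\PiHD^t$ with $\varo^{\pi^{s'},s'}_{o^\star_{s'}}[\cdot]=v^\star_t(s',o^\star_{s'})$ — which exists by the optimality-of-deterministic-policies result of \citet{hau2023dynamic}. The policy that plays $a$ at $s$ and, upon reaching $s'$, follows $\pi^{s'}$, realizes (via the one-step identity, feasibility of $o^\star\in\mathcal{O}_{sa}(\alpha)$, and min-max monotonicity) a level-$\alpha$ VaR of at least $r(s,a)+\gamma\min_{s'}\max_{a'}q^\star_t(s',o^\star_{s'},a')=(B_{\max}q^\star_t)(s,\alpha,a)$. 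Hence $q^\star_{t+1}=B_{\max}q^\star_t=B_{\max}q_t=q_{t+1}$.

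\emph{Second claim.} Let $\pi=(\pi_k)_{k=0}^{T-1}$ be greedy as in~\eqref{eq:VaRpol} for the constructed risk levels $(\alpha_k)$, and for a history $h_k=(s_0,\dots,s_k)$ let $\tilde G_k^{h_k}:=\sum_{j=k}^{T-1}\gamma^{j-k}r(\tilde s_j,\tilde a_j)$ be the return obtained by running $\pi$ forward from $h_k$. I would show, by induction on the remaining horizon $n=T-k$, that $\var{\alpha_k(h_k)}{\tilde G_k^{h_k}}=q_n(s_k,\alpha_k(h_k),\pi_k(h_k))$ for every $h_k$; since $\pi_k(h_k)\in\argmax_a q_n(s_k,\alpha_k(h_k),a)$, this also equals $v^\star_n(s_k,\alpha_k(h_k))$. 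The case $n=0$ is $\var{\alpha_T(h_T)}{0}=q_0$. For the step at $(s_k,\,a_k:=\pi_k(h_k),\,\alpha:=\alpha_k(h_k))$ the ``$\le$'' direction repeats the argument of the first claim (using $q_n=q^\star_n$ and $\varo_{o_{s'}}[\cdot]\le v^\star_n(s',o_{s'})$), while for ``$\ge$'' one substitutes the feasible choice $o_{s'}=\alpha_{k+1}(\langle h_k,a_k,s'\rangle)\in\mathcal{O}_{s_ka_k}(\alpha)$ into the one-step identity and applies the inductive hypothesis together with greediness, $\var{\alpha_{k+1}(\langle h_k,a_k,s'\rangle)}{\tilde G_{k+1}^{\langle h_k,a_k,s'\rangle}}=\max_{a'}q_n(s',\alpha_{k+1}(\langle h_k,a_k,s'\rangle),a')$; the resulting expression is exactly the right-hand side of the equation defining $\alpha_{k+1}$, hence equals $q_{n+1}(s_k,\alpha,a_k)$. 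Taking $n=T$ and $h_0=(s_0)$ then gives $\rho(\pi)=\var{\initalpha}{\tilde G_0^{(s_0)}}=q_T(s_0,\initalpha,\pi_0(s_0))=\max_a q_T(s_0,\initalpha,a)=v^\star_T(s_0,\initalpha)=\max_{\pi'\in\PiHD^T}\rho(\pi')$, so $\pi$ is optimal.

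The hard part will be pinning down the mixture identity and the measure-theoretic bookkeeping that links a history-dependent policy to its one-step shifted continuations — in particular verifying that the conditional law of the discounted tail really is the law of the shifted policy's return, and checking that the $\infty$/worst-case conventions at the boundary risk levels $\alpha\in\{0,1\}$ and at zero-probability transitions line up consistently at every step. The remainder is routine: monotonicity of $\quant^+_\alpha$ in $\alpha$ and elementary max-min manipulations.
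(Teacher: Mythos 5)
Your proposal is correct, but it takes a genuinely different route from the paper. The paper's proof of \cref{thm:q-optimal:old} is a reduction: it cites the result of \citet{hau2023dynamic} (Appendix C) both for the identity $\theirs{q}_t=q\opt_t$ under a $\zeta$-parameterized recursion over the set $\Xi_{sa}(\alpha)=\{\zeta\mid\sum_{s'}\zeta_{s'}=1,\ \alpha\zeta_{s'}\le p(s,a,s')\}$ and for the optimality of the greedy policy, and then devotes the entire argument to showing that $B_{\max}$ with the constraint set $\mathcal{O}_{sa}(\alpha)$ computes the same operator, via an explicit change of variables $o_{s'}\leftrightarrow\alpha\zeta_{s'}/p(s,a,s')$ that carefully handles zero-probability transitions and the boundary cases $\alpha=0$ and $o_{s'}=1$. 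You instead give a self-contained first-principles proof: your ``mixture identity'' is exactly the VaR decomposition that this paper proves later as \cref{thm:var:decomp2} (closely following Thm.~5.1 of \citealt{hau2023dynamic}) in service of \cref{thm:q-optimal:new}, and your two inductions essentially re-derive the content of the cited prior work, including an explicit argument for the policy-optimality claim that the paper does not reprove at all. What each buys: the paper's route is short given the external reference but is not self-contained and leaves the greedy-policy claim entirely to the citation; yours is longer but closes that gap and reuses machinery the paper needs anyway for the model-free decomposition. The steps you flag as delicate (the conditional law of the discounted tail being the law of the shifted policy's return; the $\quant^+_1=\infty$ convention absorbing zero-probability successors; attainment of the various maxima, which holds here because $\PiHD^t$ is finite and $\tau\mapsto\Pr{\tilde{x}<\tau}$ is lower semi-continuous for finitely supported $\tilde{x}$) are indeed where the bookkeeping lies, but they all go through, so I see no gap.
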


\cref{thm:q-optimal:old} enables solving MDPs with static VaR objectives through DP equations. An important limitation of this representation is that it requires access to the underlying transition model. This implicitly appears in the constraint set $\mathcal{O}_{sa}(\alpha)$ from \cref{eq:q-optimal-dp}, which is required to find an optimal greedy policy according to the risk level mappings $(\alpha_k)_{k=0}^{T-1}$. Additionally, each Bellman update requires solving a constrained optimization problem, which slows down the learning process. 

This paper proposes a model-free learning algorithm with theoretical convergence guarantees. To assess the optimality of our value function updates, we reformulate VaR-DP equations in terms of Bellman operators in \cref{sec: nested var dp}. Our VaR-DP formulation can be seamlessly used for known or unknown transition probability models, as it allows for statistical estimation from sampled trajectories.

\subsection{Nested VaR-DP}
\label{sec: nested var dp}
 
Let $B_{\mathrm{u}}\colon \bar{\Real}^{\mathcal{S} \times  [0,1] \times \mathcal{A}} \to \bar{\Real}^{\mathcal{S} \times  [0,1] \times \mathcal{A}}$ be the following VaR Bellman operator:
\begin{equation*} 
(B_{\mathrm{u}} q)(s,\alpha, a) := \varo_{\alpha}^{a,s}[r(s,a) + \gamma \cdot  \max_{a'\in \mathcal{A}} q(\tilde{s}_1,\tilde{u},a')],
\end{equation*}
where the VaR is based on the joint distribution of $(\tilde{s}_1,\tilde{u})$ with $\tilde{s}_1 \sim p(s,a,\cdot)$ and an independent $\tilde{u}$ uniformly distributed on $[0,1]$ ($\tilde{u}\sim U([0,1]$).
Correspondingly, consider the following DP equations:
\begin{align}
q_0^{\mathrm{u}}(s,\alpha,a) &:= \var{\alpha}{0}, \quad \forall  s\in \mathcal{S}, \alpha \in [0,1], a\in \mathcal{A}, \nonumber\\
\label{eq:q-optimal-bellman}
q_{t+1}^{\mathrm{u}}  &:= B_{\mathrm{u}} q^{\mathrm{u}}_t , \quad \forall t\in [T-1].
\end{align}
We must also adapt the risk level mappings to this new Bellman recursion. Let $\hat{\alpha}_0^{\mathrm{u}}(s_0):=\initalpha$  and for $k\in[T-1]$, 
\begin{equation}
\hat{\alpha}_{k+1}^{\mathrm{u}}(h_{k+1}) :=  \min \left\{ o \in [0,1]
  \mid \max_{a\in\actions}q_{T-k-1}^{ \mathrm{u}}(s_{k+1},o,a) 
  \geq \frac{q_{T-k}^{\mathrm{u}}(s_{k},\hat{\alpha}_{k}(h_{k}),a_{k})-r(s_{k},a_{k})}{\gamma} \right\}. 
  \label{eq:newPolDef}
\end{equation}
The greediness criterion remains unchanged: at each step $k \in [T-1]$, construct
\begin{align}
\label{eq:greedy_var}
    \pi_k ^{\mathrm{u}}(h_k) \in \argmax_{a\in \actions} q_{T-k}^{\mathrm{u}}(s,\hat{\alpha}_k^{\mathrm{u}}(h_k),a),  \quad h_k\in \mathcal{H}_k.
\end{align}
The following theorem states that this alternative method is still valid for finding an optimal solution.

\begin{theorem}
\label{thm:q-optimal:new}
Let a sequence $q^{\mathrm{u}}=(q^{\mathrm{u}}_t)_{t=0}^{T}$ be such that $q^{\mathrm{u}}_0(s,\alpha,a):= \var{\alpha}{0}$ and $q^{\mathrm{u}}_{t+1}:= B_{\mathrm{u}} q^{\mathrm{u}}_t$ for $t\in[T-1]$. Then, $q^{\mathrm{u}}_t=q\opt_t$ for all $t\in[T]$, where $q\opt_t$ is defined in~\eqref{eq:quantile_q_defn}. Moreover, if a policy $\pi^{\mathrm{u}} = (\pi^{\mathrm{u}}_k)_{k=0}^{T-1}$ is greedy w.r.t.~$q^{\mathrm{u}}$ as in~\eqref{eq:greedy_var}, then it maximizes the VaR objective in~\eqref{eq:main-var-objective}.
\end{theorem}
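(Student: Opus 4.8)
The plan is to \textbf{reduce the statement to} \cref{thm:q-optimal:old}: I will prove that the new operator $B_{\mathrm{u}}$ coincides with the model-based operator $B_{\max}$ along the iterates of the recursion, and that the risk-level mapping $\hat\alpha^{\mathrm{u}}$ of \eqref{eq:newPolDef} together with the greedy rule \eqref{eq:greedy_var} realizes an admissible construction in the sense of \cref{thm:q-optimal:old}. The preparatory step is a monotonicity invariant: $\alpha\mapsto q^{\mathrm{u}}_t(s,\alpha,a)$ is non-decreasing \emph{and right-continuous} for every $s,a$ and $t\in[T]$. For $t=0$ this is immediate from $\var{\cdot}{0}$; for $t\ge 1$, $(B_{\mathrm{u}}q)(s,\alpha,a)=\varo^{a,s}_\alpha[\tilde y]=\quant^+_\alpha(\tilde y)$ for the fixed random variable $\tilde y:=r(s,a)+\gamma\max_{a'}q(\tilde s_1,\tilde u,a')\in\mathbb X$, and $\alpha\mapsto\quant^+_\alpha(\tilde x)$ is non-decreasing and right-continuous for any $\tilde x\in\mathbb X$ -- a direct consequence of \eqref{eq:quantile-definition} and of $\tau\mapsto\Pr{\tilde x<\tau}$ being a step function. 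Hence the pointwise maxima $g_t(s',\cdot):=\max_{a'}q^{\mathrm{u}}_t(s',\cdot,a')$ are non-decreasing and right-continuous on $[0,1]$.

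The technical core is to show that $(B_{\mathrm{u}}q)(s,\alpha,a)=(B_{\max}q)(s,\alpha,a)$ whenever $q(s',\cdot,a')$ is non-decreasing and right-continuous for all $s',a'$. Writing $g:=\max_{a'}q(\cdot,\cdot,a')$ and using the standard translation-equivariance and positive homogeneity of $\varo$, one has $(B_{\mathrm{u}}q)(s,\alpha,a)=r(s,a)+\gamma\,\varo_\alpha[g(\tilde s_1,\tilde u)]$ with $\tilde s_1\sim p(s,a,\cdot)$ and an independent $\tilde u\sim U([0,1])$. By independence and monotonicity of $g(s',\cdot)$, the set $\{u\in[0,1]:g(s',u)<\tau\}$ is a sub-interval of $[0,1]$ with left endpoint $0$ and Lebesgue measure $\lambda_{s'}(\tau):=\sup\{u:g(s',u)<\tau\}$, whence $\Pr{g(\tilde s_1,\tilde u)<\tau}=\sum_{s'}p(s,a,s')\,\lambda_{s'}(\tau)$ and, by \eqref{eq:quantile-definition}--\eqref{eq:var-definition}, $\varo_\alpha[g(\tilde s_1,\tilde u)]=\max\{\tau:\sum_{s'}p(s,a,s')\lambda_{s'}(\tau)\le\alpha\}$. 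It then remains to identify this with $\max_{o\in\mathcal O_{sa}(\alpha)}\min_{s'}g(s',o_{s'})$. For ``$\le$'', given any feasible $o$, set $\tau:=\min_{s'}g(s',o_{s'})$; monotonicity of $g(s',\cdot)$ gives $\lambda_{s'}(\tau)\le o_{s'}$, so $\sum_{s'}p(s,a,s')\lambda_{s'}(\tau)\le\alpha$ and $\tau$ is feasible on the left. For ``$\ge$'', let $\tau^\star$ attain the maximum on the left and put $o_{s'}:=\lambda_{s'}(\tau^\star)$; then $o\in\mathcal O_{sa}(\alpha)$, and right-continuity of $g(s',\cdot)$ yields $g(s',\lambda_{s'}(\tau^\star))\ge\tau^\star$, so $\min_{s'}g(s',o_{s'})\ge\tau^\star$. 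Together with the monotonicity invariant, an induction on $t$ shows that $q^{\mathrm{u}}_t$ satisfies the $B_{\max}$-recursion of \cref{thm:q-optimal:old}; hence $q^{\mathrm{u}}_t=q\opt_t$ for all $t\in[T]$.

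For the policy claim I would verify that $\hat\alpha^{\mathrm{u}}$ is an admissible risk-level sequence for \cref{thm:q-optimal:old}. Fix $h_k$, an action $a$, and let $c:=\big(q^{\mathrm{u}}_{T-k}(s_k,\hat\alpha^{\mathrm{u}}_k(h_k),a)-r(s_k,a)\big)/\gamma$. Since $q^{\mathrm{u}}_{T-k}=B_{\mathrm{u}}q^{\mathrm{u}}_{T-k-1}$, the computation above gives $c=\varo_{\hat\alpha^{\mathrm{u}}_k(h_k)}[g_{T-k-1}(\tilde s_1,\tilde u)]=\max\{\tau:\sum_{s'}p(s_k,a,s')\lambda_{s'}(\tau)\le\hat\alpha^{\mathrm{u}}_k(h_k)\}$, the $\lambda_{s'}$ now referring to $g_{T-k-1}(s',\cdot)$. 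Right-continuity of $g_{T-k-1}(s',\cdot)$ makes the set $\{o:\max_{a'}q^{\mathrm{u}}_{T-k-1}(s',o,a')\ge c\}$ closed, so the minimum in \eqref{eq:newPolDef} is attained and equals $\lambda_{s'}(c)$; thus $\hat\alpha^{\mathrm{u}}_{k+1}(\langle h_k,a,s'\rangle)=\lambda_{s'}(c)$. Consequently $\sum_{s'}p(s_k,a,s')\,\hat\alpha^{\mathrm{u}}_{k+1}(\langle h_k,a,s'\rangle)\le\hat\alpha^{\mathrm{u}}_k(h_k)$, i.e.\ $\hat\alpha^{\mathrm{u}}_{k+1}(\langle h_k,a,\cdot\rangle)\in\mathcal O_{s_k a}(\hat\alpha^{\mathrm{u}}_k(h_k))$, and the maximality of $c$ forces $\min_{s'}g_{T-k-1}(s',\lambda_{s'}(c))=c$, which is exactly the equation defining an admissible risk-level recursion. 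Because \eqref{eq:VaRpol} and \eqref{eq:greedy_var} coincide once $q^{\mathrm{u}}=q\opt$ and $\hat\alpha^{\mathrm{u}}$ is admissible, \cref{thm:q-optimal:old} then yields that $\pi^{\mathrm{u}}$ maximizes the VaR objective in \eqref{eq:main-var-objective}.

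I expect the main obstacle to be the semicontinuity bookkeeping underpinning the first two parts: one must establish that $\alpha\mapsto\quant^+_\alpha(\tilde x)$ is right-continuous (not merely non-decreasing), that this survives under the pointwise maxima $g_t$, and that it is precisely what validates the ``$\ge$'' inequality in the operator identity and the attainment of the minimum in \eqref{eq:newPolDef}. Since every random variable in $\mathbb X$ has finite support, all the maps involved are step functions and the arguments are ultimately elementary, but they need care around the jump points and the $\bar\Real$-valued boundary cases $\alpha\in\{0,1\}$ (e.g.\ $\var{1}{0}=\infty$).
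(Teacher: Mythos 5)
Your proposal is correct and follows essentially the same route as the paper: both arguments reduce to \cref{thm:q-optimal:old} by showing that $B_{\mathrm{u}}$ agrees with $B_{\max}$ on value functions that are non-decreasing and right-continuous in the risk level, and by verifying that $\hat{\alpha}^{\mathrm{u}}$ from \eqref{eq:newPolDef} is an admissible risk-level recursion (feasible for $\mathcal{O}_{sa}$ and attaining the inner $\min$--$\max$ equation). The only difference is packaging: you fold the paper's intermediate steps --- the identity $\varo_\alpha[f(\tilde{u})]=f(\alpha)$ and the mixture-VaR decomposition (\cref{thm:varRcNondecrease,thm:vIsVaR,thm:var:decomp2}) --- into a single direct computation with the generalized inverse $\lambda_{s'}(\tau)$, which establishes the same content.
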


In contrast to \cite{hau2023dynamic,li2022quantile}, \cref{thm:q-optimal:new} does not require knowing the transition model. More importantly, it reduces VaR-MDPs to a nested VaR conditional mapping: 
\begin{align*}
v_T\opt&(s_0,\alpha_0) = \max_{a_0\in\actions} \varo_{\alpha_0}^{a_0,s_0} \bigg[r(s_0,a_0) + \gamma \;\cdot \\
&\quad~\max_{a_1\in\actions} \varo_{\tilde{u}_1}\big[r(\tilde{s}_1,a_1) + \dots +\gamma\;\cdot \\
&\quad\max_{a_{T-2}\in\actions}\varo_{\tilde{u}_{T-2}}[r(\tilde{s}_{T-2},a_{T-2})+\gamma \;\cdot\\
&\quad\max_{a_{T-1}\in\actions} r(\tilde{s}_{T-1},a_{T-1})|\tilde{s}_{1:T-2},\tilde{u}_{1:T-2}]\dots|\tilde{s}_1,\tilde{u}_1\big]\bigg].    
\end{align*}
The value is still over an augmented state-space, but this time, the risk tolerance is independently and uniformly drawn from $[0,1]$ at each step. This is particularly suitable for sample-based RL and more amenable to deep settings such as~\cite{dabney2018distributional,dabney2018implicit,lim2022distributional}. Yet, two issues remain 
to produce a Q-learning procedure. First, the state space is infinite because the risk level $\alpha$ is continuous. Second, the elicitation procedure described in \cref{lem:var-elicitable} underestimates the VaR when the risk-to-go variable is discrete, so we cannot directly employ quantile regression. To address these issues, in the next section, we propose an approximation scheme that replaces the Bellman operator of \cref{eq:q-optimal-bellman} with either lower or upper bounds of the appropriate quantiles.

\section{Q-learning Algorithm and Analysis} 
\label{sec:var-q-learning}

This section builds on the DP of \cref{sec:var-dp-equations} to derive a new Q-learning algorithm for the static VaR objective.
\Cref{sec:discretize-q-values} introduces an approximate Bellman operator that can be used to compute \cref{eq:q-optimal-bellman} in a tractable way. Then, \cref{sec:Q-learning-algorithm} proposes the VaR-Q-learning algorithm and shows its convergence guarantees. 

\subsection{Discretized Quantile Q-functions} \label{sec:discretize-q-values}

The challenge in computing the value function in \cref{eq:q-optimal-bellman} stems from the fact that it is defined over a continuous $\alpha \in [0,1]$. To make the computation tractable, we propose to approximate the risk level $\alpha$ with properly defined functions that yield lower and upper bounds on the quantile value function. 

\begin{definition}\label{ass:falpha}
  Let $\bar{f},\ushort{f}\colon [0,1] \to [0,1]$ be two non-decreasing right-continuous functions  such that $\bar{f}(\alpha)>\alpha\geq\ushort{f}(\alpha)$ for all $\alpha\in[0,1)$, while $\bar{f}(1) = 1\geq\ushort{f}(1)$.
\end{definition}

The following result shows how the functions $\ushort{f}, \bar{f}$ can yield upper and lower bounds on VaR, and thus, on the state-action value function $q$. This development exploits that $\alpha \mapsto q(s,\alpha, a)$ is non-decreasing for each $s\in \states$ and $a\in \actions$.
\begin{lemma} \label{lem:var-bounds}
For any $\tilde{x}\in\mathbb{X}$ and $\alpha\in(0,1)$, we have
\begin{align*}
\varo_\alpha(\tilde{x}) &\ge \quant^+_{\ushort{f}(\alpha)}(\tilde{x}) = \max \argmin_{q\in \Real} \E[\ell_{\ushort{f}(\alpha)}(\tilde{x}-q)],  \\
 \varo_\alpha(\tilde{x}) &\le \quant^-_{\bar{f}(\alpha)}(\tilde{x}) = \min \argmin_{q\in \Real} \E[\ell_{\bar{f}(\alpha)}(\tilde{x}-q)],
\end{align*}
where $\ell_\alpha$ is defined as in~\cref{eq:quantile-loss} and $\ushort{f}, \bar{f}$ as in  \cref{ass:falpha}. 
\end{lemma}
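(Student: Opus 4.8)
The plan is to reduce everything to two ingredients: the monotonicity of the quantile functions $\alpha \mapsto \quant^-_\alpha(\tilde x)$ and $\alpha \mapsto \quant^+_\alpha(\tilde x)$, and the characterization of the quantile interval via the pinball loss $\ell_\alpha$ already established in \cref{lem:var-elicitable}. First I would recall from \cref{eq:var-definition} that $\varo_\alpha(\tilde x) = \quant^+_\alpha(\tilde x)$, so the two claimed inequalities are $\quant^+_{\ushort f(\alpha)}(\tilde x) \le \quant^+_\alpha(\tilde x)$ and $\quant^-_{\bar f(\alpha)}(\tilde x) \ge \quant^+_\alpha(\tilde x)$.

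For the lower bound: from \cref{ass:falpha} we have $\ushort f(\alpha) \le \alpha$, and from the definition \cref{eq:quantile-definition} the map $\beta \mapsto \quant^+_\beta(\tilde x) = \max\{\tau \mid \Pr{\tilde x < \tau} \le \beta\}$ is non-decreasing (enlarging $\beta$ relaxes the constraint set over which we maximize). Hence $\quant^+_{\ushort f(\alpha)}(\tilde x) \le \quant^+_\alpha(\tilde x) = \varo_\alpha(\tilde x)$. For the upper bound: from \cref{ass:falpha} we have $\bar f(\alpha) > \alpha$, and one needs the elementary inequality $\quant^+_\alpha(\tilde x) \le \quant^-_\beta(\tilde x)$ whenever $\beta > \alpha$; this follows because for $\tilde x \in \X$ (finitely supported), $\quant^+_\alpha(\tilde x)$ is the largest $\tau$ with $\Pr{\tilde x < \tau}\le\alpha$ and $\quant^-_\beta(\tilde x)$ is the smallest $\tau$ with $\Pr{\tilde x \le \tau}\ge\beta$, and a short case analysis on the atoms of $\tilde x$ (using $\alpha < \beta$ to rule out the degenerate coincidence) gives the claim. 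Applying it with $\beta = \bar f(\alpha)$ yields $\varo_\alpha(\tilde x) = \quant^+_\alpha(\tilde x) \le \quant^-_{\bar f(\alpha)}(\tilde x)$.

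The second equality in each line is then immediate from \cref{lem:var-elicitable}: that lemma identifies $\argmin_{q\in\Real}\E[\ell_\beta(\tilde x - q)]$ with the closed interval $[\quant^-_\beta(\tilde x), \quant^+_\beta(\tilde x)]\cap\Real$, so its maximum is $\quant^+_\beta(\tilde x)$ and its minimum is $\quant^-_\beta(\tilde x)$ — taking $\beta = \ushort f(\alpha)$ gives the first line and $\beta = \bar f(\alpha)$ the second. A small caveat I would address explicitly: \cref{lem:var-elicitable} is stated for $\alpha\in[0,1]$, but one should check the argmin is nonempty and the max/min are attained for the relevant levels; since $\alpha\in(0,1)$ and $\ushort f(\alpha),\bar f(\alpha)\in[0,1]$, the only edge cases are $\ushort f(\alpha)=0$ or $\bar f(\alpha)=1$, where $\quant^-_0 = -\infty$ or $\quant^+_1 = \infty$ respectively — but then the corresponding bound is vacuously $\ge -\infty$ or $\le \infty$ and the intersection-with-$\Real$ in \cref{lem:var-elicitable} handles the finite side, so no real difficulty arises.

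The main obstacle is the purely order-theoretic step $\quant^+_\alpha(\tilde x) \le \quant^-_\beta(\tilde x)$ for $\beta > \alpha$: it is the one place where the strict inequality $\bar f(\alpha) > \alpha$ in \cref{ass:falpha} is essential (for a discrete $\tilde x$ one can have $\quant^-_\alpha(\tilde x) < \quant^+_\alpha(\tilde x)$, so the bound would fail at $\beta = \alpha$), and it is worth a clean two-line argument from the definitions rather than hand-waving. Everything else is monotonicity and a direct appeal to \cref{lem:var-elicitable}.
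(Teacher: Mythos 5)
Your proposal is correct and follows essentially the same route as the paper's proof: monotonicity of $\beta\mapsto\quant^+_\beta(\tilde x)$ gives the lower bound, the strict inequality $\bar f(\alpha)>\alpha$ gives $\quant^+_\alpha(\tilde x)\le\quant^-_{\bar f(\alpha)}(\tilde x)$ (the paper does this via the set inclusion $\{\tau\mid\Pr{\tilde x<\tau}\le\alpha\}\subseteq\{\tau\mid\Pr{\tilde x<\tau}<\bar f(\alpha)\}$ together with the F\"{o}llmer--Schied characterization of $\quant^-$, while your two-line contradiction from the defining properties works equally well), and the outer equalities are read off from \cref{lem:var-elicitable}. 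The only small slip is in your edge-case remark: when $\ushort f(\alpha)=0$ the relevant quantity is $\quant^+_0(\tilde x)$, the minimum of the finite support, not $\quant^-_0=-\infty$, so that bound is finite rather than vacuous --- but this does not affect the argument.
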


We now derive Bellman operators that facilitate the construction of the Q-learning algorithm. First, the set-valued operator $\mathcal{B}_{\mathrm{u}}\colon {\Real}^{\states \times  [0,1] \times \actions} \to \mathcal{I}^{\states \times  [0,1] \times \actions}$ generalizes $B_{\mathrm{u}}$ as an empirical risk minimizer, i.e.,
\begin{equation}
\label{eq:quantile-q-bellman-operator}
  (\mathcal{B}_{\mathrm{u}} q)(s,\alpha, a)  :=    
  \argmin_{x\in \Real} \E^{a,s}\left[\ell_{\alpha}\left(r(s,a)+\gamma\cdot \max_{a'\in \actions} q(\tilde{s}_1,\tilde{u},a')-x\right)\right],
\end{equation}
where $\tilde{u} \sim U([0,1])$. Second, we define the upper and lower bounding Bellman operators $\mathcal{B}^{\bar{f}}_{\mathrm{u}}$ and $\mathcal{B}^{\ushort{f}}_{\mathrm{u}}$ for each $b = (s,\alpha,a)$, $s\in \states$, $\alpha \in [0,1]$, and $a\in \actions$ as
\begin{align*}
 (\mathcal{B}^{\bar{f}}_{\mathrm{u}} q)(b)
  &:= \begin{cases}
    (\mathcal{B}_{\mathrm{u}} q)(s,\bar{f}(\alpha), a)
    & \text{ if }  \bar{f}(\alpha) < 1, \\
    \displaystyle \bar{R}+\max_{s'\in\states,a'\in\actions}q(s',1,a') 
    & \text{ if }\bar{f}(\alpha)=1,
\end{cases} \\                                   
 (\mathcal{B}^{\ushort{f}}_{\mathrm{u}} q)(b)
  &:= \begin{cases}
    (\mathcal{B}_{\mathrm{u}} q)(s,\ushort{f}(\alpha), a)
    & \text{ if }  \ushort{f}(\alpha) > 0, \\
    \displaystyle  \ushort{R}+\min_{s'\in\states,a'\in\actions}q(s',0,a')
    & \text{ if }\ushort{f}(\alpha)=0.
\end{cases}       
\end{align*}
The following theorem shows how to use these operators to bound the value functions and the performance of the computed policy.
\begin{theorem} \label{thm:bound_v2}
Suppose that $\bar{q}^{\mathrm{u}}_0=\ushort{q}^{\mathrm{u}}_0=0$, and that $\bar{q}_{t+1}^{\mathrm{u}}$ and $\ushort{q}_{t+1}^{\mathrm{u}}$ are right-continuous non-decreasing functions in $\alpha$ 
satisfying 
$\bar{q}_{t+1}^{\mathrm{u}} \in (\mathcal{B}^{\bar{f}}_{\mathrm{u}} \bar{q}^{\mathrm{u}}_t)$ and $ \ushort{q}^{\mathrm{u}}_{t+1} \in (\mathcal{B}^\ushort{f}_{\mathrm{u}} \ushort{q}^{\mathrm{u}}_t)$ for all $t\in [T-1]$.\footnote{Right-continuity and non-decreasingness can  be obtained by ensuring $\bar{q}_t(s, \alpha ,a)=\bar{q}_t(s, \alpha' ,a)$ if $\bar{f}(\alpha)=\bar{f}(\alpha')$.} 
Then,  
$\bar{q}^{\mathrm{u}}_t
  \; \ge\;
  q\opt_t
  \; \ge\;  \ushort{q}^{\mathrm{u}}_t$
for all $t \in [T]$, where $q\opt_t$ is defined in~\cref{eq:quantile_q_defn}.
Moreover, if a policy $\ushort{\pi}:= (\ushort{\pi}_k)_{k=0}^{T-1}$ is greedy for $\ushort{q}^{\mathrm{u}}$, in the sense that
\begin{equation*}
        \upi_k(h_k) \in \argmax_{a\in \actions} \ushort{q}^{\mathrm{u}}_{T-k}(s,\ushort{\alpha}^{\mathrm{u}}_k(h_k),a),
\end{equation*} 
with $\ualpha^{\mathrm{u}}$ as in \cref{eq:newPolDef}, then it satisfies 
$\max_{a\in\actions}\ushort{q}^{\mathrm{u}}_T(\inits,\initalpha,a)
  \; \leq\;
  \rho(\upi).$  
\end{theorem}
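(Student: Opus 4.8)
The plan is to establish the three claims — the upper bound $\bar q^{\mathrm{u}}_t \ge q\opt_t$, the lower bound $q\opt_t \ge \ushort q^{\mathrm{u}}_t$, and the policy performance guarantee — by induction on the horizon $t$, leaning on \cref{thm:q-optimal:new} as the bridge between $q\opt_t$ and the nested-VaR Bellman operator $B_{\mathrm u}$, and on \cref{lem:var-bounds} to compare $B_{\mathrm u}$ with its discretized surrogates $\mathcal{B}^{\bar f}_{\mathrm u}$ and $\mathcal{B}^{\ushort f}_{\mathrm u}$. First I would set up the monotonicity toolkit: (i) $\alpha \mapsto q\opt_t(s,\alpha,a)$ is non-decreasing (stated in the text preceding \cref{lem:var-bounds}), hence so is $v\opt_t(s,\alpha) = \max_a q\opt_t(s,\alpha,a)$; (ii) $B_{\mathrm u}$ is monotone in $q$, i.e. $q \le q'$ pointwise implies $B_{\mathrm u} q \le B_{\mathrm u} q'$, because VaR is monotone and the inner $\max_{a'}$ is monotone; and (iii) for non-decreasing-in-$\alpha$ inputs, the bounding operators dominate $B_{\mathrm u}$ in the right direction — this is exactly where \cref{lem:var-bounds} enters, giving $(\mathcal{B}_{\mathrm u} q)(s,\bar f(\alpha),a) \ni \quant^-_{\bar f(\alpha)}(\cdots) \ge \varo_\alpha(\cdots) = (B_{\mathrm u} q)(s,\alpha,a)$ and symmetrically for $\ushort f$, with the boundary cases $\bar f(\alpha)=1$ and $\ushort f(\alpha)=0$ handled by the explicit $\bar R$-/$\ushort R$-shifted formulas (these are valid crude bounds since rewards lie in $[\ushort R,\bar R]$ and, at $\alpha=1$, $\varo_1$ is the essential supremum while at $\alpha=0$ it is the essential infimum — wait, more carefully, $\varo_1[\tilde x]=\quant^+_1(\tilde x)=+\infty$ by convention, but on the bounded support the relevant quantity is dominated by $\bar R + \max q(\cdot,1,\cdot)$; one must check the convention plays nicely here).

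For the \textbf{upper bound}, the base case $\bar q^{\mathrm u}_0 = 0 = q\opt_0$ is immediate (recall $q\opt_0(s,\alpha,a)=\var{\alpha}{0}=0$ for $\alpha\in(0,1)$). For the inductive step, assume $\bar q^{\mathrm u}_t \ge q\opt_t$ and both are non-decreasing right-continuous in $\alpha$. Then
\[
\bar q^{\mathrm u}_{t+1} \in \mathcal{B}^{\bar f}_{\mathrm u}\bar q^{\mathrm u}_t \;\ge\; \mathcal{B}^{\bar f}_{\mathrm u} q\opt_t \;\ge\; B_{\mathrm u} q\opt_t \;=\; q\opt_{t+1},
\]
where the first inequality is monotonicity of $\mathcal{B}^{\bar f}_{\mathrm u}$ (inherited from monotonicity of $\mathcal{B}_{\mathrm u}$ plus the explicit boundary formula being monotone in $q$), the second is point (iii) applied to the non-decreasing function $q\opt_t$, and the equality is \cref{thm:q-optimal:new}. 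The lower bound $q\opt_t \ge \ushort q^{\mathrm u}_t$ is the mirror image, using $\quant^+_{\ushort f(\alpha)} \le \varo_\alpha$ and the $\ushort R$-boundary case. One subtlety worth flagging: the operators are set-valued, so "$\bar q^{\mathrm u}_{t+1}\in \mathcal{B}^{\bar f}_{\mathrm u}\bar q^{\mathrm u}_t$" means membership in an interval, and I need the comparison to hold for the particular selection chosen — but since the whole interval $\mathcal{B}_{\mathrm u}q \subseteq [\quant^-,\quant^+]$ lies above $\varo_\alpha$ when we evaluate at $\bar f(\alpha)$, any selection works; for the lower operator any selection lies below $\varo_\alpha$ similarly. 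The footnote's prescription ($\bar q_t(s,\alpha,a)=\bar q_t(s,\alpha',a)$ when $\bar f(\alpha)=\bar f(\alpha')$) is what guarantees the selections stay non-decreasing and right-continuous, so the induction hypothesis is preserved.

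For the \textbf{policy performance guarantee}, the goal is $\max_a \ushort q^{\mathrm u}_T(s_0,\initalpha,a) \le \rho(\upi)$. I would prove the stronger telescoping claim: for the greedy policy $\upi$ and the risk-level mapping $\ualpha^{\mathrm u}$ from \cref{eq:newPolDef}, at every history $h_k\in\histories_k$,
\[
\max_{a\in\actions}\ushort q^{\mathrm u}_{T-k}\bigl(s_k,\ualpha^{\mathrm u}_k(h_k),a\bigr) \;\le\; \varo^{\upi,h_k}_{\ualpha^{\mathrm u}_k(h_k)}\!\left[\sum_{j=k}^{T-1}\gamma^{j-k} r(\tilde s_j,\tilde a_j)\right],
\]
by downward induction on $k$ from $k=T$ (trivial, both sides $0$) to $k=0$ (which gives the claim since $\ualpha^{\mathrm u}_0(s_0)=\initalpha$ and $\rho(\upi)$ on the right). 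The inductive step unrolls one application of the lower Bellman operator: $\ushort q^{\mathrm u}_{T-k}(s_k,\cdot,\upi_k(h_k))$ is a lower bound on $\varo$ of $r(s_k,a_k)+\gamma\max_{a'}\ushort q^{\mathrm u}_{T-k-1}(\tilde s_{k+1},\tilde u,a')$ (by \cref{lem:var-bounds}, since $\ushort q^{\mathrm u}_{T-k}\in\mathcal{B}^{\ushort f}_{\mathrm u}\ushort q^{\mathrm u}_{T-k-1}$ means it equals a quantile at level $\ushort f(\alpha)\le\alpha$, hence is $\le\varo_\alpha$); then the definition of $\ualpha^{\mathrm u}_{k+1}$ in \cref{eq:newPolDef} is precisely engineered so that replacing the conditional $\varo_{\tilde u}$ by the realized risk level $\ualpha^{\mathrm u}_{k+1}(\langle h_k,a_k,\tilde s_{k+1}\rangle)$ preserves the inequality — $\ualpha^{\mathrm u}_{k+1}$ is the smallest level at which $\max_{a'}\ushort q^{\mathrm u}_{T-k-1}(s_{k+1},\cdot,a')$ reaches the required threshold, and monotonicity in $\alpha$ plus the quantile-decomposition property of VaR (the "law of total VaR" that underlies \cref{thm:q-optimal:new}) lets us push the induction hypothesis through. \textbf{The main obstacle} will be exactly this last point: making rigorous how the outer $\varo_\alpha$ decomposes over the transition so that the realized $\ualpha^{\mathrm u}_{k+1}$ substitutes correctly — this is the VaR analogue of the Bellman decomposition and must be extracted from the argument behind \cref{thm:q-optimal:new} (relating $\varo_\alpha$ of a compound random variable to an infimum over feasible risk-allocations $o\in\mathcal{O}_{sa}(\alpha)$), while also carefully tracking the underestimation gap introduced by using $\ushort f(\alpha)$ in place of $\alpha$ so that it always accumulates in the favorable direction. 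The boundary cases where $\ushort f(\alpha)=0$ (value $\ushort R+\min q(\cdot,0,\cdot)$) and where the "min" defining $\ualpha^{\mathrm u}_{k+1}$ might be attained at $0$ or fail to give a finite threshold also need a short separate check, using $\ushort R$ as a uniform lower bound on rewards.
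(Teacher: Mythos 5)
Your proposal matches the paper's proof in both structure and substance: the bounds are obtained by induction on $t$ via \cref{thm:q-optimal:new}, \cref{lem:var-bounds}, monotonicity of the quantile in its argument, and separate treatment of the $\bar{f}(\alpha)=1$ / $\ushort{f}(\alpha)=0$ boundary cases; and the policy guarantee is proved by exactly the telescoping induction you describe (the paper phrases it as forward induction on the sub-horizon $t=T-k$ with sub-policies $\upi^t$), resting on feasibility of the risk levels $\ualpha^{\mathrm{u}}_{k+1}(\langle h_k,a,s'\rangle)$ in $\mathcal{O}_{sa}(\ualpha^{\mathrm{u}}_k(h_k))$ and the VaR decomposition lemma (\cref{thm:var:decomp2}) you correctly identify as the key obstacle. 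The only slip is cosmetic: that decomposition expresses $\varo_\alpha$ as a \emph{maximum} (not an infimum) over risk allocations, which is precisely why any feasible allocation yields the lower bound you need.
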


To simplify the exposition, we focus on the simple approximation scheme that discretizes the risk-level $\alpha$ using a uniform grid as below. 

\begin{example}[$J$-uniform discretization] \label{exm:discrete-approx}
Define $\ushort{f}, \bar{f} \colon [0,1] \to [0,1]$ as
\begin{align*}
  \ushort{f}(\alpha)
  &:= \max \left\{ \nicefrac{j}{J} \mid \nicefrac{j}{J}  \le \alpha, j\in [J-1]   \right\}, \\
  \bar{f}(\alpha)
  &:= \max \left\{ \nicefrac{j+1}{J} \mid \nicefrac{j}{J} \le \alpha, j \in  [J-1] \right\},
\end{align*}
for $J \ge 2$. These functions satisfy the conditions of \cref{ass:falpha}.
\end{example}

Under this uniform discretization scheme, $\mathcal{B}_{\mathrm{u}}$ becomes  $\mathcal{B}_{\mathrm{u}}^\discretized\colon  {\Real}^{\states \times  [J-1] \times \actions} \to \mathcal{I}^{\states \times  [J-1] \times \actions}$, defined for $s\in \states$, $j\in [J-1]$, and $a\in \actions$ as
\begin{gather*}
  (\mathcal{B}^\discretized_{\mathrm{u}} q)(s,j,a) \; :=\;  \argmin_{x\in \Real} 
\frac{1}{J} \sum_{j'=0}^{J-1} \E^{a,s}\left[\ell_{\frac{j}{J}}\left(r(s,a)+\gamma \max_{a'\in \actions}  q(\tilde{s}_1,j',a')-x\right)\right].    
\end{gather*} 
Similarly, the Bellman operator $\mathcal{B}^{\ushort{f}}_{\mathrm{u}}$ becomes
$\bud\colon {\Real}^{\states \times  [J-1] \times \actions} \to \mathcal{I}^{\states \times  [J-1] \times \actions}$, which is defined for $b=(s, j, a)$, $s\in \states$, $j\in [J-1]$, and $a\in \actions$ as
\begin{gather} \label{eq:discreteBellmanEq}
(\bud q) (b)
:= \begin{cases}
   (\mathcal{B}^\discretized_{\mathrm{u}} q)(s,j,a)
    & \text{if }  j\geq 1, \\
     \ushort{R}+ \displaystyle\min_{s\in\states,a\in\actions}q(s,0,a)
    & \text{if }j=0.
\end{cases}      
\end{gather}
The following proposition states the correctness of the discretized Bellman operators.
\begin{proposition} \label{lem:unif_discretize_bellman}
Given the $J$-uniform discretization of \cref{exm:discrete-approx}, let $\qud_0:=0$ and $\qud_{t+1} \in \bud \qud_{t}, t \in [T-1]$. Then, the sequence $\ushort{q}:= (\ushort{q}_t)_{t=0}^{T}$ defined for $t\in[T]$ as
\begin{align*} 
  \ushort{q}_t(s, \alpha ,a) := \qud_t(s, J \cdot \ushort{f}(\alpha) ,a),
  \qquad \forall s\in\states,\alpha \in [0,1],a\in\actions,
\end{align*}
provides a lower-bound for the optimal value function $q\opt$ defined in \cref{eq:quantile_q_defn}. Moreover, it can be used to build a policy $\ushort{\pi}:= (\ushort{\pi}_t)_{t=0}^{T}$ that achieves this lower-bound.
\end{proposition}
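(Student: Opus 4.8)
The plan is to obtain Proposition~\ref{lem:unif_discretize_bellman} as a specialization of Theorem~\ref{thm:bound_v2} to the lower-bounding operator $\mathcal{B}^{\ushort{f}}_{\mathrm{u}}$, with $\ushort{f}$ the function from the $J$-uniform discretization of Example~\ref{exm:discrete-approx}. I would show that the sequence $\ushort{q}=(\ushort{q}_t)_{t=0}^{T}$ with $\ushort{q}_t(s,\alpha,a):=\qud_t(s,J\cdot\ushort{f}(\alpha),a)$ is an admissible choice of the sequence $\ushort{q}^{\mathrm{u}}$ in Theorem~\ref{thm:bound_v2}, namely that (i)~$\ushort{q}_0=0$; (ii)~every $\ushort{q}_t(s,\cdot,a)$ is right-continuous and non-decreasing; and (iii)~$\ushort{q}_{t+1}\in\mathcal{B}^{\ushort{f}}_{\mathrm{u}}\ushort{q}_t$ for all $t\in[T-1]$. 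Item~(i) is immediate from $\qud_0=0$. For~(ii), note $\ushort{f}$ is the right-continuous non-decreasing step function that is constant on each cell $[j/J,(j+1)/J)$, and on $[(J-1)/J,1]$, with values in $\{0,1/J,\dots,(J-1)/J\}$; hence $J\cdot\ushort{f}(\alpha)\in[J-1]$, $\ushort{q}_t(s,\cdot,a)$ inherits the step structure, and non-decreasingness follows by a short induction using that $\min$ and $\max$ of the interval $(\mathcal{B}^{\discretized}_{\mathrm{u}}q)(s,j,a)$ equal the quantiles $\quant^{-}_{j/J}$ and $\quant^{+}_{j/J}$ of a fixed random variable and are therefore monotone in $j$, provided the selection of $\qud_{t+1}$ out of $\bud\qud_t$ is made consistently (e.g.\ always the upper endpoint), as in the footnote of Theorem~\ref{thm:bound_v2}.

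The crux is item~(iii), i.e.\ matching the two Bellman operators under the change of variable $\alpha\mapsto j=J\cdot\ushort{f}(\alpha)$. Fix $s,a$, and $\alpha$ with $\ushort{f}(\alpha)=j/J>0$. By definition, $(\mathcal{B}^{\ushort{f}}_{\mathrm{u}}\ushort{q}_t)(s,\alpha,a)=(\mathcal{B}_{\mathrm{u}}\ushort{q}_t)(s,j/J,a)=\argmin_{x\in\Real}\E^{a,s}[\ell_{j/J}(r(s,a)+\gamma\max_{a'}\ushort{q}_t(\tilde{s}_1,\tilde{u},a')-x)]$, with $\tilde{u}\sim U([0,1])$ independent of $\tilde{s}_1\sim p(s,a,\cdot)$. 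Substitute $\ushort{q}_t(\tilde{s}_1,\tilde{u},a')=\qud_t(\tilde{s}_1,J\cdot\ushort{f}(\tilde{u}),a')$ and observe that the pushforward of $U([0,1])$ under $\tilde{u}\mapsto J\cdot\ushort{f}(\tilde{u})$ is the uniform law on $[J-1]$ (since $\ushort{f}(\tilde{u})=j'/J$ on an interval of length $1/J$ for each $j'\in[J-1]$) and is independent of $\tilde{s}_1$. Thus $r(s,a)+\gamma\max_{a'}\ushort{q}_t(\tilde{s}_1,\tilde{u},a')$ has the same joint law with $\tilde{s}_1$ as the random variable appearing in $(\mathcal{B}^{\discretized}_{\mathrm{u}}\qud_t)(s,j,a)$, so $(\mathcal{B}^{\ushort{f}}_{\mathrm{u}}\ushort{q}_t)(s,\alpha,a)=(\mathcal{B}^{\discretized}_{\mathrm{u}}\qud_t)(s,j,a)=(\bud\qud_t)(s,j,a)$, the last equality because $j\ge1$. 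When $\ushort{f}(\alpha)=0$ one checks directly that $(\mathcal{B}^{\ushort{f}}_{\mathrm{u}}\ushort{q}_t)(s,\alpha,a)=\ushort{R}+\min_{s',a'}\ushort{q}_t(s',0,a')=\ushort{R}+\min_{s',a'}\qud_t(s',0,a')=(\bud\qud_t)(s,0,a)$, using $J\cdot\ushort{f}(0)=0$. Since $\qud_{t+1}(s,j,a)\in(\bud\qud_t)(s,j,a)$ by hypothesis, substituting $j=J\cdot\ushort{f}(\alpha)$ gives $\ushort{q}_{t+1}(s,\alpha,a)\in(\mathcal{B}^{\ushort{f}}_{\mathrm{u}}\ushort{q}_t)(s,\alpha,a)$, which is~(iii).

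Given (i)--(iii), Theorem~\ref{thm:bound_v2} directly yields $\ushort{q}_t\le q\opt_t$ for all $t\in[T]$, proving the lower-bound claim. For the policy claim, I would take $\ushort{\pi}$ greedy for $\ushort{q}$ in the sense of Theorem~\ref{thm:bound_v2}, i.e.\ $\ushort{\pi}_k(h_k)\in\argmax_{a}\ushort{q}_{T-k}(s_k,\ualpha^{\mathrm{u}}_k(h_k),a)$ with $\ualpha^{\mathrm{u}}$ as in~\eqref{eq:newPolDef}---where all relevant quantities lie on the finite grid, so the minimum in~\eqref{eq:newPolDef} is attained---and invoke Theorem~\ref{thm:bound_v2} once more to get $\rho(\ushort{\pi})\ge\max_{a}\ushort{q}_T(\inits,\initalpha,a)$, so that $\ushort{\pi}$ achieves the computed lower bound.

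I expect the main obstacle to be the careful bookkeeping in item~(iii): tracking the interval/set-valued nature of $\mathcal{B}_{\mathrm{u}}$, $\mathcal{B}^{\discretized}_{\mathrm{u}}$, and $\bud$; rigorously justifying that drawing $\tilde{u}$ uniformly on $[0,1]$ and then applying $J\cdot\ushort{f}$ is distributionally identical to drawing $\tilde{j}'$ uniformly on $[J-1]$ (jointly with $\tilde{s}_1$); and separately dispatching the boundary level $\ushort{f}(\alpha)=0$. Checking right-continuity and monotonicity of $\ushort{q}_t$ under a consistent selection rule is the secondary, routine point.
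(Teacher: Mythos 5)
Your proposal is correct and follows essentially the same route as the paper: verify that $\ushort{q}_t(s,\alpha,a):=\qud_t(s,J\cdot\ushort{f}(\alpha),a)$ satisfies the hypotheses of \cref{thm:bound_v2} (base case, right-continuity/monotonicity, and the identification $\mathcal{B}^{\ushort{f}}_{\mathrm{u}}\ushort{q}_t = \bud\qud_t$ via the fact that $J\cdot\ushort{f}(\tilde{u})$ is uniform on $[J-1]$ when $\tilde{u}\sim U([0,1])$), then invoke that theorem for both the bound and the greedy policy. Your treatment is, if anything, slightly more careful than the paper's about the set-valued selection and the monotonicity induction.
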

\cref{alg:quantile_policy_exe} presents a procedure that constructs the policy $\ushort{\pi}$ described in  \cref{lem:unif_discretize_bellman}.

\begin{algorithm}
\SetAlgoLined 
 \KwIn{$s_0 \in \states, \alpha_0 \in (0,1), T, J \in \Natural$, $\ushort{q}^\discretized : [T] \times \states \times [J-1] \times \actions \to \bar{\Real}$}
$(s, j) \gets \left(s_0,\lfloor J \cdot \alpha_0  \rfloor \right)$\\
\For{$t = T,\dots,1$}{
    $\displaystyle a\opt \gets \argmax_{a\in\actions}\ushort{q}_t^\discretized(s,j,a)$\\
    Execute $a\opt$ and observe $r$ and $s'$\\
    $\tau \gets  \gamma^{-1} (\ushort{q}_t^\discretized(s,j,a\opt)-r)$\\
    $\displaystyle\mathcal{J} \gets \left\{j' \in [J-1]\mid\max_{a'\in\actions}\ushort{q}_{t-1}^\discretized(s',j',a')\geq \tau\right\}$\\
    $j \gets J-1$ \tcp*{arbitrary initialization}
    \uIf {$\displaystyle\mathcal{J}$ \text{is not empty}}{
        $j \gets \min\mathcal{J}$
    }
    $s\gets s'$ \\
}    
 \caption{Static VaR Policy Execution}
  \label{alg:quantile_policy_exe}
\end{algorithm}

\subsection{VaR-Q-learning Algorithm} \label{sec:Q-learning-algorithm}

We now use the Bellman operators from  \cref{sec:discretize-q-values} to develop and analyze a new Q-learning algorithm that solves the VaR objective in a model-free way.

An important obstacle in developing the Q-learning algorithm is that the Bellman operators in \cref{sec:discretize-q-values} are set-valued and lack a unique solution. The operators are set-valued because the quantile is not unique. As a result, even a well-designed iterative algorithm may oscillate among multiple possible solutions. It is common in RARL to replace the quantile loss function with Huber's loss to guarantee differentiability~\citep{dabney2018implicit}; however, as we show in \cref{sec:why-not-huber}, Huber's loss is insufficient to guarantee the uniqueness of value function. Instead, we replace the loss $\ell_{\alpha}$ of \cref{eq:quantile-loss} with the \emph{soft-quantile loss} $\ell_{\alpha}^{\kappa}\colon \Real \to \Real$ defined for $\kappa \in (0,1]$ and $\alpha \in (0,1)$ as 
\begin{equation}\label{eq:delageLoss1}
  \ell^{\kappa}_\alpha(\delta) :=
\begin{cases}
        \frac{(1-\alpha )\kappa}{2} \left((\delta+\kappa)^2 - \frac{2\delta}{\kappa} - 1  \right)& \text{if } \delta  < -\kappa, \\
        (1 - \alpha)   \left( \frac{\delta^2}{2\kappa}\right)& \text{if} ~\delta \in [- \kappa, 0), \\
        \alpha  \left( \frac{\delta^2}{2\kappa}\right)& \text{if} ~\delta \in [0, \kappa), \\
        \frac{\alpha \kappa}{2} \left( (\delta - \kappa)^2 + \frac{2\delta}{\kappa} - 1 \right)& \text{if } \delta \geq \kappa.
    \end{cases} 
\end{equation}
We also need the derivative $\partial \ell_{\alpha}^{\kappa}$ of $\ell_{\alpha}^{\kappa }$, which is  
\begin{equation}\label{eq:delageLoss2}
\partial \ell^{\kappa}_\alpha(\delta) :=
\begin{cases}
  (1-\alpha) \left(\kappa \delta + \kappa^2 - 1  \right) & \text{if } \delta < -\kappa, \\
   \frac{1 - \alpha}{\kappa} \delta & \text{if} ~ \delta \in [- \kappa, 0),  \\
    \frac{\alpha}{\kappa} \delta  & \text{if} ~\delta \in [0, \kappa), \\
  \alpha \left(\kappa\delta -\kappa^2 + 1  \right) & \text{if } \delta \ge \kappa.
\end{cases} 
\end{equation}

As the following lemma states, the function ${\ell_\alpha^\kappa}$ is strongly convex and has a Lipschitz-continuous gradient. These properties are instrumental in showing the value function's uniqueness and analyzing our Q-learning algorithm. 
\begin{lemma} \label{lem:elicitable-objective-lips-strong}
The function $m \mapsto \mathbb{E}[\ell_\alpha^\kappa(\tilde{x}-m)]$ with $\tilde{x}$ discrete is \emph{$\mmu$-strongly convex} and has an \emph{$L$-Lipschitz continuous derivative} for each $\alpha\in (0,1),\kappa \in (0,1]$ with 
\[
\mmu = \min \left\{ \alpha , 1-\alpha  \right\} \kappa,
\qquad
L = \max \left\{ \alpha , 1-\alpha  \right\} \kappa^{-1}.
\]
\end{lemma}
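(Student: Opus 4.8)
The plan is to verify directly that $m \mapsto \mathbb{E}[\ell_\alpha^\kappa(\tilde{x} - m)]$ inherits strong convexity and gradient-Lipschitzness from the scalar function $\delta \mapsto \ell_\alpha^\kappa(\delta)$, for which it suffices to establish the corresponding one-dimensional bounds $\mu \le (\ell_\alpha^\kappa)'' \le L$ (in the a.e./weak sense, since $\ell_\alpha^\kappa$ is only $C^1$), with $\mu = \min\{\alpha,1-\alpha\}\kappa$ and $L = \max\{\alpha,1-\alpha\}\kappa^{-1}$. Indeed, if $m_1 \le \partial\ell_\alpha^\kappa$ is $\mu$-strongly monotone and $L$-Lipschitz as a map $\Real \to \Real$, then for any discrete $\tilde{x}$ the map $m \mapsto \mathbb{E}[\ell_\alpha^\kappa(\tilde{x}-m)]$ has derivative $m \mapsto -\mathbb{E}[\partial\ell_\alpha^\kappa(\tilde{x}-m)]$ (differentiation under the finite sum is immediate), and strong monotonicity/Lipschitzness of this derivative in $m$ follows by taking expectations of the pointwise inequalities; these are exactly the standard characterizations of $\mu$-strong convexity and $L$-smoothness for a once-differentiable convex function. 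So the real content is the scalar estimate.

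The main step is therefore to read off $(\ell_\alpha^\kappa)''$ on each of the four pieces of the piecewise definition in \cref{eq:delageLoss2}. Differentiating $\partial\ell_\alpha^\kappa$ piecewise gives: $(1-\alpha)\kappa$ on $\delta < -\kappa$; $(1-\alpha)/\kappa$ on $\delta \in (-\kappa, 0)$; $\alpha/\kappa$ on $\delta \in (0,\kappa)$; and $\alpha\kappa$ on $\delta > \kappa$. Since $\kappa \in (0,1]$ we have $\kappa \le 1/\kappa$, so each of these four values lies in the interval $[\min\{\alpha,1-\alpha\}\kappa,\ \max\{\alpha,1-\alpha\}\kappa^{-1}]$: the smallest possible value among them is $\min\{\alpha,1-\alpha\}\kappa$ (attained by whichever of $(1-\alpha)\kappa,\alpha\kappa$ is smaller) and the largest is $\max\{\alpha,1-\alpha\}\kappa^{-1}$ (attained by whichever of $(1-\alpha)/\kappa,\alpha/\kappa$ is larger). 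Thus $\mu \le (\ell_\alpha^\kappa)''(\delta) \le L$ for all $\delta \notin \{-\kappa,0,\kappa\}$.

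To turn the a.e.\ second-derivative bounds into the genuine convexity/Lipschitz statements I would note that $\partial\ell_\alpha^\kappa$ is continuous on all of $\Real$ (one checks the one-sided limits agree at $-\kappa$, $0$, and $\kappa$: e.g.\ at $\delta = -\kappa$ both pieces give $-(1-\alpha)$, at $0$ both give $0$, at $\kappa$ both give $\alpha$), so it is globally Lipschitz with constant $L$ and $\mu$-strongly increasing, by integrating the derivative bound across the finitely many breakpoints. Equivalently, $\delta \mapsto \ell_\alpha^\kappa(\delta) - \tfrac{\mu}{2}\delta^2$ is convex (its derivative $\partial\ell_\alpha^\kappa(\delta) - \mu\delta$ is nondecreasing) and $\delta \mapsto \tfrac{L}{2}\delta^2 - \ell_\alpha^\kappa(\delta)$ is convex, which are the definitions of $\mu$-strong convexity and $L$-smoothness. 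The only mild subtlety — and the one place to be careful rather than merely mechanical — is bookkeeping at the three kink points to confirm $C^1$ regularity and that no concavity or super-$L$ slope is hidden there; everything else is routine piecewise calculus and linearity of expectation over a finite support.
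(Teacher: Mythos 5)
Your proposal is correct and takes essentially the same route as the paper: the paper's Lemma~\ref{lem:bounded_hessian} likewise reads off the bounds $\min\{\alpha,1-\alpha\}\kappa$ and $\max\{\alpha,1-\alpha\}\kappa^{-1}$ from the slopes of the piecewise-linear derivative $\partial\ell^\kappa_\alpha$ in \eqref{eq:delageLoss2} (citing standard references for the equivalence with strong convexity and gradient Lipschitzness), and then passes to the expectation by linearity. Your version merely carries out by hand the kink-point continuity check and the monotone-derivative characterizations that the paper delegates to Rockafellar and Nesterov.
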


\begin{algorithm}
\SetAlgoLined 
\KwIn{Step sizes $\beta_i$, stream of sampled transitions $(t_i, s_i, j_i, a_i,s_i')$, for all $i\in \Natural$}
\init{$\qud_0(b) \gets t\ushort{R},~\forall b \in  [T] \times  \states \times [J-1] \times \actions$}
\For{$i=0, 1, 2, \dots $}{
$b_i \gets (t_i, s_i, j_i, a_i)$\\
\uIf{$j_i > 0$ \and $t_i > 0$}{
$\qud_{i+1}(b_i) \gets \qud_i(b_i) 
+\frac{\beta_i}{J}\sum_{j' =0}^{J-1} \partial \ell_{\frac{j_i}{J}}^{\kappa}\big(r(s_i,a_i)$ $+\gamma  \cdot \displaystyle\max_{a'\in \mathcal{A}}\qud_i(t_i-1, s_i',j',a') - \qud_i(b_i)\big)$\\
}
\Else{
\tcp{Effectively do nothing}
$ \qud_{i+1}(b_i) \gets \qud_i(b_i) 
+ \beta_i ( t_i\ushort{R} - \qud_i(b_i) ) $ \label{line:empty-step}
}
}
\caption{VaR-Q-learning Algorithm}
\label{alg:q-learning}
\end{algorithm}

Having introduced the soft-quantile function in \eqref{eq:delageLoss1}, we are now ready to adapt the Bellman operators to ensure that the Q-learning algorithm converges to a unique solution. In particular, we replace $\mathcal{B}_{\mathrm{u}}^{\mathrm{d}}$ with $B_\kappa^\discretized$  which is defined for $b = (t, s, j, a)$, $t \in 1{:}T$, $s\in \mathcal{S}$, $j\in 1{:}J-1$, and $a\in \mathcal{A}$ 
as
\begin{gather*}
(B_\kappa^\discretized q)(b) \; :=\;   \argmin_{x\in \Real} 
\E^{a,s}\left[\ell^\kappa_{\frac{j}{J}}\left( r(s,a)+\gamma \max_{a'\in \actions} q(t-1,\tilde{s}_1,\tilde{j}',a') - x\right)\right],  
\end{gather*}
and we replace the lower-bound operator $\bud$ by $\budk$:
\begin{equation*} 
(\budk q)(b)
:= \begin{cases}
\ushort{R} \cdot t   & \mbox{if} ~j=0 \vee t = 0,\\
(B_\kappa^\discretized q)(b)  & \mbox{otherwise} .
\end{cases} \end{equation*} 

Note that the operators $B_\kappa^\discretized$ and $\budk$ are not calligraphic because their objective functions possess unique minimizers. In addition, $B_\kappa^\discretized$ and $\budk$ are applied to value functions defined across all time steps simultaneously. This representation is convenient because we focus on the finite-horizon objective and must separate the time step from the Q-learning iteration. 
That is, $\qud_i(t,s,j,a)$ represents the value function in the $i$-th iteration evaluated at  time $t$, state $s$, risk level $j$, and action $a$.

\begin{figure*}
  \centering
  \begin{minipage}{0.4\textwidth}  
    \centering
    \includegraphics[width=\textwidth]{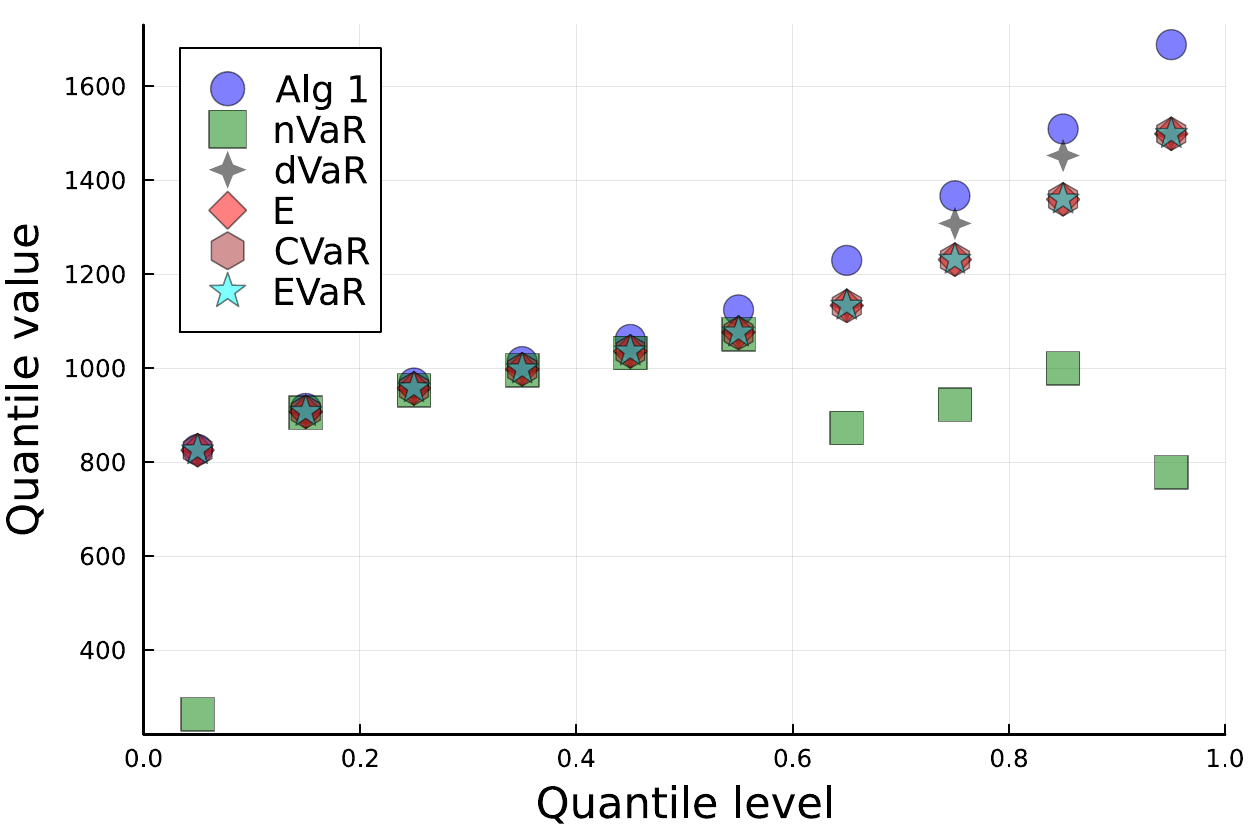} 
    \caption{Policy performance INV2}
    \label{fig:inv2_comparison}
  \end{minipage}
\end{figure*}

\begin{figure*}
  \begin{minipage}{\textwidth}
    \centering
    \begin{tabular} {|l|r|r|r|r|r|r|r|r|r|} 
\cline{2-8}
\multicolumn{1}{c|}{} & CW & INV1 & INV2 & MR & POP & RS & GR\\
\hline
$\bar{q}^\discretized$ & -9.11 & 237.19 & 970.08 & -2.79 & -14348.60 & 50.0 & 4.78\\
\cref{alg:quantile_policy_exe} & \textbf{-9.11} & \textbf{237.05} & \textbf{968.31} & \textbf{-2.84} & \textbf{-14348.60} & \textbf{50.0} & \textbf{4.78}\\
$\ushort{q}^\discretized$ & -9.11 & 236.88 & 967.60 & -2.85 & -14348.60 & 50.0 & 4.78\\
\hline
nVaR & -86.10 & 202.49 & 953.00 & -20.00 & \textbf{-14348.60} & \textbf{50.0} & 0.00\\
dVaR & -9.20 & 234.75 & 953.00 & -18.21 & \textbf{-14348.60} & \textbf{50.0} & 0.00\\
E & -9.72 & 234.60 & 957.45 & -2.95 & -15077.83 & 32.8 & 3.14\\
CVaR & -9.72 & 232.31 & 956.57 & -4.12 & \textbf{-14348.60} & \textbf{50.0} & 3.14\\
EVaR & -9.72 & 234.98 & 957.45 & -2.95 & \textbf{-14348.60} & \textbf{50.0} & 2.82\\
\hline
\end{tabular} 
\captionof{table}{$25\%$-quantile performance} \label{tab:quantile_performance_table}
  \end{minipage}
\end{figure*}

Equipped with the above definitions, we now introduce the \emph{VaR-Q-learning} algorithm in \cref{alg:q-learning}. The algorithm seeks to identify the fixed-point $\qud=\budk \qud$, which is unique, as we show in \cref{sec:appProofSection5}. The algorithm adapts the standard Q-learning approach to the risk-averse setting. The standard Q-learning algorithm can be seen as a stochastic gradient descent on the quadratic loss function. \Cref{alg:q-learning} also follows a sequence of stochastic gradient steps, but it replaces the quadratic loss function with the soft-quantile loss function $\ell_{\kappa}^{\alpha}$.

\Cref{alg:q-learning} takes a stream of samples as input, and thus, implies that it is an offline algorithm. However, the algorithm and its analysis also apply to the online setting in which the sample $(t_i, s_i, j_i, a_i, s_i')$ and step size $\beta_i$ can depend on the values $\qud_0, \dots, \qud_i$ and are generated during the execution of the algorithm.

In an actual implementation, \cref{line:empty-step} in \cref{alg:q-learning} may be omitted since it does not modify the value function. We include this step to simplify our convergence analysis, which considers each update as a stochastic gradient step towards a contractive minimizer.

We require the following standard assumption to prove the convergence of \cref{alg:q-learning}.
\begin{assumption}\label{ass:assTransitions}
The input to \cref{alg:q-learning} satisfies $\forall i\in \mathbb{N}$:
\begin{align*} 
  \P{\tilde{s}_i'=s' \mid \mathcal{G}_{i-1}, \tilde{t}_i, \tilde{s}_i, \tilde{j}_i, \tilde{a}_i, \tilde{\beta}_i} 
  =
  p(\tilde{s}_i, \tilde{a}_i, s'), \quad\forall s'\in \mathcal{S}, 
\end{align*}
almost surely, where $\mathcal{G}_{i-1} := (\tilde{\beta}_l,(\tilde{t}_l, \tilde{s}_l,\tilde{j}_l,\tilde{a}_l,\tilde{s}_l'))_{l=0}^{i-1}$.
\end{assumption}

The following theorem shows that \cref{alg:q-learning} enjoys convergence guarantees that are comparable to those in standard Q-learning.
\begin{theorem}\label{thm:convergenceQlearning}
Let $\kappa \in  (0,1]$. Assume that the sequences $\{\tilde{\beta}_i\}_{i=0}^\infty$ and $\{(\tilde{t}_i,\tilde{s}_i,\tilde{j}_i,\tilde{a}_i,\tilde{s}_i')\}_{i=0}^\infty$ used in \cref{alg:q-learning} satisfy \cref{ass:assTransitions} and the step-size conditions
\begin{align*}
  \sum_{i\in\tilde{\mathfrak{I}}(t,s,j,a)}^\infty \tilde{\beta}_i =\infty,
  \quad
  \sum_{i\in\tilde{\mathfrak{I}}(t,s,j,a)} \tilde{\beta}_i^2 < \infty,   \quad \mbox{a.s.} \; ,
\end{align*} 
where $\tilde{\mathfrak{I}}(t,s,j,a):=\{i \in\mathbb{N} \mid  (\tilde{t}_i,\tilde{s}_i,\tilde{j}_i,\tilde{a}_i)=(t,s,j,a)\}$. Then, the sequence  $(\tqud_i)_{i=0}^{\infty}$ produced by \cref{alg:q-learning} converges almost surely to $\qud_\infty$ such that $\qud_{\infty} = \budk \qud_{\infty}$. 
\end{theorem}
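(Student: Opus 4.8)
The plan is to cast \cref{alg:q-learning} as an instance of a standard asynchronous stochastic approximation scheme and invoke a classical convergence theorem (in the spirit of \citet{tsitsiklis1994asynchronous} / \citet{bertsekas1996neuro}, or the Robbins--Monro framework used for risk-neutral Q-learning) whose hypotheses reduce to the contraction and boundedness properties we have already assembled. First I would observe that the relevant update can be written, for each coordinate $b=(t,s,j,a)$, as $\qud_{i+1}(b_i) = \qud_i(b_i) + \tilde\beta_i\bigl( (H\qud_i)(b_i) - \qud_i(b_i) + \tilde w_i(b_i) \bigr)$, where $H$ is the (single-valued) operator that returns the exact gradient-flow fixed point, i.e.\ coordinatewise $H=\budk$ when $j>0,t>0$ and $H\qud=t\ushort R$ otherwise, and $\tilde w_i$ is the martingale-difference noise $\tilde w_i(b_i) = (\text{sampled gradient step target}) - (H\qud_i)(b_i)$. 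The key point, which I would justify using \cref{lem:elicitable-objective-lips-strong}, is that one \emph{soft-quantile gradient step} with the right step size is a contraction toward the unique minimizer of $m\mapsto \E[\ell^\kappa_{j/J}(\cdot - m)]$: because this objective is $\mu$-strongly convex with $L$-Lipschitz gradient, a gradient step $m \mapsto m - \beta\,\partial\E[\ell^\kappa(\cdot - m)]$ contracts distance to $(B^\discretized_\kappa q)(b)$ by a factor $1-\Theta(\beta)$ for $\beta$ small, so the "effective" stochastic approximation target is well-defined and Lipschitz in $q$.

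Next I would establish the structural facts that let the classical theorem apply. (i) \textbf{Existence and uniqueness of the fixed point} $\qud_\infty=\budk\qud_\infty$: this follows by backward induction on $t$, since $\budk$ acting on time-$t$ values depends only on time-$(t-1)$ values (it is a genuinely finite-horizon, lower-triangular recursion in $t$), and at each stage the minimizer is unique by strong convexity; the $t=0$ and $j=0$ rows are constants. This is essentially \cref{sec:appProofSection5} which we may cite. (ii) \textbf{A monotone/sup-norm pseudo-contraction or at least a convergent triangular structure}: because of the lower-triangular dependence on $t$, I would run the argument horizon-layer by horizon-layer — the $t=1$ coordinates see fixed (constant) $t=0$ targets, so their updates are ordinary stochastic-approximation iterations converging to $\budk$ at $t=1$; given that layer converges, the $t=2$ layer sees targets converging to the right limit, etc. Formally this is handled by the standard "two-timescale-free" asynchronous SA lemma with a time-varying but converging operator, or simply by induction on $t$ invoking the classical single-operator result on each slice with the lower layers frozen at their (almost-sure) limits. (iii) \textbf{Noise and step-size conditions}: $\E[\tilde w_i(b_i)\mid \mathcal G_{i-1}]=0$ by \cref{ass:assTransitions} (the sampled transition $\tilde s_i'$ has the correct conditional law, and $\tilde j'$ is drawn uniformly, matching the expectation defining $B^\discretized_\kappa$), and $\E[\|\tilde w_i\|^2\mid\mathcal G_{i-1}]\le C(1+\|\qud_i\|^2)$ because $\partial\ell^\kappa_\alpha$ grows at most linearly in its argument (visible from \cref{eq:delageLoss2}) and rewards are bounded in $[\ushort R,\bar R]$; the per-coordinate step-size hypotheses are exactly those assumed, with $\tilde{\mathfrak I}(t,s,j,a)$ the visitation index set.

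Then I would also need \textbf{a priori boundedness} of the iterates, $\sup_i\|\qud_i\|<\infty$ a.s., which is required by the stochastic-approximation theorem. Here I would again exploit the finite horizon: on the $t=0$ / $j=0$ rows the value is driven to the constant $t\ushort R$, and on each higher layer the target $(B^\discretized_\kappa q)(t,s,j,a)$ is a quantile of $r(s,a)+\gamma\max_{a'} q(t-1,\cdot,\cdot)$, hence bounded by $\bar R + \gamma\|q_{t-1}\|_\infty$ in magnitude; combined with the contraction of the gradient step toward this target, a layer-by-layer induction (or a Lyapunov/stopping-time argument on $\max\{\|\qud_i\|,\text{const}\}$ as in \citet{gosavi2006boundedness}) gives uniform boundedness with an explicit bound like $\max_t(\,t\bar R + \text{const}\,)$.

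The main obstacle I anticipate is the interface between the \emph{gradient-step} dynamics and the abstract SA template: standard asynchronous Q-learning convergence is stated for updates whose target is a fixed contraction $H\qud_i$, whereas here a single iteration performs only a \emph{partial} gradient step toward the minimizer, so the "operator applied" is $q\mapsto q + \beta\,(\text{something})$ rather than $q\mapsto Hq$. The clean fix is \cref{lem:elicitable-objective-lips-strong}: strong convexity plus Lipschitz gradient make $m\mapsto m-\beta\,\partial\E[\ell^\kappa(\cdot-m)]$ a genuine $\Theta(\beta)$-contraction to $(B^\discretized_\kappa q)(b)$, so one can absorb the gradient step into the step-size and reduce exactly to the contraction-target form (with the contraction modulus folded into an effective step size $\tilde\beta_i' = \Theta(\tilde\beta_i)$ that still satisfies the Robbins--Monro conditions). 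Making this reduction precise — verifying the contraction is uniform over the relevant range of $q$ and that the residual (from stopping at a partial step) is a well-behaved additional noise term — is the only genuinely delicate part; everything else is bookkeeping over the finite-horizon layers. I would conclude by assembling these pieces: boundedness $+$ martingale noise $+$ per-coordinate Robbins--Monro $+$ layerwise contraction to the unique $\qud_\infty=\budk\qud_\infty$ yields $\tqud_i\to\qud_\infty$ almost surely.
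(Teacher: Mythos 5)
Your proposal is correct and follows the same overall route as the paper: recast \cref{alg:q-learning} as an asynchronous stochastic approximation, verify martingale-difference noise with quadratically bounded second moment via \cref{ass:assTransitions}, and use the strong convexity and Lipschitz gradient of $\ell^\kappa_\alpha$ (\cref{lem:elicitable-objective-lips-strong}) to turn the partial gradient step into a contraction toward the unique minimizer — exactly the "delicate part" you flag, which the paper resolves in \cref{lem:gradient-step-convex,thm:h-contraction} by showing the fixed-$\xi$ gradient-step operator $H=q-\xi Gq$ is itself a weighted max-norm star-contraction with modulus $1-\bar\mu\xi/2$ sharing the fixed point of $\budk$. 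The packaging differs in two places. First, where you propose a layer-by-layer induction on $t$ (freezing lower horizons at their limits and invoking an SA lemma with a converging operator), the paper instead folds the lower-triangular horizon structure into a single weighted maximum norm with weights $w(t,s,j,a)=2^t$ (\cref{eq:weight-definition}), under which $\budk$ is a global $\tfrac12$-contraction (\cref{lem:sim-bellman-contraction}) even when $\gamma=1$; this avoids the nonstandard time-varying-operator argument entirely. Second, your separate a priori boundedness step is unnecessary with this packaging: the weighted-max-norm pseudo-contraction version of the Bertsekas--Tsitsiklis result (\cref{thm:bertsekas}, Prop.~4.4 of \cite{Bertsekas1996}) does not require it as a hypothesis. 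One caution on your "fold the contraction into an effective step size $\tilde\beta_i'=\Theta(\tilde\beta_i)$" phrasing: the convex-combination coefficient $\xi/l$ depends on the current iterate through $l$, so treating it as a step size would make the step sizes iterate-dependent in an awkward way; keeping $l$ inside the operator (as the paper does, using only the uniform bounds $l\in[1/\bar L,1/\bar\mu]$) is the cleaner and fully rigorous version of your idea.
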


The proof of \cref{thm:convergenceQlearning} follows an approach similar to that in the proofs of standard Q-learning~\citep{Bertsekas1996} with two main differences. First, the algorithm converges even when $\gamma = 1$ and $\budk$ is not an $L_{\infty}$ contraction. Instead, we show that $\budk$ is a contraction w.r.t.~a particular weighted norm. Second, the use of a non-quadratic function $\ell^{\kappa}_{\alpha}$ requires a more careful choice of the step-sizes than the standard analysis. Moreover, our analysis of the non-quadratic function extends the Q-learning analysis for risk-sensitive RL with nested risk measures in \cite{shen2014risk}. 

\section{Numerical Experiments} 
\label{sec:numerical-experiment}

\begin{figure*}
  \centering
  \begin{minipage}{0.95\textwidth} 
    \centering
    \includegraphics[width=\textwidth]{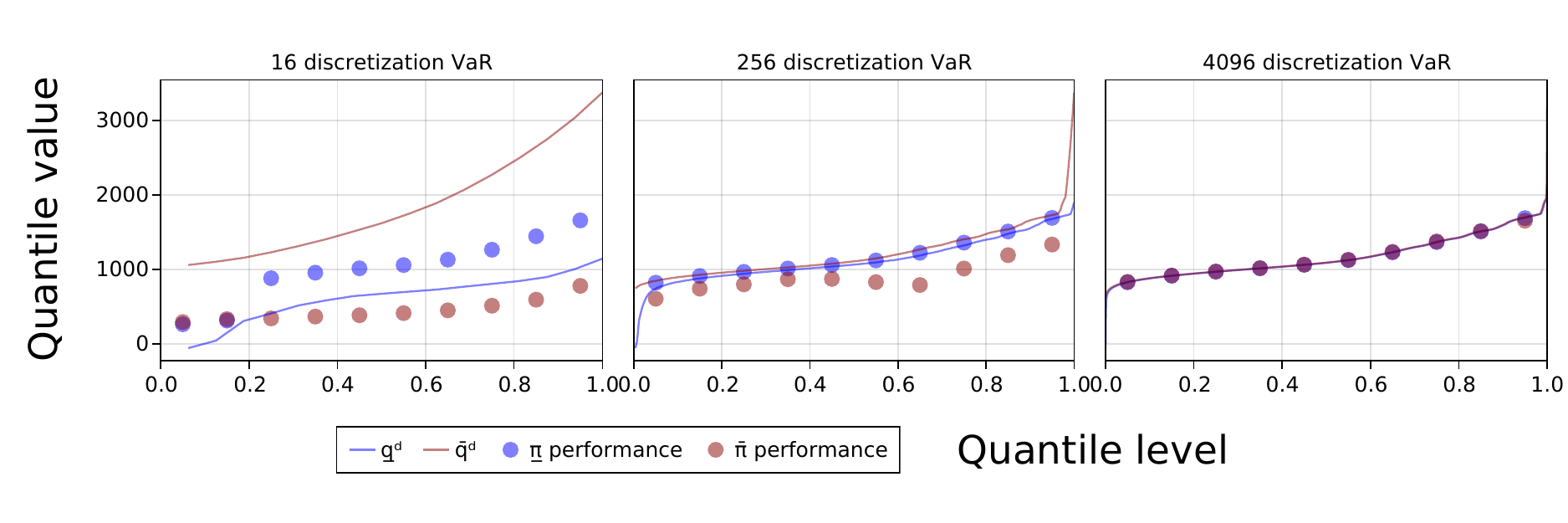} 
    \caption{Different discretization VaR MDP performance}
    \label{fig:inv2_multiple_discretization}
  \end{minipage}
\end{figure*}

\begin{figure*}
  \centering
  \begin{minipage}{0.4\textwidth}  
    \includegraphics[width=\textwidth]{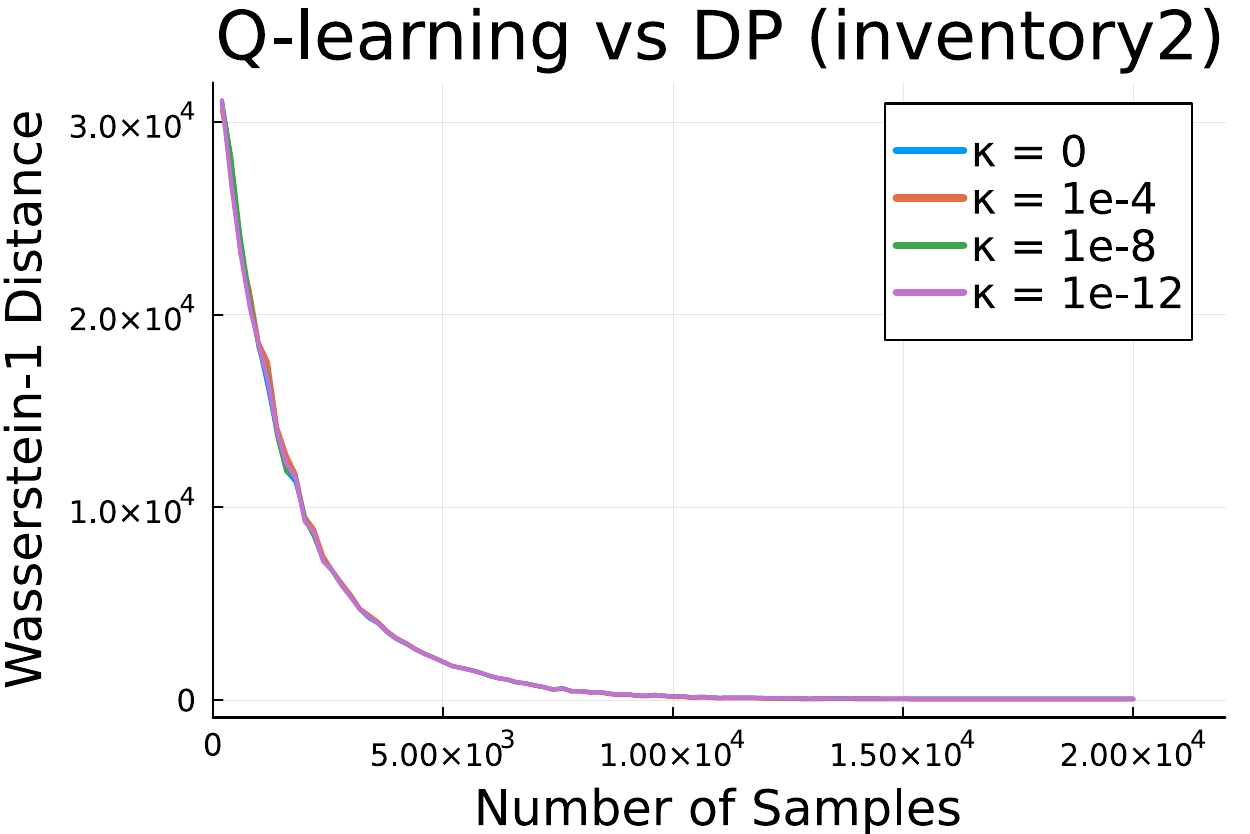} 
    \caption{Q-values of \cref{alg:q-learning} for all small $\kappa$'s converge to the DP value function.}
    \label{fig:inv2_wasserstein}
  \end{minipage}
\end{figure*}

In this section, we empirically test our theoretical results and algorithms on $7$ tabular domains: machine replacement (MR) \citep{Delage2014}, gamblers ruin (GR) \citep{bauerle2011markov,li2022quantile}, two inventory management problems (INV1 and INV2) \citep{Ho2021}, riverswim (RS) \citep{Strehl2008model}, population (POP) \citep{hau2023entropic}, and cliffwalk (CW) \citep{sutton2018reinforcement}. We set the horizon to $T = 100$ with $\gamma=0.9$ to evaluate the risk of the random discounted return. More details on each experiment can be found in \cref{sec:more_experiments} and our code can be found on at {\small\url{https://github.com/MonkieDein/DRA-Q-LA}}.

\textbf{Policy execution. }  
We first validate the discretization scheme presented in \cref{alg:quantile_policy_exe} for model-free policy execution. To this aim, we compare the performance of \cref{alg:quantile_policy_exe} with other risk-averse algorithms. As a standard baseline, we include the risk-neutral objective (E). Other baselines are nested VaR (nVaR) \citep{ruszczynski2010risk}, conditional VaR (CVaR) \citep{chow2015risk}, distributional VaR (dVaR) with risk sensitive objective \citep{dabney2018distributional, dabney2018implicit}, and entropic VaR (EVaR) \citep{hau2023entropic}. We take a quantile discretization level of $J=4096$ to train \cref{alg:quantile_policy_exe}, CVaR, and dVaR. Detail of all algorithms can be found in 
\cref{sec:more_algorithm_details}.  

\cref{tab:quantile_performance_table} shows the $25\%$-quantile value obtained after training, where each entry is the performance obtained from 10,000 episodes generated from the final policy.  
As we can see, our algorithm consistently outperforms all other algorithms across all tested domains. We also test our algorithm on a range of quantile levels $\initalpha \in \{0.05,0.15,\dots,0.85,0.95\}$.  
\cref{fig:inv2_comparison} shows the quantile value obtained on INV2 after training each baseline. Our method shows an insensitive behavior to risk levels. All other domains exhibit a similar trend across quantile levels (see \cref{sec:more_experiments}), thus illustrating the robustness of \cref{alg:quantile_policy_exe} to different environments and risk levels.

We perform an ablative study to understand how the discretization and selection of $\ushort{q}^\discretized$ in \cref{alg:quantile_policy_exe} contribute to the solution quality. We compare the performance of $\ushort{\pi}$ with that of $\bar{\pi}$ (defined analogously to \cref{eq:greedy_var}) by confronting them to the bounds $\ushort{q}^\discretized$ and $\bar{q}^\discretized$ from \cref{exm:discrete-approx}. We take $J \in \{16 , 256 , 4096\}$. \cref{fig:inv2_multiple_discretization} demonstrates that the performance of $\ushort{\pi}$ lies within $[\ushort{q}^\discretized,\bar{q}^\discretized]$, whereas $\bar{\pi}$ sometimes performs worse than $\ushort{q}$. Furthermore, as the discretization level increases, the bounding gap $\bar{q}^\discretized-\ushort{q}^\discretized$ shrinks, suggesting that  $\ushort{\pi}$ converges to $\pi \opt$.

\textbf{Quantile Q-learning. }  
We now check the convergence and performance of \cref{alg:q-learning}, which approximates the VaR computed from DP. 
In \cref{sec:Q-learning-algorithm}, we introduced a general VaR-Q-learning handling sampled time horizons. However, in practice, we remove the time index to reduce the computation overhead associated with updating time-indexed value functions.  
We take $\kappa \in \{10^{-4},10^{-8},10^{-12},0\}$ for the $\kappa$-soft quantile loss with a uniform discretization of $J=256$. For $\kappa = 0$, the loss is that of \cref{eq:quantile-loss} while for positive values, it is that of \cref{eq:delageLoss1}.
\cref{fig:inv2_wasserstein} displays the 1-Wasserstein distance between the quantile value estimated from VaR-Q-learning and the quantile value $\ushort{q}^\discretized$ computed via DP (\cref{eq:discreteBellmanEq}). For all $\kappa$'s, we see that the distance converges to zero as the number of samples increases. Furthermore, the VaR-Q-learning policy performs similarly to DP (see also \cref{fig:all_algorithms_Q_learning_value} in \cref{sec:more_experiments}).

To summarize, our experiments illustrate that the policy returned by the VaR-Q-learning algorithm: (1) Outperforms other baselines across both domains and quantile levels; (2) Lies in $[\ushort{q},\bar{q}]$; and (3) Performs similarly as the DP optimal policy $\ushort{\pi}$.

\section{Related Work and Discussion}

Several works propose model-free methods for RARL. \cite{mihatsch2002risk} introduce a temporal difference scheme for prediction and control with convergence guarantees, but focus on a specific utility-based shortfall risk. Converging Q-learning algorithms are further extended in \cite{shen2014risk, borkar2021prospect} to a larger class of utility functions. However, all these works focus on a \emph{nested} risk measure, which provides Bellman equations at the expense of interpretable policies. Differently, \cite{stanko2021cvar} study Q-learning for \emph{static} CVaR, but their analysis relies on the DP equations of \cite{chow2015risk}, which were shown to be incorrect by \cite{hau2023dynamic}. When considering static risk measures, one must use a proper state augmentation to guarantee that an optimal policy is identified \citep{bauerle2011markov, hau2023dynamic}. Otherwise, limiting assumptions such as the existence of an optimal Markov policy are required \citep{lim2022distributional}.

\cite{dabney2018implicit} present a similar goal as our study and propose to train risk-sensitive policies using a quantile representation of the return distribution. By leveraging Q-learning for DiRL \citep{bellemare2017distributional}, they introduce IQN, an algorithm capable of achieving risk-sensitive behavior. Yet, as first pointed out in \cite{lim2022distributional}, the greedy step employed in IQN lacks a clear criterion of optimality for the trained policy. In this regard, our Q-learning algorithm modifies the optimal action selected at each state-risk level pair, thus fixing IQN's deficiency in policy optimization. For policy evaluation, our algorithm reduces to a variant of IQN ensuring that the resulting approximated distribution functions under-estimate (in terms of stochastic ordering) the return distribution. A detailed discussion on the differences between IQN and our VaR-Q-learning algorithm can be found in \cref{sec:appIQN}.

Most related to our work is the one by \cite{gilbert2016quantile} in which the authors propose a Q-learning algorithm to identify a VaR-optimal policy on a special class of MDPs with end states. There, preferences are expressed using an ordering over end states. Following \cite{borkar1997}, their algorithm is based on stochastic approximation with two time-scales and its convergence is only empirically demonstrated. They leave open the question of how to generalize the approach to other forms of MDPs and raise the question of whether quantile regression methods could be used, which we address it in this work.

Looking forward, the question of extending our results to an infinite horizon setting with continuous state and/or action spaces is definitely interesting. One might also be able to adapt the convergence analysis of policy evaluation for DiRL in \cite{rowland2023analysis} to formally establish the convergence properties of \cref{alg:q-learning} under the non-strongly convex objective $\ell_\alpha$.

\section*{Acknowledgements}
We would like to thank Marc G. Bellemare for valuable discussions on the topic of distributional RL and the IQN algorithm. Esther Derman was partially funded by IVADO. Erick Delage was partially supported by the Canadian Natural Sciences and Engineering Research Council [Grant RGPIN-2022-05261] and by the Canada Research Chair program [950-230057]. Jia Lin Hau was partially funded by the University of New Hampshire dissertation year fellowship 2024-25. Jia Lin Hau and Marek Petrik were partially supported by NSF grants 2144601 and 2218063.

\bibliographystyle{apalike}
\bibliography{main}

\begin{thebibliography}{}

\bibitem[Ahmadi et~al., 2021]{ahmadi2021risk}
Ahmadi, M., Xiong, X., and Ames, A.~D. (2021).
\newblock Risk-averse control via {CVaR} barrier functions: Application to
  bipedal robot locomotion.
\newblock {\em IEEE Control Systems Letters}, 6:878--883.

\bibitem[Alexander and Sarabia, 2012]{alexander2012quantile}
Alexander, C. and Sarabia, J.~M. (2012).
\newblock Quantile uncertainty and value-at-risk model risk.
\newblock {\em Risk Analysis: An International Journal}, 32(8):1293--1308.

\bibitem[B{\"a}uerle and Ott, 2011]{bauerle2011markov}
B{\"a}uerle, N. and Ott, J. (2011).
\newblock Markov decision processes with average-value-at-risk criteria.
\newblock {\em Mathematical Methods of Operations Research}, 74:361--379.

\bibitem[Bellemare et~al., 2017]{bellemare2017distributional}
Bellemare, M.~G., Dabney, W., and Munos, R. (2017).
\newblock A distributional perspective on reinforcement learning.
\newblock In {\em International Conference on Machine Learning}, pages
  449--458.

\bibitem[Bellini and Bignozzi, 2015]{Bellini2015}
Bellini, F. and Bignozzi, V. (2015).
\newblock On elicitable risk measures.
\newblock {\em Quantitative Finance}, 15(5):725--733.

\bibitem[Ben-Tal and Teboulle, 2007]{ben2007old}
Ben-Tal, A. and Teboulle, M. (2007).
\newblock An old-new concept of convex risk measures: The optimized certainty
  equivalent.
\newblock {\em Mathematical Finance}, 17(3):449--476.

\bibitem[Bertsekas and Tsitsiklis, 1996]{Bertsekas1996}
Bertsekas, D.~P. and Tsitsiklis, J.~N. (1996).
\newblock {\em Neuro-Dynamic Programming}.
\newblock Athena Scientific.

\bibitem[Borkar, 1997]{borkar1997}
Borkar, V.~S. (1997).
\newblock Stochastic approximation with two time scales.
\newblock {\em Systems \& Control Letters}, 29(5):291--294.

\bibitem[Borkar, 2002]{borkar2002q}
Borkar, V.~S. (2002).
\newblock Q-learning for risk-sensitive control.
\newblock {\em Mathematics of operations research}, 27(2):294--311.

\bibitem[Borkar and Chandak, 2021]{borkar2021prospect}
Borkar, V.~S. and Chandak, S. (2021).
\newblock Prospect-theoretic q-learning.
\newblock {\em Systems \& Control Letters}, 156:105009.

\bibitem[Braun et~al., 2011]{braun2011risk}
Braun, D.~A., Nagengast, A.~J., and Wolpert, D.~M. (2011).
\newblock Risk-sensitivity in sensorimotor control.
\newblock {\em Frontiers in Human Neuroscience}, 5:1.

\bibitem[Chow and Ghavamzadeh, 2014]{chow2014cvar}
Chow, Y. and Ghavamzadeh, M. (2014).
\newblock Algorithms for {CVaR} optimization in {MDPs}.
\newblock {\em Advances in Neural Information Processing Systems}, 27.

\bibitem[Chow et~al., 2018]{chow2018percentilerisk}
Chow, Y., Ghavamzadeh, M., Janson, L., and Pavone, M. (2018).
\newblock Risk-constrained reinforcement learning with percentile risk
  criteria.
\newblock {\em Journal of Machine Learning Research}, 18(167):1--51.

\bibitem[Chow et~al., 2015]{chow2015risk}
Chow, Y., Tamar, A., Mannor, S., and Pavone, M. (2015).
\newblock Risk-sensitive and robust decision-making: a {CVaR} optimization
  approach.
\newblock {\em Advances in Neural Information Processing Systems}, 28.

\bibitem[Dabney et~al., 2018a]{dabney2018implicit}
Dabney, W., Ostrovski, G., Silver, D., and Munos, R. (2018a).
\newblock Implicit quantile networks for distributional reinforcement learning.
\newblock In {\em International Conference on Machine Learning}, pages
  1096--1105.

\bibitem[Dabney et~al., 2018b]{dabney2018distributional}
Dabney, W., Rowland, M., Bellemare, M., and Munos, R. (2018b).
\newblock Distributional reinforcement learning with quantile regression.
\newblock In {\em AAAI Conference on Artificial Intelligence}, volume~32.

\bibitem[DeCandia et~al., 2007]{decandia2007dynamo}
DeCandia, G., Hastorun, D., Jampani, M., Kakulapati, G., Lakshman, A., Pilchin,
  A., Sivasubramanian, S., Vosshall, P., and Vogels, W. (2007).
\newblock Dynamo: Amazon's highly available key-value store.
\newblock {\em ACM SIGOPS Operating Systems Review}, 41(6):205--220.

\bibitem[Delage and Mannor, 2010]{Delage2014}
Delage, E. and Mannor, S. (2010).
\newblock Percentile optimization for {M}arkov decision processes with
  parameter uncertainty.
\newblock {\em Operations Research}, 58(1):203--213.

\bibitem[Embrechts et~al., 2018]{embrechts2018quantile}
Embrechts, P., Liu, H., and Wang, R. (2018).
\newblock Quantile-based risk sharing.
\newblock {\em Operations Research}, 66(4):936--949.

\bibitem[Emmer et~al., 2015]{emmer2015best}
Emmer, S., Kratz, M., and Tasche, D. (2015).
\newblock What is the best risk measure in practice? {A} comparison of standard
  measures.
\newblock {\em Journal of Risk}, 18(2):31--60.

\bibitem[F{\"{o}}llmer and Schied, 2002]{follmerSchied:shortfalRisk}
F{\"{o}}llmer, H. and Schied, A. (2002).
\newblock Convex measures of risk and trading constraints.
\newblock {\em Finance and Stochastics}, 6(4):429--447.

\bibitem[Follmer and Schied, 2016]{Follmer2016}
Follmer, H. and Schied, A. (2016).
\newblock {\em Stochastic Finance: Introduction in Discrete Time}.
\newblock {De Gruyter Graduate}, fourth edition.

\bibitem[Gilbert and Weng, 2016]{gilbert2016quantile}
Gilbert, H. and Weng, P. (2016).
\newblock Quantile reinforcement learning.
\newblock {\em arXiv preprint arXiv:1611.00862}.

\bibitem[Gneiting, 2011]{Gneiting2011}
Gneiting, T. (2011).
\newblock Making and evaluating point forecasts.
\newblock {\em Journal of the American Statistical Association},
  106(494):746--762.

\bibitem[Hakobyan and Yang, 2021]{hakobyan2021wasserstein}
Hakobyan, A. and Yang, I. (2021).
\newblock Wasserstein distributionally robust motion control for collision
  avoidance using conditional value-at-risk.
\newblock {\em IEEE Transactions on Robotics}, 38(2):939--957.

\bibitem[Hau et~al., 2023a]{hau2023dynamic}
Hau, J.~L., Delage, E., Ghavamzadeh, M., and Petrik, M. (2023a).
\newblock On dynamic programming decompositions of static risk measures in
  {M}arkov decision processes.
\newblock In {\em Thirty-seventh Conference on Neural Information Processing
  Systems}.

\bibitem[Hau et~al., 2023b]{hau2023entropic}
Hau, J.~L., Petrik, M., and Ghavamzadeh, M. (2023b).
\newblock Entropic risk optimization in discounted {MDPs}.
\newblock In {\em International Conference on Artificial Intelligence and
  Statistics}, pages 47--76.

\bibitem[Ho et~al., 2021]{Ho2021}
Ho, C.~P., Petrik, M., and Wiesemann, W. (2021).
\newblock Partial policy iteration for {L1}-robust {M}arkov decision processes.
\newblock {\em Journal of Machine Learning Research}, 22(275):1--46.

\bibitem[Howard and Matheson, 1972]{howard1972risk}
Howard, R.~A. and Matheson, J.~E. (1972).
\newblock Risk-sensitive {M}arkov decision processes.
\newblock {\em Management science}, 18(7):356--369.

\bibitem[Jin et~al., 2019]{jin2019risk}
Jin, I.~G., Sch{\"u}rmann, B., Murray, R.~M., and Althoff, M. (2019).
\newblock Risk-aware motion planning for automated vehicle among human-driven
  cars.
\newblock In {\em 2019 American Control Conference (ACC)}, pages 3987--3993.
  IEEE.

\bibitem[Keramati et~al., 2020]{keramati2020being}
Keramati, R., Dann, C., Tamkin, A., and Brunskill, E. (2020).
\newblock Being optimistic to be conservative: Quickly learning a {CVaR}
  policy.
\newblock In {\em AAAI Conference on Artificial Intelligence}, volume~34, pages
  4436--4443.

\bibitem[Kiran et~al., 2021]{kiran2021deep}
Kiran, B.~R., Sobh, I., Talpaert, V., Mannion, P., Al~Sallab, A.~A., Yogamani,
  S., and P{\'e}rez, P. (2021).
\newblock Deep reinforcement learning for autonomous driving: A survey.
\newblock {\em IEEE Transactions on Intelligent Transportation Systems},
  23(6):4909--4926.

\bibitem[Kiumarsi et~al., 2017]{kiumarsi2017optimal}
Kiumarsi, B., Vamvoudakis, K.~G., Modares, H., and Lewis, F.~L. (2017).
\newblock Optimal and autonomous control using reinforcement learning: A
  survey.
\newblock {\em IEEE Transactions on Neural Networks and Learning Systems},
  29(6):2042--2062.

\bibitem[Koenker and Bassett, 1978]{Koenker1978}
Koenker, R. and Bassett, G. (1978).
\newblock Regression quantiles.
\newblock {\em Econometrica}, 46(1):33.

\bibitem[K{\"o}se, 2016]{kose2016optimal}
K{\"o}se, {\"U}.~E. (2016).
\newblock {\em Optimal timing of living-donor liver transplantation under
  risk-aversion}.
\newblock PhD thesis, Bilkent Universitesi (Turkey).

\bibitem[Li et~al., 2022]{li2022quantile}
Li, X., Zhong, H., and Brandeau, M.~L. (2022).
\newblock Quantile {M}arkov decision processes.
\newblock {\em Operations Research}, 70(3):1428--1447.

\bibitem[Lim and Malik, 2022]{lim2022distributional}
Lim, S.~H. and Malik, I. (2022).
\newblock Distributional reinforcement learning for risk-sensitive policies.
\newblock {\em Advances in Neural Information Processing Systems},
  35:30977--30989.

\bibitem[Lin et~al., 2003]{lin2003optimal}
Lin, Y., Wu, C., and Kang, B. (2003).
\newblock Optimal models with maximizing probability of first achieving target
  value in the preceding stages.
\newblock {\em Science in China Series A: Mathematics}, 46:396--414.

\bibitem[Luo et~al., 2024]{luo2024alternative}
Luo, Y., Liu, G., Poupart, P., and Pan, Y. (2024).
\newblock An alternative to variance: Gini deviation for risk-averse policy
  gradient.
\newblock {\em Advances in Neural Information Processing Systems}, 36.

\bibitem[Majumdar and Pavone, 2017]{majumdar2017should}
Majumdar, A. and Pavone, M. (2017).
\newblock How should a robot assess risk.
\newblock {\em Towards an Axiomatic Theory of Risk in Robotics}, pages 75--84.

\bibitem[Mihatsch and Neuneier, 2002]{mihatsch2002risk}
Mihatsch, O. and Neuneier, R. (2002).
\newblock Risk-sensitive reinforcement learning.
\newblock {\em Machine Learning}, 49:267--290.

\bibitem[Min et~al., 2022]{min2022risk}
Min, S., Moallemi, C.~C., and Maglaras, C. (2022).
\newblock Risk-sensitive optimal execution via a conditional value-at-risk
  objective.
\newblock {\em arXiv preprint arXiv:2201.11962}.

\bibitem[Nesterov, 2018]{Nesterov2018a}
Nesterov, Y. (2018).
\newblock {\em Lectures on Convex Optimization}.
\newblock Springer, 2nd edition.

\bibitem[on~Banking~Supervision, 2023]{bis}
on~Banking~Supervision, B.~C. (2023).
\newblock The {B}asel framework.
\newblock {\em Basel III}.

\bibitem[Puterman, 2014]{puterman2014markov}
Puterman, M.~L. (2014).
\newblock {\em Markov decision processes: discrete stochastic dynamic
  programming}.
\newblock John Wiley \& Sons.

\bibitem[Rockafellar and Wets, 2009]{Rockafellar2009}
Rockafellar, R.~T. and Wets, R.~J. (2009).
\newblock {\em Variational Analysis}.
\newblock Springer.

\bibitem[Rowland et~al., 2024]{rowland2023analysis}
Rowland, M., Munos, R., Azar, M.~G., Tang, Y., Ostrovski, G., Harutyunyan, A.,
  Tuyls, K., Bellemare, M.~G., and Dabney, W. (2024).
\newblock An analysis of quantile temporal-difference learning.
\newblock {\em Journal of Machine Learning Research}, 25(163):1--47.

\bibitem[Rudin et~al., 1964]{rudin1964principles}
Rudin, W. et~al. (1964).
\newblock {\em Principles of mathematical analysis}, volume~3.
\newblock McGraw-hill New York.

\bibitem[Ruszczy{\'n}ski, 2010]{ruszczynski2010risk}
Ruszczy{\'n}ski, A. (2010).
\newblock Risk-averse dynamic programming for {M}arkov decision processes.
\newblock {\em mathematical Programming}, 125:235--261.

\bibitem[Shapiro et~al., 2014]{Shapiro2014}
Shapiro, A., Dentcheva, D., and Ruszczynski, A. (2014).
\newblock {\em Lectures on Stochastic Programming: Modeling and Theory}.
\newblock SIAM.

\bibitem[Shen et~al., 2014]{shen2014risk}
Shen, Y., Tobia, M.~J., Sommer, T., and Obermayer, K. (2014).
\newblock Risk-sensitive reinforcement learning.
\newblock {\em Neural Computation}, 26(7):1298--1328.

\bibitem[Singh et~al., 2020]{singh2020building}
Singh, A., Halpern, Y., Thain, N., Christakopoulou, K., Chi, E., Chen, J., and
  Beutel, A. (2020).
\newblock Building healthy recommendation sequences for everyone: A safe
  reinforcement learning approach.
\newblock In {\em FAccTRec Workshop}.

\bibitem[Stanko and Macek, 2021]{stanko2021cvar}
Stanko, S. and Macek, K. (2021).
\newblock {CVaR} {Q}-learning.
\newblock In {\em Computational Intelligence: 11th International Joint
  Conference, IJCCI 2019, Vienna, Austria, September 17--19, 2019, Revised
  Selected Papers}, pages 333--358. Springer.

\bibitem[Strehl and Littman, 2008]{Strehl2008model}
Strehl, A.~L. and Littman, M.~L. (2008).
\newblock An analysis of model-based interval estimation for {Markov} decision
  processes.
\newblock {\em Elsevier}.

\bibitem[Sutton and Barto, 2018]{sutton2018reinforcement}
Sutton, R.~S. and Barto, A.~G. (2018).
\newblock {\em Reinforcement learning: An introduction}.
\newblock MIT press.

\bibitem[Tamar et~al., 2012]{tamar2012policy}
Tamar, A., Di~Castro, D., and Mannor, S. (2012).
\newblock Policy gradients with variance related risk criteria.
\newblock In {\em international Conference on Machine Learning}, pages
  387--396.

\bibitem[Wang and Chapman, 2022]{wang2022risk}
Wang, Y. and Chapman, M.~P. (2022).
\newblock Risk-averse autonomous systems: A brief history and recent
  developments from the perspective of optimal control.
\newblock {\em Artificial Intelligence}, 311:103743.

\bibitem[Weber, 2006]{weber2006}
Weber, S. (2006).
\newblock Distribution-invariant risk measures, information, and dynamic
  consistency.
\newblock {\em Mathematical Finance}, 16(2):419--441.

\bibitem[Wei et~al., 2019]{wei2019applications}
Wei, Y., Kehm, R.~D., Goldberg, M., and Terry, M.~B. (2019).
\newblock Applications for quantile regression in epidemiology.
\newblock {\em Current Epidemiology Reports}, 6:191--199.

\bibitem[Wu and Lin, 1999]{wu1999minimizing}
Wu, C. and Lin, Y. (1999).
\newblock Minimizing risk models in {M}arkov decision processes with policies
  depending on target values.
\newblock {\em Journal of Mathematical Analysis and Applications},
  231(1):47--67.

\bibitem[Yoo et~al., 2024]{yoo2024risk}
Yoo, G., Park, J., and Woo, H. (2024).
\newblock Risk-conditioned reinforcement learning: A generalized approach for
  adapting to varying risk measures.
\newblock In {\em AAAI Conference on Artificial Intelligence}, volume~38, pages
  16513--16521.

\end{thebibliography}

\appendix

 \section{Proofs of Section \ref{sec:sett-prel}}
 \label{sec:preliminaries-proofs}

\subsection{Proof of Lemma \ref{lem:var-elicitable}}
\label{apx: lemma elicitable}

\begin{proof}
    We aim to show that for $\tilde{x}\in\X, \alpha\in [0,1]$,
    \begin{align*}
        \argmin_{y\in\Real}\E[\max(\alpha(\tilde{x}-y), -(1-\alpha)(\tilde{x}-y))] = [\quant^-_{\alpha}(\tilde{x}),\quant^+_{\alpha}(\tilde{x})]\cap\Real,
    \end{align*}
    where $[\quant^-_{0}(\tilde{x}),\quant^+_{0}(\tilde{x})]\cap\Real=(-\infty,\quant^+_{0}(\tilde{x})]$ and $[\quant^-_{1}(\tilde{x}),\quant^+_{1}(\tilde{x})]=[\quant^-_{0}(\tilde{x}),\infty)$.
    For the case $\alpha\in(0,1)$, we refer the reader to \cite[Thm. 9]{Gneiting2011}. We study the case $\alpha=0$ as a similar set of arguments holds when $\alpha=1$. 
    By definition of $\quant^+_{0}(\tilde{x})$, for all $\bar{y}\in\Real$ such that $\bar{y}\leq \quant^+_{0}(\tilde{x})$, $\P{\tilde{x}<\bar{y}}\leq \P{\tilde{x}<\quant^+_{0}(\tilde{x})}\leq 0$ (i.e., equals zero) and therefore, by the law of total probability: 
    \[0\leq\E[\ell_0(\tilde{x}-\bar{y})]=\E[0|\tilde{x}\geq\bar{y}]\P{\tilde{x}\geq\bar{y}} + \E[-(\tilde{x}-\bar{y})|\tilde{x}<\bar{y}]\P{\tilde{x}<\bar{y}}=0.\]
    Hence, $\argmin_{y \in \Real} \E[\ell_0(\tilde{x}-y)] \supseteq (-\infty, \quant^+_0(\tilde{x})]$.

    By definition of $\quant^+_{0}(\tilde{x})$, for all $\bar{y}\in\Real$ such that $\bar{y}>\quant^+_{0}(\tilde{x})$, $\P{\tilde{x}<\bar{y}}> 0$ and:
    \[\E[\ell_0(\tilde{x}-\bar{y})]=\E[0|\tilde{x}\geq\bar{y}]\P{\tilde{x}\geq\bar{y}} + \E[-(\tilde{x}-\bar{y})|\tilde{x}<\bar{y}]\P{\tilde{x}<\bar{y}}=\E[\bar{y}-\tilde{x}|\tilde{x}<\bar{y}]\P{\tilde{x}<\bar{y}}>0.\]
      The last inequality exploits two facts. First, by left continuity of $h(z):=\P{\tilde{x}<\bar{y}+z}$, $\P{\tilde{x}<\bar{y}}> 0$ implies that there must be some $\epsilon>0$ for which $\P{\tilde{x}\leq \bar{y}-\epsilon}>0$. Second, by the Markov inequality $\E[\bar{y}-\tilde{x}|\bar{y}-\tilde{x}>0]\geq \epsilon \P{\bar{y}-\tilde{x}\geq \epsilon | \bar{y}-\tilde{x}>0} = \epsilon \P{\bar{y}-\tilde{x}\geq \epsilon}/\P{\bar{y}-\tilde{x}>0}>0$. Hence, $\argmin_{y \in \Real} \E[\ell_0(\tilde{x}-y)] \subseteq (-\infty, \quant^+_0(\tilde{x})]$. 
      We have shown $\argmin_{y \in \Real} \E[\ell_0(\tilde{x}-y)] = (-\infty, \quant^+_0(\tilde{x})]$, which ends the proof for $\alpha=0$ since $\quant^-_{0}(\tilde{x})=-\infty$.
\end{proof}

\subsection{From VaR state-action to VaR state value function}
\label{apx: proof value-qvalue}

\begin{proposition}
\label{prop:value-qvalue}
The optimal value functions satisfy for each $s\in \mathcal{S}$, $\alpha\in [0,1]$, and $t \in  [T]$ that
  \[
    v_t\opt(s,\alpha) \; =\;  \max_{a\in \mathcal{A}} \, q_t\opt(s,\alpha,a). 
  \]
\end{proposition}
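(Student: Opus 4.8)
The plan is to derive the identity directly from the definitions in \eqref{eq:quantile_q_defn} by partitioning the history-dependent policy class according to the action prescribed at the initial step. Throughout, write $R^{\pi}_t := \sum_{k=0}^{t-1}\gamma^k r(\tilde s_k,\tilde a_k)$ for the total discounted reward over horizon $t$.

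First I would record two preliminary facts. (i) $\PiHD^t$ is finite: each decision rule $\pi_k\colon \histories_k \to \actions$ is a map between the finite sets $\histories_k=(\states\times\actions)^k\times\states$ and $\actions$, and there are finitely many indices $k\in\{0,\dots,t-1\}$; hence every ``$\max$'' appearing in the definitions of $v_t\opt$ and $q_t\opt$ is attained and may be manipulated as a maximum over a finite set. (ii) For a fixed initial state $s$ and fixed $\pi$, the reward $R^{\pi}_t$ is a discrete random variable with finite support under the law induced by $\pi$, $s$, and the transition kernel $p$, so $\varo^{\pi,s}_\alpha[R^{\pi}_t]\in\bar\Real$ is well defined.

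Next, fix $s\in\states$. Since $\histories_0=\states$, the quantity $\pi_0(s)\in\actions$ is well defined for every $\pi\in\PiHD^t$, so the sets $\Pi_a:=\{\pi\in\PiHD^t : \pi_0(s)=a\}$, $a\in\actions$, form a partition of $\PiHD^t$. Each $\Pi_a$ is nonempty, because given any $a$ one can exhibit a deterministic history-dependent policy whose initial decision rule sends $s$ to $a$ and which is defined arbitrarily on all other histories. Using that a maximum over a finite union of nonempty sets equals the maximum over the blocks of the respective maxima,
\[
  v_t\opt(s,\alpha)
  \;=\; \max_{\pi\in\PiHD^t}\varo^{\pi,s}_\alpha[R^{\pi}_t]
  \;=\; \max_{a\in\actions}\ \max_{\pi\in\Pi_a}\varo^{\pi,s}_\alpha[R^{\pi}_t]
  \;=\; \max_{a\in\actions} q_t\opt(s,\alpha,a),
\]
where the last equality is precisely the definition of $q_t\opt(s,\alpha,a)$ in \eqref{eq:quantile_q_defn}. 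For the degenerate case $t=0$ the sum defining $R^{\pi}_0$ is empty, so both sides reduce to $\var{\alpha}{0}$ by convention and the identity holds trivially.

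I do not anticipate a substantive obstacle: the content is entirely in the bookkeeping. The one point that warrants care is justifying that conditioning/partitioning on the realized first action $\pi_0(s)$ is legitimate once the initial state is held fixed — i.e. that $\{\Pi_a\}_{a\in\actions}$ really is a partition into nonempty blocks — and that all maxima are attained so ``$\max$'' is written rather than ``$\sup$''. I would also double-check the boundary risk levels $\alpha\in\{0,1\}$ and the $t=0$ case, where the conventions for $\varo_\alpha$ and for $\quant^\pm_\alpha$ must be applied consistently, but these do not affect the argument above.
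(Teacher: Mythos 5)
Your proposal is correct and follows essentially the same route as the paper's proof, which likewise rewrites $\max_{\pi\in\PiHD^t}$ as $\max_{a\in\actions}\max_{\pi\in\PiHD^t:\pi_0(s)=a}$ and invokes the definition of $q_t\opt$. The extra bookkeeping you supply (finiteness of $\PiHD^t$, nonemptiness of each block $\Pi_a$, the $t=0$ case) is a more careful spelling-out of the same one-line argument.
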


\begin{proof}
The result follows straightforwardly from the definition of $v_t\opt$ and $q_t\opt$. Namely,
\begin{align*}
  v_t\opt(s,\alpha) = \max_{\pi\in \PiHD^t}\varo_{\alpha}^{\pi,s}\left[\sum_{k=0}^{t-1} \gamma^{k} r(\tilde{s}_{k},\tilde{a}_{k})\right]
  = \max_{a\in\actions,\pi\in \PiHD^t:\pi_0(s)=a}\varo_{\alpha}^{\pi,s}\left[\sum_{k=0}^{t-1} \gamma^{k} r(\tilde{s}_{k},\tilde{a}_{k})\right]
  = \max_{a\in \mathcal{A}} q_t\opt(s,\alpha,a).
\end{align*}    
\end{proof}

\section{Proofs of Section \ref{sec:var-dp-equations}}
\label{sec:proofs-crefs-dp}

\subsection{Proof of Theorem \ref{thm:q-optimal:old}}

\begin{proof}
This proof extends the results obtained in \cite[Appx. C]{hau2023dynamic} to the case where there exist $(s,a,s')$ tuples for which $p(s,a,s')=0$, and reparameterizes the representation. The result on the definition and optimality of $\pi\opt$ comes directly from \citep{hau2023dynamic}. Specifically, by \cite[Appx. C]{hau2023dynamic}, it holds that $\theirs{q}_t = q\opt_t,\forall t\in[T]$, for all sequences $\theirs{q}:=(\theirs{q}_t)_{t=0}^{T}$ such that $\theirs{q}_0(s,\alpha,a)= \var{\alpha}{0}$ and
\begin{align*}
  \theirs{q}_{t+1}(s, \alpha ,a)
  &\;=\;  r(s,a) + \gamma  \max_{\zeta \in \Xi_{sa}(\alpha)} \min_{s' \in\states:p(s,a,s')>0} \max_{a'\in \actions} \theirs{q}_{t}\left( s', \frac{\alpha\zeta_{s'}}{p(s,a,s')},a'\right), \quad \forall t\in [T-1],
\end{align*}
where 
\begin{align*} 
    \Xi_{sa}(\alpha):=\{\zeta\in[0,1]^S \mid\sum_{s'\in\states}\zeta_{s'}=1, \alpha \zeta_{s'}\leq p(s,a,s'),\quad\forall s' \in \states\}.
\end{align*}
By construction of $q=(q_t)_{t=0}^{T}$ from the theorem statement, we have $q_0 = \theirs{q}_0$, so it remains to establish $B_{\max}q_{t} = \theirs{q}_{t+1}$ for all $t\in [T-1]$. By mathematical induction, assuming that $q_t = \theirs{q}_t$ for some $t\in[T-1]$, we show that $q_{t+1} := B_{\max}q_{t} = \theirs{q}_{t+1}$ in two steps : (1) $B_{\max}q_{t}\leq \theirs{q}_{t+1}$ and (2) $B_{\max}q_{t}\geq \theirs{q}_{t+1}$ .

\textbf{Step 1: Establishing $B_{\max}q_{t}\leq \theirs{q}_{t+1}$. }

Let $o \opt \in \mathcal{O}_{sa} (\alpha)$ (which is non-empty given that $0\in \mathcal{O}_{sa} (\alpha)$) be an optimal point for operator $B_{\max}$ in \cref{eq:q-optimal-dp} 
and consider the function: 
\begin{align*}
g(o):&=\min_{s' \in\states}\max_{a'\in \actions} q_{t}( s', o_{s'},a') \\
&=\min_{s' \in\states} \max_{a'\in \actions} q_{t}\opt( s', o_{s'},a') \\
&=\min_{s' \in\states}   \max_{\pi\in \PiHD^{t}} \varo^{\pi,s'}_{o_{s'}}\left[ \sum_{k=0}^{t-1} \gamma^k \cdot  r(\tilde{s}_k, \tilde{a}_k)\right]. 
\end{align*}
By properties of VaR and $\min$ operators, $g$ is non-decreasing in $o$. Therefore, $\max_{o \in \mathcal{O}_{sa} (\alpha)}g(o)$ is achieved inside $\{o\in [0,1]^S|\sum_{s'\in\states}o_{s'} p(s,a,s')=\alpha\}$ and necessarily, $\sum_{s'\in\states}o_{s'}\opt p(s,a,s')=\alpha$.

Now let $\zeta\in\Real^{S}$ be defined as:
\begin{align*}
\zeta_{s'}:=
    \begin{cases}
        \frac{o_{s'}\opt p(s,a,s')}{\alpha} \text{ if } \alpha > 0\\
        p(s,a,s') \text{ otherwise.}
    \end{cases}
\end{align*}
for all $s'\in\states$.
We aim to show that
$\zeta\in \Xi_{sa}(\alpha)$. When $\alpha =0$, the claim follows by definition of a transition kernel $p$. When $\alpha>0$,  we get
\begin{itemize}
    \item $\sum_{s'\in \states}\zeta_{s'} = \frac{\sum_{s'\in\states}o_{s'}\opt p(s,a,s')}{\alpha} = 1$
    \item $p(s,a,s')\geq 0,o\opt_{s'} \ge 0 \implies \zeta_{s'} = \frac{o_{s'}\opt p(s,a,s')}{\alpha} \ge 0~ ,\forall s'\in \states$
    \item $(\zeta_{s'}\ge 0 ~\forall s'\in \states~,~ \sum_{s'\in \states}\zeta_{s'} = 1) \implies (\zeta_{s'} \le 1 ~\forall s'\in \states$)
    \item $o\opt_{s'} \le 1 \implies \alpha \zeta_{s'} = o_{s'}\opt p(s,a,s')\le p(s,a,s')~ ,\forall s'\in \states$ 
\end{itemize}
In both cases, $\alpha\zeta_{s'}=o_{s'}\opt p(s,a,s'), \forall s'\in\states$. Indeed, for $\alpha =0$, the condition $\sum_{s'\in\states}o_{s'}\opt p(s,a,s')=\alpha$ implies $o\opt_{s'}=0$ whenever $p(s,a,s') > 0$, so $o_{s'}\opt p(s,a,s') = 0 = \alpha\zeta_{s'}$ for all $s'\in\states$. 
We can thus deduce: 
\begin{align*}
    B_{\max}q_{t}(s,\alpha, a) &=  r(s,a) + \gamma \max_{o \in \mathcal{O}_{sa}(\alpha)} \min_{s' \in\states}  \max_{a'\in \actions} q_t( s', o_{s'},a')\\
    &=  r(s,a) + \gamma \min_{s' \in\states}  \max_{a'\in \actions} q_t( s', o\opt_{s'},a')\\
    &\leq  r(s,a) + \gamma \min_{s' \in\states:p(s,a,s')>0}\max_{a'\in \actions} q_t( s', o\opt_{s'},a')\\
    &= r(s,a) + \gamma \min_{s' \in\states:p(s,a,s')>0}\max_{a'\in \actions} q_t\left( s', \frac{\alpha\zeta_{s'}}{p(s,a,s')},a'\right)\\
    &\leq r(s,a) + \gamma \max_{\zeta \in \Xi_{sa}(\alpha)}\min_{s' \in\states:p(s,a,s')>0}\max_{a'\in \actions} q_t\left( s', \frac{\alpha\zeta_{s'}}{p(s,a,s')},a'\right) = \theirs{q}_{t+1}(s,\alpha, a).
\end{align*}

\textbf{Step 2: Establishing $B_{\max}q_{t}\geq \theirs{q}_{t+1}$. }
We proceed similarly. Let an optimal $\zeta\opt \in \Xi_{sa}(\alpha)$ (it exists since $\zeta:=p(s,a,\cdot)$ is always feasible) satisfying: 
\begin{align*}
    \theirs{q}_{t+1}(s,\alpha,a)= r(s,a) + \gamma \min_{s' \in\states:p(s,a,s')>0}\max_{a'\in \actions} q_t\left( s', \frac{\alpha\zeta\opt_{s'}}{p(s,a,s')},a'\right),
\end{align*}
and define $o\in \Real^{S}$ as 
\begin{align*}
    o_{s'}:=
    \begin{cases}
        \frac{\alpha\zeta_{s'}\opt}{p(s,a,s')} &\text{ if } p(s,a,s')> 0,\\
        1 &\text{ otherwise.}
    \end{cases}
\end{align*}

To check if $o\in\mathcal{O}_{sa}(\alpha)$, we remark that for any $s'\in\states$ with $p(s,a,s')>0$, 
\begin{align*}
    \frac{\alpha\zeta_{s'}\opt}{p(s,a,s')}&\geq 0  &&[\alpha\geq 0, \zeta\opt\geq 0, p(s,a,s')> 0]\\
    \frac{\alpha\zeta_{s'}\opt}{p(s,a,s')}&\leq 1, &&[\alpha\zeta_{s'}\opt\leq p(s,a,s')]
\end{align*}
so $o_{s'}\in[0,1]$ when $p(s,a,s')>0$. Otherwise, $o_{s'}=1\in[0,1]$. Additionally, 
\[\sum_{s'\in\states}o_{s'}p(s,a,s')=  \sum_{s'\in\states:p(s,a,s')>0}\frac{\alpha\zeta_{s'}\opt}{p(s,a,s')} \cdot p(s,a,s') = \sum_{s'\in\states:p(s,a,s')>0}\alpha\zeta_{s'}\opt=\alpha,\]
so $o\in\mathcal{O}_{sa}(\alpha)$.
We can thus establish:
\begin{align*}
    B_{\max}q_{t}(s,\alpha, a) &= r(s,a) + \gamma \max_{o \in \mathcal{O}_{sa}(\alpha)} \min_{s' \in\states}  \max_{a'\in \actions} q_t( s', o_{s'},a')\\
    &\geq r(s,a) + \gamma  \min_{s' \in\states}  \max_{a'\in \actions} q_t( s', o_{s'},a')\\
    &\stackrel{(\text{a})}{=} r(s,a) + \gamma  \min_{s' \in\states: p(s,a,s')>0}  \max_{a'\in \actions} q_t( s', o_{s'},a')\\
    &= r(s,a) + \gamma  \min_{s' \in\states: p(s,a,s')>0}  \max_{a'\in \actions} q_t\left( s', \frac{\alpha\zeta_{s'}\opt}{p(s,a,s')},a'\right)\\
    &= r(s,a) + \gamma  \max_{\zeta \in \Xi_{sa}(\alpha)} \min_{s' \in\states: p(s,a,s')>0}  \max_{a'\in \actions} q_t\left( s', \frac{\alpha\zeta_{s'}}{p(s,a,s')},a'\right) \\
    &= \theirs{q}_{t+1}(s,\alpha, a). 
\end{align*}
Equality (a) stems from the implications: $p(s,a,s')=0 \implies o_{s'}=1 \implies q_t(s',o_{s'},a') = \infty$ and for any $c \in \bar{\Real}$, $\min(\infty,c) = c$.
We now conclude the proof by induction, as $q_{t+1} := B_{\max}q_{t} = \theirs{q}_{t+1} \forall t \in [T-1]$.
\end{proof}

\subsection{Proof of Theorem \ref{thm:q-optimal:new}}
Before diving into the proof of the theorem, we show the following general lemmas which will eventually be used for establishing \cref{thm:q-optimal:new}.

\begin{lemma}
\label{thm:propVsalpha}
For all $t\in [T]$ and $s\in\states$, $\alpha\mapsto v_t\opt(s,\alpha)$ is non-decreasing, right-continuous on $[0,1]$.
\end{lemma}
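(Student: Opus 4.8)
The statement to prove is \cref{thm:propVsalpha}: for each $t \in [T]$ and $s \in \states$, the map $\alpha \mapsto v_t\opt(s,\alpha)$ is non-decreasing and right-continuous on $[0,1]$. The natural approach is a double argument: (i) reduce both properties of $v_t\opt$ to the corresponding properties of the quantile functional $\alpha \mapsto \quant^+_\alpha(\tilde{x})$ applied to the fixed family of return random variables achievable under history-dependent policies, and then (ii) prove those two properties for $\quant^+$ directly from the definition in \cref{eq:quantile-definition}. Since $v_t\opt(s,\alpha) = \max_{\pi \in \PiHD^t} \varo^{\pi,s}_\alpha[\sum_{k=0}^{t-1}\gamma^k r(\tilde{s}_k,\tilde{a}_k)] = \max_{\pi} \quant^+_\alpha(\tilde{x}^{\pi,s})$ where $\tilde{x}^{\pi,s}$ is the (discrete, finitely supported) discounted return under $\pi$ started at $s$, and the feasible set $\PiHD^t$ of policies does not depend on $\alpha$, it suffices to show each $\alpha \mapsto \quant^+_\alpha(\tilde{x})$ is non-decreasing and right-continuous, because a supremum (here a maximum over finitely many reachable return distributions, as $\states,\actions$ and $t$ are finite) of non-decreasing right-continuous functions is again non-decreasing and right-continuous.

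\textbf{Monotonicity.} For non-decreasingness of $\alpha \mapsto \quant^+_\alpha(\tilde{x})$: take $\alpha \le \alpha'$. From \cref{eq:quantile-definition}, $\quant^+_{\alpha'}(\tilde{x}) = \max\{\tau \in \bar\Real \mid \Pr{\tilde{x} < \tau} \le \alpha'\}$, and the feasible set $\{\tau \mid \Pr{\tilde{x}<\tau} \le \alpha\}$ is contained in $\{\tau \mid \Pr{\tilde{x}<\tau} \le \alpha'\}$, so the max over the larger set is at least the max over the smaller one: $\quant^+_\alpha(\tilde{x}) \le \quant^+_{\alpha'}(\tilde{x})$. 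This is immediate. Then $v_t\opt(s,\alpha) = \max_\pi \quant^+_\alpha(\tilde{x}^{\pi,s}) \le \max_\pi \quant^+_{\alpha'}(\tilde{x}^{\pi,s}) = v_t\opt(s,\alpha')$.

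\textbf{Right-continuity.} This is the part requiring genuine care. Fix $\alpha \in [0,1)$ (at $\alpha=1$ right-continuity is vacuous). Because $\tilde{x}$ has finite support, the map $\tau \mapsto \Pr{\tilde{x}<\tau}$ is a right-continuous step function with finitely many jumps, and $\quant^+_\alpha(\tilde{x})$ equals the largest atom $\tau$ of $\tilde{x}$ with $\Pr{\tilde{x}<\tau} \le \alpha$ (or, more carefully, the supremum of such $\tau$, which is attained because of the lower semicontinuity noted after \cref{eq:quantile-definition}). Let $\tau^\star := \quant^+_\alpha(\tilde{x})$. I want to show $\lim_{\alpha' \downarrow \alpha}\quant^+_{\alpha'}(\tilde{x}) = \tau^\star$; by monotonicity the limit exists and is $\ge \tau^\star$, so I need $\quant^+_{\alpha'}(\tilde{x}) = \tau^\star$ for $\alpha'$ slightly above $\alpha$. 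The key observation: let $\tau^{\star\star}$ be the next atom of $\tilde{x}$ strictly above $\tau^\star$ (if none, $\quant^+$ is already $+\infty$-ish / constant and there is nothing to do; handle that degenerate case separately). By maximality of $\tau^\star$, we have $\Pr{\tilde{x} < \tau^{\star\star}} > \alpha$; set $\varepsilon := \Pr{\tilde{x}<\tau^{\star\star}} - \alpha > 0$. Then for any $\alpha' \in (\alpha, \alpha+\varepsilon)$, still $\Pr{\tilde{x}<\tau^{\star\star}} > \alpha'$, so no atom $\ge \tau^{\star\star}$ is feasible for $\quant^+_{\alpha'}$, while $\tau^\star$ remains feasible since $\Pr{\tilde{x}<\tau^\star} \le \alpha \le \alpha'$; hence $\quant^+_{\alpha'}(\tilde{x}) = \tau^\star$. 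Thus each $\alpha \mapsto \quant^+_\alpha(\tilde{x})$ is right-continuous at $\alpha$. Finally, since $v_t\opt(s,\cdot)$ is a maximum over the finite collection $\{\quant^+_\cdot(\tilde{x}^{\pi,s}) : \pi \text{ reachable}\}$ — finitely many because finitely many distinct return distributions arise over horizon $t$ in a finite MDP — and a finite max of right-continuous functions is right-continuous, we conclude $\alpha \mapsto v_t\opt(s,\alpha)$ is right-continuous.

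\textbf{Anticipated obstacle.} The only subtlety worth flagging is the reduction of the $\max$ over $\PiHD^t$ (an uncountable set of history-dependent policies) to a finite max over distinct return laws: one must argue that over a finite horizon in a finite MDP only finitely many laws of $\tilde{x}^{\pi,s}$ occur (since \cite{hau2023dynamic} guarantees it suffices to consider \emph{deterministic} history-dependent policies, of which there are finitely many over a finite horizon), so the supremum is attained and inherits right-continuity from its finitely many constituents. If one did not want to invoke finiteness, an alternative is to show directly that $v_t\opt(s,\cdot)$, being a supremum of non-decreasing functions, is non-decreasing, and that its right-continuity follows from an interchange-of-limits / uniform control argument using the boundedness of returns in $[\ushort R/(1-\gamma), \bar R/(1-\gamma)]$; but the finiteness route is cleaner and is the one I would write up.
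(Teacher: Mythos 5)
Your proof is correct and follows essentially the same route as the paper's: both reduce the claim to the monotonicity and right-continuity of $\alpha\mapsto\varo_\alpha[\tilde{x}]$ for a fixed return distribution (the paper cites \citet[Lem.~A.19]{Follmer2016} for this, whereas you derive it directly from \cref{eq:quantile-definition} using finite support). You are in fact slightly more careful than the paper's own one-line argument, which writes $v_t\opt(s,\alpha)=\varo_{\alpha}^{\pi\opt,s}[\cdot]$ as if a single optimizer served all $\alpha$; your observation that $\PiHD^t$ is a \emph{finite} set of deterministic policies, so that $v_t\opt(s,\cdot)$ is a finite maximum of non-decreasing right-continuous functions (and a finite maximum preserves both properties, unlike an infinite supremum), closes that gap explicitly.
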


\begin{proof}
    \cref{prop:value-qvalue} indicates that for any $t\in [T]$,
    \[v_t\opt(s,\alpha)=\varo^{\pi\opt,s}_{\alpha}\left[\sum_{k=0}^{t-1} \gamma^{k} r(\tilde{s}_{k},\tilde{a}_{k})\right].\] 
    By \citep[Lem. A.19]{Follmer2016},  $\alpha\mapsto\varo_\alpha[\tilde{x}]$ is non-decreasing and right-continuous for any $\tilde{x}\in\mathbb{X}$, so the result follows.
\end{proof}

\begin{lemma}\label{thm:varRcNondecrease}
    Let $f:[0,1]\rightarrow \bar{\Real}$ be non-decreasing and $\tilde{u}$ be a uniform random variable over $[0,1]$. Then, we have that $\varo_\alpha[f(\tilde{u})]=f(\alpha)$ for all $\alpha\in[0,1)$ where $f(\alpha)$ is right-continuous. Moreover, if $f(1)=\infty$, the equality also holds at $\alpha=1$.
\end{lemma}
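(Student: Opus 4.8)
The plan is to unfold the definition $\varo_\alpha[\tilde{x}]=\quant^+_\alpha(\tilde{x})=\max\{\tau\in\bar{\Real}\mid\P{\tilde{x}<\tau}\le\alpha\}$ and then establish the two inequalities $\varo_\alpha[f(\tilde{u})]\ge f(\alpha)$ and $\varo_\alpha[f(\tilde{u})]\le f(\alpha)$ separately. The workhorse observation is that, since $f$ is non-decreasing (and hence Borel measurable, as every monotone function is), for each threshold $\tau\in\bar{\Real}$ the set $\{u\in[0,1]\mid f(u)<\tau\}$ is a subinterval of $[0,1]$ with left endpoint $0$ (if it contains a point $u$ it contains every smaller point), possibly empty; and since $\tilde{u}$ is uniform on $[0,1]$, $\P{f(\tilde{u})<\tau}$ equals the Lebesgue measure $\lambda(\{u\in[0,1]\mid f(u)<\tau\})$ of that interval. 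Bounding this probability then reduces to either exhibiting a point $\beta$ with $f(\beta)<\tau$ or ruling out that any such point exceeds $\alpha$.

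For the lower bound, I would verify that $f(\alpha)$ is itself a feasible threshold, i.e.\ $\P{f(\tilde{u})<f(\alpha)}\le\alpha$. By monotonicity, $u>\alpha$ forces $f(u)\ge f(\alpha)$, so $\{u\mid f(u)<f(\alpha)\}\subseteq[0,\alpha]$ has measure at most $\alpha$. This uses only monotonicity and goes through verbatim whether $f(\alpha)$ is finite, equals $-\infty$ (the set is empty), or equals $+\infty$ (the set lies in $[0,\alpha)$); it yields $\varo_\alpha[f(\tilde{u})]\ge f(\alpha)$.

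For the upper bound, I would show that no $\tau>f(\alpha)$ is feasible. Right-continuity of $f$ at $\alpha$ means $\inf_{\beta>\alpha}f(\beta)=f(\alpha)$, so together with $\alpha<1$ and $f(\alpha)<\tau$ there is some $\beta\in(\alpha,1]$ with $f(\beta)<\tau$; then $u\le\beta$ implies $f(u)\le f(\beta)<\tau$, whence $[0,\beta]\subseteq\{u\mid f(u)<\tau\}$ and $\P{f(\tilde{u})<\tau}\ge\beta>\alpha$, contradicting feasibility (when $f(\alpha)=+\infty$ this step is vacuous). Combining the two bounds, the maximum in the definition of $\quant^+_\alpha(f(\tilde{u}))$ equals $f(\alpha)$. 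The remaining claim, at $\alpha=1$, is immediate: by the stated convention $\quant^+_1(\cdot)=\infty$ for every random variable, so $\varo_1[f(\tilde{u})]=\infty$, which equals $f(1)$ precisely when $f(1)=\infty$.

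The proof is short and elementary, so I do not anticipate a genuine obstacle. The points needing care are the bookkeeping with the extended reals $\pm\infty$, the degenerate endpoint $\alpha=0$ (where feasibility of a threshold $\tau$ means $\P{f(\tilde{u})<\tau}=0$), stating cleanly at the outset the identity $\P{f(\tilde{u})<\tau}=\lambda(\{u\mid f(u)<\tau\})$ that follows from $f$ being Borel and $\tilde{u}$ uniform, and keeping straight the strict inequality in the definition of $\quant^+$ (versus the non-strict one in $\quant^-$) — it is right-continuity of $f$ that is paired with $\quant^+$, whereas left-continuity would pair with $\quant^-$.
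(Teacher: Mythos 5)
Your proposal is correct and follows essentially the same route as the paper: the lower bound via monotonicity (the set $\{u \mid f(u)<f(\alpha)\}$ is contained in $[0,\alpha]$), and the upper bound via right-continuity of $f$ at $\alpha$ to produce a point $\beta>\alpha$ whose inclusion forces $\P{f(\tilde{u})<\tau}>\alpha$ for any $\tau>f(\alpha)$. The paper phrases the upper bound as an $\epsilon$--$\delta$ argument followed by $\epsilon\to 0$ rather than a direct infeasibility contradiction, but this is a purely presentational difference.
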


\begin{proof}
    By definition of VaR (see \cref{eq:var-definition}), we have:
$\var{\alpha}{f(\tilde{u})} =\max\{z\in \bar{\Real}|\P{f(\tilde{u})<z}\leq \alpha\}$.
When $\alpha=1$, $\P{f(\tilde{u})<z}\leq \alpha$ for all $z\in \bar{\Real}$ so that $\varo_\alpha[f(\tilde{u})]=\infty$ and the second part of the statement holds. Let thus $\alpha\in[0,\,1)$ be such that $f(\alpha)$ is right-continuous at $\alpha$. By assumption on $f$ being non-decreasing, $\tilde{u} \geq \alpha$ implies $f(\tilde{u})\geq f(\alpha)$ and we can establish: 
\begin{align*}
    &\P{\tilde{u} \geq\alpha} \leq \P{f(\tilde{u}) \geq f(\alpha)}\\
    \iff&1-\P{\tilde{u}  < \alpha} \leq 1- \P{f(\tilde{u}) < f(\alpha)}\\
    \iff& 1-\alpha \leq 1- \P{f(\tilde{u}) < f(\alpha)} &&[\P{\tilde{u}  < \alpha} = \P{\tilde{u}  \leq \alpha} = \alpha]\\
    \iff& \alpha \geq \P{f(\tilde{u}) < f(\alpha)}.
\end{align*}
As a result, $f(\alpha) \leq\var{\alpha}{f(\tilde{u})}$.

On the other hand, the right-continuity of $f$ at $\alpha$ ensures that for all $\epsilon>0$ there exists a $\delta>0$ such that $f(\alpha+\delta)< f(\alpha)+\epsilon$. Thus
\begin{align*}
    \P{f(\tilde{u})<f(\alpha)+\epsilon} &\geq \P{f(\tilde{u})\leq f(\alpha+\delta)}  &&[\text{By construction:  }f(\alpha+\delta)< f(\alpha)+\epsilon]\\
    &\geq\P{\tilde{u}\leq\alpha+\delta} &&[\tilde{u} \leq \alpha+\delta \implies f(\tilde{u})\leq f(\alpha+\delta)] \\
    &=\alpha+\delta \\
    &> \alpha.
\end{align*}
Hence, $\var{\alpha}{f(\tilde{u})}\leq f(\alpha)+\epsilon$ for all $\epsilon>0$. Setting $\epsilon\to 0$, $\var{\alpha}{f(\tilde{u})}\leq f(\alpha)$. 

We conclude that $\var{\alpha}{f(\tilde{u})}= f(\alpha)$ for all $\alpha\in[0,\,1)$ at which $f(\alpha)$ is right-continuous.
\end{proof}

The result below directly follows from \cref{thm:propVsalpha,thm:varRcNondecrease}. 
\begin{corollary}\label{thm:vIsVaR}
    For all $t\in [T]$ and $s\in\states$, we have $\varo_\alpha[v_t\opt(s,\tilde{u})]=v_t\opt(s,\alpha)$, where $\tilde{u}$ is a uniform random variable on $[0,1]$.
\end{corollary}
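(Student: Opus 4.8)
The plan is to instantiate \cref{thm:varRcNondecrease} with the map $f\colon[0,1]\to\bar{\Real}$ given by $f(\alpha):=v_t\opt(s,\alpha)$ for an arbitrary fixed $t\in[T]$ and $s\in\states$. First I would invoke \cref{thm:propVsalpha}, which tells us exactly that this $f$ is non-decreasing and right-continuous on all of $[0,1]$. In particular, the hypotheses of \cref{thm:varRcNondecrease} are met, and its right-continuity requirement holds at \emph{every} $\alpha\in[0,1)$ rather than just generic points. Applying \cref{thm:varRcNondecrease} then gives $\varo_\alpha[f(\tilde{u})]=f(\alpha)$, i.e.\ $\varo_\alpha[v_t\opt(s,\tilde{u})]=v_t\opt(s,\alpha)$, for all $\alpha\in[0,1)$.

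It remains to cover the endpoint $\alpha=1$, which I would handle via the ``moreover'' clause of \cref{thm:varRcNondecrease}: that clause asserts the identity at $\alpha=1$ provided $f(1)=\infty$. This is immediate from the quantile conventions recalled in \cref{sec:sett-prel}, namely $\quant^+_1(\tilde{x})=\infty$ for every $\tilde{x}\in\mathbb{X}$: for each $\pi\in\PiHD^t$ we have $\varo_1^{\pi,s}\big[\sum_{k=0}^{t-1}\gamma^k r(\tilde{s}_k,\tilde{a}_k)\big]=\quant^+_1(\cdot)=\infty$, so taking the maximum over policies yields $v_t\opt(s,1)=f(1)=\infty$. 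Hence $\varo_1[v_t\opt(s,\tilde{u})]=v_t\opt(s,1)$ as well, and the claimed identity holds for all $\alpha\in[0,1]$ and all $t\in[T]$, $s\in\states$.

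There is essentially no obstacle here: the statement is a genuine corollary of the two preceding lemmas, and the assembly is a one-line argument once $f$ is identified. The only point deserving a moment of care is the boundary case $\alpha=1$ — one must check $f(1)=\infty$ so that it is subsumed by the second part of \cref{thm:varRcNondecrease} instead of being an exception — and this follows directly from the convention $\quant^+_1(\tilde x)=\infty$.
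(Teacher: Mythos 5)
Your argument is correct and is exactly the route the paper takes: the paper states that the corollary follows directly from \cref{thm:propVsalpha} (monotonicity and right-continuity of $\alpha\mapsto v_t\opt(s,\alpha)$) combined with \cref{thm:varRcNondecrease}, which is precisely your instantiation. Your explicit check that $v_t\opt(s,1)=\infty$ so the $\alpha=1$ endpoint is covered by the ``moreover'' clause is a detail the paper leaves implicit, but it introduces no divergence in approach.
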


\begin{lemma}
    \label{thm:var:decomp2}
    Let $S \in \Natural$ non-decreasing functions \zeroOneIntervalOptions{$f_i\colon (0,1)\rightarrow\mathbb{R}, i\in 1{:}S$}{$f_i\colon [0,1]\rightarrow\bar{\mathbb{R}}, i\in 1{:}S$ with each $f_i(\alpha)\in\Real$ for all $\alpha\in(0,1)$}. Let also $\tilde{y}$ be a discrete random variable on $1{:}S$ with probability mass function $\hat{p}_i := \mathbb{P}[\tilde{y} = i],\quad \forall i \in 1{:}S$, and $\tilde{u}$ an independent random variable with uniform distribution on $[0,1]$. Then, we have
    \begin{equation} \label{eq:var:decomp2}
      \var{\alpha}{f_{\tilde{y}}(\tilde{u})}
      \quad = 
      \max_{\bm{o}\in[0,1]^S} \; \left\{\min_{i\in 1:S}\; \var{o_i}{f_i(\tilde{u})} \mid \sum_{j=1}^S o_j \hat{p}_j \leq \alpha\right\}.
    \end{equation}
    \end{lemma}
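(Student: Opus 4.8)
The plan is to prove the identity \eqref{eq:var:decomp2} by establishing the two inequalities $\leq$ and $\geq$ separately, using the characterization of VaR from \eqref{eq:var-definition} in terms of the probability that the random variable falls strictly below a threshold, together with the non-decreasingness of each $f_i$. Throughout I would exploit that $f_{\tilde y}(\tilde u)$ has a simple mixture structure: conditioning on $\tilde y = i$, it equals $f_i(\tilde u)$ with $\tilde u \sim U([0,1])$ independent of $\tilde y$. I would also use \cref{thm:varRcNondecrease} at the points of right-continuity of each $f_i$ to translate $\var{o_i}{f_i(\tilde u)}$ into $f_i(o_i)$-like quantities when convenient, but it may be cleaner to work directly with VaR and the event $\{f_i(\tilde u) < z\}$, since the $f_i$ need not be right-continuous everywhere.

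For the direction $\var{\alpha}{f_{\tilde y}(\tilde u)} \geq \min_{i} \var{o_i}{f_i(\tilde u)}$ for any feasible $\bm o$ with $\sum_j o_j \hat p_j \leq \alpha$: set $z^\star := \min_i \var{o_i}{f_i(\tilde u)}$ and I would show $\Pr{f_{\tilde y}(\tilde u) < z^\star} \leq \alpha$, which by \eqref{eq:var-definition} gives $\var{\alpha}{f_{\tilde y}(\tilde u)} \geq z^\star$. By the law of total probability, $\Pr{f_{\tilde y}(\tilde u) < z^\star} = \sum_i \hat p_i \Pr{f_i(\tilde u) < z^\star}$. For each $i$, since $z^\star \leq \var{o_i}{f_i(\tilde u)} = \quant^+_{o_i}(f_i(\tilde u))$, the definition of $\quant^+$ in \eqref{eq:quantile-definition} yields $\Pr{f_i(\tilde u) < z^\star} \leq o_i$ (using that $\tau \mapsto \Pr{f_i(\tilde u) < \tau}$ is non-decreasing, so the bound at $\quant^+_{o_i}$ propagates down to $z^\star$; one must be slightly careful at $z^\star = \quant^+_{o_i}$ where lower semi-continuity gives the inequality exactly). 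Hence $\Pr{f_{\tilde y}(\tilde u) < z^\star} \leq \sum_i \hat p_i o_i \leq \alpha$, as desired. Taking the max over feasible $\bm o$ gives the $\geq$ direction of \eqref{eq:var:decomp2}.

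For the reverse direction, the natural choice is $o_i^\star := \Pr{f_i(\tilde u) < \tau^\star}$ where $\tau^\star := \var{\alpha}{f_{\tilde y}(\tilde u)}$, or a slight perturbation thereof. Feasibility follows from $\sum_i \hat p_i o_i^\star = \Pr{f_{\tilde y}(\tilde u) < \tau^\star} \leq \alpha$ by \eqref{eq:var-definition}. The work is to show $\var{o_i^\star}{f_i(\tilde u)} \geq \tau^\star$ for every $i$, i.e. $\tau^\star \leq \quant^+_{o_i^\star}(f_i(\tilde u))$; by the $\quant^+$ definition this needs $\Pr{f_i(\tilde u) < \tau^\star} \leq o_i^\star$, which holds with equality by construction — but one must confirm $\tau^\star$ is genuinely a candidate in the set defining $\quant^+_{o_i^\star}$, handling the boundary case $o_i^\star$ possibly equal to $1$ (where $\quant^+_1 = \infty$, trivially $\geq \tau^\star$) and the finiteness assumption on $f_i$ over $(0,1)$. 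A subtlety is that $\tau^\star$ may be $\pm\infty$ in degenerate cases, or that the supremum over $\bm o$ might not be attained; I expect to argue attainment using that $\bm o \mapsto \min_i \var{o_i}{f_i(\tilde u)}$ is non-decreasing and upper semi-continuous in each coordinate (inherited from the lower semi-continuity of $\tau \mapsto \Pr{f_i(\tilde u) < \tau}$ and monotonicity of $\var{\cdot}{f_i(\tilde u)}$), so the max over the compact feasible polytope $\{\bm o \in [0,1]^S : \sum_j o_j \hat p_j \leq \alpha\}$ is achieved.

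The main obstacle I anticipate is the careful bookkeeping at the boundary: reconciling the strict inequality $<$ inside the $\quant^+$ definition with the non-strict feasibility constraint $\sum_j o_j \hat p_j \leq \alpha$, and in particular verifying that the candidate $o_i^\star = \Pr{f_i(\tilde u) < \tau^\star}$ actually achieves $\var{o_i^\star}{f_i(\tilde u)} \geq \tau^\star$ rather than falling just short — this is exactly the kind of place where $\quant^+$ versus $\quant^-$ discrepancies (the non-elicitability issue flagged after \cref{lem:var-elicitable}) can bite. Handling the atoms of $f_i(\tilde u)$ (which occur at the jump points of $f_i$, and at $\tilde u$-intervals where $f_i$ is constant) correctly is the crux; everything else is a routine application of the law of total probability and monotonicity.
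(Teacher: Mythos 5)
Your proposal is correct and follows essentially the same route as the paper: both rest on the law of total probability to write $\Pr{f_{\tilde y}(\tilde u)<\tau}=\sum_i \hat p_i\,\Pr{f_i(\tilde u)<\tau}$, introduce the auxiliary levels $o_i$ as upper bounds on the conditional probabilities $\Pr{f_i(\tilde u)<\tau}$, and use the witness $o_i^\star=\Pr{f_i(\tilde u)<\tau^\star}$ (the paper phrases this as a chain of set equalities with an explicit $\exists\,\bm o$ reformulation, while you split it into two inequalities, which also makes attainment of the max automatic and sidesteps the paper's separate bookkeeping for indices with $\hat p_i=0$). The boundary subtleties you flag are handled exactly as you anticipate, via the left-continuity of $\tau\mapsto\Pr{\tilde x<\tau}$ guaranteeing that the max in the definition of $\quant^+$ is attained.
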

\begin{proof} 
This proof closely follows that of \citep[Thm. 5.1]{hau2023dynamic} but relaxes the assumption $\hat{p}_i > 0$ and simplifies the notation.
Let $\mathfrak{I} := \{i \in 1{:}S \mid \hat{p}_{i} > 0\}$. We decompose VaR based on its definition in \cref{eq:quantile-definition,eq:var-definition}: 
\begin{align}
  \nonumber
  \var{\alpha}{f_{\tilde{y}}(\tilde{u})}
  &= \max \; \left\{\tau \in \bar{\Real} \mid \Pr{ f_{\tilde{y}}(\tilde{u}) < \tau } \le  \alpha\right\}
  \\ \nonumber
  &\overset{\textrm{(b)}}{=} \max\; \left\{\tau\in \bar{\Real}  \mid \sum_{i \in \mathfrak{I}} \Pr{ f_{\tilde{y}}(\tilde{u})< \tau \mid \tilde{y}=i }\cdot \hat{p}_i \leq \alpha\right\} \\ \nonumber
  &= \max\; \left\{\tau\in \bar{\Real}  \mid \sum_{i \in \mathfrak{I}} \Pr{ f_i(\tilde{u})< \tau  }\cdot \hat{p}_i \leq \alpha\right\}\\ \nonumber
  &\overset{\textrm{(c)}}{=} \max\; \left\{\tau \in \bar{\Real} \mid \exists \bm{o}\in[0,1]^{S},\;  \Pr{ f_i(\tilde{u}) < \tau  } \leq o_i ,  \; \forall i \in \mathfrak{I}, \; \sum_{j \in \mathfrak{I}} o_j\hat{p}_j \le \alpha\right\}\\  \nonumber
  &= \max\; \left\{\tau \in \bar{\Real} \mid \exists \bm{o}\in[0,1]^{S} , \; \Pr{ f_i(\tilde{u}) < \tau  } \leq o_i,  \; \forall i \in \mathfrak{I},  \; \sum_{j=1}^S o_j\hat{p}_j \le \alpha\right\}\\  \nonumber
  &\overset{\textrm{(d)}}{=} \max_{\bm{o}\in[0,1]^{S}} \left\{  \max\; \left\{\tau \in \bar{\Real} \mid   \Pr{ f_i(\tilde{u}) < \tau   } \leq o_i , \;\forall i \in \mathfrak{I} \right\} \mid  \sum_{j=1}^S o_j\hat{p}_j \le \alpha\right\}  \\  \nonumber
  &= \max_{\bm{o}\in[0,1]^{S}} \left\{  \max\; \bigcap_{i\in \mathfrak{I}}\left\{\tau \in \bar{\Real} \mid   \Pr{ f_i(\tilde{u}) < \tau   } \leq o_i \right\} \mid  \sum_{j=1}^S o_j\hat{p}_j \le \alpha\right\}  \\  \nonumber
  &\overset{\textrm{(e)}}{=} \max_{\bm{o}\in[0,1]^{S}} \left\{  \min_{i\in \mathfrak{I}} \max  \; \left\{\tau \in \bar{\Real} \mid   \Pr{ f_i(\tilde{u}) < \tau   } \leq o_i  \right\} \mid  \sum_{j=1}^S o_j\hat{p}_j \le \alpha\right\}  \\
 &\overset{\textrm{(f)}}{=} \max_{\bm{o}\in[0,1]^{S}} \; \left\{\min_{i\in \mathfrak{I}}\; \var{o_i}{f_i(\tilde{u})  } \mid \sum_{j=1}^S o_j \hat{p}_j \leq \alpha\right\}. \label{eq:var-decomposition-part2}
\end{align}
In the derivation above, step (b) follows from the law of total probability and omitting zero probability events. Then we lower-bound them by an auxiliary variable $o_i$ in step (c). In step (d) we replace the joint maximum over $\tau$ and $\bm{o}$ by sequential max, and then we replace the max of an intersection by the minimum of the maxima of sets in (e). The equality in (e) holds because $\tau \mapsto\Pr{ f_i(\tilde{u}) < \tau}$ is monotone and, therefore, the sets $\left\{\tau \in \bar{\Real} \mid   \Pr{ f_i(\tilde{u}) < \tau \mid \tilde{y}=i } \leq o_i \right\}$ are nested. Step (f) holds by definition of VaR. 

It remains to show that \eqref{eq:var-decomposition-part2} equals \eqref{eq:var:decomp2}. 
Suppose that $o\opt \in [0,1]^S$ is optimal in \eqref{eq:var-decomposition-part2} and construct $\bar{o} \in [0,1]^{S}$ as
\[
 \bar{o}_i =
 \begin{cases}
   o\opt_i &\text{if  } i\in \mathfrak{I}, \\
   1 &\text{otherwise},
 \end{cases} \qquad \forall i\in 1{:}S.
\]
Since $\bar{o}$ is feasible in \eqref{eq:var:decomp2} with the same objective, \eqref{eq:var-decomposition-part2} $\le$ \eqref{eq:var:decomp2}. To show that \eqref{eq:var-decomposition-part2} $\ge$ \eqref{eq:var:decomp2} suppose that $o\opt \in [0,1]^{S}$ is optimal in \eqref{eq:var:decomp2}. The inequality then holds because $o\opt$ is feasible in \eqref{eq:var-decomposition-part2} and because
\[
  \min_{i\in \mathfrak{I}}\; \var{o_i\opt}{f_i(\tilde{u})}  \ge
  \min_{i\in 1{:}S}\; \var{o_i\opt}{f_i(\tilde{u}) }.
\]

\end{proof}

\begin{lemma}\label{thm:maxVarmax}
Assume non-decreasing functions \zeroOneIntervalOptions{$f_i\colon (0,1)\rightarrow\mathbb{R}, i\in 1{:}S$}{$f_i\colon [0,1]\rightarrow\bar{\mathbb{R}}, i\in 1{:}S$ with each $f_i(\alpha)\in\Real$ for all $\alpha\in(0,1)$}. Then, it holds that $\max_{i\in 1:S}\varo_{\tilde{u}_1}[f_i(\tilde{u}_2)] = \max_{i\in 1:S} f_i(\tilde{u}_1)$ almost surely, where $\tilde{u}_1$ and $\tilde{u}_2$ are two independent uniform random variables on $[0,1]$.
\end{lemma}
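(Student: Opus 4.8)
### Proof Plan for Lemma \ref{thm:maxVarmax}

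\textbf{Proof plan.}
The plan is to obtain this as an almost-sure consequence of \cref{thm:varRcNondecrease} applied separately to each $f_i$, after discarding a countable (hence $\tilde{u}_1$-null) set of "bad" risk levels. First I would observe that, since $\tilde{u}_1$ and $\tilde{u}_2$ are independent, for each fixed realization $\alpha$ of $\tilde{u}_1$ the quantity $\var{\alpha}{f_i(\tilde{u}_2)}$ is simply the unconditional VaR of the random variable $f_i(\tilde{u}_2)$ at level $\alpha$. The map $\alpha \mapsto \var{\alpha}{f_i(\tilde{u}_2)}$ is non-decreasing (cf.\ \citet[Lem.~A.19]{Follmer2016}), hence Borel measurable, so $\var{\tilde{u}_1}{f_i(\tilde{u}_2)}$ is a well-defined random variable; the same holds for $\max_{i\in 1:S}\var{\tilde{u}_1}{f_i(\tilde{u}_2)}$.

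Next, for each $i\in 1{:}S$, let $D_i \subseteq [0,1]$ be the union of $\{1\}$ with the set of points in $[0,1)$ at which $f_i$ fails to be right-continuous. Because $f_i$ is non-decreasing it has at most countably many discontinuity points, so $D_i$ is countable and $\Pr{\tilde{u}_1 \in D_i} = 0$. By \cref{thm:varRcNondecrease}, for every $\alpha \in [0,1]\setminus D_i$ we have $\var{\alpha}{f_i(\tilde{u}_2)} = f_i(\alpha)$; the endpoint $\alpha = 1$ is excluded by construction, which is precisely why it does not matter that the equality there would otherwise require $f_i(1) = \infty$ (and note $f_i(1) > -\infty$ automatically, since $f_i(\alpha) \in \Real$ for $\alpha \in (0,1)$ and $f_i$ is non-decreasing).

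Finally, I would set $D := \bigcup_{i=1}^S D_i$, which is a finite union of countable sets, hence countable, so $\Pr{\tilde{u}_1 \in D} = 0$. On the probability-one event $\{\tilde{u}_1 \notin D\}$ the identity $\var{\tilde{u}_1}{f_i(\tilde{u}_2)} = f_i(\tilde{u}_1)$ holds simultaneously for all $i \in 1{:}S$; taking the maximum over the finite index set $1{:}S$ on both sides gives $\max_{i\in 1:S}\var{\tilde{u}_1}{f_i(\tilde{u}_2)} = \max_{i\in 1:S} f_i(\tilde{u}_1)$, which is the claimed almost-sure identity.

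I do not anticipate a genuine obstacle here. The only points needing a little care are (i) checking that $\var{\tilde{u}_1}{f_i(\tilde{u}_2)}$ is a legitimate random variable (monotonicity, hence measurability, of the VaR-in-$\alpha$ map, together with independence to identify the conditional VaR with the unconditional one), and (ii) ensuring the finitely many null sets $D_i$ can be merged into a single null set — which works exactly because $S$ is finite and each $f_i$, being monotone, is right-continuous off a countable set. Everything else is a direct invocation of \cref{thm:varRcNondecrease}.
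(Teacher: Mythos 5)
Your proposal is correct and follows essentially the same route as the paper's proof: both invoke \cref{thm:varRcNondecrease} pointwise at each right-continuity point of the $f_i$'s, use the countability of discontinuities of monotone functions (Froda's theorem) to see that the exceptional set is $\tilde{u}_1$-null, and then take the finite maximum on the complement. Your added remarks on measurability of $\alpha\mapsto\var{\alpha}{f_i(\tilde{u}_2)}$ and on excluding $\alpha=1$ are careful touches but do not change the argument.
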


\begin{proof}
By \cref{thm:varRcNondecrease}, for any $j\in 1{:}S$,  $\varo_{\bar{\alpha}}[f_j(\tilde{u})]=f_j(\bar{\alpha})$ at any value of $\bar{\alpha}\zeroOneIntervalOptions{\in(0,1)}{\in [0,1)}$ where $f_j(\cdot)$ is right-continuous. This implies that for all $\alpha\zeroOneIntervalOptions{\in(0,1)}{\in [0,1)}$ where  all $f_i$'s are right-continuous, $\max_{i\in 1:S}\varo_{\alpha}[f_i(\tilde{u})]=\max_{i\in 1:S}f_i(\alpha)$  since the maximum is necessarily right-continuous then. For $i\in 1{:}S$, $f_i$ is monotone on the interval $(0,1)$, so by Froda's theorem \citep[Thm. 4.30]{rudin1964principles}, the number of discontinuities of $f_i$ must be at most countable \zeroOneIntervalOptions{}{on $(0,1)$ so therefore also on $[0,1]$}.  
This implies that the number of points $\alpha$ at which some $f_i$ from $i\in 1{:}S$ is discontinuous is at most countable. 
We thus conclude that $\max_{i\in 1:S} f_i(\tilde{u}_1)=\max_{i\in 1:S}\varo_{\tilde{u}_1}[f_i(\tilde{u}_2)]$ with probability one.
\end{proof}

\begin{lemma}\label{thm:maxVaR} Let $S$ non-decreasing functions \zeroOneIntervalOptions{$f_i\colon (0,1)\rightarrow\mathbb{R}, i\in 1{:}S$}{$f_i\colon [0,1]\rightarrow\bar{\mathbb{R}}, i\in 1{:}S$ with each $f_i(\alpha)\in\Real$ for all $\alpha\in(0,1)$}. Then $\varo_{\alpha}[\max_{i\in 1:S}f_i(\tilde{u})]=\max_{i\in 1:S}\varo_\alpha[f_i(\tilde{u})]$, where $\tilde{u}$ is a uniform random variable on $[0,1]$  
\end{lemma}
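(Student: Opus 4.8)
The plan is to obtain the identity as a short consequence of the two preceding lemmas together with the monotonicity and right-continuity of the map $\beta\mapsto\varo_{\beta}[\tilde x]$. I would introduce an auxiliary independent uniform random variable $\tilde u_2$ on $[0,1]$ (so $\tilde u$ and $\tilde u_2$ are i.i.d.) and define $G\colon[0,1]\to\bar{\mathbb{R}}$ by $G(\beta):=\max_{i\in 1:S}\varo_{\beta}[f_i(\tilde u_2)]$. By \citep[Lem.~A.19]{Follmer2016}, each $\beta\mapsto\varo_{\beta}[f_i(\tilde u_2)]$ is non-decreasing and right-continuous on $[0,1]$, and since a finite maximum of such functions inherits both properties, $G$ is non-decreasing and right-continuous.

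Next I would invoke \cref{thm:maxVarmax} with the independent pair $(\tilde u,\tilde u_2)$ in place of $(\tilde u_1,\tilde u_2)$: by independence, $\varo_{\tilde u}[f_i(\tilde u_2)]$ equals the deterministic map $\beta\mapsto\varo_{\beta}[f_i(\tilde u_2)]$ evaluated at $\tilde u$, so \cref{thm:maxVarmax} states precisely that $G(\tilde u)=\max_{i\in 1:S}\varo_{\tilde u}[f_i(\tilde u_2)]=\max_{i\in 1:S}f_i(\tilde u)$ almost surely. Since $\varo_{\alpha}$ depends only on the distribution of its argument, it follows that $\varo_{\alpha}\!\big[\max_{i\in 1:S}f_i(\tilde u)\big]=\varo_{\alpha}[G(\tilde u)]$.

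Finally, since $G$ is non-decreasing and right-continuous, \cref{thm:varRcNondecrease} applied with $f=G$ gives $\varo_{\alpha}[G(\tilde u)]=G(\alpha)$ for every $\alpha\in[0,1)$; the case $\alpha=1$ is covered by the same lemma because $G(1)=\max_{i}\varo_{1}[f_i(\tilde u_2)]=\max_i\quant^+_1(f_i(\tilde u_2))=\infty$. Unfolding the definition of $G$, we have $G(\alpha)=\max_{i\in 1:S}\varo_{\alpha}[f_i(\tilde u_2)]=\max_{i\in 1:S}\varo_{\alpha}[f_i(\tilde u)]$, which is exactly the right-hand side of the claim, so chaining the three equalities closes the proof.

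I do not expect a genuine obstacle here: the argument is essentially a reindexing of \cref{thm:maxVarmax} through the VaR of a monotone function. The only points needing care are (i) verifying the right-continuity of $G$ so that \cref{thm:varRcNondecrease} applies, and (ii) the $\bar{\mathbb{R}}$-valued boundary cases $\alpha\in\{0,1\}$ and the possibility that $f_i$ takes values $\pm\infty$ at the endpoints, which are handled exactly as in \cref{thm:varRcNondecrease}. An alternative, equally short route would replace each $f_i$ by its right-continuous modification $\bar f_i$ — which agrees with $f_i$ off an at most countable set by Froda's theorem, hence $\tilde u$-almost surely — and then apply \cref{thm:varRcNondecrease} directly to the non-decreasing right-continuous function $\max_{i}\bar f_i$.
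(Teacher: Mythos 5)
Your proof is correct, but it takes a genuinely different and more direct route than the paper's. The paper splits the identity into two inequalities: the direction $\varo_{\alpha}[\max_i f_i(\tilde{u})]\ge\max_i\varo_\alpha[f_i(\tilde{u})]$ follows from monotonicity of VaR, and the reverse direction is established by contradiction, using the law of total probability and an explicit $\epsilon$--$\delta$ argument exploiting the right-continuity of $\alpha'\mapsto\max_i\varo_{\alpha'}[f_i(\tilde{u}_2)]$ to show that a strict gap would force a positive-probability event to have probability zero. You instead observe that \cref{thm:maxVarmax} already identifies $\max_i f_i(\tilde{u})$ almost surely with $G(\tilde{u})$ for the non-decreasing, right-continuous function $G(\beta):=\max_i\varo_\beta[f_i(\tilde{u}_2)]$, and then apply \cref{thm:varRcNondecrease} directly to $G$; law-invariance of VaR and the convention $\varo_1[\cdot]=\infty$ handle the remaining bookkeeping. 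Both arguments rest on the same two ingredients (\cref{thm:maxVarmax} and the right-continuity/monotonicity of $\beta\mapsto\varo_\beta[\cdot]$ from \citep[Lem.~A.19]{Follmer2016}), but your composition of the lemmas replaces the paper's contradiction argument with a one-line reduction, which is cleaner and arguably makes the dependence structure of the three lemmas more transparent. The one point worth stating explicitly if you write this up is that a finite maximum of non-decreasing right-continuous functions is again non-decreasing and right-continuous (so that \cref{thm:varRcNondecrease} applies to $G$ at \emph{every} $\alpha\in[0,1)$, not merely off a countable set); you do note this, and it is exactly the fact the paper also relies on inside its contradiction step.
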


\begin{proof}
    First, the claim trivially applies to $\alpha=1$ since $\varo_1[\tilde{x}]=\infty$ for all random variables $\tilde{x}\in\X$. Thus, we focus on the case $\alpha\in[0,1)$. Since $\alpha\mapsto\varo_{\alpha}[\cdot]$ is non-decreasing (see \citep[Lem. A.19]{Follmer2016}), $\varo_{\alpha}[\max_{i\in 1:S}f_i(\tilde{u})]\geq \varo_{\alpha}[f_j(\tilde{u})]$ for all $j\in 1{:}S$, hence $\varo_{\alpha}[\max_{i\in 1:S}f_i(\tilde{u})]\geq\max_{i\in 1:S}\varo_\alpha[f_i(\tilde{u})]$. We are therefore left with showing that $\varo_{\alpha}[\max_{i\in 1:S}f_i(\tilde{u})]\leq\max_{i\in 1:S}\varo_\alpha[f_i(\tilde{u})]$. We do so by contradiction. Assume that
    \[\varo_{\alpha}[\max_{i\in 1:S}f_i(\tilde{u})]>\max_{i\in 1:S}\varo_\alpha[f_i(\tilde{u})]=:\nu\opt.\]
Applying \cref{thm:maxVarmax}, $\max_{i\in 1:S}f_i(\tilde{u}) = \max_{i\in 1:S}\varo_{\tilde{u}}[f_i(\tilde{u}_2)]$ almost surely, so we must have $\nu\opt<\varo_{\alpha}[\max_{i\in 1:S}\varo_{\tilde{u}}[f_i(\tilde{u}_2)]]$ where $\tilde{u}_2$ is uniformly distributed on $[0,1]$.
    By definition of $\varo_\alpha[\cdot]$ (Eq.~\eqref{eq:var-definition}), this implies that there exists $\epsilon>0$ such that:
    \[\P{\max_{i\in 1:S}\varo_{\tilde{u}}[f_i(\tilde{u}_2)]<\nu\opt+\epsilon}\leq \alpha.\]
    Since $\nu\opt = \max_{i \in 1:S} \varo_\alpha[f_i(\tilde{u})]$ and $\alpha\mapsto\varo_\alpha[f_i(\tilde{u})]$ is non-decreasing,
    we must have $\max_{i\in 1:S}\varo_{\alpha'}[f_i(\tilde{u})]\leq \nu\opt<\nu\opt+\epsilon$ for all $\alpha'\leq \alpha$. In addition, by the law of total probability:
    \begin{align*}
        \alpha &\geq \P{\max_{i\in 1:S}\varo_{\tilde{u}}[f_i(\tilde{u}_2)]<\nu\opt+\epsilon}\\
        &= \P{\max_{i\in 1:S}\varo_{\tilde{u}}[f_i(\tilde{u}_2)]<\nu\opt+\epsilon |\tilde{u}>\alpha}\P{\tilde{u}>\alpha} \\
        &\qquad+
        \P{\max_{i\in 1:S}\varo_{\tilde{u}}[f_i(\tilde{u}_2)]<\nu\opt+\epsilon |\tilde{u}\leq\alpha}\P{\tilde{u}\leq\alpha}\\
        &= (1-\alpha)\P{\max_{i\in 1:S}\varo_{\tilde{u}}[f_i(\tilde{u}_2)]<\nu\opt+\epsilon |\tilde{u}>\alpha} +\alpha\P{\max_{i\in 1:S}\varo_{\tilde{u}}[f_i(\tilde{u}_2)]<\nu\opt+\epsilon |\tilde{u}\leq\alpha}\\
        &= (1-\alpha)\P{\max_{i\in 1:S}\varo_{\tilde{u}}[f_i(\tilde{u}_2)]<\nu\opt+\epsilon |\tilde{u}>\alpha} +\alpha,
    \end{align*}
    so necessarily: 
    \[\P{\max_{i\in 1:S}\varo_{\tilde{u}}[f_i(\tilde{u}_2)]<\nu\opt+\epsilon|\tilde{u}>\alpha}=0.\]
    Yet, since $\max_{i\in 1:S}\varo_{\alpha'}[f_i(\tilde{u}_2)]$ is right-continuous, non-decreasing in $\alpha'$ and evaluates at $\nu\opt$ for $\alpha'=\alpha$, there must be a $\delta>0$ such that $\max_{i\in 1:S}\varo_{\alpha'}[f_i(\tilde{u}_2)]\leq \nu\opt+\epsilon$ for all $\alpha'\leq \alpha+\delta$. This leads to a contradiction since:
\begin{align*}
0&<\delta= \P{\tilde{u}\in(\alpha,\alpha+\delta]}\\
&= \P{\max_{i\in 1:S}\varo_{\tilde{u}}[f_i(\tilde{u}_2)]<\nu\opt+\epsilon| \tilde{u}\in(\alpha,\alpha+\delta]}\P{\tilde{u}\in(\alpha,\alpha+\delta]}\\
&= \P{\max_{i\in 1:S}\varo_{\tilde{u}}[f_i(\tilde{u}_2)]<\nu\opt+\epsilon , \tilde{u}\in(\alpha,\alpha+\delta]}\\
&\leq \P{\max_{i\in 1:S}\varo_{\tilde{u}}[f_i(\tilde{u}_2)]<\nu\opt+\epsilon , \tilde{u}>\alpha}\\
&= \P{\max_{i\in 1:S}\varo_{\tilde{u}}[f_i(\tilde{u}_2)]<\nu\opt+\epsilon|\tilde{u}>\alpha}(1-\alpha) = 0.    
\end{align*}

\end{proof}

We are now ready to prove \cref{thm:q-optimal:new}, whose statement is recalled below. 

\begin{theorem*} 
Let a sequence $q^{\mathrm{u}}=(q^{\mathrm{u}}_t)_{t=0}^{T}$ be such that $q^{\mathrm{u}}_0(s,\alpha,a)= \var{\alpha}{0}$ and $q^{\mathrm{u}}_{t+1}(s, \alpha ,a):= B_{\mathrm{u}} q^{\mathrm{u}}_t(s,\alpha, a)$ for $t\in[T-1]$. Then, $q^{\mathrm{u}}_t=q\opt_t$ for all $t\in[T]$, where $q\opt_t$ is defined in Eq.~\eqref{eq:quantile_q_defn}. Moreover, if a policy $\pi^{\mathrm{u}} = (\pi^{\mathrm{u}}_k)_{k=0}^{T-1}$ is greedy for $q^{\mathrm{u}}$ as in Eq.~\eqref{eq:greedy_var}, then it maximizes the value-at-risk objective~\eqref{eq:main-var-objective}. 
\end{theorem*}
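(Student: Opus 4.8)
The plan is to reduce \cref{thm:q-optimal:new} to the already‑established \cref{thm:q-optimal:old}: I would show that the two recursions coincide, hence $q^{\mathrm{u}}_t=q\opt_t$, and that the risk‑level mapping $\hat{\alpha}^{\mathrm{u}}$ of~\eqref{eq:newPolDef} is a legitimate instance of the mapping $(\alpha_k)$ used in \cref{thm:q-optimal:old}, after which that theorem delivers optimality of the greedy policy directly. For the value‑function claim I would induct on $t\in[T]$. The base case is immediate, since $q^{\mathrm{u}}_0(s,\alpha,a)=\var{\alpha}{0}=q_0(s,\alpha,a)$ and \cref{thm:q-optimal:old} gives $q_0=q\opt_0$. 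For the inductive step, assume $q^{\mathrm{u}}_t=q\opt_t$, fix $(s,a)$, and set $f_{s'}(\beta):=\max_{a'\in\actions}q\opt_t(s',\beta,a')=v\opt_t(s',\beta)$ by \cref{prop:value-qvalue}, so that by \cref{thm:propVsalpha} each $f_{s'}$ is non-decreasing and right-continuous on $[0,1]$, with $f_{s'}(\beta)\in\Real$ for $\beta<1$ and $f_{s'}(1)=\infty$ since $\varo_1$ is identically $+\infty$. Then, with $\tilde{s}_1\sim p(s,a,\cdot)$ and $\tilde{u}\sim U([0,1])$ independent,
\begin{align*}
(B_{\mathrm{u}} q\opt_t)(s,\alpha,a)
&= \varo_{\alpha}^{a,s}\big[r(s,a)+\gamma\, f_{\tilde{s}_1}(\tilde{u})\big]
 = r(s,a)+\gamma\,\varo_{\alpha}\big[f_{\tilde{s}_1}(\tilde{u})\big]\\
&= r(s,a)+\gamma\max_{o\in\mathcal{O}_{sa}(\alpha)}\min_{s'\in\states}\varo_{o_{s'}}\big[f_{s'}(\tilde{u})\big]\\
&= r(s,a)+\gamma\max_{o\in\mathcal{O}_{sa}(\alpha)}\min_{s'\in\states}f_{s'}(o_{s'})
 = (B_{\max} q\opt_t)(s,\alpha,a),
\end{align*}
where the second equality uses translation-invariance and positive homogeneity of $\varo_\alpha$, the third applies \cref{thm:var:decomp2} with $\tilde{y}=\tilde{s}_1$ and $\hat{p}_{s'}=p(s,a,s')$ (whose feasible set $\{o:\sum_{s'}o_{s'}p(s,a,s')\le\alpha\}$ is exactly $\mathcal{O}_{sa}(\alpha)$), and the fourth applies \cref{thm:varRcNondecrease} to each $f_{s'}$, using right-continuity on $[0,1)$ together with $f_{s'}(1)=\infty$ to also cover $o_{s'}=1$. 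Hence $q^{\mathrm{u}}_{t+1}=B_{\mathrm{u}} q\opt_t=B_{\max} q\opt_t=q\opt_{t+1}$, the last equality by \cref{thm:q-optimal:old}, closing the induction.

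For the policy claim, since $q^{\mathrm{u}}=q\opt$ coincides with the sequence $q$ of \cref{thm:q-optimal:old} and rule~\eqref{eq:greedy_var} reduces to~\eqref{eq:VaRpol} once the risk levels agree, it suffices to verify that $(\hat{\alpha}^{\mathrm{u}}_k)_{k=0}^{T-1}$ satisfies the defining conditions of $(\alpha_k)$ from \cref{thm:q-optimal:old}. The base condition $\hat{\alpha}^{\mathrm{u}}_0(s_0)=\initalpha$ holds by definition. For the recursion, fix $h_k$, an action $a$, and write $s:=s_k$, $\bar{\alpha}:=\hat{\alpha}^{\mathrm{u}}_k(h_k)$, and $V:=\gamma^{-1}\big(q\opt_{T-k}(s,\bar{\alpha},a)-r(s,a)\big)$; by the previous paragraph $q\opt_{T-k}=B_{\max}q\opt_{T-k-1}$, so $V=\max_{o\in\mathcal{O}_{sa}(\bar{\alpha})}\min_{s'}v\opt_{T-k-1}(s',o_{s'})$. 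Since $v\opt_{T-k-1}(s',\cdot)$ is non-decreasing, right-continuous, and equals $\infty$ at $1$, the set $\{o\in[0,1]:v\opt_{T-k-1}(s',o)\ge V\}$ is nonempty and attains its infimum, so the $\min$ in~\eqref{eq:newPolDef} is well-defined and $v\opt_{T-k-1}\big(s',\hat{\alpha}^{\mathrm{u}}_{k+1}(\langle h_k,a,s'\rangle)\big)\ge V$. Choosing $o\opt$ optimal in the expression for $V$ gives $v\opt_{T-k-1}(s',o\opt_{s'})\ge V$, hence $\hat{\alpha}^{\mathrm{u}}_{k+1}(\langle h_k,a,s'\rangle)\le o\opt_{s'}$ for every $s'$; summing against $p(s,a,\cdot)$ yields $\hat{\alpha}^{\mathrm{u}}_{k+1}(\langle h_k,a,\cdot\rangle)\in\mathcal{O}_{sa}(\bar{\alpha})$, the first condition. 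Monotonicity then gives $v\opt_{T-k-1}\big(s',\hat{\alpha}^{\mathrm{u}}_{k+1}(\langle h_k,a,s'\rangle)\big)\le v\opt_{T-k-1}(s',o\opt_{s'})$, so $\min_{s'}v\opt_{T-k-1}\big(s',\hat{\alpha}^{\mathrm{u}}_{k+1}(\langle h_k,a,s'\rangle)\big)=V$, which is precisely the Bellman-tightness identity required of $(\alpha_k)$. Thus \cref{thm:q-optimal:old} applies and the greedy policy maximizes~\eqref{eq:main-var-objective}.

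The first part is essentially bookkeeping once \cref{thm:var:decomp2,thm:varRcNondecrease,prop:value-qvalue,thm:propVsalpha} are available. I expect the main obstacle to be the second part: one must check that the threshold-style definition~\eqref{eq:newPolDef} is well-posed — the $\min$ is attained, which leans on right-continuity of $v\opt$ together with $v\opt(\cdot,1)=\infty$ — and then re-express it as the explicit $\mathcal{O}_{sa}$-feasibility and Bellman-tightness conditions of \cref{thm:q-optimal:old}. The boundary cases $\alpha\in\{0,1\}$, $\gamma\in\{0,1\}$ (with $\gamma>0$ needed for~\eqref{eq:newPolDef} to make sense), and transitions with $p(s,a,s')=0$ together with the resulting probability-zero histories require the same care already exercised in \cref{thm:q-optimal:old} but raise no new conceptual difficulty.
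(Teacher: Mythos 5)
Your proposal is correct and follows essentially the same route as the paper: the inductive step showing $B_{\mathrm{u}}q\opt_t = B_{\max}q\opt_t$ uses the same chain of \cref{prop:value-qvalue}, \cref{thm:propVsalpha}, \cref{thm:varRcNondecrease} (packaged in the paper as \cref{thm:vIsVaR}), and \cref{thm:var:decomp2}; and your verification that $\hat{\alpha}^{\mathrm{u}}$ yields a feasible $o$-vector in $\mathcal{O}_{sa}$ satisfying the Bellman-tightness identity — via the attained minimizer, the comparison $\hat{\alpha}^{\mathrm{u}}_{k+1}\le o\opt_{s'}$, and monotonicity forcing the inequalities to be equalities — is precisely the paper's Step 2. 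No gaps.
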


\begin{proof}
We start with demonstrating that $q\opt_{t} = q^\mathrm{u}_t~, \forall t \in [T]$, recursively with
mathematical induction. Assuming that $q^\mathrm{u}_t = q\opt_t$ we want to show that $q^\mathrm{u}_{t+1} := B_{\mathrm{u}} q^\mathrm{u}_t = q\opt_{t+1}$ as written in \cref{eq:q-optimal-bellman}. We then prove the optimality of $\pi\opt$ constructed using $\hat{\alpha}^\mathrm{u}$.

    \textbf{Step 1:} For all $s\in \states, \alpha \in [0,1]$ and $a\in\actions$, $q\opt_0(s,\alpha,a)=q^{\mathrm{u}}_0(s,\alpha,a)= \var{\alpha}{0}$ so the base case holds. Assume that $q^\mathrm{u}_t = q\opt_t$ for some $t\in [T-1]$. Then, one can derive:
\begin{equation*} 
  \begin{aligned}
  q_{t+1}\opt(s,\alpha,a)&=   r(s,a) + \gamma \cdot \max_{o \in \mathcal{O}_{sa}(\alpha)} \min_{s' \in\states} \max_{a'\in \actions} q_{t}\opt( s', o_{s'},a')  &&[\text{By  \cref{thm:q-optimal:old}}]\\
&= r(s,a) + \gamma \cdot \max_{o \in \mathcal{O}_{sa}(\alpha)} \min_{s' \in\states} v_{t}\opt(s',o_{s'}) &&[\text{By \cref{prop:value-qvalue}}]\\
&= r(s,a) + \gamma \cdot \max_{o \in \mathcal{O}_{sa}(\alpha)} \min_{s' \in\states} \varo_{o_{s'}}[v_{t}\opt(s',\tilde{u})] &&[\text{By  \cref{thm:vIsVaR}}]\\
&= r(s,a) + \gamma\cdot\varo_{\alpha}^{a,s}[v_{t}\opt(\tilde{s}_1,\tilde{u})] &&[\text{By  \cref{thm:var:decomp2}}]\\
&= \varo_{\alpha}^{a,s}[r(s,a) + \gamma\cdot \max_{a'\in\actions}q_{t}\opt(\tilde{s}_1,\tilde{u},a')] &&[\text{By  \cref{prop:value-qvalue}}]\\
&= \varo_{\alpha}^{a,s}[r(s,a) + \gamma\cdot \max_{a'\in\actions}q^\mathrm{u}_{t}(\tilde{s}_1,\tilde{u},a')] &&[\text{By inductive assumption}]\\
&= B_{\mathrm{u}} q^\mathrm{u}_t(s,\alpha,a) = q^\mathrm{u}_{t+1}(s,\alpha,a) &&[\text{By \cref{eq:q-optimal-bellman}}].
  \end{aligned}
\end{equation*}
This confirms that $q\opt$ satisfies $q_{t+1}\opt = B_{\mathrm{u}} q_t\opt = B_{\mathrm{u}} q^{\mathrm{u}}_t = q_{t+1}^{\mathrm{u}}$ for all $t\in[T-1]$, so that $q\opt = q^{\mathrm{u}}$.

\textbf{Step 2:} We now show that $\hat{\alpha}^{\mathrm{u}}_k(\cdot)$ constructed according to: 
\begin{align*}
  \hat{\alpha}_{k+1}^{\mathrm{u}}(h_{k+1}):=  \min \left\{ o \in [0,1]
  \mid \max_{a\in\actions}q_{T-k-1}^{ \mathrm{u}}(s_{k+1},o,a)  \geq \frac{q_{T-k}^{\mathrm{u}}(s_{k},\hat{\alpha}_{k}(h_{k}),a_{k})-r(s_{k},a_{k})}{\gamma} \right\}
\end{align*}
defines an optimal policy. Namely,
\begin{align*}
    \hat{\alpha}_{k+1}^{\mathrm{u}}(h_{k+1}):=  \min \left\{ o \in [0,1]
  \mid \max_{a\in\actions}q_{T-k-1}^{ \mathrm{u}}(s_{k+1},o,a)  \geq \frac{q_{T-k}^{\mathrm{u}}(s_{k},\hat{\alpha}_{k}^{\mathrm{u}}(h_{k}),a_{k})-r(s_{k},a_{k})}{\gamma} \right\}.
\end{align*}
Based on what has been shown in Step 1, we can interchangeably write $q_{T-k-1}^{\mathrm{u}}$ or $q_{T-k-1}\opt$ in the construction of $\hat{\alpha}^{\mathrm{u}}_k, k\in [T-1]$, so that 
\begin{align*}
    \hat{\alpha}_{k+1}^{\mathrm{u}}(h_{k+1}):=  \min \left\{ o \in [0,1]
  \mid \max_{a\in\actions}q_{T-k-1}\opt(s_{k+1},o,a)  \geq \frac{q_{T-k}\opt(s_{k},\hat{\alpha}_{k}^{\mathrm{u}}(h_{k}),a_{k})-r(s_{k},a_{k})}{\gamma} \right\}.
\end{align*}
By \cref{thm:propVsalpha}, $\alpha\mapsto v_t\opt(s,\alpha)$ is right-continuous and non-decreasing so the minimum above is well-defined. 
We are left to check that $\hat{\alpha}^{\mathrm{u}}_k(\cdot)$ leads to an associated $\hat{o}^{ka}(h_k)\in\mathcal{O}_{sa}(\hat{\alpha}^{\mathrm{u}}_k(h_k))$ satisfying:
\begin{align}
\label{eq:alpha-construction}
    q_{T-k}&(s,\alpha_k(h_k),a) 
    = r(s,a) + \gamma\cdot \min_{s' \in\states}  \max_{a'\in \actions} q_{T-k-1}(s', \alpha_{k+1}(\langle h_k, a, s' \rangle), a')
\end{align}
This can be done in two steps. 

We can first show that for all $h_k\in\mathcal{H}_k, a\in\actions$, the vector $\hat{o}^k\in\mathbb{R}^{\states}$ composed of $\hat{o}_{s'}^k:=\hat{\alpha}^{\mathrm{u}}_{k+1}(\langle h_k,a,s'\rangle)$ for all $s'\in\states$, is in $\mathcal{O}_{sa}(\hat{\alpha}^{\mathrm{u}}_k(h_k))$. To do so, we make use of the fact that for all $t$ the maximum over $\max_{o \in \mathcal{O}_{sa}}$ in \eqref{eq:q-optimal-dp} is achieved thus implying that it is achieved at $T-k$ by some $\hat{o}^{k\star}$ when $s=s_k$ and $\alpha=\hat{\alpha}^{\mathrm{u}}_k(h_k)$. Namely, $\hat{o}^{k\star}$ satisfies:
\[q_{T-k}\opt(s_k,\hat{\alpha}^{\mathrm{u}}_k(h_k),a)=\min_{s'\in\states} r(s_k,a)+\gamma v_{T-k-1}\opt(s',\hat{o}^{k\star}_{s'})\leq r(s_k,a)+\gamma v_{T-k-1}\opt(s',\hat{o}^{k\star}_{s'}) \,,\,\forall s'\in\states\]
where we replaced $\max_{a'\in\actions}q\opt_{T-k-1}(s',\hat{o}^{k\star}_{s'},a')$ with $v_{T-k-1}\opt(s',\hat{o}^{k\star}_{s'})$.
We can therefore easily conclude that:
\begin{align*}
\sum_{s'\in\states}\hat{o}_{s'}^kp(s_k,a,s')=\sum_{s'\in\states}\hat{\alpha}^{\mathrm{u}}_{k+1}(\langle h_k,a,s' \rangle)p(s_k,a,s')
\sum_{s'\in\states}\hat{o}_{s'}^{k\star} p(s_k,a,s')\leq \hat{\alpha}^{\mathrm{u}}_k(h_k),
\end{align*}
which establishes that $\hat{o}^k\in \mathcal{O}_{sa}(\hat{\alpha}^{\mathrm{u}}_k(h_k))$.

We can then show that equation \eqref{eq:alpha-construction}, with $q_{T-k}$ and $q_{T-k-1}$ respectively replaced by their equivalent $q\opt_{T-k}$ and $q\opt_{T-k-1}$ (based on \cref{thm:q-optimal:old}), is satisfied based on:
\begin{align*}
    q_{T-k}\opt(s,\hat{\alpha}^{\mathrm{u}}_k(h_k),a)&=\max_{o \in \mathcal{O}_{sa}(\hat{\alpha}^{\mathrm{u}}_k(h_k))} \min_{s' \in\states} \left( r(s,a) + \gamma \cdot  \max_{a'\in\actions}q_{T-k-1}\opt( s', o_{s'},a') \right) &&[\text{By Eq.~\eqref{eq:q-optimal-dp}}]\\
    &\geq \min_{s' \in\states} \left( r(s,a) + \gamma \cdot  \max_{a'\in\actions}q_{T-k-1}\opt( s', \hat{\alpha}^{\mathrm{u}}_{k+1}(\langle h_k,a,s'\rangle),a')\right) &&[\text{}\hat{o}^k\in\mathcal{O}_{sa}(\hat{\alpha}^{\mathrm{u}}_k(h_k))]\\    
    &= \min_{s' \in\states} \left( r(s,a) + \gamma \cdot  v_{T-k-1}\opt( s', \hat{\alpha}^{\mathrm{u}}_{k+1}(\langle h_k,a,s'\rangle)) \right)&&[\text{By \cref{prop:value-qvalue}}]\\
&\geq \min_{s' \in\states} \left( r(s,a) + \gamma \cdot  \frac{q_{T-k}\opt(s,\hat{\alpha}^{\mathrm{u}}_k(h_k),a)-r(s,a)}{\gamma}\right)&&[\text{Definition of }\hat{\alpha}^{\mathrm{u}}_k]\\    
&= \min_{s' \in\states} q_{T-k}\opt(s,\hat{\alpha}^{\mathrm{u}}_k(h_k),a) = q_{T-k}\opt(s,\hat{\alpha}^{\mathrm{u}}_k(h_k),a), 
\end{align*}
The above inequalities must therefore be equalities, so equation \eqref{eq:alpha-construction} holds for $\hat{\alpha}^{\mathrm{u}}_k(h_k)$. As a result, the policy $\pi\opt$ constructed using $\hat{\alpha}^{\mathrm{u}}$ is optimal.
\end{proof}

\section{Proofs of Section \ref{sec:var-q-learning}}\label{sec:appProofSection5}

\subsection{Risk Measures}

\begin{definition}[Monetary risk measure]
    A monetary risk measure is a mapping $\varrho: \mathbb{X} \to \bar{\Real}$ satisfying the following properties:
    \begin{itemize}
        \item[1.]\textit{Translation invariance:} For all $\tilde{x}\in\mathbb{X}, c\in\Real, \varrho(\tilde{x} + c) = \varrho(x) +c$
        \item[2.]\textit{Monotonicity:} For all $\tilde{x}, \tilde{y}\in\mathbb{X}, \tilde{x}\leq \tilde{y} \implies \varrho(\tilde{x})\leq \varrho(\tilde{y}).$
    \end{itemize}
\end{definition}

\begin{lemma} \label{lem:risk-nonexpansive}
For any monetary risk measure $\varrho\colon \mathbb{X} \to \Real$ where $\mathbb{X}$ is defined in a finite outcome space $\omega\in\Omega$, it holds that:
\[
   |\varrho(\tilde{x}) - \varrho(\tilde{y})|
   \le
   \max_{\omega \in \Omega} |\tilde{x}(\omega) - \tilde{y}(\omega)| .
 \]
\end{lemma}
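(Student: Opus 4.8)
The plan is to exploit the two defining properties of a monetary risk measure — translation invariance and monotonicity — to sandwich $\tilde{x}$ between two translates of $\tilde{y}$. First I would set $c := \max_{\omega \in \Omega} \abs{\tilde{x}(\omega) - \tilde{y}(\omega)}$; this is a well-defined finite nonnegative real number because $\Omega$ is finite and $\tilde{x}, \tilde{y} \in \mathbb{X}$ are real-valued with finite support. The key pointwise observation is that $\tilde{y}(\omega) - c \le \tilde{x}(\omega) \le \tilde{y}(\omega) + c$ for every $\omega \in \Omega$, i.e.\ $\tilde{y} - c \le \tilde{x} \le \tilde{y} + c$ as elements of $\mathbb{X}$ (here $\tilde{y} \pm c$ denotes the random variable shifted by the constant $c$).

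Next I would apply monotonicity of $\varrho$ to the two inequalities $\tilde{y} - c \le \tilde{x}$ and $\tilde{x} \le \tilde{y} + c$, yielding $\varrho(\tilde{y} - c) \le \varrho(\tilde{x}) \le \varrho(\tilde{y} + c)$. Then translation invariance gives $\varrho(\tilde{y} - c) = \varrho(\tilde{y}) - c$ and $\varrho(\tilde{y} + c) = \varrho(\tilde{y}) + c$, so that
\[
  \varrho(\tilde{y}) - c \;\le\; \varrho(\tilde{x}) \;\le\; \varrho(\tilde{y}) + c .
\]
Rearranging, $-c \le \varrho(\tilde{x}) - \varrho(\tilde{y}) \le c$, which is exactly $\abs{\varrho(\tilde{x}) - \varrho(\tilde{y})} \le c = \max_{\omega \in \Omega} \abs{\tilde{x}(\omega) - \tilde{y}(\omega)}$, as claimed.

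There is essentially no hard step here: the only things to be careful about are (i) that $c$ is finite, which is immediate from $\Omega$ being finite and the random variables being real-valued (so the bound is meaningful and the translates $\tilde{y} \pm c$ lie in $\mathbb{X}$), and (ii) that both $\tilde{x}$ and $\tilde{y}$ are defined on the same outcome space $\Omega$, so the pointwise comparison makes sense — which is assumed in the statement. The argument is symmetric in $\tilde{x}$ and $\tilde{y}$, consistent with the absolute value on the left-hand side. This lemma will be used to establish the non-expansiveness (and hence contraction-type) properties of the VaR Bellman operators needed in the convergence analysis of \cref{alg:q-learning}.
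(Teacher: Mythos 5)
Your proof is correct and follows essentially the same route as the paper's: bound the pointwise difference by $c$, then apply monotonicity and translation invariance. The paper proves only one direction explicitly (via the auxiliary variable $\tilde{z}:=\max\{\tilde{x},\tilde{y}\}$) and invokes symmetry for the other, whereas you sandwich both sides directly, which is a minor stylistic simplification rather than a different argument.
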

\begin{proof}
Define $\epsilon := \max_{\omega \in \Omega} |\tilde{x}(\omega) - \tilde{y}(\omega)|\geq 0$.  We prove that $\varrho(\tilde{x}) - \varrho(\tilde{y}) \le \epsilon$. The second inequality \(    \varrho(\tilde{y}) - \varrho(\tilde{x}) \le \epsilon \) follows analogously.
Let $\tilde{z} := \max \{ \tilde{x}, \tilde{y}\}$. Then, for all $\omega\in\Omega$:
\begin{align*}
    \tilde{x}(\omega) &\leq \max\{\tilde{x}(\omega),\tilde{y}(\omega)\}\\
    &=  \tilde{z}(\omega) \\
    &= \tilde{y}(\omega)+\max\{\tilde{x}(\omega) - \tilde{y}(\omega),0\} \\
    &\le \tilde{y}(\omega)+|\tilde{x}(\omega) - \tilde{y}(\omega)| & [\tilde{x}(\omega) - \tilde{y}(\omega)\leq |\tilde{x}(\omega) - \tilde{y}(\omega)| , 0 \leq |\tilde{x}(\omega) - \tilde{y}(\omega)|]\\
    &\leq \tilde{y}(\omega) + \max_{\omega'\in\Omega}|\tilde{x}(\omega') - \tilde{y}(\omega')|\\
    &= \tilde{y}(\omega) + \epsilon.
\end{align*}
As a result, $\tilde{x}\leq \tilde{z}\leq \tilde{y}+\epsilon$. Since $\varrho$ is a monetary risk measure, it is monotonous and translation invariant, so that
\begin{align*}
      \varrho(\tilde{x}) \le\
  \varrho(\tilde{z}) \le\
  \varrho(\tilde{y} + \epsilon)  =
  \varrho(\tilde{y}) + \epsilon,
\end{align*}
and $\varrho(\tilde{x}) - \varrho(\tilde{y}) \le \epsilon$.
\end{proof}

\subsection{Proof of Lemma \ref{lem:var-bounds}}

\begin{proof}
The inner inequalites follow from 
\begin{align*}
  \quant^+_{\ushort{f}(\alpha)}(\tilde{x})&\leq \quant^+_{\alpha}(\tilde{x}) &[\text{By monotonicity}]\\
  &= \varo_{\alpha}[\tilde{x}] \\
&= \max \left\{\tau \in \bar{\Real} \ss \Pr{  \tilde{x} <  \tau  } \le  \alpha\right\}\\
&\leq \sup \left\{\tau\in\bar{\Real} \ss \Pr{  \tilde{x} <  \tau  } <  \bar{f}(\alpha)\right\}\\
&=\quant^-_{\bar{f}(\alpha)}(\tilde{x}).  &[\text{\cite[Appx. A.3, Def. A.24]{Follmer2016}}]
\end{align*}
The outer inequalities follow from \cref{lem:var-elicitable}.
\end{proof}

\subsection{Proof of Theorem \ref{thm:bound_v2}}
\begin{proof}
The proof is broken down into two parts. First, we address the bounding on $q\opt$, then we demonstrate the stated properties of our constructed policy.

\textbf{Step 1: Upper and lower bound on $q\opt$.} We prove by induction on $t\in[T]$ that for all $(s,\alpha,a)\in \states\times(0,1)\times\actions$:
\[t\ushort{R}\leq \ushort{q}^{\mathrm{u}}_{t}(s,\alpha,a)\leq q_{t}\opt(s,\alpha,a) \leq\bar{q}^{\mathrm{u}}_{t}(s,\alpha,a)\leq t\bar{R},\]
or more succinctly, that $t\ushort{R}\leq \ushort{q}^{\mathrm{u}}_{t}\leq q_{t}\opt \leq\bar{q}^{\mathrm{u}}_{t}\leq t\bar{R}$.
At $t=0$, the bounds are obtained by definition given that $\bar{q}^{\mathrm{u}}_0=\ushort{q}^{\mathrm{u}}_0=q_0\opt=0$. 
Assume that the statement holds for some $t\in[T-1]$
and let's check if the proposition is preserved at $t+1$.

\textit{Case 1:  $\bar{f}(\alpha)<1$. } 
For all $\alpha\in(0,1)$ such that $\bar{f}(\alpha)<1$, we have:
\begin{align*}
q_{t+1}\opt(s,\alpha,a)&=\varo_{\alpha}^{a,s}[r(s,a) + \gamma \cdot  \max_{a'\in\actions}q_{t}\opt( \tilde{s}_1, \tilde{u},a')] &&[\text{By \cref{thm:q-optimal:new}}]\\
&=\quant_\alpha^{+\,a,s}[r(s,a) + \gamma \cdot  \max_{a'\in\actions}q_{t}\opt( \tilde{s}_1, \tilde{u},a')] &&[\text{By definition of $\varo_\alpha$}]\\    
&\leq\quant_\alpha^{+\,a,s}[r(s,a) + \gamma \cdot  \max_{a'\in\actions}\bar{q}^{\mathrm{u}}_{t}( \tilde{s}_1, \tilde{u},a')]  &&[\text{By inductive assumption}]\\
    &\leq\quant_{\bar{f}(\alpha)}^{-\,a,s}[r(s,a) + \gamma \cdot  \max_{a'\in\actions}\bar{q}^{\mathrm{u}}_{t}( \tilde{s}_1, \tilde{u},a')]&&[\text{By \cref{lem:var-bounds}}]
\end{align*}
Finally, we exploit the elicitability of $\quant_{\bar{f}(\alpha)}$ (see \cref{lem:var-elicitable}), due to 
the random variable $r(s,a)+\gamma\cdot \max_{a'\in \actions} \bar{q}^{\mathrm{u}}_{t}( s', \tilde{u},a')$ being supported on the interval 
$[(t+1)\ushort{R},(t+1)\bar{R}]$, to obtain: 
\begin{align*}
    q_{t+1}\opt(s,\alpha,a)&\leq\quant_{\bar{f}(\alpha)}^{-\,a,s}[r(s,a) + \gamma \cdot  \max_{a'\in\actions}\bar{q}^{\mathrm{u}}_{t}( \tilde{s}_1, \tilde{u},a')]\\
    &=\min \argmin_{q} \E^{a,s}\left[\ell_{\bar{f}(\alpha)}\left(r(s,a)+\gamma\cdot \max_{a'\in \actions} \bar{q}^{\mathrm{u}}_{t}( \tilde{s}_1, \tilde{u},a')-q\right)\right]\\
    &=\min \left\{ (\mathcal{B}^{\bar{f}}_{\mathrm{u}} \bar{q}_{t}^\mathrm{u})(s,\alpha, a) \right\} \\
    &\le \bar{q}_{t+1}^\mathrm{u}(s,\alpha,a),
\end{align*}
where the last inequality is due to the fact that $\bar{q}_{t+1}^\mathrm{u}\in (\mathcal{B}^{\bar{f}}_{\mathrm{u}} \bar{q}_{t}^\mathrm{u})$ by construction:
\begin{equation} \label{eq:q-learning-dp}
\bar{q}_{t+1}^{\mathrm{u}} \in (\mathcal{B}^{\bar{f}}_{\mathrm{u}} \bar{q}^{\mathrm{u}}_t),
\quad \ushort{q}^{\mathrm{u}}_{t+1} \in (\mathcal{B}^\ushort{f}_{\mathrm{u}} \ushort{q}^{\mathrm{u}}_t),\, \forall\, t \in [T-1]. 
\end{equation}

Moreover, we have that:
\[\argmin_{q} \E^{a,s}\left[\ell_{\bar{f}(\alpha)}\left(r(s,a)+\gamma\cdot \max_{a'\in \actions} \bar{q}^{\mathrm{u}}_{t}( \tilde{s}_1, \tilde{u},a')-q\right)\right]   \subset \left(-\infty, (t+1)\bar{R}\right]\]
since it is a quantile of 
$r(s,a)+\gamma\cdot \max_{a'\in \actions} \bar{q}^{\mathrm{u}}_{t}( s', \tilde{u},a')\leq (t+1)\bar{R}$. This confirms the statement for $\bar{f}(\alpha) < 1$.
\textit{Case 2: $\bar{f}(\alpha)=1$.} We instead rely on the following inequalities:
\begin{align*}
    q_{t+1}\opt(s,\alpha,a)&=\max_{\pi\in \PiHR}\varo^{(a,\pi_{1:t}),s}_{\alpha}\left[\sum_{k=0}^{t} \gamma^{k} r(\tilde{s}_{k},\tilde{a}_{k})\right] \\
    &\leq \sum_{k=0}^{t} \gamma^{k} \bar{R} 
    \leq (t+1) \bar{R} = \bar{R}+\max_{s\in\states,a'\in\actions}\bar{q}^\mathrm{u}_{t}(s,1,a')=\bar{q}^\mathrm{u}_{t+1}(s,\alpha,a).
\end{align*}

A similar set of arguments applies for the lower bound. Namely,
\begin{align*}
q_t\opt(s,\alpha,a)&=\quant_\alpha^{+\,a,s}[r(s,a) + \gamma \cdot  \max_{a'\in\actions}q_{t-1}\opt( \tilde{s}_1, \tilde{u},a')]&&[\text{By \cref{thm:q-optimal:new}}]\\
    &\geq\quant_\alpha^{+\,a,s}[r(s,a) + \gamma \cdot  \max_{a'\in\actions}\ushort{q}^{\mathrm{u}}_{t-1}( \tilde{s}_1, \tilde{u},a')]&&[\text{By inductive assumption}]\\
    &\geq\quant_{\ushort{f}(\alpha)}^{+\,a,s}[r(s,a) + \gamma \cdot  \max_{a'\in\actions}\ushort{q}^{\mathrm{u}}_{t-1}( \tilde{s}_1, \tilde{u},a')]&&[\text{By \cref{lem:var-bounds}}]\,,
\end{align*}
Similarly, we exploit the elicitability and the fact that $\ushort{q}^{\mathrm{u}}_{t+1} \in (\mathcal{B}^\ushort{f}_{\mathrm{u}} \ushort{q}^{\mathrm{u}}_t)$ to write
\begin{align*}
    q_{t+1}\opt(s,\alpha,a)&\geq\quant_{\ushort{f}(\alpha)}^{+\,a,s}[r(s,a) + \gamma \cdot  \max_{a'\in\actions}\ushort{q}^{\mathrm{u}}_{t}( \tilde{s}_1, \tilde{u},a')] \\ 
    &=\max \argmin_{q} \E^{a,s}\left[\ell_{\ushort{f}(\alpha)}\left(r(s,a)+\gamma\cdot \max_{a'\in \actions} \ushort{q}^{\mathrm{u}}_{t}( \tilde{s}_1, \tilde{u},a')-q\right)\right]\\
    &=\max \left\{ (\mathcal{B}^{\ushort{f}}_{\mathrm{u}} \ushort{q}_{t}^\mathrm{u})(s,\alpha, a) \right\} \\
    &\geq \ushort{q}_{t+1}^\mathrm{u}(s,\alpha,a) ,
\end{align*}
where the difference lies in the second inequality, i.e. one needs to exploit the monotonicity of $\alpha\mapsto\quant_\alpha^+(\tilde{x})$. The case where $\ushort{f}(\alpha)=0$ also follows naturally:
\[q_{t+1}\opt(s,\alpha,a)=\max_{\pi\in \PiHR}\varo^{(a,\pi_{1:t}),s}_{\alpha}\left[\sum_{k=0}^{t} \gamma^{k} r(\tilde{s}_{k},\tilde{a}_{k})\right]\geq \sum_{k=0}^{t} \gamma^{k} \ushort{R} \geq (t+1) \ushort{R} = \ushort{q}^\mathrm{u}_{t+1}(s,\alpha,a).\]

This completes our inductive proposition:
$t\ushort{R}\leq \ushort{q}^{\mathrm{u}}_{t}\leq q_{t}\opt\leq\bar{q}^{\mathrm{u}}_{t}\leq t\bar{R}.$

\textbf{Step 2: Bounds on the performance of $\upi_k(h_k)$.}
We start by proving that 
\[
  \max_{a\in\actions}\ushort{q}^{\mathrm{u}}_T(\inits,\alpha_0,a)\leq \varo_{\alpha_0}^{\upi,\inits}\left[\sum_{k=0}^{T-1} \gamma^{k} r(\tilde{s}_{k},\tilde{a}_{k})\right].
  \] 
  The rest follows based on the bounding properties of $\bar{q}^{\mathrm{u}}_T$ and $\ushort{q}^{\mathrm{u}}_T$.

    First, we can confirm that $\ushort{o}^k$ composed using $
    \ushort{o}_{s'}^k:=\ualpha^{\mathrm{u}}_{k+1}(\langle h_k,a,s'\rangle)$ is in $\mathcal{O}_{sa}(\ualpha^{\mathrm{u}}_k(h_k))$. To do so, we exploit the fact that:
    \begin{align*}
\ushort{q}^{\mathrm{u}}_{T-k}(s_k,\ualpha^{\mathrm{u}}_k(h_k),a)&\stackrel{\eqref{eq:q-learning-dp}}{\in} \argmin_{q\in\Real} \E^{a,s_k}\left[\ell_{\ushort{f}(\ualpha^{\mathrm{u}}_k(h_k))}\left(r(s_k,a) + \gamma \cdot   \max_{a'\in \actions} \ushort{q}^{\mathrm{u}}_{T-k-1}( \tilde{s}_1, \tilde{u},a')-q\right)\right]\\
&\stackrel{\ref{lem:var-bounds}}{\leq} \varo_{\ushort{f}(\ualpha^{\mathrm{u}}_k(h_k))}^{a,s_k}[r(s_k,a)+\gamma\max_{a'\in\actions}\ushort{q}^{\mathrm{u}}_{T-k-1}(\tilde{s}_1,\tilde{u}_1,a')]\\
&\stackrel{\ref{lem:var-bounds}}{\leq}\varo_{\ualpha^{\mathrm{u}}_k(h_k)}^{a,s_k}[r(s_k,a)+\gamma\max_{a'\in\actions}\ushort{q}^{\mathrm{u}}_{T-k-1}(\tilde{s}_1,\tilde{u}_1,a')]\\
&\stackrel{\ref{thm:var:decomp2}}{=}\max_{o\in\mathcal{O}_{s_k a}(\ualpha^{\mathrm{u}}_k(h_k))} \min_{s'\in\states} r(s_k,a)+\gamma\varo_{o_{s'}}[\max_{a'\in\actions}\ushort{q}^{\mathrm{u}}_{T-k-1}(s',\tilde{u}_1,a')]\\
&= \min_{s'\in\states} r(s_k,a)+\gamma\varo_{o_{s'}\opt}[\max_{a'\in\actions}\ushort{q}^{\mathrm{u}}_{T-k-1}(s',\tilde{u}_1,a')]\\
&\stackrel{\ref{thm:maxVaR}}{=}      \min_{s'\in\states} r(s_k,a)+\gamma\max_{a'\in\actions}\varo_{o_{s'}\opt}[\ushort{q}^{\mathrm{u}}_{T-k-1}(s',\tilde{u}_1,a')]\\
&\stackrel{\ref{thm:varRcNondecrease}}{=}      \min_{s'\in\states} r(s_k,a)+\gamma\max_{a'\in\actions}\ushort{q}^{\mathrm{u}}_{T-k-1}(s',o_{s'}\opt,a') \\
&\leq r(s_k,a)+\gamma\max_{a'\in\actions}\ushort{q}^{\mathrm{u}}_{T-k-1}(s',o_{s'}\opt,a') \quad,\forall s'\in\states
\end{align*}
for optimal $o\opt\in\mathcal{O}_{s_k a}(\ualpha^{\mathrm{u}}_k(h_k))$.
This implies that 
\[\max_{a'\in\actions} \ushort{q}^{\mathrm{u}}_{T-k-1}(s',o\opt_{s'},a')  \geq \frac{\ushort{q}^{\mathrm{u}}_{T-k}(s_k,\ualpha_{k}^{\mathrm{u}}(h_{k}),a) - r(s_k,a)}{\gamma}\]
By construction of $\ushort{o}^k$, we have that $\ushort{o}_{s'}^k\leq o_{s'}\opt$, hence 
\begin{align*}
\sum_{s'\in\states}\ushort{o}_{s'}^k p(s_k,a,s')
&\leq 
\sum_{s'\in\states}o_{s'}\opt p(s_k,a,s')\leq \ualpha_k^{\mathrm{u}}(h_k)    .
\end{align*}

Let us now construct a series of policies $\upi^{t}_k$ defined as:
\[\upi_k^{t}(h_k)\in \argmax_{a\in \actions} \ushort{q}^\mathrm{u}_{t-k}(s_k,\ushort{\alpha}^{\mathrm{u},t}_k(h_k),a), \] where $\ushort{\alpha}^{\mathrm{u},t-1}_0(s') := \ushort{\alpha}^{\mathrm{u},t}_1(\langle s,a,s'\rangle)$ and $\ushort{\pi}^{t-1}_{k-1}(h_{1:k}) := \ushort{\pi}^{t}_{k}(h_{k})$ are defined without loss of generality for all $k$, $\alpha$, $t$, $h_k$. The superscript $t$ represents the horizon of the sub-problem, we may omit the superscript when it represent the final objective decision horizon $T$ as $\ushort{\pi} = \ushort{\pi}^T$ and $\ushort{\alpha}^{\mathrm{u}} = \ushort{\alpha}^{\mathrm{u},T}$. 

Letting 
\begin{equation} \label{eq:var-of-pi}
    v_t^{\pi}(s,\alpha):=\varo_\alpha^{\pi,s}[\sum_{k=0}^{t-1}r(\tilde{s}_k,\tilde{a}_k)]
\end{equation} and $\ushort{v}^\mathrm{u}_t(s,\alpha):=\max_{a\in\actions}\ushort{q}^\mathrm{u}_t(s,\alpha,a)$, we now prove by induction on $t$ that $v_t^{\upi^{t}}(s,\alpha)\geq\ushort{v}^\mathrm{u}_t(s,\alpha) ~\forall t \in [T]$. This is obviously the case at $t=0$ since $v_0^{\pi}(s,\alpha)=0=\ushort{q}^\mathrm{u}_0(s,\alpha,a)$ for all $s$, $a$, $\alpha\in(0,1)$, and $\pi$. Now assuming that for some $t\in 1:T$, $v_{t-1}^{\upi^{t-1}}(s,\alpha)\geq\ushort{v}^\mathrm{u}_{t-1}(s,\alpha)$, letting  $\ushort{\alpha}^{\mathrm{u},T}_0(s) = \alpha$ and $a = \ushort{\pi}^t_0(s)$ for $v_t^{\upi^{t}}(s,\alpha)$ we obtain:
\begin{align*}
    v_t^{\upi^{t}}(s,\alpha)&\stackrel{\eqref{eq:var-of-pi}}{=}\varo_\alpha^{\upi^{t},s}(\sum_{k=0}^{t-1} \gamma^k r(\tilde{s}_k,\tilde{a}_k))\\
    &\stackrel{\ref{thm:var:decomp2}}{=}\max_{o\in\mathcal{O}_{sa}(
\alpha)}\min_{s'\in\states} r(s,a)+\gamma\varo_{o_{s'}}^{s}[\sum_{k=1}^{t-1}\gamma^{k-1}r(\tilde{s}_k,\pi^{t}_k(\tilde{h}_k)|\tilde{a}_0=a,\tilde{s}_1=s']
\end{align*}
Let rewrite the second term as follows
\begin{align*}
    &\varo_{o_{s'}}^{s}[\sum_{k=1}^{t-1}\gamma^{k-1}r(\tilde{s}_k,\upi^{t}_k(\tilde{h}_k))|\tilde{a}_0=a,\tilde{s}_1=s']\\
    &=\varo_{o_{s'}}^{s}[\sum_{k=1}^{t-1}\gamma^{k-1}r(\tilde{s}_k,\upi^{t-1}_{k-1}(h_{1:k} \mid \ushort{\alpha}_0^{\mathrm{u},t-1}(\tilde{s}_1) = \ushort{\alpha}_1^{\mathrm{u},t}(\langle s,\tilde{a}_0,\tilde{s}_1 \rangle )))~|~\tilde{a}_0=a,\tilde{s}_1=s']\\
    &=\varo_{o_{s'}}^{s'}[\sum_{k'=0}^{t-2}\gamma^{k'}r(\tilde{s}_{k'},\upi^{t-1}_{k'}(h_{k'}))]\\
    &=\varo_{o_{s'}}^{\upi^{t-1},s'}[\sum_{k'=0}^{t-2}\gamma^{k'}r(\tilde{s}_{k'},\tilde{a}_{k'})]
\end{align*}
Now we have
\begin{align*}
v_t^{\upi^{t}}(s,\alpha)&=\max_{o\in\mathcal{O}_{sa}(
\alpha)}\min_{s'\in\states} r(s,a)+\varo_{o_{s'}}^{\upi^{t-1},s'}[\sum_{k'=0}^{t-2}\gamma^{k'}r(\tilde{s}_{k'},\tilde{a}_{k'})] && [\text{From derivation above}]\\
    &\geq\min_{s'\in\states} r(s,a)+\varo_{\ualpha^{\mathrm{u},t}_1(\langle s,a,s'\rangle)}^{\upi^{t-1},s'}[\sum_{k'=0}^{t-2}\gamma^{k'}r(\tilde{s}_{k'},\tilde{a}_{k'})] && [\text{Property of max}]\\
    &= \min_{s'\in\states} r(s,a)+\gamma \ushort{v}^{\upi^{t-1}}_{t-1}(s',\ualpha^{\mathrm{u},t}_1(\langle s,a,s'\rangle))&& [\text{By \cref{eq:var-of-pi}}]\\
    &\geq \min_{s'\in\states} r(s,a)+\gamma \ushort{v}^\mathrm{u}_{t-1}(s',\ualpha^{\mathrm{u},t}_1(\langle s,a,s'\rangle))&& [\text{By Inductive assumption}]\\
    &\geq \min_{s'\in\states} \ushort{v}^\mathrm{u}_{t} (s,\alpha)=\ushort{v}^\mathrm{u}_{t}(s,\alpha)&& [\text{By \cref{eq:q-learning-dp}}]. \\
\end{align*}
We could conclude with induction that:
\[\varo_{\initalpha}^{\upi,\inits}\left[\sum_{k=0}^{T-1} \gamma^{k} r(\tilde{s}_{k},\tilde{a}_{k})\right]=v_T^{\upi^T}(s_0,\initalpha)\geq \ushort{v}^\mathrm{u}_T(s_0,\initalpha)= \max_{a\in\actions}\ushort{q}^\mathrm{u}_T(\inits,\initalpha,a).\]
\end{proof}

\subsection{Proof of Proposition \ref{lem:unif_discretize_bellman}}

\begin{proof}
This result follows by induction from showing that the constructed $\ushort{q}$ satisfies the conditions identified in \cref{thm:bound_v2} under $\ushort{f}$ as defined in \cref{exm:discrete-approx}. Naturally, the initial condition is satisfied:
\[
  \ushort{q}_0(s,\alpha,a)= \qud_0(s, J \cdot \ushort{f}(\alpha) ,a) = 0.
\]
To prove the inductive step tor any $t\in 1{:}T-1$, we have that if $\ushort{f}(\alpha)=0$ then :
\[    \ushort{q}_{t+1}(s,\alpha,a)=\qud_{t+1}(s, 0 ,a) = \ushort{R}+\min_{s\in\states,a\in\actions}\qud_t(s,0,a) = \ushort{R}+\min_{s\in\states,a\in\actions}\ushort{q}_t(s,0,a)\]
since $\ushort{f}(0)=0$. If $\ushort{f}(\alpha)= j/J$ with $j\geq 1$, then 
\begin{align*}
    \ushort{q}_{t+1}(s,\alpha,a)&=\qud_{t+1}(s, J \cdot \ushort{f}(\alpha) ,a)\\
&\in \argmin_{x\in \Real} \frac{1}{J} \sum_{j'=0}^{J-1}  \E^{a,s}\left[\ell_{\nicefrac{j}{J}}\left(r(s,a)+\gamma\cdot \max_{a'\in \actions}  \qud_t(\tilde{s}_1,j',a')-x\right)\right]\\
&= \argmin_{x\in \Real} \frac{1}{J} \sum_{j'=0}^{J-1}  \E^{a,s}\left[\ell_{\nicefrac{j}{J}}\left(r(s,a)+\gamma\cdot \max_{a'\in \actions}  \ushort{q}^\mathrm{u}_t(\tilde{s}_1,j'/J,a')-x\right)\right]\\
&= \argmin_{x\in \Real}\E^{a,s}\left[\ell_{\nicefrac{j}{J}}\left(r(s,a)+\gamma\cdot \max_{a'\in \actions}  \ushort{q}_t^\mathrm{u}(\tilde{s}_1,\ushort{f}(\tilde{u}),a')-x\right)\right]\\
&= \mathcal{B}^\ushort{f}_{\mathrm{u}} \ushort{q}_t.
\end{align*}
Moreover, since $\ushort{f}$ is right-continuous and non-decreasing $\ushort{q}$ is right-continuous and non-decreasing by construction.
\end{proof}

\subsection{Why not Huber's Loss} \label{sec:why-not-huber}

A common differentiable function used in quantile regression is the \emph{Huber's} quantile regression loss function and is commonly defined as (and in general differs from the Moreau envelope of the quantile loss function):
\begin{equation} \label{eq:hubers}
  e^{\kappa}_\alpha(\delta) \; := \; 
\begin{cases}
        -(1-\alpha)(\delta+\kappa) + \frac{1}{2}(1-\alpha)\kappa  & \text{if} ~\delta < -\kappa\\
        (1-\alpha)  \left( \frac{\delta^2}{2\kappa} \right)& \text{if} ~\delta \in [-\kappa, 0] \\
         \alpha   \left( \frac{\delta^2}{2\kappa} \right)& \text{if} ~\delta \in  (0, \kappa)\\
        \alpha(\delta-\kappa) + \frac{1}{2} \alpha \kappa  & \text{if}~\delta>\kappa
    \end{cases} 
  \end{equation}
Although some definitions of Huber's loss are scaled by a positive constant compared with \eqref{eq:hubers}, such scaling does not affect the minimizer.  

The next proposition demonstrates that it is not sufficient to consider the popular Huber's loss function in place of the quantile loss function. Suppose that one defines a risk measure as
\begin{equation} \label{eq:huber-loss-risk}
\xi(\tilde{x}) := \argmin_{m\in\mathbb{R}} \mathbb{E}[e_{\alpha}^\kappa(\tilde{x}-m)] 
\end{equation}
where Huber's loss $e^{\kappa}_{\alpha}$ is defined in \eqref{eq:hubers}.
\begin{proposition}\label{prop:must-strongly-convex}
The minimization problem in \eqref{eq:huber-loss-risk} may not have a unique solution.
\end{proposition}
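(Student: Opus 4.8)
The plan is to prove the statement by producing an explicit counterexample: a discrete $\tilde{x}\in\X$ for which the objective $m\mapsto\Ex{e^{\kappa}_\alpha(\tilde{x}-m)}$ of \eqref{eq:huber-loss-risk} is minimized over a whole nondegenerate interval. The structural reason this is possible — and the point contrasting Huber's loss with the soft-quantile loss of \cref{lem:elicitable-objective-lips-strong} — is that Huber's loss $e^{\kappa}_\alpha$ defined in \eqref{eq:hubers}, while convex and continuously differentiable, is \emph{affine on each of its two tails}: its derivative equals the constant $-(1-\alpha)$ on $(-\infty,-\kappa)$ and the constant $\alpha$ on $(\kappa,\infty)$, and it is strictly increasing only on the bounded middle region $[-\kappa,\kappa]$. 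Hence, if $\tilde{x}$ puts all its mass on points that, for an open range of shifts $m$, land simultaneously in the two flat tails, then the expected loss is affine over that range of $m$ — in fact constant, if the two probability masses are chosen correctly.

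Concretely, I would fix any $\alpha\in(0,1)$, any $\kappa\in(0,1]$, choose a scalar $c>2\kappa$, and let $\tilde{x}$ take the value $0$ with probability $\alpha$ and the value $c$ with probability $1-\alpha$. For every $m\in(\kappa,\,c-\kappa)$ we then have $0-m<-\kappa$ and $c-m>\kappa$, so, writing $f(m):=\Ex{e^{\kappa}_\alpha(\tilde{x}-m)}$ and differentiating the finite sum defining the expectation,
\[
  f'(m)\;=\;-\,\Ex{(e^{\kappa}_\alpha)'(\tilde{x}-m)}\;=\;-\Big(\alpha\cdot\big(-(1-\alpha)\big)+(1-\alpha)\cdot\alpha\Big)\;=\;0 .
\]
Thus $f'$ vanishes identically on the open interval $(\kappa,\,c-\kappa)$.

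To conclude, I would invoke convexity: $e^{\kappa}_\alpha$ is convex and $C^1$ (its one-sided derivatives agree at $\delta\in\{-\kappa,0,\kappa\}$), hence $f$ is convex and $C^1$, and for such a function $\argmin f=\{m:f'(m)=0\}$ whenever the latter is nonempty. Since that zero set contains the nondegenerate interval $(\kappa,\,c-\kappa)$ — nonempty precisely because we chose $c>2\kappa$ — the minimizer of \eqref{eq:huber-loss-risk} is not unique for this $\tilde{x}$, which is exactly the claim. The only thing one must be careful about is keeping the two atoms more than $2\kappa$ apart: if they are closer than $2\kappa$, then for every $m$ at least one atom falls in the strictly convex middle region and uniqueness can actually be recovered. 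Beyond that, no real obstacle is expected — just the elementary check of the derivative values in the affine regimes and of the $C^1$ matching at the breakpoints of $e^{\kappa}_\alpha$.
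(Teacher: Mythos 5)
Your proof is correct and rests on the same idea as the paper's: exhibit a two-atom distribution whose atoms are separated by more than $2\kappa$, so that for a whole interval of $m$ both atoms land in the affine tails of $e^{\kappa}_\alpha$ and the expected loss is flat there (the paper uses the specific instance $\alpha=\kappa=1/2$, $\tilde{x}\in\{-1,1\}$ equiprobable, and defers the computation to a figure). Your version is slightly more general and more explicit --- you balance the tail slopes $-(1-\alpha)$ and $\alpha$ against the masses $\alpha$ and $1-\alpha$ for arbitrary $\alpha,\kappa$ and verify first-order optimality directly --- but the counterexample mechanism is the same.
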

\begin{proof}
Consider a risk level of $\alpha=0.5$, $\kappa=0.5$, and random variable $\tilde{x}$ such that $\P{ \tilde{x} = -1 } = \P{ \tilde{x} = 1 } = 1 / 2$. Then the risk measure defined in \eqref{eq:huber-loss-risk} becomes
\begin{equation*}
  \xi(\tilde{x})
  \; :=\;  \argmin_{m\in \Real} \mathbb{E}[e_{\alpha}^\kappa(\tilde{x}-m)]
  \; =\;  \argmin_{m\in \Real}\E \left[ e_{\nicefrac{1}{2}}^{\nicefrac{1}{2}} \left(\tilde{x}  - m \right)  \right] .
\end{equation*}
Simple algebraic manipulation shows that the minimization above does not have a unique optimal solution and instead $m\in [-\nicefrac{1}{2},\nicefrac{1}{2}]$ is optimal, as \cref{fig:strong-convex-counter} illustrates.
\end{proof}

\begin{figure}
  \centering
  \includegraphics[width=0.45\linewidth]{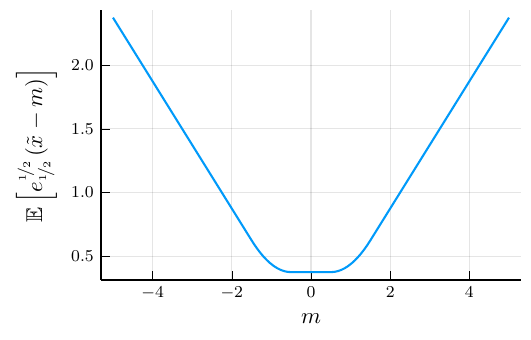}
  \caption{A example used to prove the non-uniqueness of an optimal solution in \cref{prop:must-strongly-convex}.}
  \label{fig:strong-convex-counter}
\end{figure}

\subsection{Proof of Lemma \ref{lem:elicitable-objective-lips-strong}}
\label{sec:elicitableObjectiveLipStrProof}
Before proving \cref{lem:elicitable-objective-lips-strong}, first define the notions it invokes. 

\begin{definition}
Given $L,\mu\in\Real_{++}$, a differentiable function $f\colon \Real \to  \Real$ is $\mu$-\emph{strongly convex} if
\[
  f(x) \; \ge\;
  f(y) + f'(y)(x-y) +\frac{1}{2} \mu (x - y)^2, \qquad \forall x,y\in \Real .
\]
It has an $L$-\emph{Lipschitz continuous derivative} if 
\[
    \left| f'(x) - f'(y)  \right| \;\le\;  L |x - y|, \qquad \forall x,y\in \Real.
\]
We always have $L \ge \mu$ \cite[Thm.~2.1.10]{Nesterov2018a}.
\end{definition}

The next lemma focuses on the properties of the loss function, after which we can prove \cref{lem:elicitable-objective-lips-strong}.

\begin{lemma} \label{lem:bounded_hessian}
Suppose that $\kappa\in(0,1]$. Then the function $\ell^{\kappa}_{\alpha}$ is continuously differentiable and satisfies for all $\delta_1,\delta_2 \in \Real$ that
\begin{equation} \label{eq:derivative-bounds}
  \displaystyle  \min \{\alpha,1-\alpha\} ~ \kappa
  \;\leq \;  \frac{|{\partial\ell^{\kappa}_\alpha}(\delta_2)-{\partial\ell^{\kappa}_\alpha}(\delta_1)|}{|\delta_2-\delta_1|}  
  \;\leq \;
  \frac{ \max \{\alpha,1-\alpha\} }{\kappa}.    
\end{equation}
Moreover, the function $\ell^{\kappa}_{\alpha }$ is \emph{strongly convex} with constant $\min \left\{ \alpha , 1-\alpha  \right\} \kappa$ and has a Lipschitz-continuous derivative with constant $\max \left\{ \alpha , 1-\alpha  \right\} \kappa^{-1}$. 
\end{lemma}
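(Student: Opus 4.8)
The plan is to exploit that $\ell^\kappa_\alpha$ from \eqref{eq:delageLoss1} is piecewise quadratic, so its derivative $\partial\ell^\kappa_\alpha$ from \eqref{eq:delageLoss2} is piecewise linear, and then simply to bound all of its slopes. First I would confirm the $C^1$ claim: differentiating each of the four branches of \eqref{eq:delageLoss1} reproduces the corresponding branch of \eqref{eq:delageLoss2}, the four pieces of $\ell^\kappa_\alpha$ agree at the breakpoints $\delta\in\{-\kappa,0,\kappa\}$ (so $\ell^\kappa_\alpha$ is continuous — e.g.\ both relevant branches give $\tfrac{(1-\alpha)\kappa}{2}$ at $\delta=-\kappa$, $0$ at $\delta=0$, and $\tfrac{\alpha\kappa}{2}$ at $\delta=\kappa$), and likewise the four pieces of $\partial\ell^\kappa_\alpha$ agree there (both give $-(1-\alpha)$ at $\delta=-\kappa$, $0$ at $\delta=0$, and $\alpha$ at $\delta=\kappa$). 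Hence $\partial\ell^\kappa_\alpha$ is continuous and is genuinely the derivative of $\ell^\kappa_\alpha$, so $\ell^\kappa_\alpha\in C^1$.

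Next I would record that $\partial\ell^\kappa_\alpha$ is piecewise linear with exactly four slopes: $(1-\alpha)\kappa$ on $(-\infty,-\kappa)$, $(1-\alpha)/\kappa$ on $[-\kappa,0)$, $\alpha/\kappa$ on $[0,\kappa)$, and $\alpha\kappa$ on $[\kappa,\infty)$. Writing $\mu:=\min\{\alpha,1-\alpha\}\kappa$ and $L:=\max\{\alpha,1-\alpha\}\kappa^{-1}$, each of these slopes lies in $[\mu,L]$, using only that $\min\{\alpha,1-\alpha\}\le\alpha,1-\alpha\le\max\{\alpha,1-\alpha\}$ together with $\kappa\le 1\le\kappa^{-1}$ (both from $\kappa\in(0,1]$, the second since $\kappa^2\le1$): for example $\alpha\kappa\ge\min\{\alpha,1-\alpha\}\kappa=\mu$ and $\alpha\kappa\le\max\{\alpha,1-\alpha\}\kappa\le\max\{\alpha,1-\alpha\}\kappa^{-1}=L$, while $\alpha/\kappa\ge\min\{\alpha,1-\alpha\}\kappa^{-1}\ge\min\{\alpha,1-\alpha\}\kappa=\mu$ and $\alpha/\kappa\le L$; the two $(1-\alpha)$ slopes are identical. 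Then for $\delta_1<\delta_2$ I would split $[\delta_1,\delta_2]$ at whichever of $-\kappa,0,\kappa$ it contains, write $\partial\ell^\kappa_\alpha(\delta_2)-\partial\ell^\kappa_\alpha(\delta_1)$ as the telescoping sum of increments over the sub-intervals, bound each increment by $(\text{its slope})\times(\text{its length})$ with the slope in $[\mu,L]$, and sum to get $\mu(\delta_2-\delta_1)\le\partial\ell^\kappa_\alpha(\delta_2)-\partial\ell^\kappa_\alpha(\delta_1)\le L(\delta_2-\delta_1)$, which is exactly \eqref{eq:derivative-bounds}.

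Finally, the upper bound in \eqref{eq:derivative-bounds} is by definition the statement that $\partial\ell^\kappa_\alpha$ is $L$-Lipschitz with $L=\max\{\alpha,1-\alpha\}\kappa^{-1}$. For strong convexity, the lower bound shows $\delta\mapsto\partial\ell^\kappa_\alpha(\delta)-\mu\delta$ is non-decreasing, hence $\delta\mapsto\ell^\kappa_\alpha(\delta)-\tfrac\mu2\delta^2$ has a non-decreasing derivative and is therefore convex; unfolding the first-order convexity inequality for this function yields $\ell^\kappa_\alpha(x)\ge\ell^\kappa_\alpha(y)+\partial\ell^\kappa_\alpha(y)(x-y)+\tfrac\mu2(x-y)^2$, i.e.\ $\ell^\kappa_\alpha$ is $\mu$-strongly convex with $\mu=\min\{\alpha,1-\alpha\}\kappa$. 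Alternatively one may just cite the standard equivalence between the two-sided difference-quotient bound and strong convexity / Lipschitz-gradient, e.g.\ \citet{Nesterov2018a}.

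I do not expect a real obstacle here: this is a computational lemma, and the only place that needs a little care is the piecewise splitting of $[\delta_1,\delta_2]$ together with correct bookkeeping of the slopes and breakpoint values; everything else is routine.
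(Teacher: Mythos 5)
Your proposal is correct and follows essentially the same route as the paper: both arguments rest on $\partial\ell^{\kappa}_\alpha$ being piecewise linear with all four slopes confined to $[\min\{\alpha,1-\alpha\}\kappa,\ \max\{\alpha,1-\alpha\}\kappa^{-1}]$ thanks to $\kappa\le 1$, from which the Lipschitz bound is immediate and strong convexity follows from the strong monotonicity of the derivative (the paper cites \citet[Exercise~12.59]{Rockafellar2009} for that last equivalence, where you derive it directly via the subtracted-quadratic trick). Your version merely spells out the breakpoint checks and the telescoping over sub-intervals that the paper leaves implicit.
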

\begin{proof}
The inequality in \eqref{eq:derivative-bounds} follows from the fact that ${\partial\ell^{\kappa}_{\alpha}}$, derived in \eqref{eq:delageLoss2}, is a piecewise linear function and $\kappa  \le 1$. Then, the function $\ell_{\alpha }^{\kappa}$ is strongly convex from the strong monotonicity of its derivative in \eqref{eq:delageLoss2} by \cite[Exercise~12.59]{Rockafellar2009}. The function $\ell_{\alpha }^{\kappa}$ has a Lipschitz continous derivative immediately by the upper bound in \eqref{eq:derivative-bounds}.
\end{proof}

We now prove \cref{lem:elicitable-objective-lips-strong}.

\begin{proof}
By \cref{lem:bounded_hessian} and \cite[Lem.~2.1.6]{Nesterov2018a}, we have that the objective in \eqref{eq:elicitable-definition} is also strongly convex with constant $\mu$ given in the lemma. By \cref{lem:elicitable-objective-lips-strong} and the linearity of the expectation operator, the derivative of \eqref{eq:elicitable-definition} is also Lipschitz continuous with constant $L$ given in the lemma. 
\end{proof}

\subsection{Proof of Theorem \ref{thm:convergenceQlearning}}\label{sec:convergenceProof}

\subsubsection{Properties of Soft-Quantile Measure}

\begin{definition} \label{def:shortfall}
A risk measure $\hat{\quant}_{\alpha}^\kappa$ is a \emph{shortfall risk measure} (see \cite{follmerSchied:shortfalRisk,weber2006}) defined as:
    \begin{equation} \label{eq:shorfall-definition}
      \hat{\quant}_{\alpha}^\kappa(\tilde{x}):=\sup\{m\in\mathbb{R} : \mathbb{E}[{\partial\ell_\alpha^\kappa}(\tilde{x}-m)]\geq 0\},
\end{equation}
where ${\partial\ell_\alpha^\kappa}(\cdot)$ is  defined in \eqref{eq:delageLoss2}.
\end{definition}

\begin{lemma} \label{lem:softquant-risk-sortfall}
The measure $\hat{\quant}_{\alpha}^\kappa$ satisfies monotonicity, translation invariance, and is elicitable as:
\begin{equation} \label{eq:elicitable-definition}
    \hat{\quant}_{\alpha}^\kappa(\tilde{x}) := \argmin_{m\in\mathbb{R}} \mathbb{E}[\ell_\alpha^\kappa(\tilde{x}-m)]
\end{equation}
with $\ell_\alpha^\kappa$ as defined in \eqref{eq:delageLoss1}.
\end{lemma}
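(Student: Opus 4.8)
The plan is to reduce everything to one structural fact about the derivative $\partial\ell_\alpha^\kappa$ displayed in \eqref{eq:delageLoss2}, namely that it is a \emph{continuous, strictly increasing bijection} of $\Real$ with $\partial\ell_\alpha^\kappa(0)=0$. First I would verify this by inspecting the four linear pieces of \eqref{eq:delageLoss2}: they agree at the breakpoints $\delta\in\{-\kappa,0,\kappa\}$, so $\partial\ell_\alpha^\kappa$ is continuous; each piece has strictly positive slope — $(1-\alpha)\kappa$, $(1-\alpha)/\kappa$, $\alpha/\kappa$, $\alpha\kappa$, all positive since $\alpha\in(0,1)$ and $\kappa\in(0,1]$ — so $\partial\ell_\alpha^\kappa$ is strictly increasing; and $\partial\ell_\alpha^\kappa(\delta)\to\pm\infty$ as $\delta\to\pm\infty$. (The monotonicity and the two-sided slope bounds are essentially \cref{lem:bounded_hessian}.)

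Next, fix $\tilde{x}\in\X$, discrete with finite outcome space $\Omega$, and set $g_{\tilde{x}}(m):=\E[\partial\ell_\alpha^\kappa(\tilde{x}-m)]$. For each $\omega\in\Omega$ the map $m\mapsto\partial\ell_\alpha^\kappa(\tilde{x}(\omega)-m)$ is continuous and strictly decreasing, hence so is their probability-weighted sum $g_{\tilde{x}}$; moreover $g_{\tilde{x}}(m)\to+\infty$ as $m\to-\infty$ and $g_{\tilde{x}}(m)\to-\infty$ as $m\to+\infty$. Therefore $g_{\tilde{x}}$ has a unique root $m^\star$ and $\{m\in\Real:g_{\tilde{x}}(m)\ge 0\}=(-\infty,m^\star]$, so the supremum in \eqref{eq:shorfall-definition} is attained and finite, and $\hat{\quant}_\alpha^\kappa(\tilde{x})=m^\star$. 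To obtain the elicitable form \eqref{eq:elicitable-definition}, I would invoke \cref{lem:elicitable-objective-lips-strong}: $m\mapsto\E[\ell_\alpha^\kappa(\tilde{x}-m)]$ is $\mmu$-strongly convex, hence has a unique minimizer, and differentiating under the (finite) expectation shows this minimizer is characterized by $\tfrac{d}{dm}\E[\ell_\alpha^\kappa(\tilde{x}-m)]=-g_{\tilde{x}}(m)=0$, i.e.\ by $m=m^\star$. This identifies \eqref{eq:shorfall-definition} with \eqref{eq:elicitable-definition}.

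Finally I would read off the two stated properties from the shortfall form. For translation invariance, the change of variable $m\mapsto m-c$ in \eqref{eq:shorfall-definition} gives $\hat{\quant}_\alpha^\kappa(\tilde{x}+c)=\sup\{m:g_{\tilde{x}}(m-c)\ge 0\}=c+\sup\{m':g_{\tilde{x}}(m')\ge 0\}=c+\hat{\quant}_\alpha^\kappa(\tilde{x})$. For monotonicity, if $\tilde{x}\le\tilde{y}$ almost surely then $\partial\ell_\alpha^\kappa(\tilde{x}-m)\le\partial\ell_\alpha^\kappa(\tilde{y}-m)$ pointwise on $\Omega$ (as $\partial\ell_\alpha^\kappa$ is non-decreasing), so $g_{\tilde{x}}(m)\le g_{\tilde{y}}(m)$ for every $m$; hence $\{m:g_{\tilde{x}}(m)\ge0\}\subseteq\{m:g_{\tilde{y}}(m)\ge0\}$, and taking suprema yields $\hat{\quant}_\alpha^\kappa(\tilde{x})\le\hat{\quant}_\alpha^\kappa(\tilde{y})$.

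I do not anticipate a genuine obstacle. The two points requiring a little care are (i) showing the supremum in \eqref{eq:shorfall-definition} is attained and finite — which is precisely why I need strict monotonicity of $\partial\ell_\alpha^\kappa$ together with its $\pm\infty$ limits — and (ii) matching the shortfall and $M$-estimator representations, which amounts to the first-order optimality condition for the smooth strongly convex objective of \cref{lem:elicitable-objective-lips-strong}. Finiteness of the support of $\tilde{x}$ trivializes all the differentiation and limit interchanges.
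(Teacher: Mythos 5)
Your proof is correct, and it pivots on exactly the same structural fact the paper uses — that $\partial\ell_\alpha^\kappa$ in \eqref{eq:delageLoss2} is continuous (left-continuous suffices), strictly increasing, and non-constant — but you then discharge the conclusions yourself rather than by citation. The paper's proof is essentially a verification of hypotheses: it checks these properties of $\partial\ell_\alpha^\kappa$ and then invokes the general theory of shortfall risk measures (Föllmer--Schied, Weber) for monotonicity and translation invariance, and Theorem~4.3 of \cite{Bellini2015} for the identification of \eqref{eq:shorfall-definition} with the $M$-estimator form \eqref{eq:elicitable-definition}. You instead unpack all of this for the finite-support case: the piecewise-linear slope check, the strict decrease and $\pm\infty$ limits of $g_{\tilde{x}}(m)=\E[\partial\ell_\alpha^\kappa(\tilde{x}-m)]$ giving a unique root $m^\star$ with $\{g_{\tilde{x}}\ge 0\}=(-\infty,m^\star]$, the first-order condition of the strongly convex objective of \cref{lem:elicitable-objective-lips-strong} identifying that root with the unique minimizer, and the change-of-variable and pointwise-comparison arguments for translation invariance and monotonicity. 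What your route buys is self-containedness and an explicit reason the supremum in \eqref{eq:shorfall-definition} is attained and finite (a point the paper leaves implicit in the cited framework); what the paper's route buys is brevity and validity beyond finite support, since the cited theorems do not require $\tilde{x}$ to be discrete. Both are sound; yours is the more elementary and verifiable argument for the setting at hand.
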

\begin{proof}
Monotonicity and translation invariance follow naturally from the properties of shortfall risk measures as defined in \cite{follmerSchied:shortfalRisk,weber2006}, after confirming that ${\partial\ell_\alpha^\kappa}(\delta)$ is increasing and non-constant. Elicitability follows from Theorem 4.3 in \cite{Bellini2015} after confirming that $ {\partial\ell_\alpha^\kappa}(\delta)$ is strictly increasing and left continuous.
\end{proof}

\subsubsection{Standard Operator Convergence Results}\label{sec:stand-oper-conv}

Our convergence analysis follows the framework presented in Section 4  of \cite{Bertsekas1996}, which we summarize in this section.  The Q-learning algorithm, consider the following iteration for some random sequence $\tilde{z}_i \colon \Omega \to  \Real^{\mathcal{N}}$ where $\mathcal{N} = \{1,\dots,n\}$ defined as
\begin{equation} \label{eq:qiteration-update}
  \begin{aligned}
  \tilde{z}_{i+1}(b)
  &\; =\;
  (1 - \tilde{\theta}_i(b)) \cdot \tilde{z}_i(b) + \tilde{\theta}_i(b) \cdot  ((H \tilde{z}_i)(b) + \tilde{\phi}_i(b)),
    \qquad i = 0, 1, \dots , \\
  &\; =\;
   \tilde{z}_i(b) + \tilde{\theta}_i(b) \cdot  ((H \tilde{z}_i)(b) + \tilde{\phi}_i(b) - \tilde{z}_i(b)),
  \qquad i = 0, 1, \dots ,
  \end{aligned}
\end{equation}
for all $b\in\mathcal{N}$, where $H\colon \Real^\mathcal{N} \to  \Real^\mathcal{N}$ is some possibly non-linear operator, $\tilde{\theta}_i \colon \Omega \to  \Real_{+}$
  is a step size, and $\tilde{\phi}_i \colon \Omega \to  \Real^{\mathcal{N}}$ is some random noise sequence. The \emph{random} history $\mathcal{F}_i$ at iteration $i = 1, \dots $ is denoted by
\[
 \mathcal{F}_i = \left( \tilde{z}_0, \dots , \tilde{z}_i, \tilde{\phi}_0, \dots , \tilde{\phi}_{i-1}, \tilde{\theta}_0, \dots , \tilde{\theta}_i \right).
\]

To study the convergence of the algorithm, we may also need to define a weighted maximum norm for $x\in \Real^\mathcal{N}$ and weights $w \in \Real^\mathcal{N}_{++}$, i.e. $w \in \Real^\mathcal{N}$ with $w(b)>0$ for all $b\in\mathcal{N}$, as
\[
 \| x\|_w \;:=\; \max_{b\in\mathcal{N}} \frac{|x(b)|}{w(b)}.  
\]
The weighted maximum norm is useful when analyzing the convergence of non-discounted MDPs. Its importance is in the fact that a non-negative matrix with a sub-unit spectral radius is a contraction in the weighted norm, but may not be a contraction in the plain maximum norm.

\begin{definition}[Star-contraction] \label{def:pseudo-contraction}
An operator $H\colon \Real^\mathcal{N} \to \Real^\mathcal{N}$ is a \emph{weighted maximum norm star-contraction} if there exist $z\opt \in \Real^\mathcal{N}$, $w\in \Real_{++}^\mathcal{N}$, $\chi\in [0,1)$ such that  
  \[
    \| H z - z\opt \|_w
    \; \le \;
    \chi \cdot \| z - z\opt  \|_w.
  \]
\end{definition}

Note that the original name for star-contraction is pseudo-contraction. We use the term star-contraction because of its close resemblance to star-convexity. 

The following assumption on the noise of the stochastic process will be needed to ensure the convergence of our algorithm. 
\begin{assumption}[Assm. 4.3~\cite{Bertsekas1996}] \label{assumption:expected_convergence}
  The noise terms in \eqref{eq:qiteration-update} satisfy for each $i = 1, 2, \dots$ that:
\begin{enumerate}
\item[(1)] Random errors are conditionally unbiased, a.s.:
 \[
   \E[\tilde{\phi}_i(b) \mid \mathcal{F}_i] = 0, \qquad \forall b \in 1{:}n,
  \]
\item[(2)] There exists a norm $\| \cdot \|$ on $\Real^\mathcal{N}$ and $c,g\in \Real$ such that 
    \[
    \E[\tilde{\phi}_i(b)^2 \mid \mathcal{F}_i] \le c + g \cdot \| \tilde{z}_i \|^2 , \qquad
    \forall b \in 1{:}n, \quad \text{a.s.}
    \]
\end{enumerate}
 \end{assumption}

 \begin{proposition}\label{thm:bertsekas}[Proposition 4.4~\cite{Bertsekas1996}]
   Let $\tilde{z}_i, i = 1, \dots $ be the sequence generated by the iteration in \eqref{eq:qiteration-update}. Assume that
   \begin{enumerate}
   \item The step-sizes $\tilde{\theta}_i, \forall i = 1, \dots $ satisfy almost surely that $\tilde{\theta}_i \ge 0$ and 
   \[
     \sum_{t=0}^{\infty} \tilde{\theta}_i(b) = \infty, \quad \sum_{i=0}^{\infty} \tilde{\theta}^2_i(b) < \infty, \qquad \forall b\in \mathcal{N}.
   \]
 \item The noise terms $\tilde{\phi}_i, i = 1, \dots$ satisfy \cref{assumption:expected_convergence}.
 \item The operator $H$ in \eqref{eq:qiteration-update} is a weighted maximum norm star-contraction as in \cref{def:pseudo-contraction}.
 \end{enumerate}
 Then, $\tilde{z}_i$ converges to $z\opt$, a fixed point of $H$, with probability 1:
 \[
   \Pr{\lim_{i \to \infty} \tilde{z}_i = z\opt} = 1.
 \]
\end{proposition}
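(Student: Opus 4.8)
The plan is to reproduce the classical asynchronous stochastic-approximation argument of \cite[Ch.~4]{Bertsekas1996}. First I would normalize the problem: translating coordinates by $z\opt$ lets me assume the fixed point is the origin, and rescaling coordinate $b$ by the weight $w(b)$ turns the weighted-maximum-norm star-contraction of \cref{def:pseudo-contraction} into a plain maximum-norm contraction, i.e. $\|Hz\|_\infty \le \chi\|z\|_\infty$ with $\chi\in[0,1)$. With this normalization the update \eqref{eq:qiteration-update} splits, by linearity, into a noise-free contraction part and a pure noise part: defining $W_i,U_i\in\Real^{\mathcal{N}}$ with the same step sizes by $W_{i+1}(b)=(1-\tilde\theta_i(b))W_i(b)+\tilde\theta_i(b)(H\tilde z_i)(b)$ with $W_0=\tilde z_0$, and $U_{i+1}(b)=(1-\tilde\theta_i(b))U_i(b)+\tilde\theta_i(b)\tilde\phi_i(b)$ with $U_0=0$, one has $\tilde z_i=W_i+U_i$ for every $i$.

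The second step is to show the noise part vanishes, $U_i\to 0$ almost surely. For each fixed $b$, the accumulation $U_i(b)$ is a step-size-weighted sum of the terms $\tilde\phi_l(b)$; condition~(1) of \cref{assumption:expected_convergence} makes the associated partial sums a martingale, and combining the conditional-variance control of condition~(2) with the square-summability $\sum_i\tilde\theta_i(b)^2<\infty$, the supermartingale convergence theorem yields $U_i(b)\to 0$ a.s. The catch is that the variance bound $c+g\|\tilde z_i\|^2$ depends on the as-yet-uncontrolled iterate norm, so this step must first be carried out on a version of the process stopped the first time $\|\tilde z_i\|_\infty$ exceeds a level, and is only completed once boundedness is established.

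The technical heart of the argument is proving that $\sup_i\|\tilde z_i\|_\infty<\infty$ almost surely. One cannot simply bound the noise, because its conditional variance is allowed to grow quadratically in $\|\tilde z_i\|$, which couples the stochastic and stability analyses. I would use the standard scaling device: partition time into blocks delimited by successive doublings of $\max(1,\|\tilde z_i\|_\infty)$, rescale the iterates within a block so the rescaled noise has variance controlled by a constant, and use $\sum_i\tilde\theta_i(b)^2<\infty$ to make the rescaled noise contribution over a block negligible. Meanwhile the contraction $\|H\tilde z_i\|_\infty\le\chi\|\tilde z_i\|_\infty$ supplies an inward drift of strength $1-\chi>0$ on each block, which prevents the norm from doubling infinitely often and hence gives a.s. boundedness; this retroactively validates the variance bound used in the previous step.

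With $U_i\to 0$ and boundedness in hand, the final step is a deterministic squeeze on the iterates via a bounding sequence. Set $D:=\limsup_i\|\tilde z_i\|_\infty<\infty$. For any $\epsilon>0$, eventually $\|\tilde z_i\|_\infty\le D+\epsilon$, so $\|H\tilde z_i\|_\infty\le\chi(D+\epsilon)$; since each coordinate is a convex combination that tracks its target, and every coordinate receives total step-size mass $\sum_i\tilde\theta_i(b)=\infty$, the influence of the initial value washes out and $\tilde z_i(b)$ is pulled into the ball of radius $\chi(D+\epsilon)$ up to the vanishing term $U_i$. Iterating this bounding step through $D_{k+1}=\chi D_k$ gives $\limsup_i\|\tilde z_i\|_\infty\le\chi^kD\to 0$, so $\tilde z_i\to z\opt$ almost surely. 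I expect the boundedness step to be the main obstacle, precisely because the quadratic-in-norm noise variance rules out an elementary bounded-noise argument and forces the rescaling analysis above.
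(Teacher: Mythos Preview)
The paper does not prove this proposition at all; it is quoted verbatim as Proposition~4.4 of \cite{Bertsekas1996} and used as a black box in the proof of \cref{thm:convergenceQlearning}. Your sketch is a faithful outline of the classical asynchronous stochastic-approximation argument in that reference (normalization to an $\ell_\infty$ contraction, the additive split $\tilde z_i = W_i + U_i$, the boundedness proof via the rescaling/doubling device to handle the quadratic-in-norm variance, and the final contraction squeeze), so there is nothing to compare against in the present paper and your plan is correct.
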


\subsubsection{Operator Definitions}

We will need the following operators for any $\xi > 0$ and $\tilde{j}' \sim U([J-1])$. Let $b = (t, s, j, a)$ for
$t \in [T], s\in \mathcal{S}, j\in [J-1], a\in \mathcal{A}$:
\begin{equation} \label{eq:qlearning-operators}
\begin{aligned}
(Gq)(b)
&:=
  \begin{cases}
   q(b)-\ushort{R}\cdot t&\text{if } j = 0 \vee t = 0, \\ 
  \frac{d}{dx} \left. \E^{a,s}\left[\ell_{\nicefrac{j}{J}}^{\kappa} \left( r(s,a) + \gamma \cdot  \max_{a'\in \mathcal{A}} q(t-1, \tilde{s}_1, \tilde{j}', a') - x \right) \right] \right|_{x = q(t,s,j,a) } &\text{otherwise}, \\
    \end{cases} \\
&=\begin{cases}
q(b)-\ushort{R}\cdot t& \mbox{if }j=0 \vee t = 0,\\
-\E^{a,s}\left[{\partial\ell^{\kappa}_{\frac{j}{J}}}\left(r(s,a)+\gamma \max_{a'\in \actions} q(t-1,\tilde{s}_1,\tilde{j}',a')-q(t,s,j,a)\right) \right] & \mbox{otherwise},
\end{cases} \\ 
(G_{s'}q)(b)
&:=\begin{cases}
q(b)-\ushort{R}\cdot t& \mbox{if }j=0 \vee  t = 0,\\
- \E\left[{\partial\ell^{\kappa}_{\frac{j}{J}}}\left(r(s,a)+\gamma \max_{a'\in \actions} q(t-1,s',\tilde{j}',a')-q(t,s,j,a)\right)\right] & \mbox{otherwise},
\end{cases} \\
  H q
&:= q - \xi \cdot  G q, \\
  H_{s'}q
&:= q - \xi \cdot  G_{s'}q.
\end{aligned}
\end{equation}

Consider a random sequence of inputs $((\tilde{t}_i,\tilde{s}_i,\tilde{j}_i,\tilde{a}_i,\tilde{s}_i'),\tilde{\beta}_i,\tilde{\ushort{q}}_i)_{i=0}^\infty$ in \cref{alg:q-learning}. We can define a real-valued random variable $\tilde{\phi}$ for $t \in [T], s\in \mathcal{S}, j\in [J-1], a\in \mathcal{A}$ as 
\begin{equation} \label{eq:qlearning-noise-step}
\begin{aligned}
\tilde{\phi}_i(t,s,j,a)
&:=
\begin{cases}
(H_{\tilde{s}_i'}\tqud_i)(t,s,j,a) - (H \tqud_i)(t,s,j,a)
&\text{ if }  (\tilde{t}_i, \tilde{s}_i, \tilde{j}_i, \tilde{a}_i) = (t,s,j,a), \\
0 & \mbox{ otherwise}.
\end{cases} \\
\tilde{\theta}_i(t,s,j,a)
&:=
\begin{cases}
\frac{\tilde{\beta}_i}{\xi}
&\text{ if } (\tilde{t}_i,\tilde{s}_i,\tilde{j}_i,\tilde{a}_i)=(t,s,j,a), \\
0 &\mbox{ otherwise}.    
\end{cases}
\end{aligned}
\end{equation}

\begin{lemma} \label{lem:q-learning-update-H}
The random sequence of iterations followed by \cref{alg:q-learning} satisfies 
\begin{align*}
\tqud_0(t,s,j,a) &= t \ushort{R}, \text{ a.s. }, \\
  \tqud_{i+1}(t,s,j,a)
  &= \tqud_i(t,s,j,a) + \tilde{\theta}_i(t,s,j,a) \cdot  (H \tqud_i + \tilde{\phi}_i - \tqud_i)(t,s,j,a), \quad \forall i \in \Natural, ~ \text{a.s.},
\end{align*}
where the terms are defined in \eqref{eq:qlearning-operators} and \eqref{eq:qlearning-noise-step}.
\end{lemma}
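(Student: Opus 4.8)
The plan is to recognize that the stochastic recursion run by \cref{alg:q-learning} is, coordinate by coordinate, exactly the abstract iteration \eqref{eq:qiteration-update} once one substitutes the operators $H,H_{s'}$ and the processes $\tilde\phi_i,\tilde\theta_i$ from \eqref{eq:qlearning-operators}--\eqref{eq:qlearning-noise-step}. Every quantity appearing in both displays is a deterministic (measurable) function of the sampled inputs $\big((\tilde t_i,\tilde s_i,\tilde j_i,\tilde a_i,\tilde s_i'),\tilde\beta_i\big)_{i\in\Natural}$, so it is enough to verify the two identities pathwise, for an arbitrary fixed realization, and the ``a.s.'' qualifier follows for free. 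The initialization is immediate: \cref{alg:q-learning} sets $\qud_0(b)\gets t\ushort R$ for every $b=(t,s,j,a)$, which is precisely the claimed value of $\tqud_0$.

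For the recursion I would fix $i$ and a coordinate $b=(t,s,j,a)$ and distinguish whether $b$ is the coordinate updated at step $i$. If $(\tilde t_i,\tilde s_i,\tilde j_i,\tilde a_i)\ne(t,s,j,a)$, then $\tilde\theta_i(b)=0$ and $\tilde\phi_i(b)=0$ by definition, so the right-hand side reduces to $\tqud_i(b)$; and since \cref{alg:q-learning} modifies only the coordinate $b_i=(\tilde t_i,\tilde s_i,\tilde j_i,\tilde a_i)$, the left-hand side $\tqud_{i+1}(b)$ also equals $\tqud_i(b)$, so both agree. If instead $(\tilde t_i,\tilde s_i,\tilde j_i,\tilde a_i)=(t,s,j,a)$, then $\tilde\theta_i(b)=\tilde\beta_i/\xi$ and $\tilde\phi_i(b)=(H_{\tilde s_i'}\tqud_i)(b)-(H\tqud_i)(b)$; substituting, the two copies of $(H\tqud_i)(b)$ cancel, and using the identity $H_{s'}q=q-\xi\,G_{s'}q$ the scalar $1/\xi$ cancels the $\xi$, so the right-hand side becomes $\tqud_i(b)-\tilde\beta_i\,(G_{\tilde s_i'}\tqud_i)(b)$.

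It then remains to match $\tqud_i(b)-\tilde\beta_i\,(G_{\tilde s_i'}\tqud_i)(b)$ with the actual update in \cref{alg:q-learning}. When $j=0$ or $t=0$, the definition of $G_{s'}$ gives $(G_{s'}\tqud_i)(b)=\tqud_i(b)-t\ushort R$, so the expression equals $\tqud_i(b)+\tilde\beta_i\big(t\ushort R-\tqud_i(b)\big)$, which is exactly \cref{line:empty-step}. When $j\ge 1$ and $t\ge 1$, the definition of $G_{s'}$ with $\tilde j'\sim U([J-1])$ yields $(G_{\tilde s_i'}\tqud_i)(b)=-\tfrac1J\sum_{j'=0}^{J-1}\partial\ell^{\kappa}_{j/J}\big(r(s,a)+\gamma\max_{a'\in\actions}\tqud_i(t-1,\tilde s_i',j',a')-\tqud_i(b)\big)$ (recall the expectation over $\tilde j'\sim U([J-1])$ is the average $\tfrac1J\sum_{j'=0}^{J-1}$), and multiplying by $-\tilde\beta_i$ reproduces verbatim the gradient step performed by \cref{alg:q-learning}. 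This establishes the claimed recursion for every $i\in\Natural$.

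There is no genuine difficulty here; the lemma is a bookkeeping identity and the only places needing attention are (i) carrying the ``do-nothing'' branch through, so that $G_{s'}$ at $j=0$ or $t=0$ is seen to coincide with \cref{line:empty-step}, and (ii) the cancellation of $\xi$ against $1/\xi$, which is exactly why the abstract step-size $\tilde\theta_i$ is taken to be the rescaled $\tilde\beta_i/\xi$ rather than $\tilde\beta_i$. Note that \cref{ass:assTransitions} plays no role in this lemma: the claim is purely the algebraic form of the iteration; the conditional law of $\tilde s_i'$ will only be invoked later, when showing that $\tilde\phi_i$ is conditionally mean-zero.
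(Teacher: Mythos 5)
Your proof is correct and follows essentially the same route as the paper's: a coordinate-by-coordinate case split on whether $b$ equals the sampled index $\tilde b_i$, with the updated coordinate further split into the $j=0\vee t=0$ branch (matching \cref{line:empty-step}) and the gradient-step branch, using the cancellation of $H\tqud_i$ in $\tilde\phi_i$ and of $\xi$ against $\tilde\theta_i=\tilde\beta_i/\xi$. The paper dresses this up as an induction on $i$, but the substance of each step is the same bookkeeping you carry out pathwise.
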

\begin{proof}
We prove the claim by induction on $i$. The base case holds immediately from the definition. To prove the inductive case, suppose that $i \in \Natural$ and we prove the result in the following cases. 
  
\emph{Case 1a}: Suppose that $\tilde{b} = (\tilde{t}_i, \tilde{s}_i, \tilde{j}_i, \tilde{a}_i) = (t, s, j, a) = b$ and $t_i > 0 , j_i > 0$, then by algebraic manipulation:
\begin{align*}
  \tqud_{i+1}(b)
  &= \tqud_i(b) + \theta_i(b) ((H \tqud_i)(b) + \tilde{\phi}_i(b) - \tqud_i(b)) \\
  &= \tqud_i(b) + \theta_i(b) ((H \tqud_i)(b) + (H_{\tilde{s}_i'} \tqud_i)(b) - (H \tqud_i)(b) - \tqud_i(b)) \\
  &= \tqud_i(b) + \theta_i(b) ((H_{\tilde{s}_i'} \tqud_i)(b) - \tqud_i(b)) \\
  &= \tqud_i(b) + \theta_i(b) ((\tqud_i - \xi G_{\tilde{s}_i'} \tqud_i)(b) - \tqud_i(b)) \\
  &= \tqud_i(b) + \frac{\beta_i }{J}\sum_{j' \in[J-1]} \partial\ell_{\frac{j}{J}}^\kappa \Bigl(r(s, a) + \gamma  \max_{a'\in \mathcal{A}}\tqud_i(t-1, \tilde{s}_i',j',a') - \tqud_i(b) \Bigr) .  
\end{align*}

\emph{Case 1b}: Suppose that $\tilde{b} = (\tilde{t}_i, \tilde{s}_i, \tilde{j}_i, \tilde{a}_i) = (t, s, j, a) = b$ and $t_i = 0 \vee j_i = 0$, then by algebraic manipulation:
\begin{align*}
  \tqud_{i+1}(b)
  &= \tqud_i(b) + \theta_i(b) ((H \tqud_i)(b) + \tilde{\phi}_i(b) - \tqud_i(b)) \\
  &= \tqud_i(b) + \theta_i(b) ((H \tqud_i)(b) + (H_{\tilde{s}_i'} \tqud_i)(b) - (H \tqud_i)(b) - \tqud_i(b)) \\
  &= \tqud_i(b) + \theta_i(b) ((H_{\tilde{s}_i'} \tqud_i)(b) - \tqud_i(b)) \\
  &= \tqud_i(b) + \theta_i(b) ((\tqud_i - \xi G_{\tilde{s}_i'} \tqud_i)(b) - \tqud_i(b)) \\
  &= \tqud_i(b) + \theta_i(b) (\tqud_i(b) - \xi \tqud_i(b) + \xi \ushort{R} t - \tqud_i(b)) \\
  &= \tqud_i(b) - \theta_i(b) \xi ( \tqud_i(b) - \ushort{R} t ) \\
  &= \tqud_i(b) - \beta_i ( \tqud_i(b) - \ushort{R} t ) .
\end{align*}
\emph{Case 2}: Suppose that $\tilde{b} = (\tilde{t}_i, \tilde{s}_i, \tilde{j}_i, \tilde{a}_i) \neq  (t, s, j, a) = b$, then by algebraic manipulation, the algorithm does not change the q-function:
\begin{align*}
  \tqud_{i+1}(b)
  &= \tqud_i(b) + \theta_i(b) \cdot  ((H \tqud_i)(b) + \tilde{\phi}_i(b) - \tqud_i(b)) \\
  &= \tqud_i(b) + 0 \cdot  ((H \tqud_i)(b) + \tilde{\phi}_i(b) - \tqud_i(b)) \\
  &= \tqud_i(b).
\end{align*}
\end{proof}

\subsubsection{Operator H is a contraction}

We use the following weights $w\in \Real^{[ T ] \times  \states \times [ J - 1 ] \times \actions}$ with a weighted max norm to prove the contraction properties in this section:
where the weights  are defined as
\begin{equation} \label{eq:weight-definition}
  w(t,s,j,a) := 2^t, \quad
  \forall t\in [T], s\in \mathcal{S}, j\in [ J-1 ], a\in \mathcal{A}.
\end{equation}

\begin{lemma} \label{lem:sim-bellman-contraction}
The operator $\budk$ is a weighted max norm contraction for $w$ defined in \eqref{eq:weight-definition}:
\begin{equation*} 
  \|\budk x - \budk y\|_w \;\le\;  \frac{1}{2} \| x - y \|_w, \quad \forall x,y\in \Real^{\states}.
\end{equation*}
\end{lemma}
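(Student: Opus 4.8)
The plan is to reduce the claim to the non-expansiveness of monetary risk measures (\cref{lem:risk-nonexpansive}) applied to the soft-quantile measure $\hat{\quant}^{\kappa}_{j/J}$, combined with the fact that the weights $w(t,s,j,a)=2^{t}$ absorb a factor $\gamma\le 1$ picked up from decrementing the horizon by one. (The domain in the statement should read $\Real^{[T]\times\states\times[J-1]\times\actions}$ rather than $\Real^{\states}$.) Fix $x,y$ in that space and $b=(t,s,j,a)$, and consider two cases.

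If $j=0\vee t=0$, then $(\budk x)(b)=(\budk y)(b)=\ushort{R}\cdot t$, so the bound holds trivially. For the main case $j\ge 1$ and $t\ge 1$, I would first record, using \cref{lem:softquant-risk-sortfall} together with strong convexity of $\ell^{\kappa}_{j/J}$ (\cref{lem:elicitable-objective-lips-strong}) so that the minimizer is unique, that
\[
(\budk q)(b)\;=\;\hat{\quant}^{\kappa}_{j/J}\!\left(r(s,a)+\gamma\cdot\max_{a'\in\actions}q(t-1,\tilde{s}_1,\tilde{j}',a')\right),
\]
where the argument is a random variable on the finite outcome space $\Omega=\states\times[J-1]$ with $\tilde{s}_1\sim p(s,a,\cdot)$ and $\tilde{j}'\sim U([J-1])$. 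Since $\hat{\quant}^{\kappa}_{j/J}$ is a monetary risk measure (monotone and translation invariant), and its argument is bounded with finite support (hence the value is in $\Real$), \cref{lem:risk-nonexpansive} gives
\[
\left|(\budk x)(b)-(\budk y)(b)\right|\;\le\;\gamma\cdot\max_{s'\in\states,\,j'\in[J-1]}\left|\max_{a'\in\actions}x(t-1,s',j',a')-\max_{a'\in\actions}y(t-1,s',j',a')\right|.
\]
I would then bound each inner difference of maxima by $\max_{a'}|x(t-1,s',j',a')-y(t-1,s',j',a')|$, rewrite this as $\max_{a'}\frac{|x-y|(t-1,s',j',a')}{w(t-1,s',j',a')}\cdot 2^{t-1}\le 2^{t-1}\norm{x-y}_w$, and use $\gamma\le 1$ to obtain
\[
\left|(\budk x)(b)-(\budk y)(b)\right|\;\le\;\gamma\cdot 2^{t-1}\norm{x-y}_w\;\le\;\tfrac12\cdot 2^{t}\norm{x-y}_w\;=\;\tfrac12\,w(b)\,\norm{x-y}_w .
\]
Dividing by $w(b)$ and taking the supremum over $b$ yields $\norm{\budk x-\budk y}_w\le\tfrac12\norm{x-y}_w$.

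The only mild subtlety — and precisely the reason the weighted norm $\norm{\cdot}_w$ is used in place of the plain $\ell_\infty$ norm — is that we want the contraction modulus $\tfrac12$ to hold even in the undiscounted case $\gamma=1$; there, the ratio $w(t-1,\cdot)/w(t,\cdot)=\tfrac12$ coming from the horizon decrease does the work, while the non-expansiveness of $\hat{\quant}^{\kappa}$ is exactly what lets the per-state bound pass through the risk operator without any loss. No genuine obstacle remains beyond carefully invoking the right auxiliary lemmas; the proof is essentially the standard Bellman-contraction argument carried out for the soft-quantile operator under the weighted norm.
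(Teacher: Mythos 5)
Your proposal is correct and follows essentially the same route as the paper: rewrite $\budk$ via the shortfall risk measure $\hat{\quant}^{\kappa}_{j/J}$, invoke \cref{lem:risk-nonexpansive} to pass the per-outcome bound through the risk operator (cancelling $r(s,a)$ and extracting $\gamma$), bound the difference of maxima over actions by the maximum of differences, and let the weight ratio $w(t-1,\cdot)/w(t,\cdot)=\tfrac12$ supply the contraction modulus, with the degenerate case $j=0\vee t=0$ handled trivially. The only cosmetic difference is that the paper separates the translation-invariance and monotonicity steps before applying non-expansiveness, whereas you fold them into a single application of \cref{lem:risk-nonexpansive}; you also correctly flag the typo in the stated domain.
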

\begin{proof}
The operator $\budk$ is equivalently defined using a shortfall risk measure $\hat{\quant}_{\alpha}^{\kappa}$ from \cref{lem:softquant-risk-sortfall} for each $b = (t,s,j,a)$ as
\[
(\budk q)(b)
:= \begin{cases}
\ushort{R} \cdot t & \mbox{if} ~j=0 \vee t = 0,\\
(\hat{\quant}_{\frac{j}{J}}^{\kappa})^{a,s}\left[r(s,a)+\gamma \max_{a'\in \actions} q(t-1,\tilde{s}_1,\tilde{j}',a') \right] & \mbox{otherwise},
\end{cases} 
\]
  
We analyze the following two cases.

\emph{Case 1}: For each $t\in 1{:}T, s\in \mathcal{S}, j\in 1{:}(J-1), a\in \mathcal{A}$:
\begin{align*}
0 &\le \frac{1}{w(t,s,j,a)} \left| (\budk x)(t,s,j,a) - (\budk y)(t,s,j,a) \right|
\\
&\stepname{a}{=} \frac{1}{w(t,s,j,a)} \left| (\hat{\quant}^{\kappa}_{j / J})^{a,s}\left[ \max_{a'\in \mathcal{A}} \gamma x(t-1,\tilde{s}_1,\tilde{j}',a')\right] - (\hat{\quant}^{\kappa}_{j / J})^{a,s}\left[ \max_{a'\in \mathcal{A}}\gamma y(t-1,\tilde{s}_1,\tilde{j}',a')\right] \right| \\
&\stepname{b}{\le} \frac{1}{w(t,s,j,a)} \max_{a'\in \mathcal{A}^{\states \times [J-1]}}
   \left| (\hat{\quant}^{\kappa}_{j / J})^{a,s}\left[\gamma x(t-1,\tilde{s}_1,\tilde{j}',a'(\tilde{s}_1,\tilde{j}'))\right] - (\hat{\quant}^{\kappa}_{j / J})^{a,s}\left[\gamma y(t-1,\tilde{s}_1,\tilde{j}',a'(\tilde{s}_1, \tilde{j}'))\right] \right| \\
&\stepname{c}{\le} \frac{1}{w(t,s,j,a)}    \cdot  \max_{s'\in \mathcal{S}, j'\in [J-1], a'\in \mathcal{A}} |\gamma x(t-1, s', j', a') -\gamma y(t-1, s', j', a')| \\
&= \frac{1}{w(t,s,j,a)} \cdot    \gamma\cdot  \max_{s'\in \mathcal{S}, j'\in [J-1], a'\in \mathcal{A}} |x(t-1, s', j', a') - y(t-1, s', j', a')| \\
&\stepname{d}{\le} \max_{s'\in \mathcal{S}, j'\in [J-1], a'\in \mathcal{A}} \frac{|x(t-1, s', j', a') - y(t-1, s', j', a')|}{2 w(t-1,s',j',a')}  .
\end{align*}
In step (a), we use the translation invariance of the risk measure to cancel out $r(s,a)$. Step (b) follows by upper bounding the difference using the monotonicity of $\hat{\quant}_{\frac{j}{J}}^{\kappa}$, step (c) follows from \cref{lem:risk-nonexpansive,lem:softquant-risk-sortfall}, step (d) follows from the definition of $w$, its dependence on $t$ only, and $\gamma \le 1$. 

\emph{Case 2}: For $t = 0 \vee j = 0$:
\[
  \frac{1}{w(t,s,j,a)} \left| (\budk x)(t,s,j,a) - (\budk y)(t,s,j,a) \right| =
  \frac{1}{2^t} |\ushort{R} t - \ushort{R} t| = 0 .
\]
Then, using the equalities above and $\mathcal{K} = [T] \times \mathcal{S} \times [J-1] \times  \mathcal{A}$ we get that
\begin{align*}
  \|\budk x - \budk y\|_w
  &= \max_{b\in \mathcal{K}} \frac{1}{w(b)} \left| (\budk x)(b) - (\budk y)(b) \right| \\
  &\le \max_{t\in 1{:}T, s'\in \mathcal{S}, j'\in 1:J-1, a'\in \mathcal{A}} \frac{|x(t-1, s', j', a') - y(t-1, s', j', a')|}{2 \cdot w(t-1,s',j',a')} \\
  &\le  \max_{t\in [ T ], s'\in \mathcal{S}, j'\in [J-1], a'\in \mathcal{A}} \frac{|x(t, s', j', a') - y(t, s', j', a')|}{2 \cdot w(t,s',j',a')} \\
  &=\frac{1}{2} \| x - y \|_w.
\end{align*}
\end{proof}

The following lemma establishes that the gradient update is a convex combination of the starting value and an optimal solution for an appropriate step size. 
\begin{lemma} \label{lem:gradient-step-convex}
Suppose that $f\colon \Real \to \Real$ is a differentiable $\mu$-strongly convex function with an $L$-Lispchitz continuous gradient. Consider $x_i\in \Real $ and a gradient update for any step size $\xi\in (0,\nicefrac{1}{L}]$:
\[
 x_{i+1} := x_i - \xi \cdot  f'(x_i). 
\]
Then $\exists  l \in \left[\nicefrac{1}{L}, \nicefrac{1}{\mmu}\right]$ such that $\nicefrac{\xi}{l} \in (0,1]$ and
\[
 x_{i+1} = \left( 1- \nicefrac{\xi}{l} \right) \cdot  x_i + \nicefrac{\xi}{l} \cdot  x\opt ,
\]
where $x\opt = \argmin_{x\in \Real} f(x)$ (unique from strong convexity).
\end{lemma}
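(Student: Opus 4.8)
The plan is to solve directly for the interpolation coefficient. Since $x\opt$ is the unique minimizer of $f$ (uniqueness follows from $\mu$-strong convexity, which forces strict convexity and coercivity), we have $f'(x\opt) = 0$. Rewriting the target identity $x_{i+1} = (1-\xi/l)x_i + (\xi/l)x\opt$ as $x_{i+1} - x_i = (\xi/l)(x\opt - x_i)$ and comparing it with the gradient step $x_{i+1} - x_i = -\xi f'(x_i) = -\xi\bigl(f'(x_i) - f'(x\opt)\bigr)$, the requirement reduces to
\[
  f'(x_i) - f'(x\opt) \;=\; \frac{1}{l}\,(x_i - x\opt).
\]
So I would simply \emph{define} $l := (x_i - x\opt)\big/\bigl(f'(x_i) - f'(x\opt)\bigr)$ whenever $x_i \neq x\opt$, and set $l := 1/L$ in the degenerate case $x_i = x\opt$ (where both sides above vanish, making the identity trivial, and $1/L \in [1/L,1/\mmu]$ since $L \ge \mmu$).

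The second step is to check $l \in [1/L, 1/\mmu]$ in the case $x_i \neq x\opt$. Strong convexity yields strict monotonicity of the gradient, $\bigl(f'(x_i) - f'(x\opt)\bigr)(x_i - x\opt) \ge \mmu\,(x_i - x\opt)^2 > 0$; this shows the denominator in the definition of $l$ is nonzero with the same sign as the numerator, so $l > 0$, and dividing the inequality by $(x_i - x\opt)^2$ gives $1/l \ge \mmu$, i.e.\ $l \le 1/\mmu$. The $L$-Lipschitz continuity of $f'$ gives $|f'(x_i) - f'(x\opt)| \le L\,|x_i - x\opt|$, hence $l = |x_i - x\opt|\big/|f'(x_i) - f'(x\opt)| \ge 1/L$. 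Combining $l \ge 1/L$ with the hypothesis $\xi \le 1/L$ gives $0 < \xi/l \le \xi L \le 1$, so $\xi/l \in (0,1]$ as required.

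The final step is a one-line substitution: using the definition of $l$ and $f'(x\opt) = 0$,
\[
  (1 - \xi/l)\,x_i + (\xi/l)\,x\opt \;=\; x_i - \xi\cdot\frac{x_i - x\opt}{l} \;=\; x_i - \xi\bigl(f'(x_i) - f'(x\opt)\bigr) \;=\; x_i - \xi f'(x_i) \;=\; x_{i+1}.
\]
I do not anticipate a real obstacle here; the only points demanding care are invoking uniqueness of $x\opt$ from strong convexity and isolating the case $x_i = x\opt$ so that the ratio defining $l$ is not of the form $0/0$. Everything else is a direct manipulation of the strong-convexity and Lipschitz inequalities.
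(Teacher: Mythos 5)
Your proof is correct and follows essentially the same route as the paper's: both define $l := (x_i - x\opt)/f'(x_i)$, verify $l \in [\nicefrac{1}{L}, \nicefrac{1}{\mmu}]$ from strong monotonicity of the gradient and Lipschitz continuity, and substitute back. Your use of the product inequality $(f'(x_i)-f'(x\opt))(x_i-x\opt) \ge \mmu (x_i-x\opt)^2$ with absolute values is a marginally cleaner way to avoid the paper's two-case sign analysis, but the argument is the same.
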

\begin{proof}
Assume that $f'(x_i) \neq 0$ hence $x_i\neq x\opt$; otherwise the result holds trivially. Then construct $l \neq  0$ as
\[
l \;:=\;  \frac{x_i - x\opt }{f'(x_i)}. 
\]
Substituting the definition of $l$ into the gradient update, we get that
\[
x_{i+1} \;:=\;  x_i - \xi \cdot  f'(x_i) = \left( 1- \nicefrac{\xi}{l} \right) x_i + \nicefrac{\xi}{l} \cdot  x\opt, 
\]
as desired.

It remains to show that $l \in \left[\nicefrac{1}{L}, \nicefrac{1}{\mmu}\right]$ and  $\nicefrac{\xi}{l} \in (0,1]$. Using that $f'(x\opt) = 0$ and strong convexity~\cite[Thm. 2.1.10]{Nesterov2018a} and Lipschitz continuity of the derivative:
  \begin{equation} \label{eq:condition-convexity}
    \frac{1}{L} | f'(x_i) |  = \frac{1}{L} | f'(x_i) - f'(x\opt) |\; \le \;
    |x_i - x\opt| \; \le \;
    \frac{1}{\mmu} | f'(x_i) - f'(x\opt) | = \frac{1}{\mmu} |f'(x_i)|.
\end{equation}

Next, we analyze two cases.

\emph{Case 1}: Suppose that $x\opt  > x_i$. Then $f'(x_i) > f'(x\opt) = 0$ because $f'$ is increasing for a strongly convex $f$, and therefore, \eqref{eq:condition-convexity} becomes
  \[
    \begin{array}{rcl}
  -\frac{1}{L} f'(x_i) \le& x\opt -x_i &\le -\frac{1}{\mmu} f'(x_i), \\
  -\frac{1}{L}  \ge& \frac{x\opt -x_i}{f'(x_i)} &\ge -\frac{1}{\mmu}, \\
  -\frac{1}{L}  \ge& -l &\ge -\frac{1}{\mmu}, \\
  \frac{1}{L}  \le& l &\le \frac{1}{\mmu}. 
    \end{array}
\]
In addition, $\xi\in (0,\nicefrac{1}{L}] \implies \nicefrac{\xi}{l} \in (0,1]$.

\emph{Case 2}: Suppose that $x\opt  < x_i$. Then $f'(x_i) > f'(x\opt) = 0$ because $f'$ is increasing for a strongly convex $f$, and therefore, \eqref{eq:condition-convexity} becomes
\[
\begin{array}{rcl}
\frac{1}{L} f'(x_i) \le&  x_i - x\opt &\le \frac{1}{\mmu} f'(x_i), \\
\frac{1}{L}  \le& \frac{x_i  - x\opt }{f'(x_i)} &\le \frac{1}{\mmu}, \\
\frac{1}{L}  \le& l &\le \frac{1}{\mmu}. 
\end{array}
\]
In addition, $\xi\in (0,\nicefrac{1}{L}] \implies \nicefrac{\xi}{l} \in (0,1]$.
\end{proof}

\begin{theorem} \label{thm:h-contraction}
A fixed point $x\opt  = \budk x\opt$ exists and satisfies $x\opt  = H x\opt$. Let $\bar{\mmu} :=  J^{-1} \kappa$,  $\bar{L} := \kappa^{-1}$, and $\xi \in (0, \min(1,\nicefrac{1}{\bar{L}}))$ in the definition of $H$. Then, $H$ is a weighted max norm star contraction:
\[   \| H x  - H x\opt \|_w  \;\le\;  \left(1 - \frac{\bar{\mmu} \xi }{2} \right)    \cdot \| x - x\opt  \|_w, \]
for $w$ defined in \eqref{eq:weight-definition}.
\end{theorem}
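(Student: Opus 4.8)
The plan is to assemble three facts proved earlier: that $\budk$ is a $\tfrac12$-contraction in the weighted norm $\|\cdot\|_w$ (\cref{lem:sim-bellman-contraction}), that one gradient-descent step on a strongly convex function is a convex combination of the starting point and the minimizer (\cref{lem:gradient-step-convex}), and that the soft-quantile objective $m\mapsto\E[\ell^{\kappa}_{\alpha}(\tilde{x}-m)]$ is $\bar{\mmu}$-strongly convex with $\bar{L}$-Lipschitz derivative once the coordinate-dependent constants are relaxed (\cref{lem:elicitable-objective-lips-strong}). I would first observe that $\budk$ maps the complete space $\Real^{[T]\times\states\times[J-1]\times\actions}$ into itself and contracts in $\|\cdot\|_w$, so Banach's theorem supplies a unique $x\opt=\budk x\opt$. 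To see that this same $x\opt$ satisfies $x\opt=Hx\opt$, i.e. $Gx\opt=0$ coordinatewise: for $b=(t,s,j,a)$ with $j=0\vee t=0$ we have $x\opt(b)=\ushort R\cdot t$ by definition of $\budk$, hence $(Gx\opt)(b)=x\opt(b)-\ushort R\cdot t=0$; and for $j\ge1,\,t\ge1$ the value $x\opt(b)$ is the unique minimizer of the differentiable strongly convex map $x\mapsto\E^{a,s}[\ell^{\kappa}_{j/J}(r(s,a)+\gamma\max_{a'}x\opt(t-1,\tilde{s}_1,\tilde{j}',a')-x)]$, so its derivative vanishes there, which is exactly $(Gx\opt)(b)=0$. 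Thus $Hx\opt=x\opt-\xi\,Gx\opt=x\opt$, and it remains to bound $\|Hx-x\opt\|_w$.

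Next I would fix a coordinate $b=(t,s,j,a)$ and split into two cases. If $j=0\vee t=0$, then $(Gx)(b)=x(b)-\ushort R\cdot t$, so $(Hx)(b)=x(b)-\xi(x(b)-\ushort R\cdot t)=(1-\xi)x(b)+\xi\ushort R\cdot t$ and $x\opt(b)=\ushort R\cdot t$, giving $(Hx)(b)-x\opt(b)=(1-\xi)(x(b)-x\opt(b))$; since $0<\xi<1$ and $\bar{\mmu}=\kappa/J\le1$ this yields $|(Hx)(b)-x\opt(b)|/w(b)\le(1-\xi)\|x-x\opt\|_w\le(1-\tfrac{\bar{\mmu}\xi}{2})\|x-x\opt\|_w$. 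If instead $j\ge1$ and $t\ge1$, the key observation is that $(Hq)(b)=q(b)-\xi\,(f^{q}_{b})'(q(b))$ is one gradient step on $f^{q}_{b}(x):=\E^{a,s}[\ell^{\kappa}_{j/J}(r(s,a)+\gamma\max_{a'}q(t-1,\tilde{s}_1,\tilde{j}',a')-x)]$, whose unique minimizer is $(\budk q)(b)$. Since $\min\{j/J,1-j/J\}\ge1/J$ and $\max\{j/J,1-j/J\}\le1$, \cref{lem:elicitable-objective-lips-strong} makes $f^{q}_{b}$ $\bar{\mmu}$-strongly convex with $\bar{L}$-Lipschitz derivative, and since $\xi<1/\bar{L}$, \cref{lem:gradient-step-convex} provides $l_b\in[1/\bar{L},1/\bar{\mmu}]$ with $\xi/l_b\in(0,1]$ such that $(Hq)(b)=(1-\xi/l_b)\,q(b)+(\xi/l_b)\,(\budk q)(b)$.

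Finally I would take $q=x$, use $x\opt(b)=(\budk x\opt)(b)$, and write $(Hx)(b)-x\opt(b)=(1-\xi/l_b)(x(b)-x\opt(b))+(\xi/l_b)\big((\budk x)(b)-(\budk x\opt)(b)\big)$; dividing by $w(b)$, applying the triangle inequality, and invoking \cref{lem:sim-bellman-contraction} on the second term gives $|(Hx)(b)-x\opt(b)|/w(b)\le(1-\xi/l_b)\|x-x\opt\|_w+\tfrac{\xi}{2l_b}\|x-x\opt\|_w=(1-\tfrac{\xi}{2l_b})\|x-x\opt\|_w\le(1-\tfrac{\bar{\mmu}\xi}{2})\|x-x\opt\|_w$, the last step using $l_b\le1/\bar{\mmu}$. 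Taking the maximum over $b$ and recalling $Hx\opt=x\opt$ delivers the stated bound. I expect the only delicate points to be (i) relaxing the per-coordinate strong-convexity and Lipschitz constants of $f^{q}_{b}$ to the uniform $\bar{\mmu},\bar{L}$ so that $l_b$ lies in a $b$-independent interval, which is what keeps the contraction rate uniform, and (ii) handling the degenerate coordinates $j=0\vee t=0$ separately, where the update is simply a contraction toward $\ushort R\cdot t$ with rate $1-\xi$; everything else is routine bookkeeping with the cited lemmas.
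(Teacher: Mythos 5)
Your proposal is correct and follows essentially the same route as the paper: Banach's theorem via \cref{lem:sim-bellman-contraction} for existence of $x\opt$, the same case split on $j=0\vee t=0$ versus $j,t\ge 1$, the same use of \cref{lem:gradient-step-convex} with the relaxed uniform constants $\bar{\mmu},\bar{L}$ to write $(Hq)(b)$ as a convex combination of $q(b)$ and $(\budk q)(b)$, and the same triangle-inequality combination with the $\tfrac12$-contraction of $\budk$. The only cosmetic difference is that you verify $Hx\opt=x\opt$ directly from the first-order optimality condition $Gx\opt=0$, while the paper reads it off the convex-combination identity; both are immediate.
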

\begin{proof}

The fixed point $x\opt$ exists from \cref{lem:sim-bellman-contraction} and the Banach fixed point theorem. The operator $H$ takes a gradient step towards $\budk$.

\emph{Case 1}: Fix some $b = (t,s,j,a)$ with $t\in [ T ], s\in \mathcal{S}, j\in [ J-1 ], a\in \mathcal{A}$ and suppose that $t > 0$ and $j > 0$. Fix some $q$ and define
\begin{equation} \label{eq:contraction-f}
f(y) =  \E^{s,a} \left[\ell_{\nicefrac{j}{J}}^{\kappa} \left( r(s,a) + \gamma \cdot  \max_{a'\in \mathcal{A}} q(t-1, \tilde{s}_1, \tilde{j}', a') - y \right) \right] ,
\end{equation}
The function $f$ is strongly convex with Lipschitz gradient with parameters $\bar{\mmu}$ and $\bar{L}$ based on \cref{lem:elicitable-objective-lips-strong} and since
\[
  \bar{\mmu} \le \min \left\{\frac{j}{J}, 1- \frac{j}{J} \right\} \kappa
  \le  \max \left\{\frac{j}{J}, 1- \frac{j}{J} \right\} \kappa^{-1} \le \bar{L}, \forall j\in 1{:}J-1.
\]
Let $y\opt \in \arg\min_{y\in \Real} f(y)$. Then, $\exists l\in [\frac{1}{\bar{L}}, \frac{1}{\bar{\mmu}}]$ such that  
\begin{equation} \label{eq:h-convex-combo}
  \begin{aligned}
    (H q)(b)  
    &= (q - \xi G q)(b) 
      = q(b) - \xi \cdot  f'(q(b))  
      = \left(1-\nicefrac{\xi}{l}\right) \cdot q(b) + \nicefrac{\xi}{l} \cdot  y\opt  \\
    &= \left(1-\nicefrac{\xi}{l}\right) \cdot  q(b) + \nicefrac{\xi}{l} \cdot  (\budk q)(b),
  \end{aligned}
\end{equation}
from algebraic manipulation and application of \cref{lem:gradient-step-convex} to the function $f$ in \eqref{eq:contraction-f} which satisfies the requisite strong convexity and Lipschitz continuity properties.

The fixed point of $x\opt$ of $\budk$ is a fixed point of $H$ from \eqref{eq:h-convex-combo} 
\[
 (H x\opt)(b) = \left(1-\frac{\xi}{l}\right) x\opt (b) + \frac{\xi}{l} (\budk x\opt)(b)
=\left(1-\frac{\xi}{l}\right) x\opt(b) + \frac{\xi}{l} (x\opt)(b) = x\opt (b). 
\]
Finally, we get using \eqref{eq:h-convex-combo} that
\begin{align*}
|(Hx)(b) - (H x\opt) (b)|
&=| (1-\nicefrac{\xi}{l}) x(b) + \nicefrac{\xi}{l}(\budk x)(b) - ((1-\nicefrac{\xi}{l}) x\opt(b) + \nicefrac{\xi}{l}(\budk x\opt)(b))|\\
&=|(1-\nicefrac{\xi}{l})(x-x\opt)(b) + \nicefrac{\xi}{l}(\budk x - \budk x\opt)(b)|\\
&\le(1-\nicefrac{\xi}{l})|(x-x\opt)(b)| + \nicefrac{\xi}{l}|(\budk x - \budk x\opt)(b)|\\
&\le(1-\nicefrac{\xi}{l})|(x-x\opt)(b)| + \frac{1}{2}\nicefrac{\xi}{l}|( x - x\opt)(b)|\\
&=(1-\nicefrac{\xi}{2l})|(x-x\opt)(b)| .
\end{align*}
Here, we used the triangle inequality for absolute values and the fact that $x\opt $ is a fixed point of $\budk$ and \cref{lem:sim-bellman-contraction}. Hence,
\begin{equation} \label{eq:bound-changing}
\frac{1}{w(b)} |(H x - H x\opt  )(b)| \le \left(1-\frac{\bar{\mmu}\xi}{2}\right) \frac{1}{w(b)}|(x-x\opt)(b)|.
\end{equation}

\emph{Case 2}:  Fix some $b = (t,s,j,a)$ with $t\in [ T ], s\in \mathcal{S}, j\in [ J-1 ], a\in \mathcal{A}$ and suppose that $t = 0 \vee j = 0$. Then, from $x\opt(b) = \ushort{R}\cdot t$, we have that if $\xi<1$:
\begin{align*}
  |(Hx)(b) - (H x\opt) (b)|
  &= |x(b) - \xi (x(b) - \ushort{R} \cdot t) - \ushort{R} \cdot t| \\
  &= (1-\xi) |x(b) - \ushort{R} \cdot t| 
  = (1-\xi) |x(b) - x\opt(b)|.
\end{align*}
 Hence,
\begin{equation} \label{eq:bound-changing-2}
\frac{1}{w(b)} |(H x - H x\opt  )(b)| \le \left(1-\xi \right) \frac{1}{w(b)}|(x-x\opt)(b)| \le \left(1-\bar{\mmu}\xi/2\right) \frac{1}{w(b)}|(x-x\opt)(b)|
\end{equation}
since $\bar{\mmu}\xi/2\leq \kappa J^{-1}\xi/2\leq \kappa \xi \le \xi$ because $\kappa \le 1$.

\emph{Conclusion}: Putting \eqref{eq:bound-changing} and \eqref{eq:bound-changing-2} together with the definition of the weighted norm, we get the desired star contraction rate. 
\end{proof}

\subsubsection{Noise Properties}

The history $\mathcal{F}_i$ at an iteration $i \in \Natural$ is defined as
\begin{equation} \label{eq:history-qlearning}
  \mathcal{F}_i :=
  \left( \tqud_0, \dots , \tqud_i, \tilde{\phi}_0, \dots , \tilde{\phi}_{i-1}, \tilde{\theta}_0, \dots , \tilde{\theta}_i \right).
\end{equation}

Recall from \cref{ass:assTransitions} that 
\[
  \mathcal{G}_{i} :=
  (\tilde{\beta}_l,(\tilde{t}_l, \tilde{s}_l,\tilde{j}_l,\tilde{a}_l,\tilde{s}_l'))_{l=0}^{i}, 
\]
and
\begin{align*} 
  \P{\tilde{s}_i'=s' \mid \mathcal{G}_{i-1}, \tilde{b}_i, \tilde{\beta}_i} 
  =
  p(\tilde{s}_i, \tilde{a}_i, s'), \; \forall s'\in \mathcal{S}, 
\end{align*}
almost surely, where $\mathcal{G}_{i-1} := (\tilde{\beta}_l,(\tilde{t}_l, \tilde{s}_l,\tilde{j}_l,\tilde{a}_l,\tilde{s}_l'))_{l=0}^{i-1}$.

\begin{lemma} \label{lem:deterministic}
Let $\Omega$ be an appropriate sample space. Then for each $\omega_1, \omega_2\in \Omega$ and $i = 1, \dots $ :
\[
  \begin{gathered}
 (\mathcal{G}_{i-1}(\omega_1) = \mathcal{G}_{i-1}(\omega_2)) \wedge (
 \tilde{b}_i(\omega_1) = \tilde{b}_i(\omega_2)) \wedge 
 (\tilde{\beta}_i(\omega_1) = \tilde{\beta}_i(\omega_2))\\
 \implies\\
 \mathcal{F}_i(\omega_1) = \mathcal{F}_i(\omega_2)=\bar{\mathcal{F}}_i(\mathcal{G}_{i-1}(\omega_1),\tilde{b}_i(\omega_1),\tilde{\beta}_i(\omega_1)), \text{a.s.}
  \end{gathered}
\]
for some $\bar{\mathcal{F}}_i$ operator that maps a tuple $((\beta_l,(t_l, s_l,j_l,a_l,s_l'))_{l=0}^{i-1},  (t_i,s_i,j_i,a_i), \beta_i)$ to some $\left( \qud_0, \dots , \qud_i, \phi_0, \dots , \phi_{i-1}, \theta_0, \dots , \theta_i \right)$.
\end{lemma}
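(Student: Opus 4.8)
The plan is to prove the statement by induction on $i$, constructing at each stage an explicit (measurable, non-random) operator $\bar{\mathcal{F}}_i$ that rebuilds $\mathcal{F}_i$ from the tuple $(\mathcal{G}_{i-1},\tilde{b}_i,\tilde{\beta}_i)$. The underlying fact is that every coordinate of $\mathcal{F}_i$—the iterates $\tqud_0,\dots,\tqud_i$, the noise terms $\tilde{\phi}_0,\dots,\tilde{\phi}_{i-1}$, and the step sizes $\tilde{\theta}_0,\dots,\tilde{\theta}_i$—is produced from the sampled data by a fixed rule that depends only on the model parameters $p,r,\gamma$ and the constants $\kappa,J,\xi$, so conditioning on $(\mathcal{G}_{i-1},\tilde{b}_i,\tilde{\beta}_i)$ fixes all of them. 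Once $\bar{\mathcal{F}}_i$ is exhibited, the claimed implication is immediate: if $\omega_1$ and $\omega_2$ yield the same $(\mathcal{G}_{i-1},\tilde{b}_i,\tilde{\beta}_i)$, then $\mathcal{F}_i(\omega_1)=\bar{\mathcal{F}}_i(\cdot)=\mathcal{F}_i(\omega_2)$.

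First I would handle the step sizes and noise terms. By \eqref{eq:qlearning-noise-step}, each $\tilde{\theta}_l$ is a deterministic function of $(\tilde{b}_l,\tilde{\beta}_l)$, and each $\tilde{\phi}_l$ a deterministic function of $(\tqud_l,\tilde{b}_l,\tilde{s}_l')$ (the operators $H,H_{s'}$ in \eqref{eq:qlearning-operators} only involving the fixed data $p,r,\gamma,\kappa,\xi$). For $l\le i-1$ the pairs $(\tilde{b}_l,\tilde{\beta}_l)$ and the next states $\tilde{s}_l'$ are coordinates of $\mathcal{G}_{i-1}$, while $(\tilde{b}_i,\tilde{\beta}_i)$ is given directly; hence, once the $\tqud_l$'s are shown to be determined by $\mathcal{G}_{i-1}$, so are all $\tilde{\phi}_l$ ($l\le i-1$) and all $\tilde{\theta}_l$ ($l\le i$).

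The core step is therefore to show that $\tqud_l$ is a deterministic function of $\mathcal{G}_{l-1}$ (and so of $\mathcal{G}_{i-1}$ for $l\le i$). This is an induction using \cref{lem:q-learning-update-H}: the base case $\tqud_0(t,s,j,a)=t\ushort{R}$ is a constant, and the update $\tqud_{l+1}=\tqud_l+\tilde{\theta}_l\cdot(H\tqud_l+\tilde{\phi}_l-\tqud_l)$ expresses $\tqud_{l+1}$ through $\tqud_l$, through $\tilde{\theta}_l$ (determined by $(\tilde{b}_l,\tilde{\beta}_l)$), and through $\tilde{\phi}_l$ (determined by $(\tqud_l,\tilde{b}_l,\tilde{s}_l')$), all of which are coordinates of $\mathcal{G}_l$ or, by the inductive hypothesis, deterministic functions thereof. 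Composing these rules across $l=0,\dots,i$ and appending $\tilde{\theta}_i$ (a function of $(\tilde{b}_i,\tilde{\beta}_i)$) yields $\bar{\mathcal{F}}_i$.

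The only point needing care—the closest thing to an obstacle—is the ``almost surely'' qualifier: \cref{lem:q-learning-update-H} identifies $\tqud_{l+1}$ with the $H$-update only off a null set, so the recursion above discards one null set per iteration; since there are countably many iterations, the union is still null and the conclusion holds a.s. I would also remark that $\bar{\mathcal{F}}_i$ is measurable, being a finite composition of the measurable maps in \eqref{eq:qlearning-operators}--\eqref{eq:qlearning-noise-step}, which is what makes conditioning on $\mathcal{F}_i$ in the subsequent convergence analysis legitimate.
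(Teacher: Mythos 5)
Your proposal is correct and follows essentially the same route as the paper: an induction on $i$ that uses the explicit forms in \eqref{eq:qlearning-operators}--\eqref{eq:qlearning-noise-step} and the update of \cref{lem:q-learning-update-H} to show that each coordinate of $\mathcal{F}_i$ (the iterates, noise terms, and step sizes) is a deterministic function of $(\mathcal{G}_{i-1},\tilde{b}_i,\tilde{\beta}_i)$. Your added remarks on discarding countably many null sets and on the measurability of $\bar{\mathcal{F}}_i$ are minor but welcome refinements that the paper leaves implicit.
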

\begin{proof}
We proceed by induction on $i$. To prove the base step for $i = 0$:
\[
  \mathcal{F}_0(\omega_1) = (\tqud_0(\omega_1), \tilde{\theta}_0(\omega_1)) =
  (t \ushort{R}, \tilde{\theta}_0(\omega_1)) =
  (t \ushort{R}, \tilde{\theta}_0(\omega_2)) = \mathcal{F}_0(\omega_2) .
\]
Here, $\tilde{\theta}_0(\omega_1) = \tilde{\theta}_0(\omega_2)$ because $\tilde{\beta}_0(\omega_1) = \tilde{\beta}_0(\omega_2)$, and $\tilde{b}_0(\omega_1) = \tilde{b}_0(\omega_2)$.

To prove the inductive step, assume that the property holds for $i$ and prove it for $i+1$. That is, suppose that $l = i+1$
\[
 (\mathcal{G}_{l-1}(\omega_1) = \mathcal{G}_{l-1}(\omega_2)) \wedge 
 (\tilde{b}_l(\omega_1) = \tilde{b}_l(\omega_2)) \wedge (
 \tilde{\beta}_l(\omega_1) = \tilde{\beta}_l(\omega_2)).
\]
Then from the inductive assumption:
\[
 \mathcal{F}_l = \mathcal{F}_l, \quad\forall\, l = 1, \dots, i, 
\]
and for $b = \tilde{b}_{i+1}(\omega_1) = \tilde{b}_{i+1}(\omega_2)$:
\begin{align*}
  (\tilde{\phi}_i(b))(\omega_1)
  &= (H_{\tilde{s}_i'(\omega_1)}\tqud_i(\omega_1))(b) - (H \tqud_i(\omega_1))(b)\\
  &= (H_{\tilde{s}_i'(\omega_2)}\tqud_i(\omega_2))(b) - (H \tqud_i(\omega_2))(b)\\
  &= (\tilde{\phi}_i(b))(\omega_2).
  \\
    (\tilde{\theta}_{i+1}(b))(\omega_1) &= \frac{\tilde{\beta}_{i+1}(\omega_1)}{\xi} = \frac{\tilde{\beta}_{i+1}(\omega_2)}{\xi} = (\tilde{\theta}_{i+1}(b))(\omega_2),
\end{align*}
and for $b \neq \tilde{b}_{i+1}(\omega_1) = \tilde{b}_{i+1}(\omega_2)$:
\begin{align*}
  (\tilde{\phi}_i(b))(\omega_1) &= 0 = (\tilde{\phi}_i(b))(\omega_2).
  \\
    (\tilde{\theta}_{i+1}(b))(\omega_1) &= 0 = (\tilde{\theta}_{i+1}(b))(\omega_2).
\end{align*}
In addition, 
\begin{align*}
  (\tqud_{i+1}(b))(\omega_1)
  &=  (\tqud_i(b))(\omega_1) + (\tilde{\theta}_i(b))(\omega_1) \cdot  (H \tqud_i(\omega_1) + \tilde{\phi}_i(\omega_1) - \tqud_i(\omega_1))(t,s,j,a) \\
  &=  (\tqud_i(b))(\omega_2) + (\tilde{\theta}_i(b))(\omega_2) \cdot  (H \tqud_i(\omega_2) + \tilde{\phi}_i(\omega_2) - \tqud_i(\omega_2))(b), \\
  &= (\tqud_{i+1}(b))(\omega_2).
\end{align*}
Putting the inequalities above together we get the desired equality:
\begin{align*}
  \mathcal{F}_{i+1}(\omega_1)
  &= 
    \left( \tqud_0(\omega_1), \dots , \tqud_{i+1}(\omega_1), \tilde{\phi}_0(\omega_1), \dots , \tilde{\phi}_i(\omega_1), \tilde{\theta}_0(\omega_1), \dots , \tilde{\theta}_{i+1}(\omega_1) \right) \\
  &= 
    \left( \tqud_0(\omega_2), \dots , \tqud_{i+1}(\omega_2), \tilde{\phi}_0(\omega_2), \dots , \tilde{\phi}_i(\omega_2), \tilde{\theta}_0(\omega_1), \dots , \tilde{\theta}_{i+1}(\omega_1) \right) \\
  &= \mathcal{F}_{i+1}(\omega_2).
\end{align*}
\end{proof}

\begin{lemma} \label{lem:f-fun-of-g}
Under \cref{ass:assTransitions}:
\[
  \P{\tilde{s}_i'=s' \mid \mathcal{G}_{i-1}, \tilde{b}_i, \tilde{\beta}_i,\mathcal{F}_i}=p(\tilde{s}_i, \tilde{a}_i, s'), \mbox{ a.s.},
\]
for each $s'\in \mathcal{S}$ and $i\in \Natural$.
\end{lemma}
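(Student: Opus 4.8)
The plan is to reduce the claim to \cref{ass:assTransitions} by showing that conditioning additionally on $\mathcal{F}_i$ adds no information beyond what is already carried by $(\mathcal{G}_{i-1}, \tilde{b}_i, \tilde{\beta}_i)$. The crucial input is \cref{lem:deterministic}: almost surely $\mathcal{F}_i = \bar{\mathcal{F}}_i(\mathcal{G}_{i-1}, \tilde{b}_i, \tilde{\beta}_i)$ for a fixed measurable map $\bar{\mathcal{F}}_i$, so $\mathcal{F}_i$ is (a version of) a $\sigma(\mathcal{G}_{i-1}, \tilde{b}_i, \tilde{\beta}_i)$-measurable random element.

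First I would record the $\sigma$-algebra identity $\sigma(\mathcal{G}_{i-1}, \tilde{b}_i, \tilde{\beta}_i, \mathcal{F}_i) = \sigma(\mathcal{G}_{i-1}, \tilde{b}_i, \tilde{\beta}_i)$ modulo $\mathbb{P}$-null sets: the inclusion $\supseteq$ is immediate, and $\subseteq$ follows from \cref{lem:deterministic} since, after discarding the null event on which the deterministic representation fails, $\mathcal{F}_i$ is measurable with respect to the right-hand side. Next, using that conditional expectations (and hence conditional probabilities) depend only on the conditioning $\sigma$-algebra and are defined only up to null sets, I would conclude that for each $s'\in\mathcal{S}$, $\P{\tilde{s}_i'=s' \mid \mathcal{G}_{i-1}, \tilde{b}_i, \tilde{\beta}_i, \mathcal{F}_i} = \P{\tilde{s}_i'=s' \mid \mathcal{G}_{i-1}, \tilde{b}_i, \tilde{\beta}_i}$ almost surely. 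Finally I would invoke \cref{ass:assTransitions}, which states exactly that the right-hand side equals $p(\tilde{s}_i, \tilde{a}_i, s')$ almost surely; since $\mathcal{S}$ is finite, intersecting finitely many probability-one events yields the conclusion simultaneously for all $s'\in\mathcal{S}$.

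The only delicate point, and the step I would handle most carefully, is the null-set bookkeeping: \cref{lem:deterministic} provides the deterministic representation of $\mathcal{F}_i$ only almost surely, so one should either pass to the $\mathbb{P}$-completions of the relevant $\sigma$-algebras or observe directly that altering $\mathcal{F}_i$ on a $\mathbb{P}$-null set changes neither conditional probability. This is routine measure theory rather than a genuine obstacle; the lemma is essentially a bookkeeping device guaranteeing that the convergence analysis — which, following the abstract framework recalled around \cref{thm:bertsekas}, conditions on the full history $\mathcal{F}_i$ — may legitimately exploit the Markov transition structure assumed in \cref{ass:assTransitions}.
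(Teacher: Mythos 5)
Your proposal is correct and follows essentially the same route as the paper: both arguments rest on \cref{lem:deterministic} to show that $\mathcal{F}_i$ is (almost surely) a deterministic function of $(\mathcal{G}_{i-1},\tilde{b}_i,\tilde{\beta}_i)$, so that the extra conditioning is vacuous and \cref{ass:assTransitions} applies directly. The paper phrases this via the law of total probability over the probability-one event $\{\mathcal{F}_i=\bar{\mathcal{F}}(\mathcal{G}_{i-1},\tilde{b}_i,\tilde{\beta}_i)\}$ rather than your $\sigma$-algebra identity modulo null sets, but the two formulations are interchangeable here.
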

\begin{proof}
Using \cref{lem:deterministic}, we have that
    \[\P{\mathcal{F}_i=\bar{\mathcal{F}}(\mathcal{G}_{i-1},\tilde{b}_i,\tilde{\beta}_i)\mid \mathcal{G}_{i-1},\tilde{b}_i,\tilde{\beta}_i}=1.\]
    Hence, from \cref{ass:assTransitions} and the law of total probability, for each $s'\in \mathcal{S}, i\in \Natural$:
    \begin{align*}
        p(\tilde{s}_i, \tilde{a}_i, s') &= \P{\tilde{s}_i'=s' \mid \mathcal{G}_{i-1}, \tilde{b}_i, \tilde{\beta}_i}\\
        &=\P{\tilde{s}_i'=s' \mid \mathcal{G}_{i-1}, \tilde{b}_i, \tilde{\beta}_i,\mathcal{F}_i=\bar{\mathcal{F}}(\mathcal{G}_{i-1},\tilde{b}_i,\tilde{\beta}_i)} \P{\mathcal{F}_i=\bar{\mathcal{F}}(\mathcal{G}_{i-1},\tilde{b}_i,\tilde{\beta}_i)\mid  \mathcal{G}_{i-1},\tilde{b}_i,\tilde{\beta}_i)}\\
        &=\P{\tilde{s}_i'=s' \mid \mathcal{G}_{i-1}, \tilde{b}_i, \tilde{\beta}_i,\mathcal{F}_i=\bar{\mathcal{F}}(\mathcal{G}_{i-1},\tilde{b}_i,\tilde{\beta}_i)} \\
        &=\P{\tilde{s}_i'=s' \mid \mathcal{G}_{i-1}, \tilde{b}_i, \tilde{\beta}_i,\mathcal{F}_i} \mbox{ a.s.} \\        
    \end{align*}
\end{proof}

\begin{lemma} \label{lem:noise-unbiased}
The noise $\tilde{\phi}_i$ in \eqref{eq:qlearning-noise-step} satisfies almost surely
  \[
    \E[\tilde{\phi}_{i}(t,s,j,a) \mid \mathcal{F}_i]  = 0, \quad \forall  t \in [T], s\in \mathcal{S}, j\in [J-1], a\in \mathcal{A}, i \in \Natural,
  \]
where $\mathcal{F}_i$ is the history defined in \eqref{eq:history-qlearning}.
\end{lemma}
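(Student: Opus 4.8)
The plan is to evaluate $\E[\tilde{\phi}_i(t,s,j,a)\mid\mathcal{F}_i]$ by a tower-property step that replaces conditioning on $\mathcal{F}_i$ with the richer conditioning on $(\mathcal{G}_{i-1},\tilde{b}_i,\tilde{\beta}_i,\mathcal{F}_i)$, under which $\tqud_i$ and the model-averaged quantity $(G\tqud_i)(b)$ become measurable constants and the only surviving randomness is the realized successor $\tilde{s}_i'$, whose conditional law is the transition kernel $p(\tilde{s}_i,\tilde{a}_i,\cdot)$ by \cref{lem:f-fun-of-g}. The whole argument then collapses to the observation that $G$ is, by construction, the average of the operators $G_{s'}$ against that kernel.

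Concretely, fix an index $b=(t,s,j,a)$ and recall from \eqref{eq:qlearning-operators} that $H=\mathrm{Id}-\xi G$ and $H_{s'}=\mathrm{Id}-\xi G_{s'}$, so that on the event $\{\tilde{b}_i=b\}$ we have $\tilde{\phi}_i(b)=\xi\bigl((G\tqud_i)(b)-(G_{\tilde{s}_i'}\tqud_i)(b)\bigr)$, while $\tilde{\phi}_i(b)=0$ on $\{\tilde{b}_i\ne b\}$ by \eqref{eq:qlearning-noise-step}. When $t=0$ or $j=0$, both $G$ and $G_{s'}$ act on $b$ as $q\mapsto q(b)-\ushort{R}\,t$ independently of $s'$, so $\tilde{\phi}_i(b)=0$ in that case too; hence it suffices to handle $t>0,\,j>0$.

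First I would note that $\{\tilde{b}_i=b\}\in\sigma(\mathcal{G}_{i-1},\tilde{b}_i,\tilde{\beta}_i,\mathcal{F}_i)$ and that, since $\tqud_i$ is $\mathcal{F}_i$-measurable, the quantities $(G\tqud_i)(b)$ and each $(G_{s'}\tqud_i)(b)$, $s'\in\states$, are measurable with respect to that $\sigma$-algebra. Conditioning $\tilde{\phi}_i(b)$ on it, on $\{\tilde{b}_i=b\}$ the only randomness in $(G_{\tilde{s}_i'}\tqud_i)(b)=\sum_{s'\in\states}\mathbf{1}\{\tilde{s}_i'=s'\}\,(G_{s'}\tqud_i)(b)$ enters through $\tilde{s}_i'$, and \cref{lem:f-fun-of-g} (whose validity rests on \cref{lem:deterministic} and \cref{ass:assTransitions}) gives
\[
\E\bigl[(G_{\tilde{s}_i'}\tqud_i)(b)\mid\mathcal{G}_{i-1},\tilde{b}_i,\tilde{\beta}_i,\mathcal{F}_i\bigr]=\sum_{s'\in\states}p(\tilde{s}_i,\tilde{a}_i,s')\,(G_{s'}\tqud_i)(b).
\]
Unpacking the definitions in \eqref{eq:qlearning-operators}, $(G_{s'}\tqud_i)(b)$ is $-\E$ of $\partial\ell^{\kappa}_{j/J}(\cdots)$ over the independent uniform draw $\tilde{j}'$ with the successor state frozen at $s'$, whereas $(G\tqud_i)(b)$ is the same expression with the successor state drawn from $p(\tilde{s}_i,\tilde{a}_i,\cdot)$; thus $\sum_{s'}p(\tilde{s}_i,\tilde{a}_i,s')(G_{s'}\tqud_i)(b)=(G\tqud_i)(b)$, so the inner conditional expectation of $\tilde{\phi}_i(b)$ is $0$ on $\{\tilde{b}_i=b\}$, and it is $0$ on the complement because $\tilde{\phi}_i(b)$ vanishes there. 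Since $\sigma(\mathcal{F}_i)\subseteq\sigma(\mathcal{G}_{i-1},\tilde{b}_i,\tilde{\beta}_i,\mathcal{F}_i)$, the tower property yields $\E[\tilde{\phi}_i(b)\mid\mathcal{F}_i]=\E[0\mid\mathcal{F}_i]=0$ almost surely, as claimed.

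The main obstacle is not any hard estimate but the measure-theoretic bookkeeping: one must pin down which variables are measurable with respect to the chosen conditioning $\sigma$-algebra so that $\tqud_i$ and $(G\tqud_i)(b)$ legitimately come out of the conditional expectation (this is exactly what \cref{lem:deterministic} and \cref{lem:f-fun-of-g} are for), and one must carefully unwind the nested expectations in the definitions of $G$ and $G_{s'}$ — the outer average over the successor state versus the inner average over $\tilde{j}'$ — so that ``integrating $\tilde{s}_i'$ against $p$'' literally reconstructs $G$ from the family $\{G_{s'}\}_{s'\in\states}$. Once this is set up, the cancellation is immediate.
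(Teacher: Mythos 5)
Your proposal is correct and follows essentially the same route as the paper's proof: decompose on the event $\{\tilde{b}_i=b\}$ versus its complement, handle $t=0\vee j=0$ by noting $G$ and $G_{s'}$ coincide there, and for $t>0,j>0$ apply the tower property with the enlarged conditioning $(\mathcal{G}_{i-1},\tilde{b}_i,\tilde{\beta}_i,\mathcal{F}_i)$ so that \cref{lem:f-fun-of-g} lets you integrate $\tilde{s}_i'$ against $p(\tilde{s}_i,\tilde{a}_i,\cdot)$ and recover $(G\tqud_i)(b)$ from the family $\{(G_{s'}\tqud_i)(b)\}_{s'}$, yielding the cancellation. The measure-theoretic bookkeeping you flag is exactly what the paper delegates to \cref{lem:deterministic,lem:f-fun-of-g}.
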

\begin{proof}
Let $b := (t,s,j,a)$ and $i\in\Natural$ be arbitrary. We decompose the expectation using the law of total expectation to get thta 
\begin{equation} \label{eq:total-exp-bound}
\begin{aligned}
  \E[\tilde{\phi}_{i}(b) \mid \mathcal{F}_i]
  = 
  \E[\tilde{\phi}_{i}(b) \mid \mathcal{F}_i, \tilde{b}_i \neq  b ] \cdot \mathbb{P}[ \tilde{b}_i \neq  b \mid \mathcal{F}_i] +
  \E[\tilde{\phi}_{i}(b) \mid \mathcal{F}_i, \tilde{b}_i = b ] \cdot \mathbb{P}[ \tilde{b}_i = b  \mid \mathcal{F}_i] ~ \text{a.s.},
\end{aligned}
\end{equation}
where $\tilde{b}_{i} := (\tilde{s}_i, \tilde{a}_i, \tilde{t}_i, \tilde{j}_i)$.

The first r.h.s. term in \eqref{eq:total-exp-bound} is, from the definition of $\tilde{\phi}_i(b)$,
\begin{equation} \label{eq:bias-rhs-one}
  \E[\tilde{\phi}_{i}(b) \mid \mathcal{F}_i, \tilde{b}_i \neq b]
  = \E[0 \mid \mathcal{F}_i, \tilde{b}_i \neq b] = 0, \quad \text{a.s.}.
\end{equation}

We now analyze two cases to evaluate the second r.h.s. term in \eqref{eq:total-exp-bound}.

\emph{Case 1}: $j > 0 \wedge t > 0$. Then almost surely:
\begin{equation} \label{eq:bias-rhs-two-c1}
\begin{aligned}
  \E[\tilde{\phi}_{i}(b) \mid \mathcal{F}_i,\tilde{b}_i = b ]
  &= \E[(H_{\tilde{s}_i'} \tqud_i)(b) - (H \tqud_i)(b) \mid \mathcal{F}_i, \tilde{b}_i = b] \\
  &= \xi \cdot \E[-(G_{\tilde{s}_i'} \tqud_i)(b) + (G \tqud_i)(b) \mid \mathcal{F}_i, \tilde{b}_i = b]  \\
  &= \xi \cdot \E[-(G_{\tilde{s}_i'} \tqud_i)(b) +   (G \tqud_i)(b) \mid \mathcal{F}_i, \tilde{b}_i = b]   \\
  &= \xi \cdot \E[ \E[-(G_{\tilde{s}_i'} \tqud_i)(b) \mid  \mathcal{F}_i, \tilde{b}_i = b, \tilde{\beta}_i, \mathcal{G}_{i-1}] +   (G \tqud_i)(b) \mid \mathcal{F}_i, \tilde{b}_i = b ]   \\
&\stackrel{\text{(a)}}{=} \xi \cdot \E[\E^{ a,s}[-(G_{\tilde{s}_1} \tqud_i)(b)] +   (G \tqud_i)(b)    \mid \mathcal{F}_i, \tilde{b}_i = b ] \\
  &= \xi \cdot \E[ - (G \tqud_i)(b) +   (G \tqud_i)(b)  \mid \mathcal{F}_i, \tilde{b}_i = b ]  \\
  &=  0.
\end{aligned}
\end{equation}
To clarify, when $\tilde{s}_1$ is used in an expectation with a superscript, such as $\E^{a,s}$, then it does not represent a sample $\tilde{s}_i$ with $i=1$, but instead it represents the transition from $\tilde{s}_0=s$ to $\tilde{s}_1$ distributed as $p(s,a, \cdot )$.

Step (a) above follows from \cref{lem:f-fun-of-g} given that the randomness of $(G_{\tilde{s}_i'} \tqud_i)(b)$ only comes from $\tilde{s}_i'$ when conditioning on $\mathcal{F}_i, \tilde{b}_i = b, \tilde{\beta}_i$, and $\mathcal{G}_{i-1}$.

\emph{Case 2}: $j = 0 \vee t = 0$.
Directly from the definition of the operators in \eqref{eq:qlearning-operators}:
\begin{equation} \label{eq:bias-rhs-two-c2}
\begin{aligned}
  \E[\tilde{\phi}_{i}(b) \mid \mathcal{F}_i, \tilde{b}_i = b ]
  &= \E[(H_{\tilde{s}_i'} \tqud_i)(b) - (H \tqud_i)(b) \mid \mathcal{F}_i, \tilde{b}_i = b] \\
  &= \xi \cdot  \E[-(G_{\tilde{s}_i'} \tqud_i)(b) + (G \tqud_i)(b) \mid \mathcal{F}_i, \tilde{b}_i = b]  \\
  &= 0.
\end{aligned}
\end{equation}
Substituting \eqref{eq:bias-rhs-one}, and the appropriate case, \eqref{eq:bias-rhs-two-c1} or \eqref{eq:bias-rhs-two-c2}, into \eqref{eq:total-exp-bound} proves the desired equality. 
\end{proof}

\begin{lemma} \label{lem:noise-unvarianced}
The noise $\tilde{\phi}_i$ in \eqref{eq:qlearning-noise-step} satisfies
  \[
    \E[(\tilde{\phi}_{i}(t,s,j,a))^2 \mid \mathcal{F}_i]  \le c + g \| \tqud_i \|_{\infty}^2, \quad \forall  t \in [T], s\in \mathcal{S}, j\in [J-1], a\in \mathcal{A}, i \in \Natural ,
  \]
almost surely, for some $c,g\in \Real_+$ where $\mathcal{F}_i$ is the history defined in \eqref{eq:history-qlearning}.
\end{lemma}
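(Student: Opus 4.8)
The plan is to establish a pointwise (almost sure) bound of the form $|\tilde{\phi}_i(b)| \le a_1 + a_2\,\|\tqud_i\|_\infty$ with constants $a_1,a_2$ not depending on the iteration $i$ or the coordinate $b=(t,s,j,a)$; the desired conditional second-moment bound then follows at once, since $\|\tqud_i\|_\infty$ is $\mathcal{F}_i$-measurable.

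First I would dispose of the trivial contributions. By the definition of $\tilde{\phi}_i$ in \eqref{eq:qlearning-noise-step}, $\tilde{\phi}_i(b)=0$ unless $\tilde{b}_i:=(\tilde{t}_i,\tilde{s}_i,\tilde{j}_i,\tilde{a}_i)$ equals $b$. Moreover, even when $\tilde{b}_i=b$, if $j=0$ or $t=0$ the operators in \eqref{eq:qlearning-operators} give $(G_{s'}\tqud_i)(b)=\tqud_i(b)-\ushort{R}\cdot t=(G\tqud_i)(b)$ for every $s'$, hence $\tilde{\phi}_i(b)=-\xi\bigl[(G_{\tilde{s}_i'}\tqud_i)(b)-(G\tqud_i)(b)\bigr]=0$. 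So only the case $\tilde{b}_i=b$ with $j\ge 1$ and $t\ge 1$ needs work.

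In that case $\tilde{\phi}_i(b)=-\xi\bigl[(G_{\tilde{s}_i'}\tqud_i)(b)-(G\tqud_i)(b)\bigr]$, and since $(G\tqud_i)(b)=\sum_{s'}p(s,a,s')\,(G_{s'}\tqud_i)(b)$ is a convex combination of the $(G_{s'}\tqud_i)(b)$, it suffices to bound $\max_{s'}|(G_{s'}\tqud_i)(b)|$. Here $(G_{s'}\tqud_i)(b)=-\E[\,\partial\ell^{\kappa}_{j/J}(\delta_{s'})\,]$ with $\delta_{s'}:=r(s,a)+\gamma\max_{a'}\tqud_i(t-1,s',\tilde{j}',a')-\tqud_i(b)$, so $|\delta_{s'}|\le R_{\max}+2\|\tqud_i\|_\infty$ where $R_{\max}:=\max\{|\ushort{R}|,|\bar{R}|\}$ and we use $\gamma\le 1$. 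Since $\partial\ell^{\kappa}_{j/J}$ is Lipschitz with constant $\max\{j/J,1-j/J\}\,\kappa^{-1}\le\kappa^{-1}$ by \cref{lem:bounded_hessian} and $\partial\ell^{\kappa}_{\alpha}(0)=0$, we get $|\partial\ell^{\kappa}_{j/J}(\delta_{s'})|\le\kappa^{-1}|\delta_{s'}|\le\kappa^{-1}(R_{\max}+2\|\tqud_i\|_\infty)$, and Jensen's inequality gives the same bound on $|(G_{s'}\tqud_i)(b)|$. Hence $|\tilde{\phi}_i(b)|\le 2\xi\kappa^{-1}(R_{\max}+2\|\tqud_i\|_\infty)$ almost surely; squaring and using $(x+y)^2\le 2x^2+2y^2$, then taking $\E[\,\cdot\mid\mathcal{F}_i\,]$ (which leaves the right-hand side unchanged) yields the claim with $c=8\xi^2\kappa^{-2}R_{\max}^2$ and $g=32\xi^2\kappa^{-2}$.

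I do not anticipate a real obstacle: the one point that requires attention is that $\partial\ell^{\kappa}_{\alpha}$ is \emph{not} uniformly bounded, so the estimate on $|\partial\ell^{\kappa}_{\alpha}(\delta)|$ must be extracted from its Lipschitz continuity anchored at the origin — this is exactly what produces the affine-in-$\|\tqud_i\|_\infty$ dependence — and one must make sure the resulting $c$ and $g$ are independent of $i$, of $b$, and of the level $\alpha=j/J$, which the above bound is.
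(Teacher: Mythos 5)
Your proof is correct, and it takes a genuinely different — and more elementary — route than the paper's. The paper decomposes $\E[\tilde{\phi}_i(b)^2\mid\mathcal{F}_i]$ via the law of total expectation on the event $\{\tilde{b}_i=b\}$, invokes \cref{lem:f-fun-of-g} to identify the conditional law of $\tilde{s}_i'$ given $\mathcal{F}_i$ with $p(s,a,\cdot)$, recognizes the resulting conditional second moment as a variance, drops the squared-mean term, and only then bounds the remaining expression using $\partial\ell^{\kappa}_{j/J}(0)=0$ and the Lipschitz constant $\kappa^{-1}$. You instead establish a \emph{sure} pointwise bound $|\tilde{\phi}_i(b)|\le 2\xi\kappa^{-1}(R_{\max}+2\|\tqud_i\|_\infty)$, valid on every outcome because $(G\tqud_i)(b)$ is a convex combination of the $(G_{s'}\tqud_i)(b)$ and each of these is bounded uniformly in $s'$; conditioning then becomes trivial since the bound is $\mathcal{F}_i$-measurable. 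This sidesteps entirely the conditional-distribution machinery (\cref{lem:f-fun-of-g}), which is the fiddliest part of the paper's argument and is genuinely needed only for the unbiasedness lemma, not for the variance bound. The price is a constant factor: your triangle-inequality step $|X-\E X|\le 2\max|X|$ yields $c=8\xi^2\kappa^{-2}R_{\max}^2$ and $g=32\xi^2\kappa^{-2}$ versus the paper's $c=2\xi^2\kappa^{-2}\|r\|_\infty^2$ and $g=8\xi^2\kappa^{-2}$, but since the lemma only asserts existence of some $c,g\in\Real_+$, this is immaterial. All the individual steps you flag — the Lipschitz bound anchored at the origin from \cref{lem:bounded_hessian}, the vanishing of the noise when $j=0$ or $t=0$ because $G_{s'}$ and $G$ coincide there, and the use of $\gamma\le 1$ in bounding $|\delta_{s'}|$ — check out.
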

\begin{proof}
Let $b := (t,s,j,a)$ and $i\in\Natural$ be arbitrary. We decompose the expectation using the law of total expectation to get almost surely
\begin{equation} \label{eq:total-exp-bound-var}
\begin{aligned}
  \E[\tilde{\phi}_{i}(b)^2 \mid \mathcal{F}_i]
  = 
  \E[\tilde{\phi}_i(b)^2 \mid \mathcal{F}_i, \tilde{b}_i \neq  b ] \cdot \mathbb{P}[ \tilde{b}_i \neq  b \mid \mathcal{F}_i] +
  \E[\tilde{\phi}_i(b)^2 \mid \mathcal{F}_i, \tilde{b}_i = b ] \cdot \mathbb{P}[ \tilde{b}_i = b  \mid \mathcal{F}_i],
\end{aligned}
\end{equation}
where $\tilde{b}_{i} := (\tilde{t}_i, \tilde{s}_i, \tilde{a}_i, \tilde{j}_i, \tilde{a}_i)$.

The first r.h.s. term in \eqref{eq:total-exp-bound-var} is, from the definition of $\tilde{\phi}_i(b)$,
\begin{equation} 
  \E[\tilde{\phi}_{i}(b)^2 \mid \mathcal{F}_i, \tilde{b}_i \neq b]
  = \E[0 \mid \mathcal{F}_i, \tilde{b}_i \neq b] = 0, \quad \text{a.s.}.
\end{equation}
  
We now analyze two cases to evaluate the second r.h.s. term in \eqref{eq:total-exp-bound-var}.

\emph{Case 1}: Assume that $j > 0, t > 0$. Then from the definitions of the operators in \eqref{eq:qlearning-operators}:
\begin{align*}
  \E[(\tilde{\phi}_{i}(b))^2 \mid \mathcal{F}_i, \tilde{b}_i = b]
  &= \E\left[ \left((H_{\tilde{s}_i'} \tqud_i)(b) - (H \tqud_i)(b)\right)^2 \mid \mathcal{F}_i, \tilde{b}_i = b \right] \\
  &= \xi^2 \E\left[\left(-(G_{\tilde{s}_i'} \tqud_i)(b) + (G \tqud_i)(b)\right)^2 \mid \mathcal{F}_i, \tilde{b}_i = b\right]  \\
  &= \xi^2 \E\left[ \E \left[ \left(-(G_{\tilde{s}_i'} \tqud_i)(b) + (G \tqud_i)(b)\right)^2 \mid  \mathcal{F}_i, \tilde{b}_i = b, \tilde{\beta}_i, \mathcal{G}_{i-1} \right] \mid \mathcal{F}_i, \tilde{b}_i = b\right]  \\
  &\stackrel{\text{(a)}}{=} \xi^2 \E \left[ \E^{a,s}\left[\left((G_{\tilde{s}_1} \tqud_i)(b) - (G \tqud_i)(b)\right)^2 \right] \mid \mathcal{F}_i, \tilde{b}_i = b \right].
\end{align*}
To clarify, when $\tilde{s}_1$ is used in an expectation with a superscript, such as $\E^{a,s}$, then it does not represent a sample $\tilde{s}_i$ with $i=1$, but instead it represents the transition from $\tilde{s}_0=s$ to $\tilde{s}_1$ distributed as $p(s,a, \cdot )$.

Step (a) above follows from \cref{lem:f-fun-of-g} given that the randomness of $-(G_{\tilde{s}_i'} \tqud_i)(b) + (G \tqud_i)(b)$ only comes from $\tilde{s}_i'$ when conditioning on $\mathcal{F}_i, \tilde{b}_i = b, \tilde{\beta}_i$, and $\mathcal{G}_{i-1}$.
Then, continuing the derivation:
\begin{align*}
  \E[(\tilde{\phi}_{i}(b))^2 \mid \mathcal{F}_i, \tilde{b}_i = b]
  &= \xi^2   \E \left[ \E^{a,s}\left[\left((G_{\tilde{s}_1} \tqud_i)(b) - (G \tqud_i)(b)\right)^2 \right] \mid \mathcal{F}_i, \tilde{b}_i = b \right]\\
    &\stackrel{\text{(a)}}{=} \xi^2 \E\left[  \E^{a,s}\left[ \left(\E\left[{\partial\ell^{\kappa}_{\frac{j}{J}}}\left(\tilde{\delta}_i(\tilde{s}_1,\tilde{j}')\right) | \tilde{s}_1 \right]   -  \E^{a,s}\left[{\partial\ell^{\kappa}_{\frac{j}{J}}}\left(\tilde{\delta}_i(\tilde{s}_1,{\tilde{j}'})  \right) \right]\right)^2  \right] \mid \mathcal{F}_i, \tilde{b}_i = b \right] \\    
  &\stackrel{\text{(b)}}{=}  \xi^2 \E\left[ \left(  \E^{a,s}\left[ \left(\E\left[{\partial\ell^{\kappa}_{\frac{j}{J}}}\left(\tilde{\delta}_i(\tilde{s}_1,\tilde{j}')\right) \mid  \tilde{s}_1 \right] \right)^2\right]  -  \left(\E^{a,s}\left[{\partial\ell^{\kappa}_{\frac{j}{J}}}\left(\tilde{\delta}_i(\tilde{s}_1,\tilde{j}')  \right) \right]\right)^2 \right)  \mid \mathcal{F}_i, \tilde{b}_i = b \right]\\
  &\leq  \xi^2 \E\left[  \E^{a,s}\left[ \left(\E\left[{\partial\ell^{\kappa}_{\frac{j}{J}}}\left(\tilde{\delta}_i(\tilde{s}_1,\tilde{j}')\right) \mid  \tilde{s}_1 \right] \right)^2\right]  \mid \mathcal{F}_i, \tilde{b}_i = b \right] \\
  &\stackrel{\text{(c)}}{\le} \xi^{2} \E\left[ \max_{j'\in [J-1],s'\in \mathcal{S}} {\partial\ell^{\kappa}_{\frac{j}{J}}}(\tilde{\delta}_i(s',j') )^2  \mid \mathcal{F}_i, \tilde{b}_i = b \right]\\
  &\stackrel{\text{(d)}}{\le}  \xi^{2} \E\left[ \max_{j'\in [J-1],s'\in \mathcal{S}} \left( | {\partial\ell^{\kappa}_{\frac{j}{J}}}(\tilde{\delta}_i(s',j') ) - {\partial\ell^{\kappa}_{\frac{j}{J}}}(0 ) | \right)^2 \mid \mathcal{F}_i, \tilde{b}_i = b \right] \\
  &\stackrel{\text{(e)}}{\le}  \xi^{2}  \E \left[\max_{j'\in [J-1],s'\in \mathcal{S}} \left( \max \left\{ \frac{j}{J}, 1- \frac{j}{J} \right\} \kappa^{-1}  |\tilde{\delta}_i(s',j')|  \right)^2  \mid \mathcal{F}_i, \tilde{b}_i = b \right] \\
  &\leq  \xi^{2} \kappa^{-2} \cdot \E \left[\max_{j'\in [J-1],s'\in \mathcal{S}} \tilde{\delta}_i(s',j')^2  \mid \mathcal{F}_i, \tilde{b}_i = b \right] \\  
  &\stackrel{\text{(f)}}{\le}  \xi^{2} \kappa^{-2} \cdot ( 2\|r\|_{\infty}^2   + 8 \cdot \|\tqud_i\|_{\infty}^2).  
\end{align*}
Step (a) above follows by substituting $(G_{\tilde{s}_1} \tqud_i)(b) - (G \tqud_i)(b)$ and replacing
\begin{equation}
\tilde{\delta}_i(s',j') := r(s,a)+ \gamma \max_{a'\in \actions} \tqud_i(t-1,s',j',a')-\tqud_i(t,s,j,a).
\end{equation}
The equality in step (b) holds because for a random variable $\tilde{x}:=\E[{\partial\ell^{\kappa}_{\frac{j}{J}}}\left(\delta_i(\tilde{s}_1,\tilde{j}')\right) | \tilde{s}_1 ]$, the variance satisfies $\E[(\tilde{x}-\E[\tilde{x}])^2]=\E[\tilde{x}^2]-(\E[\tilde{x}])^2$. Step (c) upper bounds the expectation by a supremum, and step (d) uses ${\partial\ell^{\kappa}_{\frac{j}{J}}}(0 ) = 0$ from the definition in \eqref{eq:delageLoss2}. Step (e) uses \cref{lem:bounded_hessian} to bound the derivative difference as a function of the step size. Finally, step (f) derives the final upper bound since
\[
  \| r \|_{\infty} = \max_{s\in \mathcal{S}, a\in \mathcal{A}} |r(s,a)|,
  \qquad
  \| \tqud_i \|_{\infty} = \max_{b\in \mathcal{K}} |\tqud_i(b)|,
\]
where $\mathcal{K} = [T] \times \mathcal{S} \times [J-1] \times  \mathcal{A}$ and
\begin{align*}
  \max_{j'\in [J-1],s'\in \mathcal{S}}\tilde{\delta}_i(s',j')^2
  &\leq (\| r \|_{\infty}+2\| \tqud_i \|_{\infty})^2 \leq (\| r \|_{\infty}+2\| \tqud_i \|_{\infty})^2 + (\| r \|_{\infty}-2\| \tqud_i \|_{\infty})^2 \\
    &= 2\| r \|_{\infty}^2+8\| \tqud_i \|_{\infty}^2,
\end{align*}
and because $\tqud_i$ is measurable on $\mathcal{F}_i$.

\emph{Case 2}:  Assume that $t = 0 \vee j = 0$. Directly from the definition of the operators in \eqref{eq:qlearning-operators}:
\begin{align*}
  \E[(\tilde{\phi}_{i}(b))^2 \mid \mathcal{F}_i, \tilde{b}_i = b]
  &= \E[((H_{\tilde{s}_i'} \tqud_i)(b) - (H \tqud_i)(b) )^2 \mid \mathcal{F}_i, \tilde{b}_i = b] \\
  &= \xi \cdot  \E[(-(G_{\tilde{s}_i'} \tqud_i)(b) + (G \tqud_i)(b) )^2 \mid \mathcal{F}_i, \tilde{b}_i = b]  \\
  &= 0.
\end{align*}

Finally, we can confirm that 
\begin{align*}
  \E[\tilde{\phi}_{i}(b)^2 \mid \mathcal{F}_i]
  &= 
  \E[\tilde{\phi}_i(b)^2 \mid \mathcal{F}_i, \tilde{b}_i \neq  b ] \cdot \mathbb{P}[ \tilde{b}_i \neq  b \mid \mathcal{F}_i] +
  \E[\tilde{\phi}_i(b)^2 \mid \mathcal{F}_i, \tilde{b}_i = b ] \cdot \mathbb{P}[ \tilde{b}_i = b  \mid \mathcal{F}_i]\\
  &\leq 0\cdot \mathbb{P}[ \tilde{b}_i \neq  b \mid \mathcal{F}_i] +
  \xi^{2} \kappa^{-2} \cdot ( 2\|r\|_{\infty}^2   + 8 \cdot \|\tqud_i\|_{\infty}^2) \cdot \mathbb{P}[ \tilde{b}_i = b  \mid \mathcal{F}_i]\\
  &\leq \xi^{2} \kappa^{-2} \cdot ( 2\|r\|_{\infty}^2   + 8 \cdot \|\tqud_i\|_{\infty}^2).
\end{align*}
\end{proof}

\subsubsection{Main Proof}

\begin{proof}[Proof of \cref{thm:convergenceQlearning}]
We verify that the sequence of our q-learning iterates satisfies the properties in \cref{thm:bertsekas}.
\begin{itemize}
\item The step size condition in \cref{thm:convergenceQlearning} guarantees that we satisfy property (a) in \cref{thm:bertsekas}
\item We satisfy property (b) in \cref{thm:bertsekas} because
\begin{itemize}
\item \cref{lem:noise-unbiased} shows that we satisfy property (1) in \cref{assumption:expected_convergence}
\item \cref{lem:noise-unvarianced} shows that we satisfy property (2) in \cref{assumption:expected_convergence}
\end{itemize}
\item \cref{thm:h-contraction} shows that we satisfy property (c) in  \cref{thm:bertsekas}
\end{itemize}
\end{proof}

\section{Empirical Results} \label{sec:more_experiments}

The code used to generate all plots can be found in \url{https://github.com/MonkieDein/DRA-Q-LA}.

\subsection{Domain Details} \label{sec:more_domain_details}
For each domain, we provide CSV files and julia JLD files in the supplementary material with the exact specifications of the domains we use. Domain detail for six out of seven of our domains include (Machine Replacement (MR), Gambler's Ruin (GR), Inventory1 (INV1), Inventory2 (INV2), Riverswim (RS) and Population Management (POP)) can be found in  \citep[Appx. E]{hau2023entropic}. The Cliffwalk (CW) domain is similar to the one described in  \citep[Ex. 6.6]{sutton2018reinforcement}, with a minor modification. In this version, the agent transitions to each adjacent direction with a $10\%$-probability instead of always following the selected direction (see \cref{fig:cliffwalk_domain}). The initial state $s_0$ specification can be found in \cref{tab:domains_initial_state}. We initialize all environments with a discount factor of $\gamma = 0.9$ and a horizon $T=100$. 

\begin{figure}
\centering
\includegraphics[width=.7\linewidth]{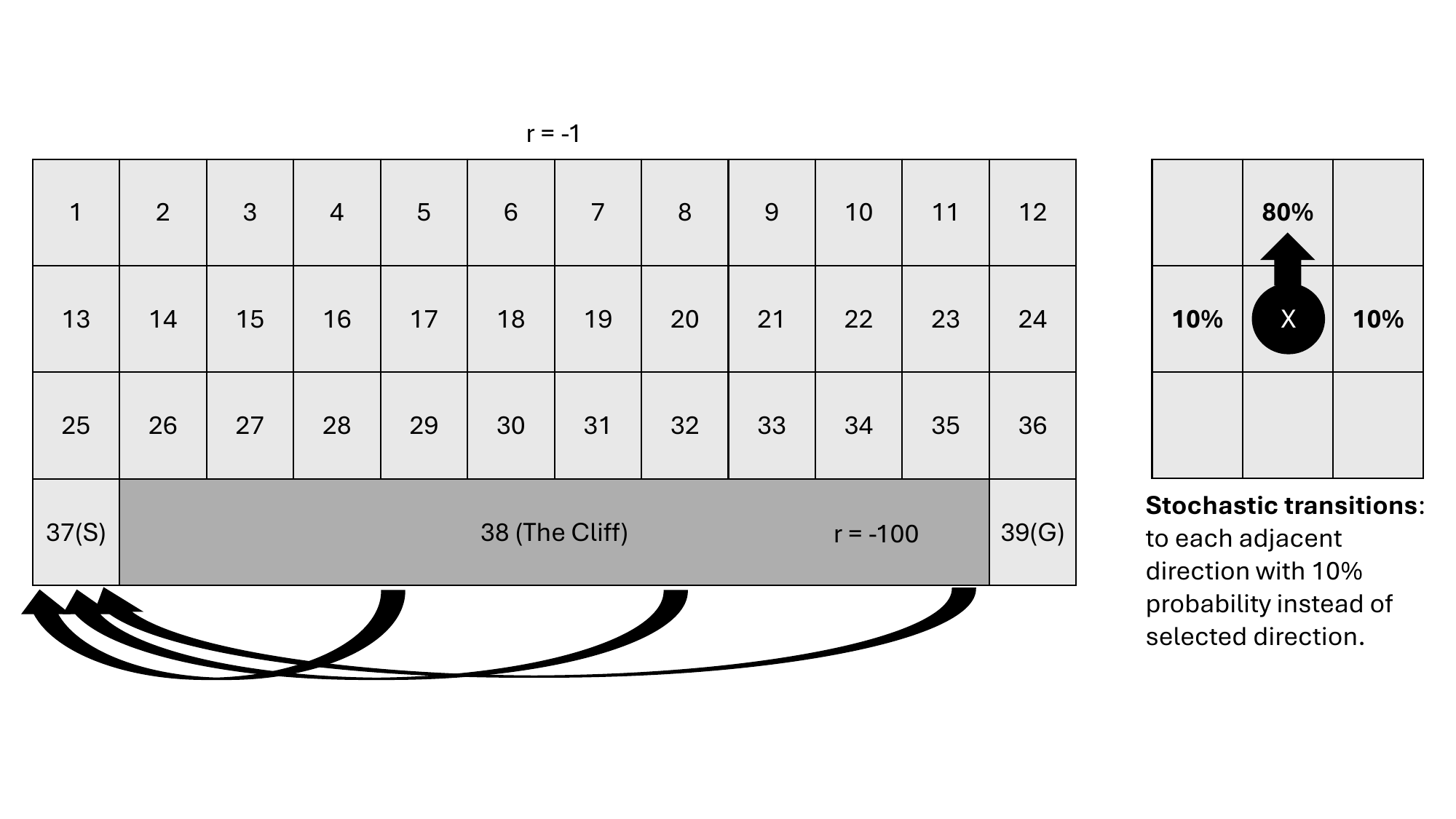}
\caption{The cliffwalk domain} \label{fig:cliffwalk_domain}
\end{figure}

\begin{table}[ht]
\centering
\begin{tabular}{|l|l|l|l|l|l|l|l|}
\hline
        & MR & GR & INV1 & INV2 & RS & POP & CW \\
        \hline
$s_0$   & $1$ & $5$   & $10$ & $20$ & $9$ & $44$ & $37$   \\
\hline
\end{tabular}
\caption{Initial state for each domain} \label{tab:domains_initial_state}
\end{table}

\subsection{Algorithmic Details} \label{sec:more_algorithm_details}
\paragraph{\cref{alg:quantile_policy_exe}} Line 6 is implemented with $\epsilon= 10^{-14}$ to account for the non-associative property of floating point arithmetic with $\tau = \frac{ \ushort{q}_t^\discretized(s,j,a\opt)-r}{\gamma}$ as:
\[j\gets \argmin \left\{j' \in [J-1]\mid\max_{a'\in\actions}\ushort{q}_{t-1}^\discretized(s',j',a')\geq \tau - \epsilon | \tau |\right\} \;.\] 

\paragraph{\cref{alg:q-learning}} Without loss of generality, the VaR-Q-value function is trained with a standardized scaled reward function $\hat{r}(s,a) \gets \frac{r(s,a) - \ushort{R}}{\bar{R}-\ushort{R}}$. We also remove the time indices to reduce computational overhead and initialize the q-value function with
$\hat{q}^\discretized \gets (1-\gamma)^{-1}$. The VaR-Q-value function is then unscaled via $\tilde{q}^\discretized \gets \hat{q}^\discretized \cdot (\bar{R}-\ushort{R}) + \frac{\ushort{R}}{1-\gamma}$ before being compared with the DP variant $\ushort{q}^\discretized$. The learning rate is defined as $\beta_i \gets 100 \cdot(0.1^{i \cdot 0.0003})$ for $i$-th occurrence of sample $(s,a)$ across all domains. For a fair comparison between domains, we sample a transition for every $(s,a)$-pair at each iteration.

\paragraph{Nested VaR (nVaR)} 

Also known as dynamic VaR, nVaR is solved via the following DP for each $s\in \states$ and $t\in[T-1]$ as
\[
 v_{t+1} (s) = \max_{a\in \actions} \, \var{\initalpha}{r(s,a) + \gamma \cdot v_{t} (\tilde{s}')}, 
\]
where $\tilde{s}' \sim p(s,a,\cdot)$. 
Then, we evaluate a greedy policy $\pi_t\colon \states \to \actions, k \in [T-1]$ constructed to satisfy
\[
 \pi_k(s) \in \argmax_{a\in \actions} \, \var{\initalpha}{r(s,a) + \gamma \cdot v_{T-k}(\tilde{s}')}. 
\]
\paragraph{Distributional VaR (dVaR)} It uses the Markov action-selection strategy proposed by \cite{dabney2018implicit,keramati2020being} with $J=4096$ uniform quantile discretization \citep{dabney2018distributional,rowland2023analysis}: 
\[
q_{t+1}(s,\alpha_j,a) = \var{\alpha_j}{r(s,a) + \gamma \max_{a' \in \actions} \var{\initalpha}{q_t(\tilde{s}',\tilde{u},a')}} \; \qquad \forall \alpha_j = \frac{2j+1}{J} \text{ where } j \in [J -1] \;, 
\]
where $\tilde{u}$ refer to the discretized uniform distribution satisfy $\mathbb{P}[\tilde{u} = \frac{2j+1}{J}] = \frac{1}{J} ~\forall j \in [J-1]$ follows greedy policy
\[
 \pi_k(s) \in \argmax_{a\in \actions} \, \var{\initalpha}{q_{T-k}(s,\tilde{u},a)}. 
\]
In contrast to our algorithm, an optimal action $a$ is selected w.r.t the initial Markov risk level of interest $\initalpha$ instead of the quantile-dependent risk level $\alpha_j$ \citep{lim2022distributional}.

\paragraph{CVaR} 

We implemented the algorithm described in \citep{chow2015risk,hau2023dynamic} with $J=4096$ uniform discretization as 
\[    (B_{\max} q)(s,\alpha_j, a) := r(s,a) + \gamma \cdot \min_{\zeta \in \mathcal{Z}_{sa}(\alpha_j)} \sum_{s' \in \states} \zeta_{s'} \cdot  \max_{a'\in \actions} q( {s}',\frac{\alpha_j \zeta_{{s}'}}{p(s,a,{s}')},a')  \qquad \forall \alpha_j = \frac{j}{J} \text{ where } j \in [J] \;,  
\]\[\mathcal{Z}_{sa}(\alpha) \;:=\;  \left\{ \xi \in \probs{\states} \mid \alpha \zeta_{s'} \le p(s,a,s') 
  \right\}.
\]
which follow a greedy history-dependent policy described in \citep{chow2015risk}. We use the same discretization level as our algorithm. It is important to note that this algorithm could over-approximate the true static CVaR value function due to the duality gap, so it may perform badly \citep{hau2023dynamic}.

\paragraph{EVaR} Algorithm described in \citep{hau2023entropic} and known to perform well when evaluated with static CVaR and EVaR. We implemented the algorithm with the time-dependent policy described there, and with fixed ERM discretization $\beta_j = 100 \cdot (0.99^j) ~\forall j \in [3000]$, instead of a domain-dependent discretization.

\subsection{More empirical results}

\begin{figure} 
    \centering
        \includegraphics[width=0.65\linewidth]{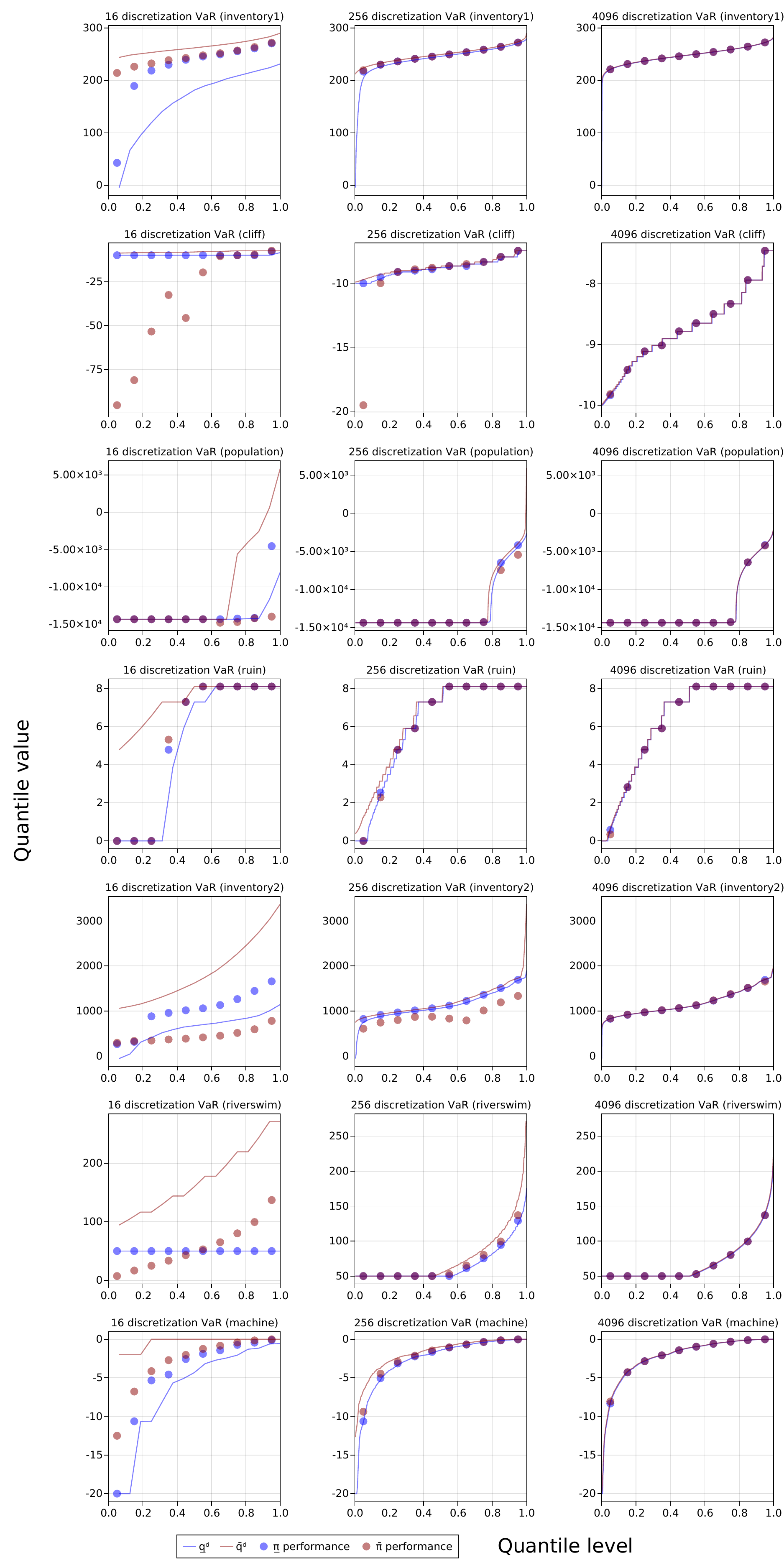}
        \caption{Approximation bound $q^\discretized$ and its respective $\pi$ policy performance} \label{fig:all_multiple_discretization}
\end{figure}

\cref{fig:all_multiple_discretization} extends \cref{fig:inv2_multiple_discretization} demonstrates that the performance of $\ushort{\pi}$ across all the domains to understand how the selection of $\ushort{q}^\discretized$ in \cref{alg:quantile_policy_exe} contribute to the quality of the solution. More specifically, $\ushort{\pi}$ lies within $[\ushort{q}^\discretized,\bar{q}^\discretized]$, whereas $\bar{\pi}$ may performs worse than $\ushort{q}^\discretized$. Furthermore, as the discretization level increases, the bounding gap $\bar{q}^\discretized-\ushort{q}^\discretized$ shrinks, suggesting that  $\ushort{\pi}$ converges to $\pi \opt$.

\begin{figure} 
    \centering
        \includegraphics[width=0.7\linewidth]{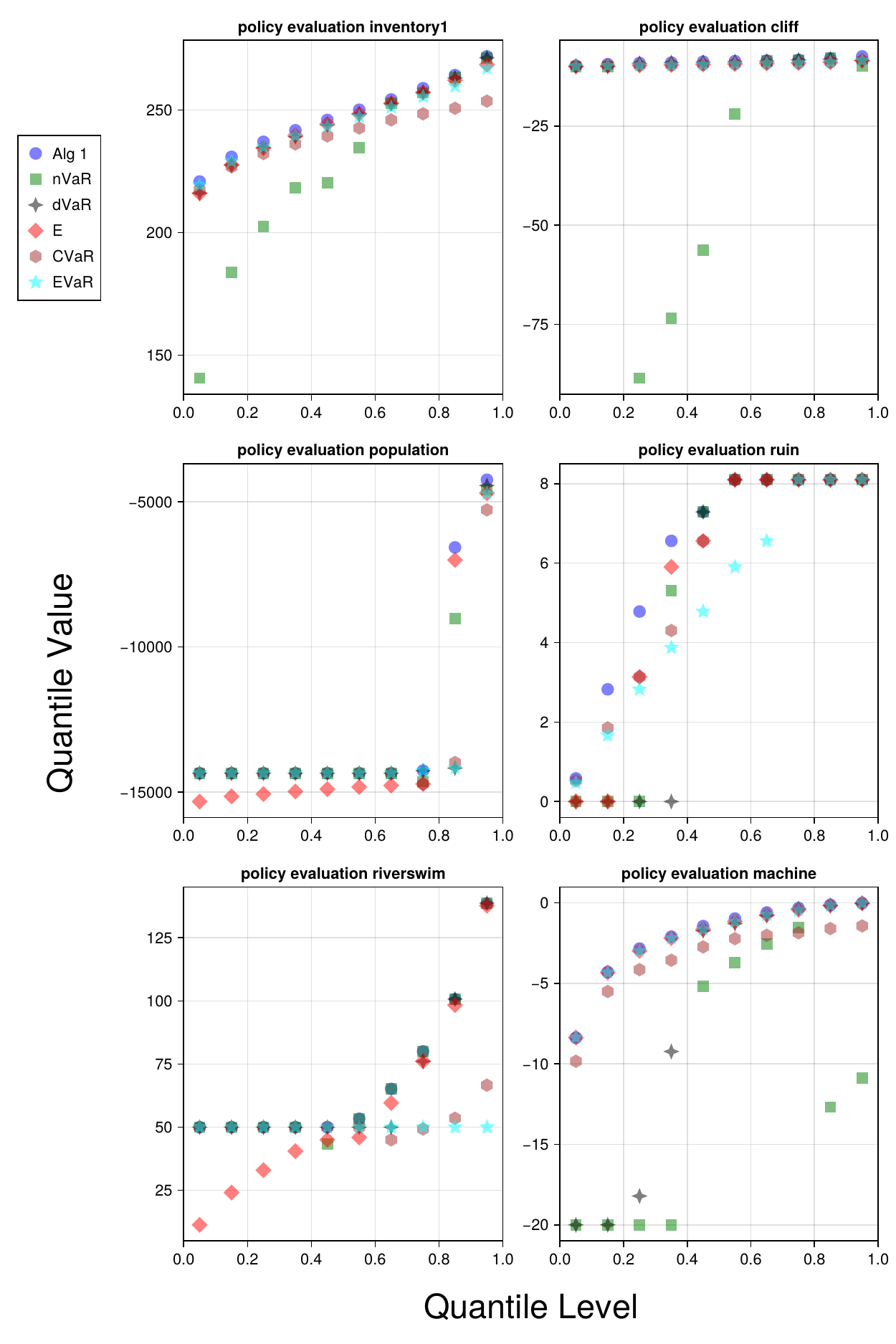}
        \caption{Policy performance evaluation } \label{fig:all_algorithms_compare}
\end{figure}

\cref{fig:all_algorithms_compare} extends \cref{fig:inv2_comparison} compares our algorithm with other related algorithms (detailed in \cref{sec:more_algorithm_details}) for all the domains on quantile levels $\initalpha \in \{0.05,0.15,\dots,0.85,0.95\}$. As we can see, our algorithm consistently outperforms all other algorithms across all tested domains and quantile levels illustrating the robustness of our algorithm.

\begin{figure} 
    \centering
        \includegraphics[width=0.86\linewidth]{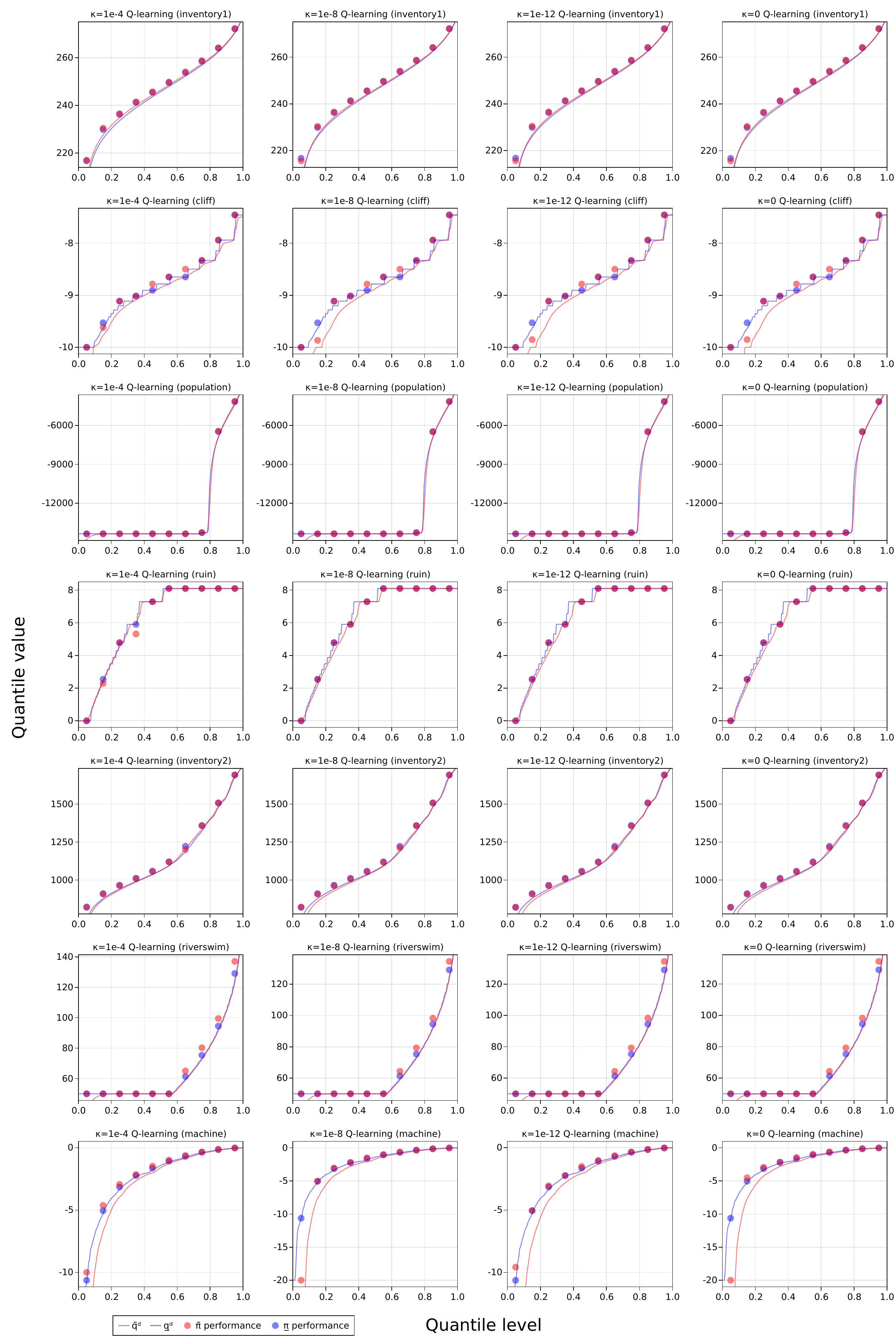}
        \caption{Q learning $\tilde{q}$ vs DP $\ushort{q}$ value function and policy performance after $20,000$ iterations ($J = 256$)} \label{fig:all_algorithms_Q_learning_value}
\end{figure}

\cref{fig:all_algorithms_Q_learning_value} shows that for all the domains, both the value function and the performance of the policy for the $\kappa$-soft quantile Q-learning (\cref{alg:q-learning}) with $\kappa \in \{10^{-4},10^{-8},10^{-12},0\}$ and uniform discretization of $J=256$ converges to DP variant \cref{eq:discreteBellmanEq} after 20,000 iterations. Not only the value functions for the Q-learning converges closely to the DP's, the performance of the policy implied by the Q-learning value function also matches the policy from the DP variant.

\begin{figure}
    \centering
    \begin{subfigure}[b]{0.48\textwidth}
        \centering
        \includegraphics[width=.7\textwidth]{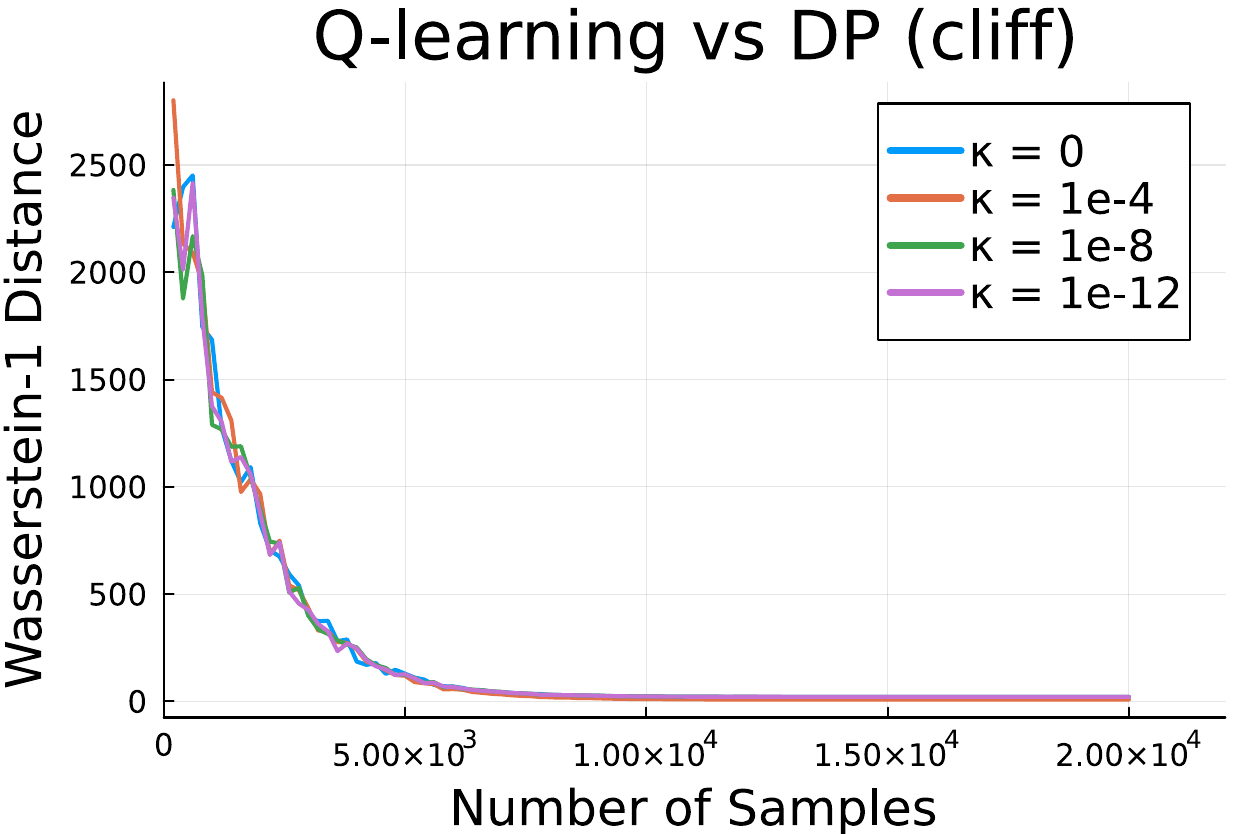}
    \end{subfigure}
    \hfill
    \begin{subfigure}[b]{0.48\textwidth}
        \centering
        \includegraphics[width=.7\textwidth]{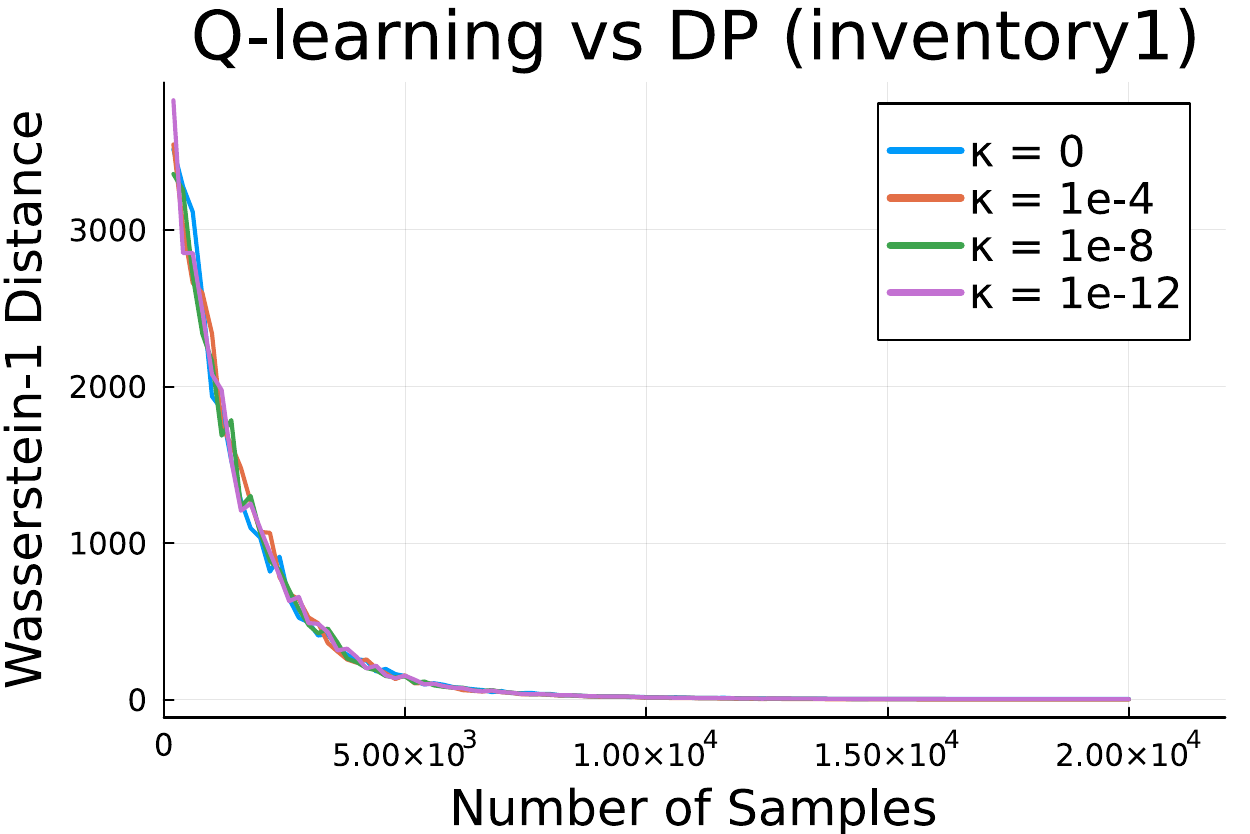}
    \end{subfigure}
    
    \begin{subfigure}[b]{0.48\textwidth}
        \centering
        \includegraphics[width=.7\textwidth]{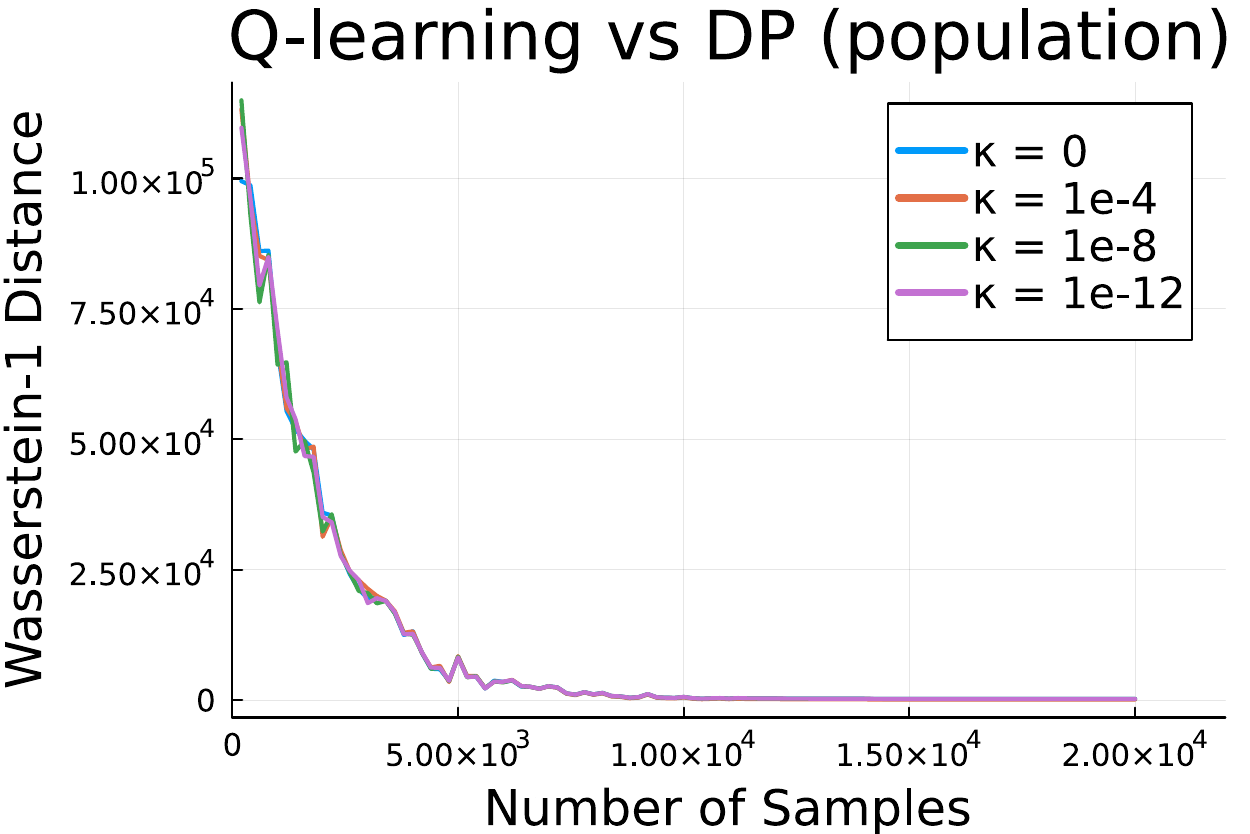}
    \end{subfigure}
    \hfill
    \begin{subfigure}[b]{0.48\textwidth}
        \centering
        \includegraphics[width=.7\textwidth]{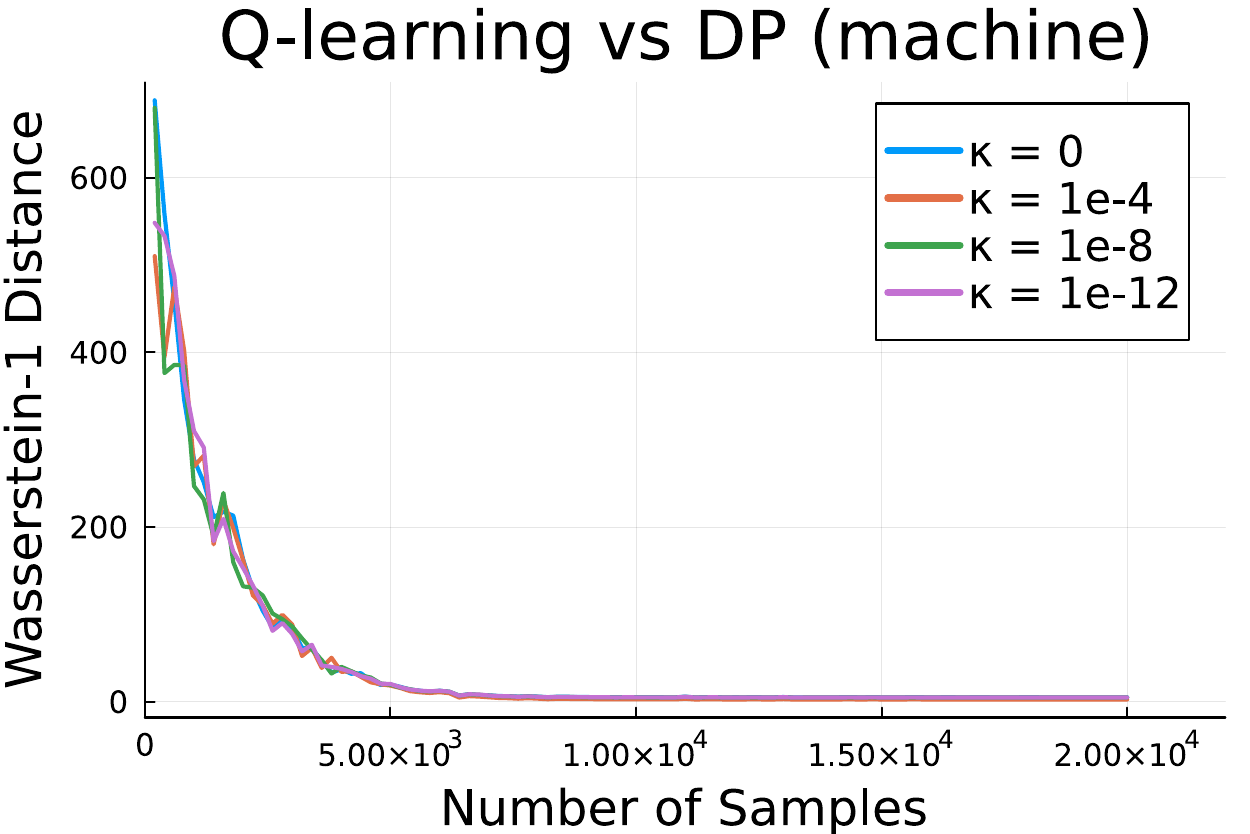}
    \end{subfigure}
    
    \begin{subfigure}[b]{0.48\textwidth}
        \centering
        \includegraphics[width=.7\textwidth]{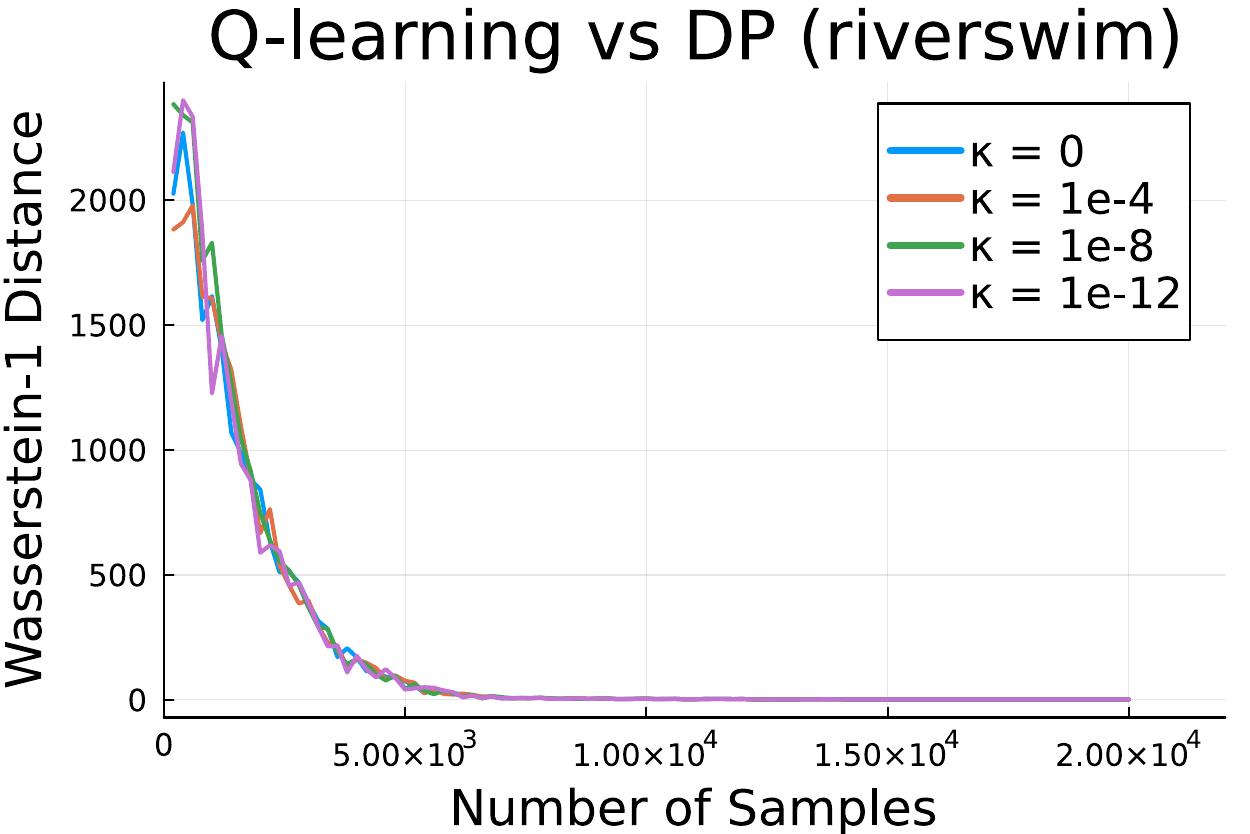}
    \end{subfigure}
    \hfill
    \begin{subfigure}[b]{0.48\textwidth}
        \centering
        \includegraphics[width=.7\textwidth]{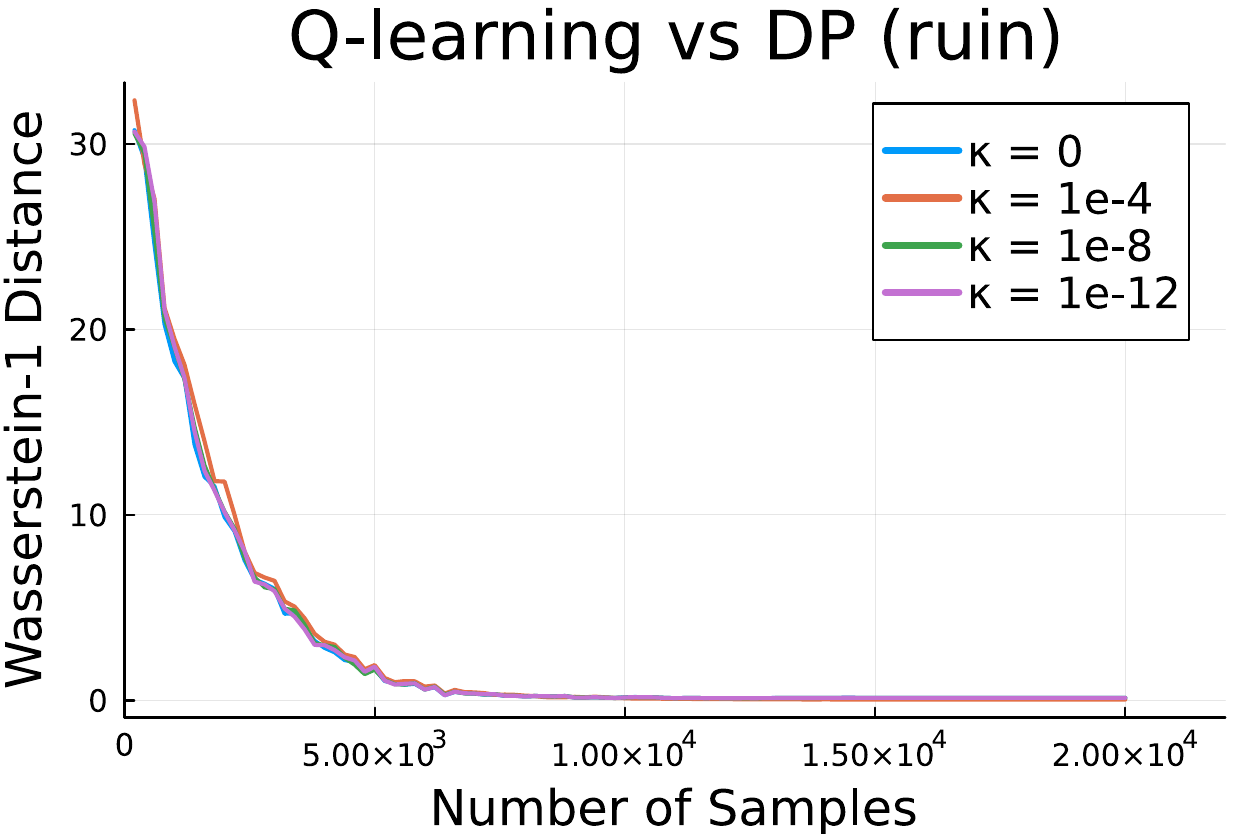}
    \end{subfigure}  
    \caption{Q-learning vs DP value function Wasserstein distance.} \label{fig:all_wasserstein}
\end{figure}

\cref{fig:all_wasserstein}  extends \cref{fig:inv2_wasserstein} demonstrates for all the domains that the value function achieve from $\kappa$-soft quantile Q-learning (\cref{alg:q-learning}) for $\kappa \in \{10^{-4},10^{-8},10^{-12},0\}$ and uniform discretization of $J=256$  converges to the DP variant \cref{eq:discreteBellmanEq}. We use the Wasserstein-1 distance of quantile function defined as \[
W_1(\ushort{q}^\discretized,\ushort{\tilde{q}}^\discretized) := \frac{1}{J} \sum_{j \in [J-1]} | \max_{a \in \actions}\{ \ushort{q}^\discretized(s_0,j,a) \} - \max_{a \in \actions}\{\ushort{\tilde{q}}^\discretized(s_0,j,a)\} |
\] to evaluate the differences between two value functions. From \cref{fig:all_wasserstein} we can see that for all domains, the Wasserstein-1 distances of the value functions were far apart at the beginning and quickly converges to zero as the number of samples for each $(s,a)$ pair increase.

\newpage
\section{Comparison of VaR-Q-learning and IQN}\label{sec:appIQN}
\newcommand{\qdhat}{\hat{q}^\discretized}

We now compare the Implicit Quantile Network (IQN) Q-learning algorithm proposed in \cite{dabney2018implicit}  to a variant of our Q-learning algorithm that stochastically approximates the expected value operation over the sampling of next risk $j'\sim U([J-1])$ using $K'$ sampled risk level. Specifically, we focus on a version of IQN Q-learning that considers a finite horizon problem (using an additional $t$ state dimension), models the state-value function using a piecewise constant function of the risk level, i.e.  $q(t,s,\alpha,a) := \qdhat(t,s, \lfloor \alpha J\rfloor,a)$  with $\qdhat\in\mathbb{R}^{[T] \times\states\times  [J-1]\times \actions}$, and models the risk aversion using a distorted risk measure parameterized by some non-decreasing $\beta_{IQN}\colon [0,1]\rightarrow[0,1]$ and implied $\Gamma(j):=\E[\beta_{IQN}(\tilde{u})\mid j\leq\tilde{u}\leq (j+1)/J]$ for $\tilde{u}\sim U([0,1])$\mm{ It is not clear what $\Gamma$ means and how it is used. It is referring a quantity in the IQN paper? It is not used}. \Cref{alg:riskBasedQLearning,alg:q-learningIQN} present in a comparable format how a quantile MDP and IQN approach compute their respective loss when updating their respective approximate state-value functions.\mm{I think that we should make it clearer which one is the original IQN and which one is ours.}

The biggest distinction between the two algorithms lies in the computation of the action or actions associated to state $s'$. On one hand, IQN seeks for each sampled $(s,t,a,r,s')$ tuple a single action that captures a form of risk aversion portrayed by $\argmax_{a'\in\actions}\E[\qdhat(t-1,s',\beta_{IQN}(\tilde{u}),a')]=\argmax_{a'\in\actions}(1/J)\sum_{j'=0}^{J-1} \Gamma(j)\qdhat(t-1,s',j',a')$, where $\tilde{u}\sim U([0,1])$. On the other hand, the variant of our Quantile Q-learning algorithm seeks an optimal action for each sampled next state quantile level $j_{k'}'$. The latter reflects our finding that the quantile MDP can be solved by solving a nested VaR DP where the risk level is independently sampled from a uniform distribution at each time step. In comparison, it is not clear what criterion of optimality is satisfied by the policy evaluated by IQN; see \cite{lim2022distributional} for a discussion regarding the case where $\beta_{IQN}(\cdot)$ reflect a CVaR measure.

As part of the finer differences between the two algorithms, one can observe that our quantile Q-learning employs our $\kappa$-soft quantile loss, whereas IQN uses a Huber quantile loss, denoted by $\ell_\alpha^h(\cdot)$. We also use the quantile loss function associated to the discretized level $\lfloor \tau_k J\rfloor /J$ instead of using $\tau_k$ directly in order to guarantee a conservative approximation (instead of an estimation) of the value-at-risk of level $\tau_k$. We finally handle samples with $\tau_k< 1/J$ differently than the rest given that our \cref{alg:q-learning} prescribes steering the value of $\qud(t,s,0,a)$ towards the value of  $\ushort{R} t$ in order to produce a natural lower bound on the value-at-risk for risk levels in that lower range.

\begin{algorithm*}
\SetAlgoLined 
\textbf{Require:} $K$,$K'$,$\kappa$, and functions $\qud$\\
 \KwIn{$t$, $s$, $a$, $s'$}
 $\#$ Sample current quantile thresholds\\
$\tau_k\sim U([0,1])$, $\quad 1\leq k\leq K$\\
 $\#$ Sample next quantile thresholds\\
$\tau_{k'}'\sim U([0,1]), \quad 1\leq k'\leq K'$\\
 $\boldsymbol{\#}$ \textbf{Compute greedy next quantile-based actions}\\
 $\boldsymbol{a_{k'}\opt\gets \argmax_{a'\in\actions}\qud(t-1,s',\lfloor \tau_{k'}'J\rfloor,a'), \quad 1\leq k'\leq K'}$\\
 $\#$ Compute distributional temporal differences\\
 $\delta_{kk'}\gets r(s,a)+\gamma \qud(t-1,s',\lfloor \tau_{k'}'J\rfloor,\boldsymbol{a_{k'}\opt})- \qud(t,s,\lfloor\tau_k J\rfloor,a),\quad 1\leq k\leq K,\; 1\leq k'\leq K'$\\
$\#$ Compute $\kappa$-soft quantile loss\\
\textbf{Output:} $\sum_{k=1}^{K}(1/K)\sum_{k'=1}^{K'} \ell_{\boldsymbol{\lfloor\tau_k J\rfloor/J}}^{\boldsymbol{\kappa}}(\delta_{kk'})\boldsymbol{\cdot \mathbb{I}_{\tau_k\in[1/J,1]} + (\qud(t,s,0,a)-\ushort{R}t)^2\cdot \mathbb{I}_{\tau_k\in[0,1/J)}}$
 \caption{Risk Sampled-based Quantile Q-learning Loss (adapted from Algorithm \ref{alg:q-learning})}\label{alg:riskBasedQLearning}
\end{algorithm*}

\begin{algorithm*}
\SetAlgoLined 
\textbf{Require:} $K$,$K'$,$h$, and functions $\Gamma$, $\qdhat$\\
 \KwIn{$t$, $s$, $a$, $s'$}
 $\#$ Sample current quantile thresholds\\
$\tau_k\sim U([0,1])$, $\quad 1\leq k\leq K$\\
 $\#$ Sample next quantile thresholds\\
$\tau_{k'}'\sim U([0,1]), \quad 1\leq k'\leq K'$\\
  $\boldsymbol{\#}$ \textbf{Compute greedy next uniform action}\\
 $\boldsymbol{a\opt\gets \argmax_{a'\in\actions}(1/J)\sum_{j'=0}^{J-1} \Gamma(j')\qdhat(t-1,s',j',a')}$\\
 $\#$ Compute distributional temporal differences\\
 $\delta_{kk'}\gets r(s,a)+\gamma \qdhat(t-1,s',\lfloor \tau_{k'}'J\rfloor,\boldsymbol{a\opt})- \qdhat(t,s,\lfloor\tau_k J\rfloor,a),\quad 1\leq k\leq K,\; 1\leq k'\leq K'$\\
$\#$ Compute Huber quantile loss\\
\textbf{Output:} $\sum_{k=1}^{K}(1/K')\sum_{k'=1}^{K'} \ell_{\boldsymbol{\tau_k}}^{\boldsymbol{h}}(\delta_{kk'})$
 \caption{Implicit Quantile Network Loss (adapted from \cite{dabney2018implicit})}
\label{alg:q-learningIQN}
\end{algorithm*}

\newpage

\end{document}
